\icmltitlerunning{Accelerated Policy Gradient: On the Convergence Rates of the Nesterov Momentum for Reinforcement Learning}
\def\eqref#1{equation~\ref{#1}}
\def\1{\bm{1}}
\def\ra{{\textnormal{a}}}
\def\rx{{\textnormal{x}}}
\def\rva{{\mathbf{a}}}
\def\erva{{\textnormal{a}}}
\def\ervx{{\textnormal{x}}}
\def\rmA{{\mathbf{A}}}
\def\vmu{{\bm{\mu}}}
\def\vtheta{{\bm{\theta}}}
\def\va{{\bm{a}}}
\def\ve{{\bm{e}}}
\def\vx{{\bm{x}}}
\def\eva{{a}}
\def\mA{{\bm{A}}}
\def\mH{{\bm{H}}}
\def\mI{{\bm{I}}}
\def\mJ{{\bm{J}}}
\def\mX{{\bm{X}}}
\def\mSigma{{\bm{\Sigma}}}
\DeclareMathAlphabet{\mathsfit}{\encodingdefault}{\sfdefault}{m}{sl}
\SetMathAlphabet{\mathsfit}{bold}{\encodingdefault}{\sfdefault}{bx}{n}
\newcommand{\tens}[1]{\bm{\mathsfit{#1}}}
\def\tA{{\tens{A}}}
\def\tX{{\tens{X}}}
\def\gG{{\mathcal{G}}}
\def\sA{{\mathbb{A}}}
\def\sB{{\mathbb{B}}}
\def\sS{{\mathbb{S}}}
\def\emA{{A}}
\newcommand{\etens}[1]{\mathsfit{#1}}
\def\etA{{\etens{A}}}
\newcommand{\E}{\mathbb{E}}
\newcommand{\R}{\mathbb{R}}
\newcommand{\KL}{D_{\mathrm{KL}}}
\newcommand{\Var}{\mathrm{Var}}
\newcommand{\Cov}{\mathrm{Cov}}
\newcommand{\normltwo}{L^2}
\newcommand{\normlp}{L^p}
\newcommand{\parents}{Pa} 
\DeclareMathOperator*{\argmax}{arg\,max}
\DeclareMathOperator{\cA}{\mathcal{A}}
\DeclareMathOperator{\cV}{\mathcal{V}}
\DeclareMathOperator{\cS}{\mathcal{S}}
\DeclareMathOperator{\cN}{\mathcal{N}}
\DeclareMathOperator{\cF}{\mathcal{F}}
\DeclareMathOperator{\cI}{\mathcal{I}}
\DeclareMathOperator{\cD}{\mathcal{D}}
\DeclareMathOperator{\cP}{\mathcal{P}}
\DeclareMathOperator{\cR}{\mathcal{R}}
\DeclareMathOperator{\cB}{\mathcal{B}}
\DeclareMathOperator{\cH}{\mathcal{H}}
\DeclareMathOperator{\cJ}{\mathcal{J}}
\DeclareMathOperator{\cX}{\mathcal{X}}
\DeclareMathOperator{\cU}{\mathcal{U}}
\DeclareMathOperator{\bR}{\mathbb{R}}
\DeclareMathOperator{\bV}{\mathbb{V}}
\DeclareMathOperator{\bbN}{\mathbb{N}}
\let\emptyset\varnothing
\Crefname{equation}{}{}
\crefname{equation}{}{}
\Crefname{figure}{Figure}{Figures}
\crefname{figure}{figure}{figures}
\Crefname{figure*}{Figure}{Figures}
\crefname{figure*}{figure}{figures}
\crefname{table}{table}{tables}
\newtheorem{definition}{Definition}
\newtheorem*{definition*}{Definition}
\newtheorem{lemma}{Lemma}
\newtheorem*{lemma*}{Lemma}
\newtheorem{remark}{Remark}
\newtheorem*{remark*}{Remark}
\newtheorem{corollary}{Corollary}
\newtheorem*{corollary*}{Corollary}
\newtheorem{theorem}{Theorem}
\newtheorem*{theorem*}{Theorem}
\newtheorem*{prop*}{Proposition}
\newtheorem{assumption}{Assumption}
\newtheorem{claim}{Claim}
\newtheorem*{claim*}{Claim}
\newcommand{\mru}[1]{\textcolor{blue}{#1}}
\newcommand{\naich}[1]{\textcolor{teal}{#1}}
\newcommand{\pch}[1]{\textcolor{black}{#1}}
\newcommand{\cp}[1]{\textcolor{black}{#1}}
\newcommand{\cpcomment}[1]{\textbf{\textcolor{red}{CP: #1}}}
\newcommand{\cpfn}[1]{\footnote{\cpcomment{#1}}}
\newcommand{\updatecomment}[1]{\textcolor{teal}{UPDATE: #1}}
\newcommand{\soutt}[1]{\ifmmode\text{\sout{\ensuremath{#1}}}\else\sout{#1}\fi}
\setlist{leftmargin=5.5mm}
\begin{document}

\twocolumn[
\icmltitle{Accelerated Policy Gradient: On the Convergence Rates of \\ the Nesterov Momentum for Reinforcement Learning}


\icmlsetsymbol{equal}{*}

\begin{icmlauthorlist}
\icmlauthor{Yen-Ju Chen}{equal,yyy}
\icmlauthor{Nai-Chieh Huang}{equal,yyy}
\icmlauthor{Ching-pei Lee}{ism}
\icmlauthor{Ping-Chun Hsieh}{yyy}
\end{icmlauthorlist}

\icmlaffiliation{yyy}{Department of Computer Science, National Yang Ming Chiao Tung University, Hsinchu, Taiwan}
\icmlaffiliation{ism}{Department of Advanced Data Science,  Institute
of Statistical Mathematics, Tokyo, Japan}
\icmlcorrespondingauthor{Yen-Ju Chen}{mru.11@nycu.edu.tw}
\icmlcorrespondingauthor{Nai-Chieh Huang}{naich.cs09@nycu.edu.tw}
\icmlcorrespondingauthor{Ping-Chun Hsieh}{pinghsieh@nycu.edu.tw}

\icmlkeywords{Machine Learning, ICML}
\vskip 0.3in
]



\printAffiliationsAndNotice{\icmlEqualContribution} 

\begin{abstract}
{
Various acceleration approaches for Policy Gradient (PG) have been analyzed within the realm of Reinforcement Learning (RL). However, the theoretical understanding of the widely used momentum-based acceleration method on PG remains largely open. In response to this gap, we adapt the celebrated Nesterov's accelerated gradient (NAG) method to policy optimization in RL, termed \textit{Accelerated Policy Gradient} (APG). To demonstrate the potential of APG in achieving fast convergence, we formally prove that with the true gradient and under the softmax policy parametrization, APG converges to an optimal policy at rates: (i) $\tilde{O}(1/t^2)$ with nearly constant step sizes; (ii) $O(e^{-ct})$ with time-varying step sizes. To the best of our knowledge, this is the first characterization of the convergence rates of NAG in the context of RL. Notably, our analysis relies on one interesting finding: Regardless of the parameter initialization, APG ends up entering a locally nearly-concave regime, where APG can significantly benefit from the momentum, within finite iterations.
Through numerical validation and experiments on the Atari 2600 benchmarks, we confirm that APG exhibits a $\tilde{O}(1/t^2)$ rate with nearly constant step sizes and a linear convergence rate with time-varying step sizes, significantly improving convergence over the standard PG.
}
\end{abstract}

\section{Introduction}
\label{sec:intro}



Policy gradient (PG) is a fundamental technique utilized in the field of reinforcement learning (RL) for policy optimization. It operates by directly optimizing the RL objectives to determine the optimal policy, employing first-order derivatives similar to the gradient descent algorithm in conventional optimization problems.
Notably, PG has demonstrated empirical success \citep{mnih2016asynchronous,wang2016sample,silver2014deterministic,lillicrap2015continuous,schulman2017proximal,espeholt2018impala} and is supported by strong theoretical guarantees \citep{agarwal2021theory, fazel2018global, liu2020improved, bhandari2019global, mei2020global, wang2021global,mei2021understanding,mei2022role,xiao2022convergence,chen2022sample,lan2023policy}.
In a recent study, \citet{mei2020global} characterize the $O(1/t)$ convergence rate of PG in the non-regularized tabular softmax setting.
This convergence behavior aligns with that of the gradient descent algorithm for {convex minimization} problems, despite that the RL objectives lack concave characteristics {for maximization.}
{Additionally, advancements in RL literature have introduced theories on improving the $O(1/t)$ convergence rate through various acceleration techniques. \citet{khodadadian2021linear,xiao2022convergence} offer sharp analyses of Natural Policy Gradient (NPG) with time-varying step sizes, demonstrating a notable linear convergence rate. On the other hand, \citet{mei2021leveraging} utilize the normalized gradient to accelerate the PG, also achieving a linear convergence rate.}

{In addition to the above-mentioned acceleration techniques, \textit{momentum}, which is another popular acceleration method in the optimization literature, emerges as a widely adopted strategy for accelerating policy optimization in RL in practice due to its simplicity \citep{vieillard2020momentum, huang2020momentum}.}
Classic momentum methods, such as Nesterov momentum \citep{nesterov1983method} {and} the heavy-ball method \citep{polyak1964some},
have been seamlessly integrated into optimization solvers and deep learning frameworks like PyTorch \citep{paszke2019pytorch}. These approaches consistently outperform standard PG methods under A2C \citep{konda1999actor} and PPO \citep{schulman2017proximal}, as evidenced by empirical studies \citep{henderson2018did,andrychowicz2021matters}.
Nesterov's Accelerated Gradient (NAG) method, introduced by \citet{nesterov1983method}, is a first-order approach originally designed for convex {minimization} to improve the convergence rate to $O(1/t^2)$ {from the $O(1/t)$ rate of gradient descent}. Despite its wide applicability in various types of optimization problems over the past decades, to the best of our knowledge, NAG has never been theoretically analyzed in the context of RL for its {convergence rate}, mainly due to the non-concavity of RL objectives. Therefore, there exists one important and yet unexplored research question: \textit{Could the Nesterov momentum achieve {convergence rates} better than or comparable to the existing PG and other acceleration methods in the context of RL?}


To answer this question, this paper introduces Accelerated Policy Gradient (APG), which utilizes Nesterov acceleration to address the policy optimization problem of RL. 
Despite the existing knowledge about the NAG methods from previous research \citep{beck2009fast,beck2009fastb,ghadimi2016accelerated,li2015accelerated,su2014differential,carmon2018accelerated}, there remain several fundamental challenges in establishing  convergence rates within the realm of RL: (i) \textit{NAG convergence results under nonconvex problems}: Although there is a plethora of theoretical works studying the convergence of NAG under general nonconvex problems, these results only establish convergence {of first-order optimality measures}. {It is infeasible to characterize the convergence rate based solely on these results.}
(ii) \textit{Inherent characteristics of the momentum term}: From an analytical perspective, the momentum term demonstrates intricate interactions with previous updates. As a result, accurately quantifying the specific impact of momentum during the execution of APG poses a considerable challenge.
(iii) \textit{The nature of the unbounded optimal parameter under softmax parameterization}: A crucial factor in characterizing the sub-optimality gap in the theory of optimization is the norm of the distance between the initial parameter and the optimal parameter \citep{beck2009fast,beck2009fastb,ghadimi2016accelerated}. However, in the case of softmax parameterization in RL, the parameter of the optimal action of each state tends to approach infinity. As a result, the norm involved in the sub-optimality gap becomes infinite, thereby hindering characterization of convergence rates using existing results.

{\textbf{Our Contributions.} Despite the above challenges, we present an affirmative answer to the research question described above and provide the first characterization of the {convergence rate} of NAG in the context of RL.
{Specifically, we present useful insights and novel techniques to tackle the above technical challenges:
{Regarding (i), we discover a fundamental property of the RL objective called \textit{local near-concavity}, which indicates that the RL objective is nearly concave in the proximity of the optimal policy, despite its non-concave global landscape.}
For (ii), we show that the {locally nearly-concave} region is \textit{absorbing} in the sense that even with the effect of the momentum term, the policy parameter could stay in the {nearly-concave} region indefinitely once {entered}. This result is obtained by carefully quantifying the cumulative effect of each momentum term. 
Towards (iii), we introduce surrogate optimal parameters, which have bounded norms and induce nearly-optimal policies, and thereby characterize the convergence rate of APG.
}
}
We summarize the contributions of this paper as follows:
    \vspace{-2mm}
	\begin{itemize}[leftmargin=*]
    \item We introduce APG, which leverages the Nesterov momentum scheme to accelerate the convergence performance of PG for RL. To the best of our knowledge, this is the first formal attempt at understanding the convergence of the Nesterov momentum in the context of RL.
    \item {To demonstrate the potential of APG in achieving fast convergence, we formally establish that APG enjoys a $\tilde{O}(1/t^2)$ convergence rate under softmax policy parameterization {with nearly constant step {sizes}}.\footnote{Note that this result does not contradict the $\Omega(1/t)$ lower bound of the sub-optimality gap of PG in \citep{mei2020global}. Please refer to \Cref{remark: PG lower bound} for a detailed discussion.}
    Furthermore, we offer that APG also achieves a linear convergence rate of $O(e^{-ct})$ if time-varying step {sizes are} allowed. 
    {These deliver better or comparable results in both constant step size and time-varying step size regimes.}
    For ease of reference, a summarized comparison can be found in Table \ref{table:rate}.
    To achieve this, we present several novel insights into RL and APG, including the local near-concavity property as well as the absorbing behavior of APG. Moreover, we show that these properties can also be applied to establish a $\tilde{O}(1/t)$ convergence rate of PG, which is of independent interest. }
    \item {Through numerical validation on MDP problems, we confirm that APG exhibits a $\tilde{O}(1/t^2)$ rate with nearly constant step sizes and a linear convergence rate with time-varying step sizes, thereby substantially improving the convergence behavior over the standard PG. Furthermore, we validate our APG algorithm on the benchmark Atari games and demonstrate the significant benefit provided by the Nesterov momentum.}
\end{itemize}


\section{Related Work}
\label{sec:related}

\begin{table*}[!ht]
\caption{A summary of the convergence rates under tabular softmax policies under PG and various acceleration methods. Here NPG refers to Natural Policy Gradient \citep{kakade2001natural}, and GNPG refers to Geometry-aware Normalized Policy Gradient \citep{mei2021leveraging}. For the sake of clarity, we use $\eta_i$'s to denote the pre-constants in the step sizes of each algorithm.}
\label{table:rate}
\vskip 0.11in
\begin{center}
    \scalebox{0.86}{
        \begin{tabular}{ccccc}
        \toprule
        \multicolumn{1}{l}{Algorithm} & Step Sizes & Gradient Update & Conv. Rate & Reference \\
        \midrule
        \hyperref[algorithm:APG]{APG}
        & Nearly Constant\footnotemark[2] & $\omega^{(t)} + \eta_1 \cdot \frac{t}{t+1} \cdot \nabla_{\theta}{V^{\pi_{\omega}^{(t)}}(\mu)}$ & \boldsymbol{$\tilde{O}({1} / {t^2})$} & \Cref{theorem: MDP convergence rate} \\
        PG & Constant & $\theta^{(t)} + \eta_2 \cdot \nabla_{\theta}{V^{\pi_{\theta}}(\mu)}$ & $O({1} / {t})$ & \citep{mei2020global} \\
        \midrule
        \hyperref[algorithm:APG]{APG}  & Time-Varying & $\omega^{(t)} + \big[ \eta_3 \cdot \beta^{t} \cdot \nabla_{\theta}{V^{\pi_{\omega}^{(t)}}(\mu)} \big]^{+K}$  & \boldsymbol{$O(e^{-ct})$} & \Cref{theorem: adaptive apg convergence rate} \\
        {NPG} & {Time-Varying} & {$\theta^{(t)} + \eta_5 \cdot Q^{\pi_{\theta}^{(t)}}$} & {$O(e^{-ct})$} & {\citep{khodadadian2021linear, mei2021understanding}} \\
        NPG & Time-Varying & $\theta^{(t)} + (\eta_6 + \log({1} \big/ {\pi_{\theta}^{(t)}}) ) \cdot Q^{\pi_{\theta}^{(t)}}$ & $O(e^{-ct})$ & \citep{khodadadian2021linear} \\
        NPG \footnotemark[3] & Time-Varying & $\theta^{(t)} + \eta_7 \cdot \beta^t \cdot Q^{\pi_{\theta}^{(t)}}$ & $O(e^{-ct})$ & \citep{xiao2022convergence} \\
        GNPG & Time-Varying & $\theta^{(t)} + \eta_8 \cdot {\Big|\Big| \nabla_{\theta}{V^{\pi_{\theta}^{(t)}}(\mu)}  \Big|\Big|_2^{-1}} \cdot {\nabla_{\theta}{V^{\pi_{\theta}^{(t)}}(\mu)} }$ & $O(e^{-ct})$ & \citep{mei2021leveraging}\\
        \bottomrule
        \end{tabular}
    }
\end{center}
\vskip -0.1in
\end{table*}

\textbf{Policy Gradient.}
Policy gradient \citep{sutton1999policy} is a popular RL technique that directly optimizes the objective function by computing and using the gradient of the expected return with respect to the policy parameters. It has several popular variants, such as the REINFORCE algorithm \citep{williams1992simple}, actor-critic methods \citep{konda1999actor}, trust region policy optimization (TRPO) \citep{schulman2015trust}, and proximal policy optimization (PPO) \citep{schulman2017proximal}.
Recently, policy gradient methods have been shown to {converge to the global optimal policy}.
{Convergence properties} of standard policy gradient methods under various settings {have} been proven by
\citet{agarwal2021theory}.
Furthermore, \citet{mei2020global} {characterize} a $O(1/t)$ convergence rate of policy gradient under non-regularized tabular softmax parameterization.

{
\textbf{Acceleration Methods for RL.}
Numerous approaches aim to accelerate {PG} methods in RL. One notable approach is {the} Natural Policy Gradient (NPG), inspired by the natural gradient concept. NPG has been demonstrated to have: (i) a $O(1/t)$ rate with constant step sizes \citep{agarwal2021theory, xiao2022convergence}; (ii) linear convergence with adaptively increasing step sizes \citep{khodadadian2021linear} or non-adaptive exponentially growing step sizes \citep{xiao2022convergence}.
Gradient normalization is another effective method for accelerating PG. This approach involves normalizing the true policy gradient by its {Euclidean} norm. \citet{mei2021leveraging} propose Geometry-aware Normalized PG (GNPG) and demonstrate its linear convergence under softmax policies.
For more details, please refer to \Cref{app:add-related}.
To the best of our knowledge, the effects of momentum in the context of RL remain unclear. In this work, we utilize the Nesterov momentum to achieve substantial acceleration for PG. Please refer to \Cref{table:rate} for a detailed comparison.
}

\textbf{Nesterov's Accelerated Gradient.}
Accelerated gradient methods \citep{nesterov1983method,beck2009fastb,su2014differential,attouch2016rate,carmon2017convex,jin2018accelerated} play a pivotal role in the optimization literature due to their ability to achieve convergence rates {faster than that of} the conventional gradient descent algorithm.
Notably, in the convex {minimization} regime, accelerated gradient methods enjoy a convergence rate as fast as $O(1/t^2)$, surpassing the limited $O(1/t)$ {rate offered by gradient descent}.
In order to {further} enhance the performance of accelerated gradient methods, several variants have been proposed. For instance, \citet{beck2009fast} propose a variant of the accelerated gradient method {with restart to achieve monotonicity in the objective value}.
\citet{ghadimi2016accelerated} present a unified analytical framework for a family of accelerated gradient methods that can be applied to solve convex, non-convex, and stochastic optimization problems.
Moreover, {\citet{li2015accelerated} extend the restart approach of \citet{beck2009fast} with monotonic objective values to further ensure sufficient objective improvement, in order to provide convergence guarantees on stationarity measures for} non-convex problems.
Other restart mechanisms have also been applied in multiple recent accelerated gradient methods \citep{o2015adaptive,li2022restarted}. 
The above list of works is by no means exhaustive and is only meant to provide a brief overview of accelerated gradient methods.
Our paper introduces APG, which combines Nesterov's accelerated gradient and PG methods for RL.
This integration enables a substantial acceleration of the convergence rate compared to the standard PG method.
It is essential to note that in RL problems, we are \emph{maximizing} the objective function, which in general is neither convex nor concave. To connect with the literature about convex \emph{minimization}, searching for properties akin to concavity becomes pivotal.

\footnotetext[2]{As $t/(t+1)$ is nearly constant for large $t$, we regard this design as constant step sizes, with a slight abuse of terminology.}
\footnotetext[3]{While \cite{xiao2022convergence} examine the policy mirror descent (PMD), a generalized version of PG, under direct parameterization instead of softmax policies, it is known that by opting for KL-divergence as the Bregman divergence, the update rule of PMD can be reformulated in a manner akin to Natural Policy Gradient (NPG) expressed in the policy space.}
\section{Preliminaries}
\label{sec:prelim}


\textbf{Markov Decision Processes.} For a finite set $\mathcal{X}$, we use $\Delta(\mathcal{X})$ to denote the probability simplex over $\mathcal{X}$. We consider that a finite Markov decision process (MDP) $\mathcal{M} = (\mathcal{S}, \mathcal{A}, \mathcal{P}, r, \gamma, \rho)$ is determined by: (i) a finite state space $\mathcal{S}$, (ii) a finite action space $\mathcal{A}$\footnote[4]{{We consider the non-trivial case where $|\mathcal{A}| \ge 2$.}}, (iii) a transition kernel $\mathcal{P} : \mathcal{S} \times \mathcal{A} \to \Delta(\mathcal{S})$, determining the transition probability $\mathcal{P}(s'\rvert s,a)$ from each state-action pair $(s,a)$ to the next state $s'$, (iv) a reward function $r : \mathcal{S} \times \mathcal{A} \to \mathbb{R}$, (v) a discount factor $\gamma \in [0, 1)$, and (vi) an initial state distribution $\rho \in \Delta(\mathcal{S})$. Without loss of generality, we presume the one-step reward to lie     in the $[0,1]$ interval. Given a policy $\pi : \mathcal{S} \to \Delta(\mathcal{A})$, the value of state $s$ under $\pi$ is defined as
\begin{align}
\label{eq:state_value_function}
    V^\pi(s) \coloneqq \mathbb{E}\bigg[\sum_{t=0}^{\infty} \gamma^{t} r(s_{t}, a_{t})\bigg\vert \pi, s_{0}=s\bigg].
\end{align}
We use $\bm{V}^{\pi}$ to denote the vector of $V^{\pi}(s)$ of all the states $s\in \cS$.
The goal of the learner (or agent) is to search for a policy that maximizes the following objective function as $V^\pi(\rho) \coloneqq \mathbb{E}_{s \sim \rho}{ \left[ V^\pi(s) \right]}$. The $Q$-value (or action-value) and the advantage function of $\pi$ at $(s,a) \in \mathcal{S} \times \mathcal{A}$ are defined as
\begin{equation}
\begin{aligned}
    Q^\pi(s, a) &\coloneqq r(s, a) + \gamma \sum_{s^\prime}{ \mathcal{P}( s^\prime | s, a) V^\pi(s^\prime) }, \\
    A^\pi(s,a) &\coloneqq Q^\pi(s, a) - V^\pi(s),
\end{aligned}
\label{eq:state_action_value_function} 
\end{equation}
where the advantage function reflects the relative benefit of taking the action $a$ at state $s$ under policy $\pi$. The (discounted) state visitation distribution of $\pi$ is defined as
\begin{align*}
     d_{s_0}^{\pi}(s) \coloneqq (1 - \gamma) \sum_{t=0}^{\infty}{ \gamma^t Pr(s_t = s | s_0, \pi, \mathcal{P}) },
\end{align*}
which reflects how frequently the learner would
visit the state $s$ under policy $\pi$, and we let $d_{\rho}^{\pi}(s) \coloneqq \mathbb{E}_{s_0 \sim \rho}{\left[ d_{s_0}^{\pi}(s) \right]}$ be the expected state visitation distribution under the initial state distribution $\rho$.
Given $\rho$, there exists an optimal policy $\pi^*$ such
that\footnote[5]{See, for example, Theorem 1.7 in
	\citep{agarwal2019reinforcement} for the existence of such an optimal policy.}
\begin{align}
    V^{\pi^*}(\rho) = \max_{\pi : \mathcal{S} \to \Delta(\mathcal{A})}{ V^{\pi}(\rho) }. 
    \label{optimal_objective}
\end{align}
We denote $V^*(\rho) \coloneqq V^{\pi^*}(\rho)$ and $Q^*(s, a) \coloneqq Q^{\pi^*}(s, a)$. 
{Additionally, we make the following assumption:}
\begin{assumption}[\textbf{Unique Optimal Action}]
\label{assump:unique_optimal}

{There is a unique optimal action $a^{*}(s)$ for each state $s \in \cS$.}
\end{assumption}

\textbf{Surrogate Initial State Distribution.} While obtaining the true initial state distribution $\rho$ can be challenging in practice, it is fortunate that this issue can be addressed by considering other \textit{surrogate} initial state distribution $\mu$. Notably, {we will show} in the following theoretical proofs that even in the absence of knowledge about $\rho$, convergence guarantees of APG for $V^{*}(\rho)$ can still be obtained for almost every $\mu\in \Delta(\cS)$. This type of result bears high-level resemblance to those of gradient-based methods for general non-convex optimization \citep{lee2019first,o2019behavior}.
In this paper, we consider that $\mu$ is drawn uniformly at random from the $\lvert \cS\rvert$-dimensional probability simplex.

{\textbf{Softmax Parameterization.} For unconstrained $\theta \in \mathbb{R}^{|\mathcal{S}||\mathcal{A}|}$, the tabular softmax policy parameterization is defined as }
\begin{align}
\pi_{\theta}(\cdot | s) \coloneqq
\frac{\exp(\theta_{s,\cdot})}{\sum_{a^{'} \in \mathcal{A}} \exp(\theta_{s,a^{'}})}.
\label{eq:softmax}
\end{align}
{The term ``tabular'' means that we store a parameter table
	$\theta \in \R^{|\cS||\cA|}$ for each state-action pairs, and
	sometimes we would omit it and call \cref{eq:softmax} softmax parameterization.}

\textbf{Smoothness.}
As shown in prior works \citep{agarwal2021theory, mei2020global}, the RL objective function $V^{\pi_\theta}(\rho)$ enjoys smoothness under softmax parameterization. Specifically, the objective function $\theta \to V^{\pi_\theta}(\rho)$ is $\frac{8}{(1-\gamma)^3}$-smooth.


\textbf{Policy Gradient.} 
In policy gradient methods, the parameters are updated by {the gradient of $V^{\pi_{\theta}}(\mu)$ with respect to $\theta$ under a surrogate initial state distribution $\mu \in \Delta(\cS)$.} {\Cref{algorithm:PG}} presents the pseudo code of PG provided by \citep{mei2020global}.

\begin{algorithm} [!h]
	\caption{Policy Gradient (PG) in \citep{mei2020global}}
	\label{algorithm:PG}
	\begin{algorithmic}

\STATE \textbf{Input}:
Step size $\eta = \frac{1}{L}$, where $L$ is the Lipschitz constant of
the gradient of the objective function $V^{\pi_\theta}(\mu)$.
\STATE \textbf{Initialize}:
$\theta^{(1)}(s,a)$ for all $(s,a)$.

\FOR{$t = 1$ to $T$} 
    \STATE
	\vspace{-.2in}
    \begin{align} 
    \theta^{(t+1)} \leftarrow \theta^{(t)} + \eta \nabla_{\theta}{V^{\pi_{\theta}}(\mu)} \Big\rvert_{\theta = \theta^{(t)}}
    \label{eq:PG}
    \end{align}
	\vspace{-.15in}
\ENDFOR
	\end{algorithmic}
\end{algorithm}

\textbf{Nesterov’s Accelerated Gradient (NAG).}
Nesterov's Accelerated Gradient (NAG) \citep{nesterov1983method} is an optimization algorithm that utilizes a variant of momentum known as Nesterov's momentum to expedite the convergence rate.
Specifically, it computes an intermediate "lookahead" estimate of the gradient by evaluating the objective function at a point slightly ahead of the current estimate. We provide the pseudo code of NAG method as \Cref{algorithm:NAG} in \Cref{app:algo}.

\begin{remark}[\textbf{Step-Size Regimes}]
    \normalfont {Please note that the step-size regimes are categorized in \Cref{table:rate} based on the gradient update presented in \cref{eq:PG}. Under the softmax parameterization, the update $\theta^{(t)} + \eta_5 \cdot Q^{\pi_{\theta}^{(t)}}$ is equivalent to $\theta^{(t)} + \eta_5 \cdot A^{\pi_{\theta}^{(t)}}$, as $A^{\pi_{\theta}^{(t)}}$ is derived by adding an action-independent value $V^{\pi_{\theta}^{(t)}}$ to $Q^{\pi_{\theta}^{(t)}}$.
    According to \Cref{lemma:softmax_pg}, it is evident that the step size is implicitly related to $1 / \pi_{\theta}^{(t)}$ when being compared with \cref{eq:PG}. Therefore, we consider the NPG with the update $\theta^{(t)} + \eta_5 \cdot Q^{\pi_{\theta}^{(t)}}$ as an algorithm with time-varying step sizes.}
\end{remark}

\textbf{Notations.} Throughout the paper, we use $\lVert x\rVert$ to denote the $L_2$ norm of a real vector $x$.

\section{Methodology}
\label{sec:alg}

In this section, we present our proposed algorithm, Accelerated Policy Gradient (APG), which integrates Nesterov acceleration with gradient-based reinforcement learning algorithms. In \Cref{sec:alg:apg}, we introduce our central algorithm, APG. Subsequently, in \Cref{sec:alg:behavior}, we provide a motivating example in the bandit setting to illustrate the convergence behavior of APG. 

\begin{algorithm} [!ht]
	\caption{Accelerated Policy Gradient (APG)}
	\label{algorithm:APG}
	\begin{algorithmic}

\STATE \textbf{Input}:
Step size $\eta^{(t)} > 0$.
\STATE \textbf{Initialize}:
$\theta^{(0)}\in \mathbb{R}^{|\mathcal{S}||\mathcal{A}|}$, $\omega^{(0)} = \theta^{(0)}$.

\FOR{$t = 1$ to $T$} 
    \STATE
	\vspace{-.2in}
    \begin{align} 
    \theta^{(t)} &\leftarrow \omega^{(t-1)} + \eta^{(t)} \nabla_{\theta}{V^{\pi_{\theta}}(\mu)} \Big\rvert_{\theta = \omega^{(t-1)}} \label{algorithm:eq1}\\
    \varphi^{(t)} &\leftarrow \theta^{(t)} + \frac{t-1}{t+2}(\theta^{(t)}-\theta^{(t-1)}) \label{algorithm:eq2}\\
    \omega^{(t)} &\leftarrow
    \begin{cases}
        \varphi^{(t)}, & \text{if } V^{\pi_{\varphi}^{(t)}}(\mu) \ge V^{\pi_{\theta}^{(t)}}(\mu), \\
        \theta^{(t)}, & \text{otherwise.}
    \end{cases}
    \label{algorithm:eq3}
    \end{align}
\ENDFOR
	\end{algorithmic}
\end{algorithm}

\subsection{Accelerated Policy Gradient}
\label{sec:alg:apg}

We propose Accelerated Policy Gradient (APG) and present the pseudo
code of our algorithm in \Cref{algorithm:APG}.
{Throughout this paper, we adopt the notation $\pi_{\theta}^{(t)} \coloneqq \pi_{\theta^{(t)}}$, consistently applying this convention to denote various parameters including $\omega$ and $\varphi$.}
{Our algorithm design draws inspiration from the monotone version of NAG, which was originally proposed in the seminal work \citep{beck2009fast} with a restart mechanism and has been widely adopted by variants of NAG, e.g., \citep{li2015accelerated,o2015adaptive,li2022restarted}.
We adapt these updates to the RL objective, specifically $V^{\pi_{\theta}}(\mu)$. It is important to note that we will specify the step sizes $\eta^{(t)}$ in \Cref{lemma: enter local concavity}, as presented in \Cref{sec:theory}.}

{In \Cref{algorithm:APG}, the gradient update is performed in (\ref{algorithm:eq1}). Following this, (\ref{algorithm:eq2}) calculates the momentum for our parameters, which represents a fundamental technique employed in accelerated gradient methods. It is worth noting that in (\ref{algorithm:eq1}), the gradient is computed with respect to $\omega^{(t-1)}$, which is the parameter that the momentum brings us to, rather than $\theta^{(t)}$ itself. This distinction sets APG apart from the standard policy gradient updates (\Cref{algorithm:PG}). Moreover, (\ref{algorithm:eq3}) serves as the restart mechanism, playing the role of affirming our updates to improve the objective value.}

\begin{remark}
\normalfont {{
{One could also directly apply Nesterov's accelerated gradient (cf. \Cref{algorithm:NAG}) to the PG setting. We provide the pseudo code as \Cref{algorithm:nonrestart_APG} in \Cref{app:non monotone APG}.}
However, in the absence of monotonicity, we must address the intertwined effects between the gradient and the momentum to achieve convergence results. Specifically, we empirically demonstrate that the non-monotonicity could occur during the updates of \Cref{algorithm:nonrestart_APG}.
For more detailed information, please refer to \Cref{app:non monotone APG}.}}
\end{remark}

\subsection{A Motivating Example of APG}
\label{sec:alg:behavior}

Prior to the exposition of convergence analysis, we aim to provide some insights into why APG has the potential to attain a fast convergence rate, especially under the intricate non-concave objectives in reinforcement learning.

\begin{wrapfigure}{r}{0.23\textwidth}
\centering
  \vspace{-\intextsep}
  \includegraphics[width=0.23\textwidth]{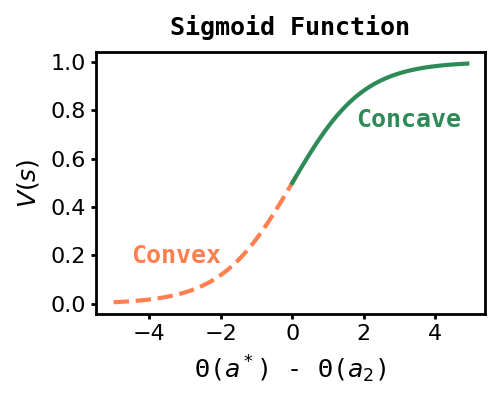}
  \hspace{-3mm}
  \vspace{-\intextsep}
  \caption{{The value function $V(s)$ versus the policy parameter $\theta_{a^*}$ and $\theta_{a_2}$ under a 2-armed bandit problem.}}
  \vspace{-\intextsep}
  \label{exp:sigmoid}
\end{wrapfigure}

Consider a simple two-action bandit with actions $a^*, a_2$ and the reward function $r(a^*) = 1, r(a_2) = 0$. Accordingly, the objective function we aim to optimize is {$\mathbb{E}_{a \sim \pi_\theta} [r(a)] = \pi_{\theta}(a^*) = 1 / (1 + e^{-(\theta_{a^*}-\theta_{a_2}}))$} {under softmax parameterization. This can be expressed as a sigmoid function with the parameter $(\theta_{a^*}-\theta_{a_2})$, as illustrated in \Cref{exp:sigmoid}.}
By deriving the Hessian matrix with respect to our policy parameters $\theta_{a^*}$ and $\theta_{a_2}$, we could characterize the curvature of the objective function around the current policy parameters, which provides useful insights into its local concavity.
Upon analyzing the Hessian matrix, we observe that it exhibits concavity when $\pi_{\theta}(a^*) > 0.5$. The detailed derivation is provided in \Cref{app:experiment}. The aforementioned observation implies that the objective function demonstrates \textit{local concavity} when $\pi_{\theta}(a^*) > 0.5$. Since $\pi^*(a^*) = 1$, it follows that the objective function exhibits local concavity in the proximity of the optimal policy $\pi^*$.
As a result, if one initializes the policy with a high probability assigned to the optimal action $a^*$, then the policy would directly fall in the locally concave part of the objective function.
This allows us to apply the theoretical findings from the existing convergence rate of NAG in \citep{nesterov1983method}, which has demonstrated convergence rates of $O(1/t^2)$ for convex problems.
Based on this insight, we establish the {convergence rate} of APG in the more general MDP setting in Section \ref{sec:theory}.

\section{Convergence Analysis}
\label{sec:theory}


In this section, we take an important first step towards understanding the convergence behavior of APG and discuss the theoretical results of APG in the general MDP setting.
Due to the space limit, we defer the proofs of the following theorems to Appendices \ref{app:asym_conv} and \ref{app:MDP}.



\subsection{Asymptotic Convergence of APG}
\label{sec:theory:convergence}
In this subsection, we will formally present the asymptotic convergence result of APG. This necessitates addressing several key challenges outlined in the introduction. We highlight the challenges tackled in our analysis as follows: (C1) \textit{The existing convergence results for non-convex problems under NAG are not directly applicable}: Note that the asymptotic convergence of standard PG is built on the standard convergence result of gradient descent for non-convex problems (i.e., {convergence of stationarity measures}), as shown in \citep{agarwal2021theory}. While it appears natural to follow the same approach for APG, two fundamental challenges are that the existing results of NAG for non-convex problems typically show best-iterate convergence (e.g., \citep{ghadimi2016accelerated}) rather than last-iterate convergence, and moreover these results hold under the assumption of a bounded domain (e.g., see Theorem 2 of \citep{ghadimi2016accelerated}), which does not hold under the softmax parameterization in RL as the domain of the policy parameters and the optimal $\theta$ could be unbounded.
These are yet additional salient differences between APG and PG. 
(C2) \textit{Characterization of the cumulative effect of each momentum term}: Based on (C1), even if the limiting value functions exist, another crucial obstacle is to precisely quantify the memory effect of the momentum term on the policy's overall evolution.
To address this challenge, we thoroughly examine the accumulation of the gradient and momentum terms, as well as the APG updates, to offer an accurate characterization of the momentum's memory effect on the policy.

Despite the above, we are still able to tackle these challenges and
establish the asymptotic convergence to the global optimal policy of
APG as follows. Recall the definition of the optimal objective in \cref{optimal_objective}.

\begin{restatable}[\textbf{{Asymptotic} Convergence Under Softmax Parameterization}]{theorem}{convergeoptimalthm}
\label{theorem:convergeoptimal}
Consider a tabular softmax parameterized policy $\pi_{\theta}$. For
\hyperref[algorithm:APG]{APG} {with $\eta^{(t)} = \frac{t}{t+1} \frac{(1 - \gamma)^3}{16}$}
{and $\mu$ initialized uniformly at random}, the following holds almost surely:
\begin{equation*}
   \lim_{t\rightarrow \infty}V^{\pi_{\theta}^{(t)}}(s) = V^{*}(s), \forall s\in\cS.
\end{equation*}
\end{restatable}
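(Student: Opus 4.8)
The plan is to combine the restart mechanism of \Cref{algorithm:APG} with two structural facts about the softmax objective---that the iterates improve monotonically and that the look-ahead gradient becomes asymptotically stationary---and then to invoke the \emph{local near-concavity} of $V^{\pi_\theta}(\mu)$ to rule out convergence to a sub-optimal policy. \textbf{Step 1 (monotone improvement and vanishing gradients).} Let $L = \tfrac{8}{(1-\gamma)^3}$ be the smoothness constant of $\theta \mapsto V^{\pi_\theta}(\mu)$, write $g^{(t)} \coloneqq \nabla_{\theta} V^{\pi_\theta}(\mu)\big|_{\theta = \omega^{(t)}}$ for the look-ahead gradient, and note $\eta^{(t)} = \tfrac{t}{t+1}\cdot\tfrac{1}{2L} \le \tfrac{1}{2L}$. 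The standard ascent lemma for $L$-smooth functions applied to the update (\ref{algorithm:eq1}) gives
\[
V^{\pi_\theta^{(t)}}(\mu) \;\ge\; V^{\pi_\omega^{(t-1)}}(\mu) + \frac{\eta^{(t)}}{2}\, \lVert g^{(t-1)} \rVert^2 ,
\]
and the restart (\ref{algorithm:eq3}) then yields $V^{\pi_\omega^{(t)}}(\mu) \ge V^{\pi_\theta^{(t)}}(\mu) \ge V^{\pi_\omega^{(t-1)}}(\mu)$. Hence $\{ V^{\pi_\omega^{(t)}}(\mu) \}$ is non-decreasing and bounded above by $\tfrac{1}{1-\gamma}$, so it converges to some $V^{\infty}$; telescoping with $\inf_t \eta^{(t)} = \eta^{(1)} > 0$ gives $\sum_{t} \lVert g^{(t)} \rVert^2 < \infty$, so $g^{(t)} \to 0$, and the sandwich forces $V^{\pi_\theta^{(t)}}(\mu) \to V^{\infty}$ as well.

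\textbf{Step 2 (asymptotic stationarity and the key claim).} By the softmax policy-gradient identity (\Cref{lemma:softmax_pg}), $\partial V^{\pi_\theta}(\mu) / \partial \theta_{s,a} = \tfrac{1}{1-\gamma}\, d_{\mu}^{\pi_\theta}(s)\, \pi_\theta(a|s)\, A^{\pi_\theta}(s,a)$, and $d_{\mu}^{\pi}(s) \ge (1-\gamma)\mu(s) > 0$ for every $s$ whenever $\mu$ lies in the interior of $\Delta(\mathcal{S})$, which holds almost surely. Combined with Step 1, this gives $\pi_\omega^{(t)}(a|s)\, A^{\pi_\omega^{(t)}}(s,a) \to 0$ for all $(s,a)$. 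Since $V^{\pi_\theta}(\mu)$ is non-concave, this alone does not imply optimality---a policy concentrated on a sub-optimal action is also stationary---so the crucial intermediate claim is that $\liminf_{t} \pi_\omega^{(t)}(a^*(s)|s) > 0$ at every state $s$, where $a^*(s)$ is the unique optimal action of \Cref{assump:unique_optimal}; equivalently, the iterates enter, and remain in, the locally nearly-concave region. This is the content of \Cref{lemma: enter local concavity} together with an invariance argument, and it is the step I expect to be the main obstacle.

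\textbf{Step 3 (controlling the momentum).} For plain PG the claim above follows from a coordinate-wise monotonicity argument---generalizing the two-armed-bandit computation behind \Cref{exp:sigmoid}, where the objective becomes concave exactly once $\pi_\theta(a^*) > \tfrac12$---because there the increments of $\theta^{(t)}_{s,a^*(s)}$ carry a favorable sign and a gradient coordinate bounded away from zero would contradict Step 1. Under APG the momentum step (\ref{algorithm:eq2}) adds $\tfrac{t-1}{t+2}\bigl( \theta^{(t)}_{s,a^*(s)} - \theta^{(t-1)}_{s,a^*(s)} \bigr)$, which may point backward and couples the update to the entire history, so neither coordinate-wise monotonicity nor a one-step analysis survives. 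I would instead (i) show the nearly-concave region is reached within finitely many steps, driven by the monotone-improvement pressure of Step 1 and the vanishing look-ahead gradient, and (ii) show this region is \emph{absorbing}: the total backward displacement produced by all momentum terms is finite---bounded using the square-summability of $\lVert g^{(t)} \rVert$ from Step 1, the coefficient bound $\tfrac{t-1}{t+2} < 1$, and, where a sub-optimality gap must be estimated, the \emph{surrogate optimal parameters} (which have bounded norm yet induce near-optimal policies, thereby sidestepping the unboundedness of the true optimal $\theta$)---so that once the iterates are sufficiently deep inside the region they never leave. Carefully quantifying this cumulative momentum memory is the heart of the argument.

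\textbf{Step 4 (conclusion).} Once $\pi_\omega^{(t)}(a^*(s)|s) \ge c > 0$ for all large $t$ and every $s$, dividing the stationarity statement of Step 2 by this lower bound gives $A^{\pi_\omega^{(t)}}(s,a^*(s)) \to 0$ for every $s$; the performance difference lemma then gives
\[
V^{*}(s) - V^{\pi_\omega^{(t)}}(s) = \frac{1}{1-\gamma}\, \mathbb{E}_{s' \sim d_{s}^{\pi^*}}\!\bigl[ A^{\pi_\omega^{(t)}}(s', a^*(s')) \bigr] \longrightarrow 0 ,
\]
so $V^{\pi_\omega^{(t)}}(s) \to V^{*}(s)$ for all $s$, hence $V^{\infty} = V^{*}(\mu)$ and, by the sandwich of Step 1, $V^{\pi_\theta^{(t)}}(\mu) \to V^{*}(\mu)$. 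Finally, since $V^{*}(\mu) - V^{\pi_\theta^{(t)}}(\mu) = \sum_{s} \mu(s) \bigl( V^{*}(s) - V^{\pi_\theta^{(t)}}(s) \bigr)$ with every summand non-negative and $\mu(s) > 0$ almost surely, each per-state gap tends to $0$, which is the claim. The null set behind ``almost surely'' consists of the boundary of $\Delta(\mathcal{S})$ together with the measure-zero set of $\mu$ on which two distinct deterministic policies share the same $\mu$-value, which the tie-breaking in Step 3 must exclude.
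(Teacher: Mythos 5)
Your Steps 1, 2 and 4 reproduce the paper's skeleton: monotone improvement of $V^{\pi_\omega^{(t)}}(\mu)$ from the restart plus the ascent step, vanishing look-ahead gradients, and the reduction of global optimality to the single claim $\liminf_t \pi_\omega^{(t)}(a^*(s)\mid s)>0$ for every $s$. The gap is Step 3, which is where the entire difficulty lives, and your sketch of it has two concrete defects. First, it is circular: \Cref{lemma: enter local concavity} is proved in the paper \emph{from} \Cref{theorem:convergeoptimal} (via \Cref{cor:policy convergence}), so it cannot be invoked here; you still owe an independent argument for why the iterates cannot converge to a stationary point concentrated on a sub-optimal action, and ``monotone-improvement pressure plus vanishing gradients'' is consistent with exactly such a limit. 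Second, the quantitative mechanism you propose for absorption --- controlling the total backward momentum displacement by square-summability of $\lVert g^{(t)}\rVert$ --- does not close: a gradient injected at step $j$ is re-amplified by the momentum at every later step, and the cumulative coefficient (the paper's $G(j,t)$) telescopes to $\Theta(j)$, so you would need $\sum_j j\,\lVert g^{(j)}\rVert<\infty$, which square-summability does not give (take $\lVert g^{(j)}\rVert = 1/j$).

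The paper's actual route is different and sign-based rather than magnitude-based. It first upgrades convergence of $V^{\pi_\omega^{(t)}}(\mu)$ to convergence of the \emph{per-state} values $V^{\pi_\omega^{(t)}}(s)$ (a step absent from your Step 1): the set of stationary value vectors is finite, distinct members have distinct $\mu$-values for almost every $\mu$, and monotonicity of the $\mu$-value then forbids two limit points. With the limiting advantages $A^{(\infty)}(s,a)$ well defined, actions are partitioned by sign into $I_s^+, I_s^-, I_s^0$, and one assumes $I_s^+\neq\emptyset$ for contradiction. Parameters of $I_s^+$ actions are bounded below and those of $I_s^-$ tend to $-\infty$, so the probability mass concentrates on a block $B_s^0(a_+)\subseteq I_s^0$ whose summed parameters must diverge to $+\infty$; yet the identity $\sum_a\pi(a\mid s)A^{\pi}(s,a)=0$ together with the definite positive contribution of $I_s^+$ forces the summed gradient over $B_s^0(a_+)$ to be eventually strictly negative, and the momentum expansion of $\theta^{(t)}$ then bounds that parameter sum from above --- a contradiction. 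The momentum is tamed throughout by sign-preservation arguments (once the optimal action's gradient and momentum dominate, they dominate forever) rather than by a norm bound. To repair your outline, Step 3 must be replaced by this or an equivalent argument; nothing in your Steps 1--2 supplies it.
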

The complete proof is provided in Appendix \ref{app:asym_conv}.

\begin{remark}
\normalfont Note that Theorem \ref{theorem:convergeoptimal} suggests the use of step size $\eta^{(t)}$ that depends on $t/(t+1)$. This choice is related to one inherent issue of NAG: the choices of step sizes are typically different for the convex and the non-convex problems (e.g., \citep{ghadimi2016accelerated}). 
Recall from Section \ref{sec:alg:behavior} that the RL objective function could be locally concave around the optimal policy despite its non-concavity of the global landscape.
To enable the use of the same step size scheme throughout the whole training process, we find that incorporating the ratio $t/(t+1)$ could achieve the best of both concave and nonconcave cases.
\end{remark}
\begin{remark}
\normalfont In \Cref{theorem:convergeoptimal}, the almost-sure statement comes from the random initialization of the surrogate initial state distribution $\mu$. This condition resembles those of the convergence results of gradient-based methods for non-convex problems (e.g., \citep{lee2019first,o2019behavior}).
\end{remark}
\subsection{Convergence Rate of APG}
\label{sec:theory:bandit}
In this subsection, we leverage the asymptotic convergence of APG and proceed to characterize the convergence rate of APG under softmax parameterization. {We assume that \Cref{assump:unique_optimal} holds for the subsequent results.} {For ease of notation, we denote the actions for each state $s\in\cS$ as $a^*(s), a_2(s), \dots, a_{|\cA|}(s)$, ordered such that $Q^{*}(s, a^*(s)) > Q^{*}(s, a_2(s)) \geq \dots \geq Q^{*}(s, a_{|\cA|}(s))$. We will begin by introducing the two most fundamental concepts throughout this paper: the $C$-nearly concave property and the feasible update domain, as follows:}
{\begin{definition}[\textbf{$C$-Near Concavity}] \label{def:nearly_concavity_general}
	For {$C > 1$},
    a function $f$ is said to be $C$-nearly concave at $\theta$
	relative to a convex set $\mathcal{X} \ni \theta$ 
	if for any $\theta' \in \mathcal{X}$, we have
\begin{align*}
    f(\theta') \le f(\theta) + C \cdot \Big\langle \nabla f(\theta), \theta'-\theta \Big\rangle.
\end{align*}
\end{definition}}



\begin{definition}
\label{def: feasible update domain}
    We define the \textit{feasible update domain} $\mathcal{U}$ as
    \begin{align*}
        \mathcal{U} \coloneqq \{\bm{d} \in \mathbb{R}^{|\cS| |\cA|} : d_{s, a^*(s)} > \max_{a \neq a^*(s)} d_{s, a}, \forall s \in\cS\},
    \end{align*}
    where $d_{s, a} \in \mathbb{R}$ is the update with respect to the parameter $\theta_{s, a}$.
\end{definition}
In other words, a vector $\bm{d}$ in the feasible update domain
$\mathcal{U}$ possesses the property that the update magnitude with respect to $\theta_{s, a^*(s)}$ is the greatest among the actions in each state $s \in \mathcal{S}$.
{We use $\mathcal{U}$ to help us characterize the locally nearly-concave regime as shown below, and subsequently show that the updates of APG would all fall into $\mathcal{U}$ after some finite time.}


\begin{lemma}[\textbf{Locally $C$-Near Concavity; Informal}]
\label{lemma:local nearly concavity}
{Given any $C > 1$ and any $\bm{d} \in \cU$. Let $\theta$ be a
policy parameter satisfying the following conditions for all $s \in
\cS$: (i)
$V^{\pi_{\theta}}(s) > Q^*(s, a_2(s))$; (ii) For
some $M_{C, \bm{d}} > 0$ (depends on $C$ and $\bm{d}$), $\theta_{s,
a^*(s)} - \theta_{s, a} > M_{C, \bm{d}}$, for all $a \neq a^*(s)$.
Then, the objective function $\theta \mapsto V^{\pi_{\theta}}(\mu)$ is
$C$-nearly concave at $\theta$ along the direction $\bm{d}$.}
\end{lemma}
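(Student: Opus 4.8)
The plan is to verify the inequality of Definition~\ref{def:nearly_concavity_general} with the convex set taken to be the ray $\mathcal{X} = \{\theta + \lambda\bm{d} : \lambda \ge 0\}$ — this is the precise meaning of ``$C$-nearly concave at $\theta$ along the direction $\bm{d}$'' (the ray and not the full line, since $\bm{d}\in\mathcal{U}$ and moving in the $-\bm{d}$ direction would pull the policy away from the optimal region). Writing $f(\vartheta):=V^{\pi_{\vartheta}}(\mu)$ and $g(\lambda):=f(\theta+\lambda\bm{d})$, the goal reduces to showing $g(\lambda) \le g(0) + C\lambda\, g'(0)$ for all $\lambda\ge 0$.

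First I would pin down the sign structure of the advantages. Since $\pi^{*}$ is uniformly optimal, $V^{\pi_{\theta}}(x)\le V^{*}(x)$ and hence $Q^{\pi_{\theta}}(x,a)\le Q^{*}(x,a)$ for every $(x,a)$; together with condition~(i) this yields, for every $a\neq a^{*}(x)$, $Q^{\pi_{\theta}}(x,a)\le Q^{*}(x,a_2(x)) < V^{\pi_{\theta}}(x)$, i.e. $A^{\pi_{\theta}}(x,a)<0$, and therefore $A^{\pi_{\theta}}(x,a^{*}(x))>0$ because $\sum_{a}\pi_{\theta}(a|x)A^{\pi_{\theta}}(x,a)=0$. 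Crucially, because $\bm{d}\in\mathcal{U}$, moving along the ray only increases each gap $\theta_{x,a^{*}(x)}-\theta_{x,a}$ and hence only increases $\pi(a^{*}(x)|x)$, so conditions~(i)--(ii) persist at every point of the ray. Next, using the policy gradient expression (\Cref{lemma:softmax_pg}) $\partial f/\partial\theta_{x,a}=\tfrac{1}{1-\gamma}d_{\mu}^{\pi_{\theta}}(x)\pi_{\theta}(a|x)A^{\pi_{\theta}}(x,a)$ and re-centering via $\sum_{a}\pi_{\theta}(a|x)A^{\pi_{\theta}}(x,a)=0$,
$$g'(0)=\langle\nabla f(\theta),\bm{d}\rangle=\frac{1}{1-\gamma}\sum_{x}d_{\mu}^{\pi_{\theta}}(x)\sum_{a\neq a^{*}(x)}\pi_{\theta}(a|x)\bigl(-A^{\pi_{\theta}}(x,a)\bigr)\bigl(d_{x,a^{*}(x)}-d_{x,a}\bigr)>0,$$
since every factor is positive ($d_{\mu}^{\pi_{\theta}}(x)\ge(1-\gamma)\mu(x)>0$, $\bm{d}\in\mathcal{U}$, and the advantage signs above); the same identity gives $g'(\lambda)>0$ along the whole ray.

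The crux is the curvature of $f$ along $\bm{d}$ in this near-optimal regime, $g''(\lambda)=\bm{d}^{\top}\nabla^{2}f(\theta+\lambda\bm{d})\bm{d}$. I would compute the second-order behaviour of $V^{\pi_{\theta}}(\mu)$ along $\bm{d}$, carefully tracking the joint $\theta$-dependence of $d_{\mu}^{\pi_{\theta}}$, $\pi_{\theta}$, and $A^{\pi_{\theta}}$, and show that the threshold $M_{C,\bm{d}}$ can be chosen — as a function of the MDP, of $C$, and of $\bm{d}$ through quantities such as $\min_{x}\min_{a\neq a^{*}(x)}(d_{x,a^{*}(x)}-d_{x,a})$ and $\lVert\bm{d}\rVert$ — so that whenever the gap at a point of the ray exceeds $M_{C,\bm{d}}$, the curvature there is non-positive, or at worst bounded by a residual that the $(C-1)$-slack will absorb. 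This is the MDP analogue of the sigmoid being concave on $\{\pi(a^{*})\ge 1/2\}$ in the motivating example: once every $\pi(a^{*}(x)|x)$ is close to $1$, the ``diagonal'' sigmoid-type terms contribute negative curvature of order $\Theta(e^{-\mathrm{gap}})$, while the residual cross-state terms (coming from differentiating the visitation distribution $d_{\mu}^{\pi_{\theta}}$) are of strictly smaller order and hence dominated once the gap is large. By the persistence of the gap established above, this curvature bound then holds along the entire ray.

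Finally I would assemble the pieces. If the curvature step yields $g''\le 0$ on $[0,\infty)$, then $g$ is concave and $g(\lambda)\le g(0)+\lambda g'(0)\le g(0)+C\lambda g'(0)$ follows from $g'(0)>0$ and $C>1$. If instead it only yields $g''\le\varepsilon$ with $\varepsilon$ negligibly small, I would split into two regimes: for $\lambda$ below a threshold, the Taylor estimate $g(\lambda)\le g(0)+\lambda g'(0)+\tfrac12\lambda^{2}\varepsilon$ combined with $\varepsilon$ being exponentially small relative to $(C-1)g'(0)$ gives the bound; for $\lambda$ above that threshold, $f$ is bounded above by $V^{*}(\mu)$ whereas $g(0)+C\lambda g'(0)$ grows linearly in $\lambda$ and eventually exceeds $V^{*}(\mu)$, and choosing $M_{C,\bm{d}}$ large enough makes the two thresholds overlap. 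I expect the genuinely hard part to be the curvature analysis in the third paragraph: writing down the second-order expansion of $V^{\pi_{\theta}}(\mu)$ along $\bm{d}$ (in particular handling the term from $\nabla_{\theta}d_{\mu}^{\pi_{\theta}}$, itself an infinite geometric series in the transition operator), isolating the dominant negative ``sigmoid curvature'' contribution, and bounding all residual contributions by an explicit function of the gap so that the chosen $M_{C,\bm{d}}$ and the $(C-1)$-slack control them uniformly over $\theta$ and over the ray; the remaining steps are comparatively routine.
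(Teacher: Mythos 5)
Your proposal takes a genuinely different route from the paper, and the route as sketched has a real gap at its center. The paper never computes the Hessian of $V^{\pi_\theta}(\mu)$. Instead it (a) establishes, by a direct second-derivative computation of the softmax alone (\Cref{lemma: theta to pi}), that each coordinate map $\theta \mapsto \pi_\theta(a|s)$ is exactly convex along $\bm{d}$ for $a \neq a^*(s)$ and $\theta\mapsto\pi_\theta(a^*(s)|s)$ exactly concave, once the gap exceeds a $\bm{d}$-dependent threshold; (b) writes $V^{\pi_{\theta'}}(\mu)-V^{\pi_\theta}(\mu)$ via the performance difference lemma, weighted by $d_\mu^{\pi_{\theta'}}$, and pays the factor $C$ exactly once to replace $d_\mu^{\pi_{\theta'}}$ by $d_\mu^{\pi_\theta}$ using \Cref{lemma: state visitation distribution fix} (this is where the second branch of $M_{C,\bm{d}}$ comes from, and why the threshold blows up as $C\to1$); and (c) converts $\langle\nabla_\pi V,\pi_{\theta'}-\pi_\theta\rangle$ into $\langle\nabla_\theta V,\theta'-\theta\rangle$ using the convexity/concavity from (a) together with the advantage sign structure you also derive. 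So the constant $C$ in the paper is not absorbing a curvature residual at all; it is absorbing a first-order visitation-distribution mismatch between the two endpoints, which your ray-based curvature argument does not obviously see.

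The concrete gap in your plan is the third paragraph, which you correctly identify as the crux but leave entirely undone, and for which there is a specific reason to worry. Every quantity in your comparison is exponentially small in the gap with $\bm{d}$-dependent rates: $g'(0)=\Theta(e^{-\mathrm{gap}})$ because each $\pi_\theta(a|s)$, $a\neq a^*(s)$, is $O(e^{-M})$; the ``negative sigmoid curvature'' is of the same order; and the residual terms coming from $\nabla_\theta d_\mu^{\pi_\theta}$ are again $O(e^{-\mathrm{gap}})$ per factor. Your claim that the residuals are ``of strictly smaller order'' is therefore not automatic -- it is a competition between exponentials whose rates are governed by the per-state margins $d_{s,a^*(s)}-d_{s,a}$, which can be arbitrarily small within $\cU$, and you give no argument that the sign of $g''$ or the bound $|g''_+|\le (C-1)\,g'(0)\cdot(\text{const})$ survives this. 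The final assembly has the same problem: your Taylor regime works for $\lambda\le 2(C-1)g'(0)/\varepsilon$ and your saturation regime for $\lambda\ge (V^*(\mu)-g(0))/(Cg'(0))$, and since $\varepsilon$, $g'(0)$, and $V^*(\mu)-g(0)$ are all $\Theta(e^{-\mathrm{gap}})$ with different constants, increasing $M_{C,\bm{d}}$ does not visibly force these two thresholds to overlap; you assert it without computation. Until the curvature bound and the threshold overlap are established quantitatively, the proof does not close. If you want to salvage the second-order viewpoint, you would need the full expansion of $\bm{d}^\top\nabla^2 V^{\pi_\theta}(\mu)\bm{d}$ including the $\nabla_\theta d_\mu^{\pi_\theta}$ cross terms and a uniform-in-$\lambda$ bound along the segment; the paper's first-order decomposition sidesteps all of this, which is why it is the easier route.
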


{The formal information regarding the constant $M_{C, \bm{d}}$ mentioned in \Cref{lemma:local nearly concavity} is provided in \Cref{app:MDP}.}

\begin{remark}
\normalfont {Notably, the optimal parameter under the softmax parameterization is unbounded. Consequently, it is challenging to determine whether a ``neighborhood'' exists near the optimal parameter such that the structure of the objective function exhibits desirable properties, such as concavity or near concavity. \Cref{lemma:local nearly concavity} delineates the sufficient conditions under which the objective function $V^{\pi_{\theta}}(\mu)$ achieves $C$-near concavity.}
\end{remark}

\begin{remark}
{\normalfont{{Note that the notion of \textit{weak-quasi-convexity}, as defined in \citep{guminov2017accelerated, hardt2018gradient, bu2020note}, shares similarities to our own definition in \Cref{def:nearly_concavity_general}. While their definition encompasses the direction from any to the optimal parameter, ours in \Cref{lemma:local nearly concavity} is more relaxed.
{Despite that the optimal parameter $\theta^*$ in our problem does not exist due to its unbounded nature, according to the definition of the feasible update domain $\mathcal{U}$, the direction toward the optimal policy $\pi^*$ in the parameter space is always contained in $\mathcal{U}$.}
}}}
\end{remark}
{With \Cref{lemma:local nearly concavity}, our goal is to investigate whether APG updates can move along the local $C$-nearly concavity direction after some finite number of time steps.} To address this, we establish \Cref{lemma: enter local concavity}, which guarantees the existence of a finite time $T$ such that our policy will indeed achieve the condition stated in \Cref{lemma:local nearly concavity} through the APG updates and remain in this region.


\begin{restatable}{lemma}{stationarylemma}
\label{lemma: enter local concavity}
Consider a tabular softmax parameterized policy.
	Under \Cref{assump:unique_optimal} and the setting of \Cref{theorem:convergeoptimal}, 
 the following holds almost surely: Given any $M > 0$, there exists a
 finite time T such that for all $t \ge T$, $s\in\cS$, and $a \neq
 a^*(s)$, we have (i) $\theta_{s, a^*(s)}^{(t)} - \theta_{s, a}^{(t)} > M$, (ii) $V^{\pi_{\theta}^{(t)}}(s) > Q^*(s, a_2(s))$, (iii) $\left.\frac{\partial V^{\pi_{\theta}}(\mu)}{\partial\theta_{s, a^*(s)}}\right\rvert_{\theta = \omega^{(t)}} > 0 > \left.\frac{\partial V^{\pi_{\theta}}(\mu)}{\partial\theta_{s, a}}\right\rvert_{\theta = \omega^{(t)}}$, (iv) $\omega_{s, a^*(s)}^{(t)} - \theta_{s, a^*(s)}^{(t)} \ge \omega_{s, a}^{(t)} - \theta_{s, a}^{(t)}$.
\end{restatable}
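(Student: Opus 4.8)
The plan is to build on the asymptotic convergence result of \Cref{theorem:convergeoptimal}, which guarantees that $V^{\pi_{\theta}^{(t)}}(s) \to V^*(s)$ for all $s$ almost surely, and to bootstrap this into the four quantitative claims by a careful bookkeeping of the APG updates. First I would establish claim (ii): since $V^{\pi_{\theta}^{(t)}}(s) \to V^*(s) > Q^*(s, a_2(s))$ (the strict inequality using \Cref{assump:unique_optimal}, as $V^*(s) = Q^*(s, a^*(s)) > Q^*(s, a_2(s))$), there is a finite time after which $V^{\pi_{\theta}^{(t)}}(s) > Q^*(s, a_2(s))$ for every $s$. This also forces $\pi_{\theta}^{(t)}(a^*(s) \mid s) \to 1$ for each $s$, because any policy whose value exceeds $Q^*(s,a_2(s))$ at every state must asymptotically concentrate on the optimal action; more precisely one uses the performance-difference lemma together with the advantage gaps to show $\pi^{(t)}(a \mid s) \to 0$ for all suboptimal $a$.

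Next I would handle claim (i), the divergence of the parameter gap $\theta_{s,a^*(s)}^{(t)} - \theta_{s,a}^{(t)}$. Since $\pi_{\theta}^{(t)}(a^*(s)\mid s) \to 1$, under softmax parameterization we have $\theta_{s,a^*(s)}^{(t)} - \log\sum_{a'}\exp(\theta_{s,a'}^{(t)}) \to 0$ while $\theta_{s,a}^{(t)} - \log\sum_{a'}\exp(\theta_{s,a'}^{(t)}) \to -\infty$ for $a \neq a^*(s)$, so the difference tends to $+\infty$; hence for any fixed $M$ it eventually exceeds $M$. Here I should be slightly careful that APG updates $\omega^{(t)}$ rather than $\theta^{(t)}$ directly, but the restart step \eqref{algorithm:eq3} ensures $\omega^{(t)} \in \{\theta^{(t)}, \varphi^{(t)}\}$ and both induce the same policy-concentration behavior in the limit, so the argument for $\theta^{(t)}$ transfers.

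For claims (iii) and (iv) I would invoke the explicit softmax policy gradient formula (the paper's \Cref{lemma:softmax_pg}, cited in the remarks): $\partial V^{\pi_{\theta}}(\mu)/\partial \theta_{s,a} = \frac{1}{1-\gamma} d_{\mu}^{\pi_{\theta}}(s)\, \pi_{\theta}(a\mid s)\, A^{\pi_{\theta}}(s,a)$. Once the policy is close enough to optimal (which by claims (i)–(ii) happens after finite time), the advantage $A^{\pi_{\theta}}(s,a^*(s))$ is strictly positive and $A^{\pi_{\theta}}(s,a)$ strictly negative for $a \neq a^*(s)$, because advantages depend continuously on the policy and at $\pi^*$ they equal $Q^*(s,a^*(s)) - V^*(s) = 0$ and $Q^*(s,a) - V^*(s) < 0$ respectively — the positivity at $a^*(s)$ for near-optimal (but not exactly optimal) policies needs the observation that $V^{\pi}(s) < Q^{\pi}(s,a^*(s))$ whenever $\pi$ is not yet optimal, combined with a uniform gap argument. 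Since $d_{\mu}^{\pi_\theta}(s) > 0$ (as $\mu$ has full support a.s., being drawn uniformly from the simplex) and $\pi_\theta(a\mid s) > 0$ always, the sign of each partial derivative matches the sign of the corresponding advantage, giving (iii) evaluated at $\omega^{(t)}$. Claim (iv) then follows from (iii) together with the update \eqref{algorithm:eq2}: $\omega^{(t)}_{s,a} - \theta^{(t)}_{s,a}$ equals either $0$ or $\frac{t-1}{t+2}(\theta^{(t)}_{s,a} - \theta^{(t-1)}_{s,a})$, and the latter difference is, by \eqref{algorithm:eq1}, proportional to the gradient component at $\omega^{(t-1)}$, whose sign ordering across actions is exactly what (iii) provides at the previous step; an inductive argument over the (finitely many) steps after entering the region closes this.

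The main obstacle I anticipate is (iii)–(iv): proving the \emph{strict} sign of the gradient component for the optimal action, uniformly over all sufficiently large $t$. Asymptotic convergence alone gives $A^{\pi_{\theta}^{(t)}}(s,a^*(s)) \to 0$, which is consistent with that quantity being zero or even slightly negative; one must rule out the degenerate case by showing that as long as $\pi^{(t)}$ has not exactly reached $\pi^*$ (which it never does in finite time under softmax), the action-value $Q^{\pi_{\theta}^{(t)}}(s,a^*(s))$ strictly exceeds $V^{\pi_{\theta}^{(t)}}(s)$ — this requires a monotonicity/consistency argument about how suboptimality propagates through the Bellman operator, likely leveraging that the policy places vanishing but positive mass on suboptimal actions at downstream states. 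Coupling this with the absorbing structure (so that once (i)–(iv) hold they continue to hold, which is really the content of the subsequent lemmas) is where the bulk of the technical care will go; everything else is a reasonably direct consequence of \Cref{theorem:convergeoptimal} and the softmax gradient identity.
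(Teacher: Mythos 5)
Your route for (i)--(iii) matches the paper's. For (iii), however, the obstacle you flag at the end (ruling out $A^{\pi_{\theta}^{(t)}}(s,a^*(s))\le 0$) does not require any Bellman-propagation or "uniform gap" argument: the paper's Lemma on $Q^*(s,a_2(s))$ shows $Q^{\pi}(s,a)\le Q^*(s,a)\le Q^*(s,a_2(s))$ for \emph{every} policy $\pi$ and every $a\neq a^*(s)$, so condition (ii) at $\omega^{(t)}$ immediately gives $A^{\pi_{\omega}^{(t)}}(s,a)<0$ strictly for all suboptimal $a$; then $\sum_a \pi_{\omega}^{(t)}(a|s)A^{\pi_{\omega}^{(t)}}(s,a)=0$ with all softmax weights strictly positive forces $A^{\pi_{\omega}^{(t)}}(s,a^*(s))>0$ strictly. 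You have all the ingredients but leave this assembled only as an anticipated difficulty.

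The genuine gap is in (iv). Your claim that $\theta^{(t)}_{s,a}-\theta^{(t-1)}_{s,a}$ is "proportional to the gradient component at $\omega^{(t-1)}$" is incorrect: by \eqref{algorithm:eq1} it equals $\bigl(\omega^{(t-1)}_{s,a}-\theta^{(t-1)}_{s,a}\bigr)+\eta^{(t)}\nabla_{s,a}V\big\rvert_{\omega^{(t-1)}}$, i.e.\ it carries the previous momentum term, not just the gradient. Consequently your induction only shows that the ordering in (iv), \emph{once it holds} at some $t\ge T_1$, is preserved thereafter (this is the paper's Claim~1); it never establishes a base case. The momentum inherited from before the policy enters the good region may have the wrong ordering, and if the restart in \eqref{algorithm:eq3} never fires you have no time at which $\omega^{(t)}=\theta^{(t)}$ to start the induction from. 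The paper closes exactly this case by contradiction: if (iv) fails for some $(s,\tilde a)$ at infinitely many times, the preservation claim forces it to fail (strictly) at \emph{every} $t\ge T_1$, which in turn forces the restart indicator to be active at every step; a telescoping identity then yields $\theta^{(N)}_{s,a^*(s)}-\theta^{(N)}_{s,\tilde a}<\theta^{(T_1)}_{s,a^*(s)}-\theta^{(T_1)}_{s,\tilde a}$ for all $N$, contradicting part (i). Without an argument of this kind your proof of (iv) is incomplete.
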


\begin{remark}
    \normalfont {Conditions (i) and (ii) are formulated to establish the local $C$-nearly concave conditions within a finite number of time steps. On the other hand, conditions (iii) and (iv) describe two essential properties for verifying that, after a finite time, all the update directions of APG fall within the feasible update domain $\mathcal{U}$. Consequently, the updates executed by APG are aligned with the nearly-concave structure. For a detailed proof regarding the examination of the feasible directions, please refer to \Cref{app:MDP}.}
\end{remark}




{With the results of Lemmas \ref{lemma:local nearly concavity} and \ref{lemma: enter local concavity}, we are able to establish the main result for APG under softmax parameterization, which is a $\tilde{O}(1/t^2)$ convergence rate.}

\begin{theorem}[\textbf{Convergence Rate of APG; Informal}]
\label{theorem: MDP convergence rate}
    Consider a tabular softmax parameterized policy $\pi_{\theta}$. Under \hyperref[algorithm:APG]{APG} with {$\eta^{(t)} = \frac{t}{t+1} \cdot \frac{(1 - \gamma)^3}{16}$} {and $\mu$ is initialized uniformly at random}, the following holds almost surely: There exists a finite time $T$ such that for all $t \ge T$, we have
    \begin{align*}
        &V^{*}(\rho) - V^{{\pi_\theta^{(t)}}}(\rho) \\ 
        &\le \frac{1}{(1 - \gamma)^3} \left \| \frac{{d^{\pi^*}_{\rho}}}{\mu} \right \|_{\infty}  \Bigg(\frac{2 |\cS|(|\cA| - 1)}{t^2 + |\cA| - 1} + \\
        &\frac{512|\cS| \ln^2(t) + 32\left \| 2\theta^{(T)} - (2+T)\big(\omega^{(T)} - \theta^{(T)}\big)\right \|^2}{(1-\gamma)(t+1)t}\Bigg).
    \end{align*}
\end{theorem}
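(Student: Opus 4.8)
The plan is to leverage the two structural lemmas to reduce the MDP analysis to the classical NAG convergence rate for (nearly) concave functions, then handle the unbounded-optimal-parameter obstacle via surrogate optimal parameters. First, I would invoke \Cref{lemma: enter local concavity} to obtain the finite time $T$ after which, for every $t \ge T$, the policy parameters $\theta^{(t)}$ and $\omega^{(t)}$ satisfy: the gap conditions (i) and (ii), the sign conditions (iii) on the gradient at $\omega^{(t)}$, and the ordering condition (iv). Conditions (iii) and (iv) together imply that the APG update direction $\theta^{(t+1)} - \omega^{(t)} = \eta^{(t)} \nabla_\theta V^{\pi_\theta}(\mu)|_{\omega^{(t)}}$, as well as the momentum-extrapolated difference $\omega^{(t)} - \theta^{(t)}$, both lie in the feasible update domain $\mathcal{U}$ (the coordinate for $a^*(s)$ dominates). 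Hence, picking $C$ appropriately and $M = M_{C,\bm{d}}$ large enough in \Cref{lemma: enter local concavity}, \Cref{lemma:local nearly concavity} applies and the objective $V^{\pi_\theta}(\mu)$ is $C$-nearly concave at each iterate along the relevant update directions from time $T$ onward.

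Next, I would run the standard potential-function argument for monotone NAG, but with the $C$-near-concavity inequality $f(\theta') \le f(\theta) + C\langle \nabla f(\theta), \theta' - \theta\rangle$ in place of ordinary concavity. Define $g^{(t)} := V^*(\mu) - V^{\pi_\theta^{(t)}}(\mu) \ge 0$ (note $V^*(\mu)$ here is an upper bound; more carefully one works with a surrogate target). Using smoothness of $\theta \mapsto V^{\pi_\theta}(\mu)$ with constant $L = 8/(1-\gamma)^3$ and the step size $\eta^{(t)} = \frac{t}{t+1}\cdot\frac{1}{2L}$, the gradient-ascent step from $\omega^{(t-1)}$ gives the usual descent-lemma estimate. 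Combining this with the $C$-near-concavity evaluated between $\omega^{(t-1)}$ and a chosen reference point, and with the restart/monotonicity rule (\ref{algorithm:eq3}) which guarantees $V^{\pi_\omega^{(t)}}(\mu) \ge V^{\pi_\theta^{(t)}}(\mu)$, one telescopes a weighted potential of the form $\frac{t^2}{C}\, g^{(t)} + \tfrac{L}{2}\|u^{(t)}\|^2$ where $u^{(t)}$ tracks the accumulated "momentum distance" $2\theta^{(t)} - (2+t)(\omega^{(t)}-\theta^{(t)})$ toward the reference point. This yields $g^{(t)} \lesssim \frac{C}{t^2}\big(\text{const} + \|\text{reference} - \theta^{(T)}\text{-type term}\|^2\big)$, which explains the $\frac{\|2\theta^{(T)} - (2+T)(\omega^{(T)}-\theta^{(T)})\|^2}{t^2}$ piece in the bound.

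The main obstacle — and the reason for the logarithmic factor and the $\tilde{O}$ — is that there is no finite optimal parameter $\theta^*$ to use as the reference point, so I cannot directly bound $V^*(\mu) - V^{\pi_\theta^{(t)}}(\mu)$ by $\|\theta^* - \theta^{(T)}\|^2/t^2$. To circumvent this, I would introduce a \emph{surrogate optimal parameter} $\theta^\dagger(t)$ depending on $t$: for each state $s$, set $\theta^\dagger_{s,a^*(s)} - \theta^\dagger_{s,a} \approx \ln t$ (or a suitable multiple), so that $\pi_{\theta^\dagger(t)}$ is $O(1/t)$-close to $\pi^*$ in value, i.e. $V^*(\mu) - V^{\pi_{\theta^\dagger(t)}}(\mu) = O(1/t)$ by the performance-difference / value-gap estimates under softmax. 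Then $\|\theta^\dagger(t) - \theta^{(T)}\|^2 = O(|\cS|\ln^2 t)$, producing the $512|\cS|\ln^2(t)$ term. Applying the NAG potential argument with the moving target $\theta^\dagger(t)$ requires care: because the reference changes with $t$, I would either fix the horizon (prove the bound at a given $t$ using $\theta^\dagger(t)$ as a static reference inside the telescoped sum, absorbing the monotone drift into lower-order terms) or bound the extra cross-terms introduced by re-centering. Finally, I would pass from $V^*(\mu) - V^{\pi_\theta^{(t)}}(\mu)$ to $V^*(\rho) - V^{\pi_\theta^{(t)}}(\rho)$ via the standard distribution-mismatch lemma, picking up the factor $\frac{1}{(1-\gamma)}\big\|\frac{d^{\pi^*}_\rho}{\mu}\big\|_\infty$ (combined with the $\frac{1}{(1-\gamma)^2}$ already present in the $\mu$-based bound), giving the stated $\big\|\frac{d^{\pi^*}_\rho}{\mu}\big\|_\infty / (1-\gamma)^3$ prefactor. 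Assembling the two contributions — the $\frac{2|\cS|(|\cA|-1)}{t^2+|\cA|-1}$ term from the near-concave NAG rate on the "already-good" coordinates and the $\frac{512|\cS|\ln^2 t + 32\|\cdots\|^2}{(1-\gamma)(t+1)t}$ term from the surrogate-reference distance — completes the proof.
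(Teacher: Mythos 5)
Your proposal follows essentially the same route as the paper's proof: enter the locally nearly-concave regime via \Cref{lemma: enter local concavity}, check that the gradient and momentum directions lie in $\mathcal{U}$ so that the restart is inactive and \Cref{lemma:local nearly concavity} applies along the actual update directions, run the Ghadimi--Lan-style weighted-potential telescoping for $C$-nearly concave objectives with a shifted iteration counter (this is exactly \Cref{lemma:equivalent_algorithm} plus \Cref{cor:ghadimi_cor1} in the paper), instantiate the bound at each horizon $t$ with a $t$-dependent surrogate optimal parameter of norm $O(\sqrt{|\cS|}\ln t)$, and convert from $\mu$ to $\rho$ via the distribution-mismatch lemma.

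One quantitative slip worth fixing: you set the surrogate logit gap to $\approx \ln t$ and assert $V^{*}(\mu) - V^{\pi_{\theta^{\dagger}(t)}}(\mu) = O(1/t)$. If that were the surrogate's sub-optimality, it would dominate the decomposition $V^{*}-V^{(t)} \le (V^{*}-V^{\text{surr}}) + (V^{\text{surr}}-V^{(t)})$ and degrade the final rate to $O(1/t)$. The paper instead takes $\theta^{**(t)}_{s,a^{*}(s)} = 2\ln t$, so that $1-\pi_{\theta^{**}}(a^{*}(s)\mid s) = \frac{|\mathcal{A}|-1}{t^{2}+|\mathcal{A}|-1}$ and the surrogate's sub-optimality is $O(1/t^{2})$; this is precisely the $\frac{2|\mathcal{S}|(|\mathcal{A}|-1)}{t^{2}+|\mathcal{A}|-1}$ term in the statement, which you mis-attribute to the NAG rate. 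The NAG telescoping contributes only the $\frac{512|\mathcal{S}|\ln^{2} t + 32\lVert 2\theta^{(T)}-(2+T)(\omega^{(T)}-\theta^{(T)})\rVert^{2}}{(1-\gamma)(t+1)t}$ piece, through the squared distance to the (logarithmically growing) surrogate reference. With the $2\ln t$ choice your argument matches the paper's.
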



\begin{remark}
    \normalfont {Notably, the intriguing local $C$-near concavity property is not specific to APG. Specifically, we can also establish that PG enters the local $C$-nearly concave regime within finite time steps. Moreover, its updates align with the directions in $\mathcal{U}$, allowing us to view it as an optimization problem under the $C$-nearly concave objective. Accordingly, we can demonstrate that PG under softmax parameterization also enjoys $C$-near concavity and thereby achieves a convergence rate of $\tilde{O}(1/t)$. For more details, please refer to \Cref{app:pg}.}
\end{remark}
\begin{remark}
\normalfont {It is important to note that the logarithmic factor in the sub-optimality gap is a consequence of the unbounded nature of the optimal parameter in softmax parameterization.}
\end{remark}

\begin{remark}
\label{remark: PG lower bound}
\normalfont{{
Regarding the fundamental capability of PG, \citet{mei2020global} has presented a lower bound on the sub-optimality gap for PG in their Theorem 10. This theorem asserts that the $O(1/t)$ convergence rate achievable by PG cannot be further improved. Despite the lower bound presented by \citet{mei2020global}, our results for APG do not contradict theirs. Specifically, while both APG and PG are first-order methods relying solely on first-order derivatives for updates, it is crucial to highlight that APG encompasses a broader class of policy updates with the help of the momentum term in Nesterov acceleration. This allows APG to utilize the gradients with respect to parameters that PG cannot attain, thereby improving the convergence rate and overall performance of APG.
}}
\end{remark}

In addition to \Cref{theorem: MDP convergence rate}, we also
demonstrate that APG exhibits a linear convergence when the
time-varying step sizes are allowed. Specifically, we
establish results similar to \Cref{theorem: MDP convergence rate} and
\Cref{lemma: enter local concavity} with exponentially-growing step
sizes under element-wise clipping updates, leading to the following theorem. For additional details, please refer to \Cref{app:add-adaptive-apg}.

\begin{figure*}[ht]
    \centering
    \hspace{-5mm}
    \subfigure[]{
    \label{exp:2-1}
    \includegraphics[width=0.255\textwidth]{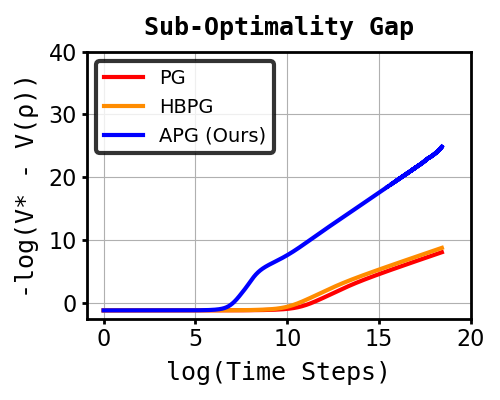}
    }
    \hspace{-5mm}
    \subfigure[]{
    \label{exp:2-2}
    \includegraphics[width=0.255\textwidth]{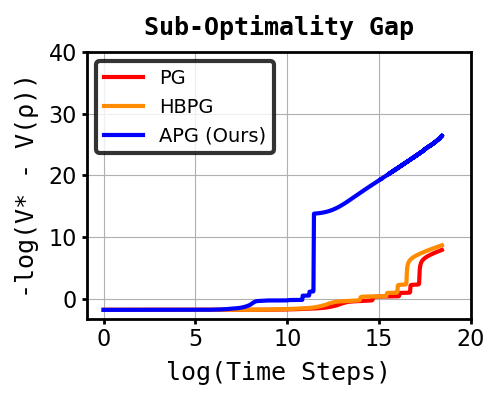}
    }
    \hspace{-5mm}
    \subfigure[]{
    \label{exp:2-3}
    \includegraphics[width=0.255\textwidth]{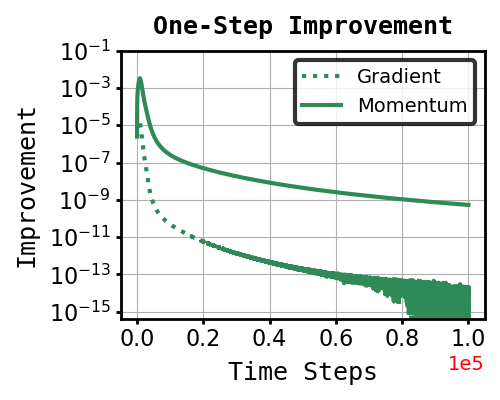}
    }
    \hspace{-5mm}
    \subfigure[]{
    \label{exp:2-4}
    \includegraphics[width=0.255\textwidth]{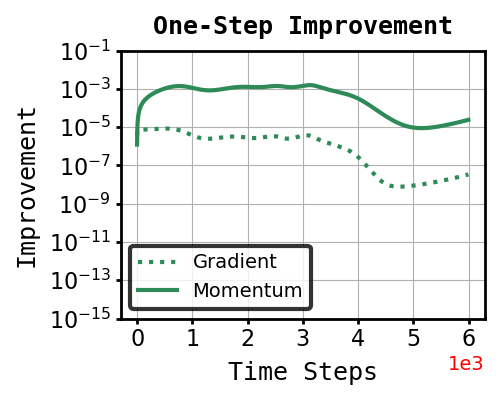}
    }
    \caption{A comparison between the performance of APG, PG, and HBPG under an MDP with 5 states, 5 actions, with the uniform and hard policy initialization: (a)-(b) show the sub-optimality gaps under the uniform and the hard initialization, respectively; (c)-(d) show the one-step improvements of APG from the momentum (i.e., $V^{\pi_{\omega}^{(t)}}(\rho)-V^{\pi_{\theta}^{(t)}}(\rho)$) and the gradient (i.e., $V^{\pi_{\theta}^{(t+1)}}(\rho)-V^{\pi_{\omega}^{(t)}}(\rho)$), under the uniform and the hard initialization, respectively.}
    \label{exp:2}
\end{figure*}

\begin{figure*}[!ht]
    \centering
    \hspace{-5mm}
    \subfigure[]{
    \label{exp:atari-2}
    \includegraphics[width=0.255\textwidth]{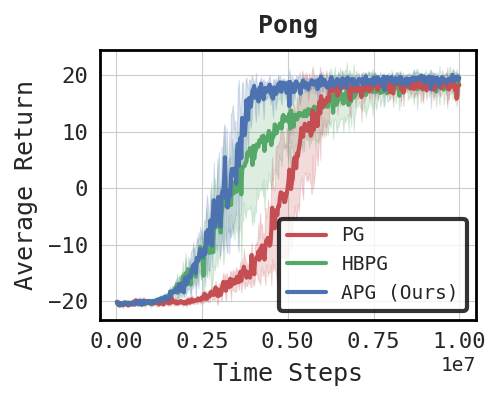}
    }
    \hspace{-5mm}
    \subfigure[]{
    \label{exp:atari-1}
    \includegraphics[width=0.255\textwidth]{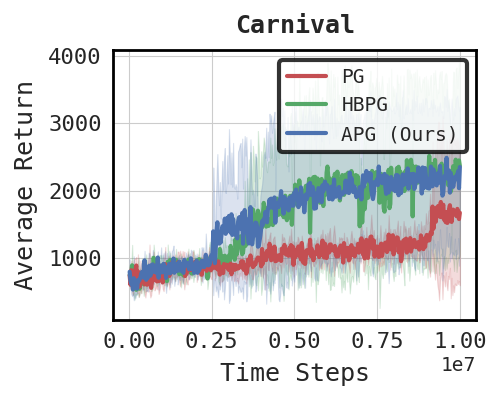}
    }
    \hspace{-5mm}
    \subfigure[]{
    \label{exp:atari-3}
    \includegraphics[width=0.255\textwidth]{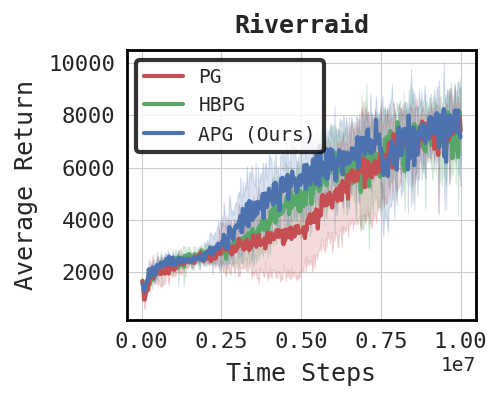}
    }
    \hspace{-5mm}
    \subfigure[]{
    \label{exp:atari-4}
    \includegraphics[width=0.255\textwidth]{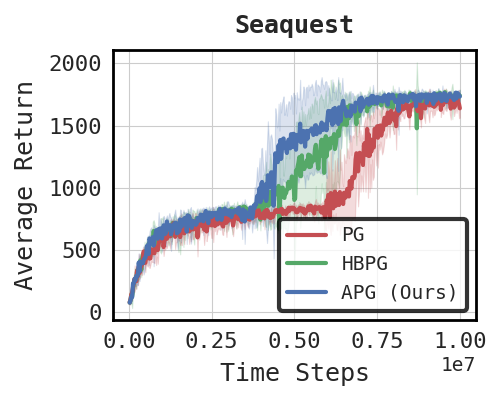}
    }
    \caption{A comparison of the performance of APG and the benchmark algorithms in four Atari 2600 games. All the results are averaged over 5 random seeds (with the shaded area showing the range of $\text{mean} \pm \text{std}$).}
    \label{exp:atari}
\end{figure*}

\begin{definition}
\label{def:clip}
    Given a vector $\bm{x} = [x_1, x_2, \cdots]$, we define the element-wise clipping operator by:
    \begin{align*}
        [\bm{x}]^{+y} = [\text{clip}(x_1, -y, y), \text{clip}(x_2, -y, y), \cdots],
    \end{align*}
    where
    \begin{align}
        \text{clip}(x, a, b) \coloneqq 
        \begin{cases}
            a, & \text{if } x < a, \\
            x, & \text{if } a \leq x \leq b, \\
            b, & \text{if } x > b.
        \end{cases}
    \end{align}
\end{definition}
\begin{restatable}[\textbf{{Convergence Rate of APG with Time-Varying Step Sizes}}]{theorem}{adaptiveapg}
\label{theorem: adaptive apg convergence rate}
    Consider a tabular softmax parameterized policy $\pi_{\theta}$. Under \hyperref[algorithm:APG]{APG} with \cref{algorithm:eq1} modified as 
    \begin{align*}
        \theta^{(t)} &\leftarrow \omega^{(t-1)} + \Big[ \eta^{(t)} \nabla_{\theta}{V^{\pi_{\theta}}(\mu)} \Big\rvert_{\theta = \omega^{(t-1)}} \Big]^{+K},
    \end{align*}
    where $\eta^{(t)} = \beta^t \cdot \frac{(1-\gamma)^3}{8}$ for any $\exp(\frac{1}{8\sqrt{|\cA|}|\cS|} \cdot \frac{1-\gamma}{4 C_\infty - (1-\gamma)}) > \beta > 1$, $\mu$ initialized uniformly at random, and $K = \frac{1}{4\sqrt{|\cA|}|\cS|} \cdot \frac{1-\gamma}{4 C_\infty - (1-\gamma)} + 2\ln \beta$ where $C_\infty \coloneqq \max_{\pi} \lVert \frac{d^{\pi}_\mu}{\mu} \rVert_\infty $, the following holds almost surely: There exists a finite time $T$ such that for all $t \ge T$, we have
    \begin{align*}
        V^{*}(\rho) - V^{{\pi_\theta^{(t)}}}(\rho) = O(e^{-ct}),
    \end{align*}
    for some constant $c > 0$.
\end{restatable}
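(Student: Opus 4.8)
The plan is to follow the same two-stage blueprint as the proof of \Cref{theorem: MDP convergence rate}, with the polynomial step schedule replaced by the clipped, exponentially growing one. In the first stage I would show that the modified APG iterates still reach, within a finite (almost surely) time $T$, the absorbing, locally $C$-nearly-concave regime isolated in \Cref{lemma: enter local concavity}: for all $t\ge T$ and all $s\in\cS$, $a\neq a^*(s)$, the gap $\theta^{(t)}_{s,a^*(s)}-\theta^{(t)}_{s,a}$ exceeds the threshold $M_{C,\bm{d}}$ of \Cref{lemma:local nearly concavity}, $V^{\pi^{(t)}_\theta}(s)>Q^*(s,a_2(s))$, the gradient-sign and momentum-alignment conditions hold, and every update direction lies in $\mathcal{U}$. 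In the second stage I would exploit $C$-near concavity inside this regime, together with the geometric growth of $\eta^{(t)}$, to turn the $\tilde{O}(1/t^2)$-type estimate into a geometric one.

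For stage one, the clipping operator $[\cdot]^{+K}$ is the key device: it caps each coordinate of the gradient step at $K$ in absolute value, so a single iterate moves by at most $K$ per coordinate no matter how large $\eta^{(t)}$ has grown. This lets me re-run the absorbing argument of \Cref{lemma: enter local concavity} --- bounding the cumulative memory effect of the momentum term and checking that the value-improvement restart in \cref{algorithm:eq3} keeps the iterates monotone --- with $K$ playing the role the bounded polynomial step played before. The admissible range $1<\beta<\exp\!\big(\tfrac{1}{8\sqrt{|\cA|}|\cS|}\cdot\tfrac{1-\gamma}{4C_\infty-(1-\gamma)}\big)$ and the precise value of $K$ are exactly what is needed so that (i) once $\eta^{(t)}\|\nabla V^{\pi_\omega^{(t-1)}}(\mu)\|_\infty$ is large, the effective per-coordinate move is $K$, which is large enough to keep $\theta_{s,a^*(s)}$ moving upward the most (so the update stays in $\mathcal{U}$) and to force value improvement through smoothness, yet (ii) $\beta$ is small enough that this does not overshoot out of the nearly-concave region before the policy is good. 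One also needs an asymptotic-convergence statement for this variant (the analogue of \Cref{theorem:convergeoptimal}) to start the induction, obtainable by the same monotonicity-plus-stationarity argument.

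For stage two, once inside the absorbing regime, \Cref{lemma:local nearly concavity} gives $C$-near concavity of $\theta\mapsto V^{\pi_\theta}(\mu)$ along every direction in $\mathcal{U}$, so plugging a \emph{surrogate optimal parameter} (a bounded-norm parameter whose policy is $\varepsilon$-optimal) as $\theta'$ yields $V^*(\mu)-V^{\pi_\omega^{(t)}}(\mu)\le C\langle\nabla V^{\pi_\omega^{(t)}}(\mu),\theta'-\omega^{(t)}\rangle+\varepsilon$. The new ingredient relative to \Cref{theorem: MDP convergence rate} is that, because $\eta^{(t)}=\beta^t(1-\gamma)^3/8$ grows geometrically, the clipped gradient step increases the separation $\theta^{(t)}_{s,a^*(s)}-\max_{a\neq a^*(s)}\theta^{(t)}_{s,a}$ by at least a fixed positive constant $\kappa$ (depending on $K$, $\beta$, $|\cS|$, $|\cA|$, $C_\infty$) at every iteration $t\ge T$ --- this is precisely why $K$ is taken of the form $\tfrac{1}{4\sqrt{|\cA|}|\cS|}\cdot\tfrac{1-\gamma}{4C_\infty-(1-\gamma)}+2\ln\beta$. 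Hence $\theta^{(t)}_{s,a^*(s)}-\theta^{(t)}_{s,a}\ge M+\kappa(t-T)$, so $1-\pi^{(t)}_\theta(a^*(s)\mid s)\le(|\cA|-1)e^{-(M+\kappa(t-T))}$ decays geometrically, and the performance-difference lemma together with the distribution-mismatch factor $\|d^{\pi^*}_\rho/\mu\|_\infty$ upgrades this to $V^*(\rho)-V^{\pi^{(t)}_\theta}(\rho)=O(e^{-ct})$ with $c$ of order $\kappa$.

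I expect the main obstacle to be stage one: verifying that the absorbing property of the nearly-concave region survives the very aggressive, exponentially growing step sizes. Even with the $[\cdot]^{+K}$ clip, a single step can translate a coordinate by as much as $K$, so one must show such a step still (a) lands in $\mathcal{U}$ (the optimal action's coordinate remains the one that rises the most, which requires the gradient along $a^*(s)$ to stay dominant after clipping), (b) does not decrease the value --- subtle because when the clip is active the effective step differs across coordinates and the usual ``$\eta\le 1/L$'' descent argument does not apply verbatim, so one leans on the restart in \cref{algorithm:eq3} and a careful coordinate-wise smoothness estimate --- and (c) survives the intertwined memory of the momentum term. Getting the calibration between $K$, the upper bound on $\beta$, and $C_\infty$ right so that all of (a)--(c) hold simultaneously, and only then reading off the geometric rate in stage two, is the heart of the argument.
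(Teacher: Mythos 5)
Your overall architecture matches the paper's: a finite-time entry into the favorable regime (the analogue of \Cref{lemma: enter local concavity} with clipped, exponentially growing steps, which the paper proves as \Cref{lemma: adaptive apg enter local concavity} after first re-establishing stationarity via \Cref{lemma: convergence to stationarity clipping}), followed by a geometric decay of $1-\pi_\theta^{(t)}(a^*(s)\mid s)$ that is converted to the value gap by the performance-difference lemma. Two remarks, though. First, the near-concavity/surrogate-optimal-parameter machinery you invoke in stage two is not actually used in the paper's proof of this theorem and is not needed: the final bound comes entirely from controlling $1-\pi_\theta^{(t)}(a^*(s)\mid s)$, not from a Ghadimi-style descent inequality.

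Second, and more importantly, your central quantitative claim --- that the clipped step increases the logit separation $\theta^{(t)}_{s,a^*(s)}-\max_{a\neq a^*(s)}\theta^{(t)}_{s,a}$ by a \emph{fixed} constant $\kappa>0$ at \emph{every} iteration $t\ge T$ --- is not justified and is circular as stated. The per-step increment of $\theta_{s,a^*(s)}$ before clipping is $\eta^{(t)}\,\partial V/\partial\theta_{s,a^*(s)}\propto \beta^{t}\,\bigl(1-\pi^{(t)}(a^*(s)\mid s)\bigr)$ (since $A^{\pi}(s,a^*(s))$ itself scales with $1-\pi(a^*(s)\mid s)$), so a uniform lower bound $\kappa$ on the increment requires a \emph{lower} bound of order $\beta^{-t}$ on $1-\pi^{(t)}(a^*(s)\mid s)$ --- precisely the quantity you are trying to upper bound; if the policy converges faster than $\beta^{-t}$ the clip never saturates and the increment can vanish. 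The paper escapes this by a threshold dichotomy rather than a uniform increment: defining a constant $C>0$ and setting the target $1-\pi^{(t)}(a^*(s')\mid s')\le K/(C\beta^{t})$, it shows that whenever this target is \emph{violated} at time $t$, the advantage is large enough that $\eta^{(t)}\cdot\nabla > K$, the clip saturates, the full step of size $K$ is taken, and (this is where the precise calibration $K=\tfrac{1}{4\sqrt{|\cA|}|\cS|}\cdot\tfrac{1-\gamma}{4C_\infty-(1-\gamma)}+2\ln\beta$ together with the logit-gap threshold $M$ enters) the sub-optimality at time $t+1$ falls below $K/(C\beta^{t+1})$; combined with the monotonicity of $\pi^{(t)}(a^*(s')\mid s')$ in this regime, violations cannot occur at two consecutive times, yielding $1-\pi^{(t)}(a^*(s')\mid s')\le K/(C\beta^{t-1})$ for all large $t$. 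You would need to replace your uniform-$\kappa$ claim with this (or an equivalent) self-consistent argument for the proof to go through.
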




\begin{remark}
    \normalfont {It is worth noticing that the finite constant time $T$ from \Cref{theorem: MDP convergence rate} and \Cref{theorem: adaptive apg convergence rate} is problem-dependent. Specifically, this constant $T$ shall depend on multiple factors, such as the initial state distribution $\rho$, the initial policy parameter $\theta_0$, the cardinalities of the state space $|\mathcal{S}|$ and the action space $|\mathcal{A}|$, and the discount factor $\gamma$, among other factors. Despite this problem-dependent constant, the rates of convergence in \Cref{theorem: MDP convergence rate} and \Cref{theorem: adaptive apg convergence rate} are $\tilde{O}(1/t^2)$ and ${O}(e^{-ct})$, under any problem instance.}
\end{remark}

\section{Experiments}
\label{sec:disc}

\subsection{Numerical Validation of the Convergence Rates}
\label{sec:disc:exp}

In this subsection, we empirically validate the convergence rate of APG by conducting experiments on an MDP with 5 states and 5 actions. We validate the convergence behavior of APG and two popular methods, namely standard PG and
the algorithm that directly applies heavy-ball momentum \citep{polyak1964some} to PG, which we refer to as Heavy-Ball PG (HBPG).
Detailed configurations and the pseudo code of HBPG are provided in
	\Cref{app:experiment}.
In the subsequent numerical validations, we focus on APG with nearly constant step sizes. The numerical validation for APG with time-varying step sizes is provided in \Cref{app:add-adaptive-apg} due to space limitations.
{Our code for the experiments is available at \url{https://github.com/NYCU-RL-Bandits-Lab/APG}.}


We validate the convergence rate of APG, HBPG, and PG on an MDP with 5 states and 5 actions in two settings: (i) \textit{Uniform initialization} (i.e., all actions have equal initial probability): The training curves of sub-optimality gap for {PG, HBPG, and APG} are depicted in \Cref{exp:2}. {Upon plotting the sub-optimality gaps of PG, HBPG, and APG under uniform initialization on a log-log graph in \Cref{exp:2-1}, we observe that both PG and HBPG exhibit a slope of approximately 1, while APG demonstrates a slope of 2. These slopes match the respective convergence rates of $O(1/t)$ for PG and HBPG and $\tilde{O}(1/t^2)$ convergence rate for APG, as shown in \Cref{theorem: MDP convergence rate}.}
\Cref{exp:2-3} further confirms that the momentum term in APG contributes substantially in terms of policy improvement in the MDP case. 
(ii) \textit{Hard initialization} {(i.e., the optimal action has the smallest initial probability)}: \Cref{exp:2-2} and \Cref{exp:2-4} show that APG could escape from sub-optimality much faster than {PG and HBPG} in the MDP case. This further showcases APG's superiority over {PG and HBPG}. 


{\subsection{APG on Atari Games}\label{sec:disc:atari}
In this subsection, we empirically evaluate the performance of APG on four Atari 2600 games from the Arcade Learning Environment (ALE) \cite{bellemare2013arcade}, including \textit{Pong, Carnival, Riverraid, and Seaquest}. Each environment is associated with a $210 \times 160 \times 3$ binary state representation, which corresponds to the $210 \times 160$ grid and $3$ channels. The detailed configuration is provided in \Cref{app:experiment}.
In \Cref{exp:atari}, we observe that APG exhibits faster or comparable convergence compared to both HBPG and PG across all four environments. This further demonstrates the potential of APG in practice.
}

\section{Concluding Remarks}
\label{sec:con}
Nesterov's Accelerated Gradient method, proposed almost four decades ago, provides a powerful first-order scheme for fast convergence under a broad class of optimization problems.
However, since its introduction, NAG has never been formally analyzed or evaluated within the realm of RL, mainly due to the non-concavity of the RL objective.
In this paper, we propose APG and take an important first step towards understanding NAG in RL.
{We rigorously present that APG can converge to a globally optimal policy at a $\tilde{O}(1/t^2)$ rate in the general MDP setting under softmax policies with nearly constant step sizes. Moreover, we showcase the linear convergence rate of $O(e^{-ct})$ with time-varying step sizes. This demonstrates the potential of APG in attaining fast convergence in RL.}

On the other hand, our work also leaves open interesting research questions about NAG in RL. For example, as our paper mainly focuses on the exact gradient setting, another promising research direction is to extend our results of APG to the stochastic gradient setting, where the advantage function as well as the gradient are estimated from sampled transitions. 

\section*{Impact Statement}
This paper introduces research aimed at pushing the boundaries of the Machine Learning field. There are many potential societal consequences of our work, none which we feel must be specifically highlighted here.

\section*{Acknowledgements}
{This material is based upon work partially supported by the National Science and Technology Council (NSTC), Taiwan under Contract No. 112-2628-E-A49-023 and Contract No. 112-2634-F-A49-001-MBK, also partially supported by the NVIDIA Taiwan R\&D Center, and also supported by the Higher Education Sprout Project of the National Yang Ming Chiao Tung University and Ministry of Education (MOE), Taiwan.}
%
CP is supported in part by the JSPS Grant-in-Aid for Research
Activity Start-up 23K19981 and Grant-in-Aid for Early-Career
Scientists 24K20845.
We also thank the National Center for High-performance Computing (NCHC) for providing computational and storage resources.
\normalem
{
\bibliography{reference}

\begin{thebibliography}{58}
\providecommand{\natexlab}[1]{#1}
\providecommand{\url}[1]{\texttt{#1}}
\expandafter\ifx\csname urlstyle\endcsname\relax
  \providecommand{\doi}[1]{doi: #1}\else
  \providecommand{\doi}{doi: \begingroup \urlstyle{rm}\Url}\fi

\bibitem[Achiam(2018)]{SpinningUp2018}
Achiam, J.
\newblock {Spinning Up in Deep Reinforcement Learning}.
\newblock 2018.

\bibitem[Agarwal et~al.(2019)Agarwal, Jiang, Kakade, and Sun]{agarwal2019reinforcement}
Agarwal, A., Jiang, N., Kakade, S.~M., and Sun, W.
\newblock Reinforcement learning: Theory and algorithms.
\newblock \emph{CS Dept., UW Seattle, Seattle, WA, USA, Tech. Rep}, 32, 2019.

\bibitem[Agarwal et~al.(2021)Agarwal, Kakade, Lee, and Mahajan]{agarwal2021theory}
Agarwal, A., Kakade, S.~M., Lee, J.~D., and Mahajan, G.
\newblock {On the theory of policy gradient methods: Optimality, approximation, and distribution shift}.
\newblock \emph{Journal of Machine Learning Research}, 22\penalty0 (1):\penalty0 4431--4506, 2021.

\bibitem[Amari(1998)]{amari1998natural}
Amari, S.-I.
\newblock Natural gradient works efficiently in learning.
\newblock \emph{Neural computation}, 10\penalty0 (2):\penalty0 251--276, 1998.

\bibitem[Andrychowicz et~al.(2021)Andrychowicz, Raichuk, Sta{\'n}czyk, Orsini, Girgin, Marinier, Hussenot, Geist, Pietquin, Michalski, et~al.]{andrychowicz2021matters}
Andrychowicz, M., Raichuk, A., Sta{\'n}czyk, P., Orsini, M., Girgin, S., Marinier, R., Hussenot, L., Geist, M., Pietquin, O., Michalski, M., et~al.
\newblock {What matters for on-policy deep actor-critic methods? A large-scale study}.
\newblock In \emph{International Conference on Learning Representations}, 2021.

\bibitem[Attouch \& Peypouquet(2016)Attouch and Peypouquet]{attouch2016rate}
Attouch, H. and Peypouquet, J.
\newblock {The rate of convergence of Nesterov's accelerated forward-backward method is actually faster than 1/k\^{}2}.
\newblock \emph{SIAM Journal on Optimization}, 26\penalty0 (3):\penalty0 1824--1834, 2016.

\bibitem[Beck \& Teboulle(2009{\natexlab{a}})Beck and Teboulle]{beck2009fast}
Beck, A. and Teboulle, M.
\newblock Fast gradient-based algorithms for constrained total variation image denoising and deblurring problems.
\newblock \emph{IEEE Transactions on Image Processing}, 18\penalty0 (11):\penalty0 2419--2434, 2009{\natexlab{a}}.

\bibitem[Beck \& Teboulle(2009{\natexlab{b}})Beck and Teboulle]{beck2009fastb}
Beck, A. and Teboulle, M.
\newblock A fast iterative shrinkage-thresholding algorithm for linear inverse problems.
\newblock \emph{SIAM Journal on Imaging Sciences}, 2\penalty0 (1):\penalty0 183--202, 2009{\natexlab{b}}.

\bibitem[Bellemare et~al.(2013)Bellemare, Naddaf, Veness, and Bowling]{bellemare2013arcade}
Bellemare, M.~G., Naddaf, Y., Veness, J., and Bowling, M.
\newblock The arcade learning environment: An evaluation platform for general agents.
\newblock \emph{Journal of Artificial Intelligence Research}, 47:\penalty0 253--279, 2013.

\bibitem[Bhandari \& Russo(2019)Bhandari and Russo]{bhandari2019global}
Bhandari, J. and Russo, D.
\newblock Global optimality guarantees for policy gradient methods.
\newblock \emph{arXiv preprint arXiv:1906.01786}, 2019.

\bibitem[Bu \& Mesbahi(2020)Bu and Mesbahi]{bu2020note}
Bu, J. and Mesbahi, M.
\newblock A note on nesterov's accelerated method in nonconvex optimization: a weak estimate sequence approach.
\newblock \emph{arXiv preprint arXiv:2006.08548}, 2020.

\bibitem[Carmon et~al.(2017)Carmon, Duchi, Hinder, and Sidford]{carmon2017convex}
Carmon, Y., Duchi, J.~C., Hinder, O., and Sidford, A.
\newblock {"Convex Until Proven Guilty": Dimension-Free Acceleration of Gradient Descent on Non-Convex Functions}.
\newblock In \emph{International Conference on Machine Learning}, pp.\  654--663, 2017.

\bibitem[Carmon et~al.(2018)Carmon, Duchi, Hinder, and Sidford]{carmon2018accelerated}
Carmon, Y., Duchi, J.~C., Hinder, O., and Sidford, A.
\newblock Accelerated methods for nonconvex optimization.
\newblock \emph{SIAM Journal on Optimization}, 28\penalty0 (2):\penalty0 1751--1772, 2018.

\bibitem[Chen \& Maguluri(2022)Chen and Maguluri]{chen2022sample}
Chen, Z. and Maguluri, S.~T.
\newblock Sample complexity of policy-based methods under off-policy sampling and linear function approximation.
\newblock In \emph{International Conference on Artificial Intelligence and Statistics}, pp.\  11195--11214. PMLR, 2022.

\bibitem[Espeholt et~al.(2018)Espeholt, Soyer, Munos, Simonyan, Mnih, Ward, Doron, Firoiu, Harley, Dunning, et~al.]{espeholt2018impala}
Espeholt, L., Soyer, H., Munos, R., Simonyan, K., Mnih, V., Ward, T., Doron, Y., Firoiu, V., Harley, T., Dunning, I., et~al.
\newblock Impala: Scalable distributed deep-rl with importance weighted actor-learner architectures.
\newblock In \emph{International Conference on Machine Learning}, pp.\  1407--1416, 2018.

\bibitem[Fazel et~al.(2018)Fazel, Ge, Kakade, and Mesbahi]{fazel2018global}
Fazel, M., Ge, R., Kakade, S., and Mesbahi, M.
\newblock Global convergence of policy gradient methods for the linear quadratic regulator.
\newblock In \emph{International Conference on Machine Learning}, pp.\  1467--1476, 2018.

\bibitem[Ghadimi \& Lan(2016)Ghadimi and Lan]{ghadimi2016accelerated}
Ghadimi, S. and Lan, G.
\newblock Accelerated gradient methods for nonconvex nonlinear and stochastic programming.
\newblock \emph{Mathematical Programming}, 156\penalty0 (1-2):\penalty0 59--99, 2016.

\bibitem[Guminov et~al.(2017)Guminov, Gasnikov, and Kuruzov]{guminov2017accelerated}
Guminov, S., Gasnikov, A., and Kuruzov, I.
\newblock Accelerated methods for $\alpha$-weakly-quasi-convex problems.
\newblock \emph{arXiv preprint arXiv:1710.00797}, 2017.

\bibitem[Hardt et~al.(2018)Hardt, Ma, and Recht]{hardt2018gradient}
Hardt, M., Ma, T., and Recht, B.
\newblock Gradient descent learns linear dynamical systems.
\newblock \emph{The Journal of Machine Learning Research}, 19\penalty0 (1):\penalty0 1025--1068, 2018.

\bibitem[Henderson et~al.(2018)Henderson, Romoff, and Pineau]{henderson2018did}
Henderson, P., Romoff, J., and Pineau, J.
\newblock Where did my optimum go?: An empirical analysis of gradient descent optimization in policy gradient methods.
\newblock \emph{arXiv preprint arXiv:1810.02525}, 2018.

\bibitem[Huang et~al.(2020)Huang, Gao, Pei, and Huang]{huang2020momentum}
Huang, F., Gao, S., Pei, J., and Huang, H.
\newblock Momentum-based policy gradient methods.
\newblock In \emph{International conference on machine learning}, pp.\  4422--4433. PMLR, 2020.

\bibitem[Jin et~al.(2018)Jin, Netrapalli, and Jordan]{jin2018accelerated}
Jin, C., Netrapalli, P., and Jordan, M.~I.
\newblock Accelerated gradient descent escapes saddle points faster than gradient descent.
\newblock In \emph{Conference On Learning Theory}, pp.\  1042--1085, 2018.

\bibitem[Kakade \& Langford(2002)Kakade and Langford]{Kakade2002approx}
Kakade, S. and Langford, J.
\newblock Approximately optimal approximate reinforcement learning.
\newblock In \emph{International Conference on Machine Learning}, 2002.

\bibitem[Kakade(2001)]{kakade2001natural}
Kakade, S.~M.
\newblock A natural policy gradient.
\newblock \emph{Advances in neural information processing systems}, 14, 2001.

\bibitem[Khodadadian et~al.(2021)Khodadadian, Jhunjhunwala, Varma, and Maguluri]{khodadadian2021linear}
Khodadadian, S., Jhunjhunwala, P.~R., Varma, S.~M., and Maguluri, S.~T.
\newblock On the linear convergence of natural policy gradient algorithm.
\newblock In \emph{2021 60th IEEE Conference on Decision and Control (CDC)}, pp.\  3794--3799. IEEE, 2021.

\bibitem[Konda \& Tsitsiklis(1999)Konda and Tsitsiklis]{konda1999actor}
Konda, V. and Tsitsiklis, J.
\newblock Actor-critic algorithms.
\newblock \emph{Advances in Neural Information Processing Systems}, 12, 1999.

\bibitem[Lan(2023)]{lan2023policy}
Lan, G.
\newblock Policy mirror descent for reinforcement learning: Linear convergence, new sampling complexity, and generalized problem classes.
\newblock \emph{Mathematical programming}, 198\penalty0 (1):\penalty0 1059--1106, 2023.

\bibitem[Lee et~al.(2019)Lee, Panageas, Piliouras, Simchowitz, Jordan, and Recht]{lee2019first}
Lee, J.~D., Panageas, I., Piliouras, G., Simchowitz, M., Jordan, M.~I., and Recht, B.
\newblock {First-order methods almost always avoid strict saddle points}.
\newblock \emph{Mathematical Programming}, 176:\penalty0 311--337, 2019.

\bibitem[Li \& Lin(2015)Li and Lin]{li2015accelerated}
Li, H. and Lin, Z.
\newblock Accelerated proximal gradient methods for nonconvex programming.
\newblock \emph{Advances in Neural Information Processing Systems}, 28, 2015.

\bibitem[Li \& Lin(2022)Li and Lin]{li2022restarted}
Li, H. and Lin, Z.
\newblock {Restarted Nonconvex Accelerated Gradient Descent: No More Polylogarithmic Factor in the $O (\varepsilon^{-7/4})$ Complexity}.
\newblock In \emph{International Conference on Machine Learning}, pp.\  12901--12916, 2022.

\bibitem[Lillicrap et~al.(2016)Lillicrap, Hunt, Pritzel, Heess, Erez, Tassa, Silver, and Wierstra]{lillicrap2015continuous}
Lillicrap, T.~P., Hunt, J.~J., Pritzel, A., Heess, N., Erez, T., Tassa, Y., Silver, D., and Wierstra, D.
\newblock Continuous control with deep reinforcement learning.
\newblock In \emph{International Conference on Machine Learning}, 2016.

\bibitem[Liu et~al.(2020)Liu, Zhang, Basar, and Yin]{liu2020improved}
Liu, Y., Zhang, K., Basar, T., and Yin, W.
\newblock An improved analysis of (variance-reduced) policy gradient and natural policy gradient methods.
\newblock \emph{Advances in Neural Information Processing Systems}, 33:\penalty0 7624--7636, 2020.

\bibitem[Mei et~al.(2020)Mei, Xiao, Szepesvari, and Schuurmans]{mei2020global}
Mei, J., Xiao, C., Szepesvari, C., and Schuurmans, D.
\newblock On the global convergence rates of softmax policy gradient methods.
\newblock In \emph{International Conference on Machine Learning}, pp.\  6820--6829, 2020.

\bibitem[Mei et~al.(2021{\natexlab{a}})Mei, Dai, Xiao, Szepesvari, and Schuurmans]{mei2021understanding}
Mei, J., Dai, B., Xiao, C., Szepesvari, C., and Schuurmans, D.
\newblock Understanding the effect of stochasticity in policy optimization.
\newblock \emph{Advances in Neural Information Processing Systems}, 34:\penalty0 19339--19351, 2021{\natexlab{a}}.

\bibitem[Mei et~al.(2021{\natexlab{b}})Mei, Gao, Dai, Szepesvari, and Schuurmans]{mei2021leveraging}
Mei, J., Gao, Y., Dai, B., Szepesvari, C., and Schuurmans, D.
\newblock Leveraging non-uniformity in first-order non-convex optimization.
\newblock In \emph{International Conference on Machine Learning}, pp.\  7555--7564. PMLR, 2021{\natexlab{b}}.

\bibitem[Mei et~al.(2022)Mei, Chung, Thomas, Dai, Szepesvari, and Schuurmans]{mei2022role}
Mei, J., Chung, W., Thomas, V., Dai, B., Szepesvari, C., and Schuurmans, D.
\newblock The role of baselines in policy gradient optimization.
\newblock \emph{Advances in Neural Information Processing Systems}, 35:\penalty0 17818--17830, 2022.

\bibitem[Mnih et~al.(2016)Mnih, Badia, Mirza, Graves, Lillicrap, Harley, Silver, and Kavukcuoglu]{mnih2016asynchronous}
Mnih, V., Badia, A.~P., Mirza, M., Graves, A., Lillicrap, T., Harley, T., Silver, D., and Kavukcuoglu, K.
\newblock Asynchronous methods for deep reinforcement learning.
\newblock In \emph{International Conference on Machine Learning}, pp.\  1928--1937, 2016.

\bibitem[Nesterov(1983)]{nesterov1983method}
Nesterov, Y.
\newblock {A method for unconstrained convex minimization problem with the rate of convergence $O(1/k^2)$}.
\newblock In \emph{Soviet Mathematics Doklady}, volume 269, pp.\  543--547, 1983.

\bibitem[O’donoghue \& Candes(2015)O’donoghue and Candes]{o2015adaptive}
O’donoghue, B. and Candes, E.
\newblock Adaptive restart for accelerated gradient schemes.
\newblock \emph{Foundations of Computational Mathematics}, 15:\penalty0 715--732, 2015.

\bibitem[O’Neill \& Wright(2019)O’Neill and Wright]{o2019behavior}
O’Neill, M. and Wright, S.~J.
\newblock Behavior of accelerated gradient methods near critical points of nonconvex functions.
\newblock \emph{Mathematical Programming}, 176:\penalty0 403--427, 2019.

\bibitem[Paszke et~al.(2019)Paszke, Gross, Massa, Lerer, Bradbury, Chanan, Killeen, Lin, Gimelshein, Antiga, Desmaison, Köpf, Yang, DeVito, Raison, Tejani, Chilamkurthy, Steiner, Fang, Bai, and Chintala]{paszke2019pytorch}
Paszke, A., Gross, S., Massa, F., Lerer, A., Bradbury, J., Chanan, G., Killeen, T., Lin, Z., Gimelshein, N., Antiga, L., Desmaison, A., Köpf, A., Yang, E., DeVito, Z., Raison, M., Tejani, A., Chilamkurthy, S., Steiner, B., Fang, L., Bai, J., and Chintala, S.
\newblock Pytorch: An imperative style, high-performance deep learning library, 2019.

\bibitem[Polyak(1964)]{polyak1964some}
Polyak, B.~T.
\newblock Some methods of speeding up the convergence of iteration methods.
\newblock \emph{USSR Computational Mathematics and Mathematical Physics}, 4\penalty0 (5):\penalty0 1--17, 1964.

\bibitem[Raffin(2020)]{rl-zoo3}
Raffin, A.
\newblock Rl baselines3 zoo.
\newblock \url{https://github.com/DLR-RM/rl-baselines3-zoo}, 2020.

\bibitem[Raffin et~al.(2021)Raffin, Hill, Gleave, Kanervisto, Ernestus, and Dormann]{stable-baselines3}
Raffin, A., Hill, A., Gleave, A., Kanervisto, A., Ernestus, M., and Dormann, N.
\newblock Stable-baselines3: Reliable reinforcement learning implementations.
\newblock \emph{Journal of Machine Learning Research}, 22\penalty0 (268):\penalty0 1--8, 2021.
\newblock URL \url{http://jmlr.org/papers/v22/20-1364.html}.

\bibitem[Schulman et~al.(2015{\natexlab{a}})Schulman, Levine, Abbeel, Jordan, and Moritz]{schulman2015trust}
Schulman, J., Levine, S., Abbeel, P., Jordan, M., and Moritz, P.
\newblock Trust region policy optimization.
\newblock In \emph{International Conference on Machine Learning}, pp.\  1889--1897, 2015{\natexlab{a}}.

\bibitem[Schulman et~al.(2015{\natexlab{b}})Schulman, Moritz, Levine, Jordan, and Abbeel]{schulman2015high}
Schulman, J., Moritz, P., Levine, S., Jordan, M., and Abbeel, P.
\newblock High-dimensional continuous control using generalized advantage estimation.
\newblock \emph{arXiv preprint arXiv:1506.02438}, 2015{\natexlab{b}}.

\bibitem[Schulman et~al.(2017)Schulman, Wolski, Dhariwal, Radford, and Klimov]{schulman2017proximal}
Schulman, J., Wolski, F., Dhariwal, P., Radford, A., and Klimov, O.
\newblock Proximal policy optimization algorithms.
\newblock \emph{arXiv preprint arXiv:1707.06347}, 2017.

\bibitem[Shani et~al.(2020)Shani, Efroni, and Mannor]{shani2020adaptive}
Shani, L., Efroni, Y., and Mannor, S.
\newblock Adaptive trust region policy optimization: Global convergence and faster rates for regularized mdps.
\newblock In \emph{Proceedings of the AAAI Conference on Artificial Intelligence}, volume~34, pp.\  5668--5675, 2020.

\bibitem[Silver et~al.(2014)Silver, Lever, Heess, Degris, Wierstra, and Riedmiller]{silver2014deterministic}
Silver, D., Lever, G., Heess, N., Degris, T., Wierstra, D., and Riedmiller, M.
\newblock Deterministic policy gradient algorithms.
\newblock In \emph{International Conference on Machine Learning}, pp.\  387--395, 2014.

\bibitem[Su et~al.(2014)Su, Boyd, and Candes]{su2014differential}
Su, W., Boyd, S., and Candes, E.
\newblock {A differential equation for modeling Nesterov’s accelerated gradient method: Theory and insights}.
\newblock \emph{Advances in Neural Information Processing Systems}, 27, 2014.

\bibitem[Sutton et~al.(1999)Sutton, McAllester, Singh, and Mansour]{sutton1999policy}
Sutton, R.~S., McAllester, D., Singh, S., and Mansour, Y.
\newblock Policy gradient methods for reinforcement learning with function approximation.
\newblock \emph{Advances in Neural Information Processing Systems}, 12, 1999.

\bibitem[Szepesv{\'a}ri(2022)]{szepesvari2022algorithms}
Szepesv{\'a}ri, C.
\newblock \emph{Algorithms for reinforcement learning}.
\newblock Springer nature, 2022.

\bibitem[Vieillard et~al.(2020)Vieillard, Scherrer, Pietquin, and Geist]{vieillard2020momentum}
Vieillard, N., Scherrer, B., Pietquin, O., and Geist, M.
\newblock Momentum in reinforcement learning.
\newblock In \emph{International Conference on Artificial Intelligence and Statistics}, pp.\  2529--2538. PMLR, 2020.

\bibitem[Wang et~al.(2021)Wang, Han, Yang, and Wang]{wang2021global}
Wang, W., Han, J., Yang, Z., and Wang, Z.
\newblock Global convergence of policy gradient for linear-quadratic mean-field control/game in continuous time.
\newblock In \emph{International Conference on Machine Learning}, pp.\  10772--10782, 2021.

\bibitem[Wang et~al.(2016)Wang, Bapst, Heess, Mnih, Munos, Kavukcuoglu, and de~Freitas]{wang2016sample}
Wang, Z., Bapst, V., Heess, N., Mnih, V., Munos, R., Kavukcuoglu, K., and de~Freitas, N.
\newblock Sample efficient actor-critic with experience replay.
\newblock In \emph{International Conference on Learning Representations}, 2016.

\bibitem[Williams(1992)]{williams1992simple}
Williams, R.~J.
\newblock Simple statistical gradient-following algorithms for connectionist reinforcement learning.
\newblock \emph{Reinforcement Learning}, pp.\  5--32, 1992.

\bibitem[Xiao(2022)]{xiao2022convergence}
Xiao, L.
\newblock On the convergence rates of policy gradient methods.
\newblock \emph{Journal of Machine Learning Research}, 23\penalty0 (282):\penalty0 1--36, 2022.

\bibitem[Yuan et~al.(2022)Yuan, Du, Gower, Lazaric, and Xiao]{yuan2022linear}
Yuan, R., Du, S.~S., Gower, R.~M., Lazaric, A., and Xiao, L.
\newblock Linear convergence of natural policy gradient methods with log-linear policies.
\newblock \emph{arXiv preprint arXiv:2210.01400}, 2022.

\end{thebibliography}
\bibliographystyle{icml2024}
}

\newpage

\onecolumn
\section*{Appendix}
\appendix

\tableofcontents
\newpage

\label{section:Appendix}
\vspace{-2mm}
\section{Supporting Algorithms}
\label{app:algo}

\begingroup
\allowdisplaybreaks
For ease of exposition, we state the accelerated gradient algorithm
of \citet{ghadimi2016accelerated} in \Cref{algorithm:Ghadimi_AG}. We
have made the following notational changes from the version in
\citep{ghadimi2016accelerated} so that one could easily compare it
with \Cref{algorithm:APG}: (i) The positions of the superscript and
subscript are exchanged. (ii) The original gradient symbol is replaced
with the gradient of our objective (i.e.,
$\nabla_{\theta}{V^{\pi_{\theta}}(\mu)}$). (iii) The iteration counter
is changed from $k$ to $t$. (iv) The original descent algorithm is now
made an ascent algorithm (i.e., the sign in
\cref{eq:Ghadimi_AG_2,eq:Ghadimi_AG_3} is plus instead of minus).
{(v) We introduced a momentum variable $\bar{m}_{\text{init}}$ in \cref{eq:Ghadimi_AG_1} in order to consider a non-zero initial momentum when the algorithm enters the concave regime, which will be clarified in the subsequent discussion}.
\begin{algorithm} [H]
	\caption{The Accelerated Policy Gradient (APG) Algorithm Revised From \cite{ghadimi2016accelerated}}
	\label{algorithm:Ghadimi_AG}
	\begin{algorithmic}

\STATE Input:
$\hat{\theta}^{(0)}, \bar{m}_{\text{init}} \in \mathbb{R}^n$, $\{\alpha^{(t)}\}$ with  $\alpha^{(t)} \in (0,1]$ for all $t \ge 1$,
$\{\beta^{(t)} > 0\}$, and $\{\lambda^{(t)} > 0\}$.
\STATE 0. Set the initial points $\theta^{(0)}_{ag} = \hat{\theta}^{(0)} + \bar{m}_{\text{init}}$ and $t=1$.
\STATE 1. Set
\begin{align}
    \theta^{(t)}_{md} = (1 - \alpha^{(t)}) \theta^{(t-1)}_{ag} + \alpha^{(t)} \hat{\theta}^{(t-1)} \label{eq:Ghadimi_AG_1}
\end{align}

\STATE 2. Compute $\nabla {V^{\pi_{\theta}}(\mu)}$ and set
\begin{align}
    \hat{\theta}^{(t)} &= \hat{\theta}^{(t-1)}+\lambda^{(t)} \nabla_{\theta}{V^{\pi_{\theta}}(\mu)} \Big\rvert_{\theta = \theta^{(t)}_{md}}, \label{eq:Ghadimi_AG_2}\\
    \theta^{(t)}_{ag} &= \theta^{(t)}_{md} + \beta^{(t)} \nabla_{\theta}{V^{\pi_{\theta}}(\mu)} \Big\rvert_{\theta = \theta^{(t)}_{md}}. \label{eq:Ghadimi_AG_3}
\end{align}
\STATE 3. Set $t \leftarrow t+1$ and go to step 1.
	\end{algorithmic}
\end{algorithm}


\begin{lemma}[Equivalence between \Cref{algorithm:APG} and \Cref{algorithm:Ghadimi_AG}]
\label{lemma:equivalent_algorithm}
Suppose that from the $T_{\text{shift}}$-th iteration on, the restart
mechanism of \Cref{algorithm:APG} remains inactive,
\textit{i.e.}, $\omega^{(t)} = \varphi^{(t)}$ for all $t \geq
T_{\text{shift}}$.
Then, starting at the $T_{\text{shift}}$-th iteration,
\Cref{algorithm:APG} is equivalent to initializing \Cref{algorithm:Ghadimi_AG}
at $\theta^{(T_{\text{shift})}}$, with $\alpha^{(t)} =
\frac{2}{t+T_{\text{shift}}+1}$, $\beta^{(t)} =
\eta^{(t+T_{\text{shift}})}$, $\lambda^{(t)} =
\beta^{(t)}/\alpha^{(t)}$, $\bar{m}_{\text{init}} =
-\frac{\omega^{(T_{\text{shift}})} -
\theta^{(T_{\text{shift}})}}{\alpha^{(1)}}$,
and $\hat{\theta}^{(0)} = \theta^{(T_{\text{shift}})} -
\bar{m}_{\text{init}}$,
%
in the sense that $\omega^{(t)} = \theta_{md}^{(t -
T_{\text{shift}}+1)},
\theta^{(t)} = \theta_{ag}^{(t - T_{\text{shift}})}$.
\end{lemma}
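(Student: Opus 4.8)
The plan is a direct induction on the iteration index, preceded by an algebraic reduction of Ghadimi's three-sequence recursion to a single momentum recursion on the $\theta_{ag}$-iterates that is seen to coincide with the $\varphi$-update \cref{algorithm:eq2} of \Cref{algorithm:APG}. Throughout I would write $k \coloneqq t - T_{\text{shift}}$ for the counter of \Cref{algorithm:Ghadimi_AG}, so that the target correspondence reads $\theta^{(t)} = \theta^{(k)}_{ag}$ and $\omega^{(t)} = \theta^{(k+1)}_{md}$ for all $t \ge T_{\text{shift}}$.

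First I would eliminate the auxiliary sequence $\hat{\theta}$. Using $\lambda^{(k)} = \beta^{(k)}/\alpha^{(k)}$ together with \cref{eq:Ghadimi_AG_1,eq:Ghadimi_AG_2,eq:Ghadimi_AG_3}: solve \cref{eq:Ghadimi_AG_1} at index $k$ for $\hat{\theta}^{(k-1)}$, plug it into \cref{eq:Ghadimi_AG_2} to obtain $\hat{\theta}^{(k)}$, substitute into \cref{eq:Ghadimi_AG_1} at index $k+1$, and finally use \cref{eq:Ghadimi_AG_3} in the form $\theta^{(k)}_{md} = \theta^{(k)}_{ag} - \beta^{(k)}\nabla_{\theta}V^{\pi_{\theta}}(\mu)|_{\theta = \theta^{(k)}_{md}}$. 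The gradient terms cancel exactly — this is precisely where the identity $\lambda^{(k)} = \beta^{(k)}/\alpha^{(k)}$ is used — leaving the closed recursion
\begin{align*}
    \theta^{(k+1)}_{md} = \theta^{(k)}_{ag} + \frac{\alpha^{(k+1)}\bigl(1-\alpha^{(k)}\bigr)}{\alpha^{(k)}}\Bigl(\theta^{(k)}_{ag} - \theta^{(k-1)}_{ag}\Bigr), \qquad k \ge 1.
\end{align*}
Substituting $\alpha^{(j)} = 2/(j + T_{\text{shift}} + 1)$ simplifies the coefficient to $\tfrac{k + T_{\text{shift}} - 1}{k + T_{\text{shift}} + 2} = \tfrac{t-1}{t+2}$, which is exactly the momentum weight in \cref{algorithm:eq2}.

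For the base case $t = T_{\text{shift}}$ (i.e. $k = 0$), the initializations $\hat{\theta}^{(0)} = \theta^{(T_{\text{shift}})} - \bar{m}_{\text{init}}$ and $\theta^{(0)}_{ag} = \hat{\theta}^{(0)} + \bar{m}_{\text{init}}$ give $\theta^{(0)}_{ag} = \theta^{(T_{\text{shift}})}$, and then $\theta^{(1)}_{md} = (1-\alpha^{(1)})\theta^{(0)}_{ag} + \alpha^{(1)}\hat{\theta}^{(0)} = \theta^{(T_{\text{shift}})} - \alpha^{(1)}\bar{m}_{\text{init}} = \omega^{(T_{\text{shift}})}$ by the stated choice of $\bar{m}_{\text{init}}$. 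For the inductive step, assume the correspondence holds through time $t$. The gradient step \cref{algorithm:eq1} reads $\theta^{(t+1)} = \omega^{(t)} + \eta^{(t+1)}\nabla_{\theta}V^{\pi_{\theta}}(\mu)|_{\theta = \omega^{(t)}}$, while \cref{eq:Ghadimi_AG_3} at index $k+1$ reads $\theta^{(k+1)}_{ag} = \theta^{(k+1)}_{md} + \beta^{(k+1)}\nabla_{\theta}V^{\pi_{\theta}}(\mu)|_{\theta = \theta^{(k+1)}_{md}}$; since $\omega^{(t)} = \theta^{(k+1)}_{md}$ by hypothesis and $\beta^{(k+1)} = \eta^{(k+1+T_{\text{shift}})} = \eta^{(t+1)}$, the two gradients are evaluated at the same point, so $\theta^{(t+1)} = \theta^{(k+1)}_{ag}$. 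Then the no-restart hypothesis $\omega^{(t+1)} = \varphi^{(t+1)}$ makes the momentum update deterministic, and combining \cref{algorithm:eq2} with the reduced recursion at index $k+1$ — using $\theta^{(t+1)} = \theta^{(k+1)}_{ag}$, $\theta^{(t)} = \theta^{(k)}_{ag}$, and the matched coefficient $\tfrac{t}{t+3}$ — gives $\omega^{(t+1)} = \theta^{(k+2)}_{md}$, closing the induction.

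The only genuinely delicate point, and the main obstacle, is the base case. The textbook equivalence between the two forms of Nesterov's method sets $\alpha^{(1)} = 1$, which forces a zero initial momentum; but since we start tracking in the middle of the APG trajectory (at $T_{\text{shift}}$), the carried-over momentum $\omega^{(T_{\text{shift}})} - \theta^{(T_{\text{shift}})}$ is generally nonzero. The auxiliary variable $\bar{m}_{\text{init}}$ is introduced precisely to absorb this, and one must verify that the two initializations $\theta^{(0)}_{ag} = \hat{\theta}^{(0)} + \bar{m}_{\text{init}}$ and $\hat{\theta}^{(0)} = \theta^{(T_{\text{shift}})} - \bar{m}_{\text{init}}$ are mutually consistent and simultaneously reproduce $\theta^{(T_{\text{shift}})}$ as $\theta^{(0)}_{ag}$ and $\omega^{(T_{\text{shift}})}$ as $\theta^{(1)}_{md}$, so that the induction is seeded correctly. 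The remaining work — the elimination of $\hat{\theta}$ in the first step and the coefficient simplification — is routine bookkeeping.
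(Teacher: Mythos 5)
Your proposal is correct and follows essentially the same route as the paper's proof: eliminate the auxiliary sequence $\hat{\theta}$ using $\alpha^{(t)}\lambda^{(t)} = \beta^{(t)}$ to obtain the closed two-variable recursion $\theta^{(t+1)}_{md} = \theta^{(t)}_{ag} + \frac{\alpha^{(t+1)}(1-\alpha^{(t)})}{\alpha^{(t)}}(\theta^{(t)}_{ag} - \theta^{(t-1)}_{ag})$, verify that the parameter choices reduce the coefficient to $\frac{t-1}{t+2}$ and that $\bar{m}_{\text{init}}$ seeds $\theta^{(1)}_{md} = \omega^{(T_{\text{shift}})}$, then match the gradient steps. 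The only difference is that you make the concluding induction explicit where the paper leaves it as a summary remark, which is a harmless (indeed slightly cleaner) presentation of the same argument.
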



\begin{remark}
\normalfont    \Cref{lemma:equivalent_algorithm} shows that our \Cref{algorithm:APG} is equivalent to \Cref{algorithm:Ghadimi_AG} under some proper conditions, so we could leverage the results of \citet{ghadimi2016accelerated}.
\end{remark}

\begin{proof}[Proof of \Cref{lemma:equivalent_algorithm}]  
\phantom{}
Since $\alpha^{(t)} \lambda^{(t)} = \beta^{(t)}$, by subtracting
\cref{eq:Ghadimi_AG_3} from \cref{eq:Ghadimi_AG_2} times
$\alpha^{(t)}$, for all $t \ge 1$, we have
\begin{align}
    \alpha^{(t)} \hat{\theta}^{(t)} - \theta^{(t)}_{ag} = \alpha^{(t)} \hat{\theta}^{(t-1)} - \theta^{(t)}_{md}.
    \label{eq:equivalent_algorithm_1}
\end{align}
Then, substituting $\theta^{(t)}_{md}$ in \cref{eq:equivalent_algorithm_1} by \cref{eq:Ghadimi_AG_1} leads to
\begin{align}
    \hat{\theta}^{(t)} = \frac{\theta^{(t)}_{ag} - (1-\alpha^{(t)})
	\theta^{(t-1)}_{ag} 
}{\alpha^{(t)}}.
    \label{eq:equivalent_algorithm_2}
\end{align}
Plugging \cref{eq:equivalent_algorithm_2} back into \cref{eq:Ghadimi_AG_1}, for $t \ge 2$, we get
\begin{align}
    \theta^{(t)}_{md}
    &= (1 - \alpha^{(t)}) \theta^{(t-1)}_{ag} + \alpha^{(t)}
	\hat{\theta}^{(t-1)} 
	\nonumber \\
    &= (1 - \alpha^{(t)}) \theta^{(t-1)}_{ag} + \alpha^{(t)} \frac{\theta^{(t-1)}_{ag} - (1-\alpha^{(t-1)}) \theta^{(t-2)}_{ag}
}{\alpha^{(t-1)}} 
\nonumber \\
    &= \theta^{(t-1)}_{ag} + \frac{\alpha^{(t)} (1-\alpha^{(t-1)})}{\alpha^{(t-1)}}(\theta^{(t-1)}_{ag} - \theta^{(t-2)}_{ag}). \label{eq:equivalent_algorithm_3}
\end{align}
For $t = 1$, {since $\theta_{ag}^{(0)} = \hat{\theta}^{(0)} +
\bar{m}_{\text{init}}$, }we get
\begin{align}
    \theta^{(1)}_{md}
    &= (1 - \alpha^{(1)}) \theta^{(0)}_{ag} + \alpha^{(1)}
	\hat{\theta}^{(0)}
	\nonumber \\
    &= \theta_{ag}^{(0)} - \alpha^{(1)} \bar{m}_{\text{init}}. 
\label{eq:equivalent_algorithm_6}
\end{align}
By
\cref{eq:Ghadimi_AG_3,eq:equivalent_algorithm_6,eq:equivalent_algorithm_3},
we could simplify \Cref{algorithm:Ghadimi_AG} into a two-variable
update form:
\begin{align*}
    \theta^{(t)}_{md} &= \left\{\begin{matrix}
		\theta_{ag}^{(0)} { - \alpha^{(1)}\bar{m}_{\text{init}} 
}   &&\text{if } t = 1,\\
    \theta^{(t-1)}_{ag} + \frac{\alpha^{(t)}
    	(1-\alpha^{(t-1)})}{\alpha^{(t-1)}} (\theta^{(t-1)}_{ag} -
    	\theta^{(t-2)}_{ag})  &&\text{if } t > 1.
    \end{matrix}\right. \\
    \theta^{(t)}_{ag} &= \theta^{(t)}_{md} + \beta^{(t)}
	\nabla_{\theta}{V^{\pi_{\theta}}(\mu)} \Big\rvert_{\theta =
		\theta^{(t)}_{md}} \quad\quad\quad\quad\quad\quad\quad\enspace\text{for all } t \ge 1.
\end{align*}
Finally, given $T_{\text{shift}} \in \mathbb{N}$, by applying $\alpha^{(t)} = \frac{2}{t+1+T_{\text{shift}}}$, $\beta^{(t)} =
\eta^{(t)}$, and $\bar{m}_{\text{init}} = -\frac{\omega^{(T_{\text{shift}})} - \theta^{(T_{\text{shift}})}}{ \alpha^{(1)}}$ in the update form above, we reach our desired result:
\begin{enumerate}
    \item First, initialize $\theta_{md}^{(1)} = \theta_{ag}^{(0)} - \alpha^{(1)} \bar{m}_{\text{init}} = \theta_{ag}^{(0)} + (\omega^{(T_{\text{shift}})} - \theta^{(T_{\text{shift}})})$.
    \item Then, run the following updates recursively:
    \begin{align}
        \theta^{(t)}_{ag} &= \theta^{(t)}_{md} + \eta^{(t)} \nabla_{\theta}{V^{\pi_{\theta}}(\mu)} \Big\rvert_{\theta = \theta^{(t)}_{md}} \label{eq:equivalent_algorithm_4} \\
        \theta^{(t+1)}_{md} &= \theta^{(t)}_{ag} + \frac{t-1+T_{\text{shift}}}{t+2+T_{\text{shift}}} (\theta^{(t)}_{ag} - \theta^{(t-1)}_{ag}).  \label{eq:equivalent_algorithm_5}
    \end{align}
\end{enumerate}
Note that $\omega^{(T_{\text{shift}})} - \theta^{(T_{\text{shift}})}$ is the momentum at time $T_{\text{shift}} - 1$, so we select $\bar{m}_{\text{init}}$ as $-\frac{\omega^{(T_{\text{shift}})} - \theta^{(T_{\text{shift}})}}{ \alpha^{(1)}}$. Furthermore, if $T_{\text{shift}} = 0$, it is exactly equivalent to the form shown in \Cref{algorithm:APG}.
In summary, $\theta_{md}^{(t+1)}, \theta_{ag}^{(t)}$ in \cref{eq:equivalent_algorithm_4} and \cref{eq:equivalent_algorithm_5} corresponds to $\omega^{(t)}, \theta^{(t)}$ in \Cref{algorithm:APG}, respectively. 
\end{proof}

Below, we present the pseudo code of Nesterov's Accelerated Gradient (NAG) algorithm discussed in \Cref{sec:prelim}.


\begin{algorithm} [H]
	\caption{Nesterov's Accelerated Gradient (NAG) algorithm in \citep{su2014differential}}
	\label{algorithm:NAG}
	\begin{algorithmic}
\STATE \textbf{Input}:
Step size $\eta = \frac{1}{L}$, where $L$ is the Lipschitz constant of
the gradient of the objective function $f$.
\STATE \textbf{Initialize}:
$\theta^{(0)}$ and $\omega^{(0)} = \theta^{(0)}$.

\FOR{$t = 1$ to $T$} 
    \STATE
    \begin{align*} 
    \theta^{(t)} &= \omega^{(t-1)} - \eta \nabla f(\omega^{(t-1)})\\
    \omega^{(t)} &= \theta^{(t)} + \frac{t-1}{t+2} (\theta^{(t)} - \theta^{(t-1)})
    \end{align*}
\ENDFOR
	\end{algorithmic}
\end{algorithm}





\endgroup

\newpage
\section{Supporting Lemmas}
\label{app:sup}

{In this section, we provide useful properties of policy optimization, softmax parameterization, and Accelerated Policy Gradient (APG).
Additionally, we also demonstrate the convergence rate of accelerated
gradient (not necessarily restricted to reinforcement learning) for
nearly concave objective functions.}

\begingroup
\allowdisplaybreaks

\subsection{Useful Properties of Policy Optimization}
We first present properties related to general policy optimization
that are not restricted to softmax parameterization.

\begin{lemma}[\textbf{Performance Difference Lemma in \citep{Kakade2002approx, agarwal2021theory}}]
\label{lemma:perf_diff} 
For any initial state distribution $\mu \in \Delta(\cS)$, the difference in the value under $\mu$ between two policies $\pi$ and $\pi^{\prime}$ can be characterized as
\begin{equation*}
V^{\pi}\left(\mu\right)-V^{\pi^{\prime}}\left(\mu\right)=\frac{1}{1-\gamma} \mathbb{E}_{s \sim d_{\mu}^{\pi}} \mathbb{E}_{a \sim \pi(\cdot \rvert s)}\left[A^{\pi^{\prime}}(s, a)\right].
\end{equation*}
In particular, by specifying $\mu$ as a one-hot distribution (i.e.,
$\mu(s_0)=1$ for some $s_0\in\cS$ and $\mu(s')=0$ for all $s'\neq
s_0$), we have
\begin{equation*}
V^{\pi}\left({s_0}\right)-V^{\pi^{\prime}}\left({s_0}\right)=\frac{1}{1-\gamma} \mathbb{E}_{s \sim d_{{s_0}}^{\pi}} \mathbb{E}_{a \sim \pi(\cdot \rvert s)}\left[A^{\pi^{\prime}}(s, a)\right].
\end{equation*}
\end{lemma}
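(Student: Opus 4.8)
The plan is to establish the identity by a telescoping argument over the trajectory induced by $\pi$, and then apply the advantage-function decomposition. First I would fix the initial state distribution $\mu$ and write $V^{\pi}(\mu) - V^{\pi'}(\mu)$ by expanding $V^{\pi}(\mu)$ as the expectation over a full trajectory $s_0, a_0, s_1, a_1, \ldots$ generated by following $\pi$ starting from $s_0 \sim \mu$. The crux is to insert a telescoping sum of the form $\sum_{t=0}^{\infty} \gamma^{t}\big(\gamma V^{\pi'}(s_{t+1}) - V^{\pi'}(s_t)\big)$, which collapses to $-V^{\pi'}(s_0)$ under the discount factor $\gamma\in[0,1)$ (the boundary term $\gamma^{t} V^{\pi'}(s_t) \to 0$ since rewards, and hence $V^{\pi'}$, are bounded by $1/(1-\gamma)$).

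Concretely, I would write
\begin{align*}
V^{\pi}(\mu) - V^{\pi'}(\mu)
&= \mathbb{E}\Bigg[\sum_{t=0}^{\infty}\gamma^{t} r(s_t,a_t)\,\Big|\,\pi, s_0\sim\mu\Bigg] - V^{\pi'}(\mu) \\
&= \mathbb{E}\Bigg[\sum_{t=0}^{\infty}\gamma^{t}\Big(r(s_t,a_t) + \gamma V^{\pi'}(s_{t+1}) - V^{\pi'}(s_t)\Big)\,\Big|\,\pi, s_0\sim\mu\Bigg],
\end{align*}
where the second equality uses the telescoping identity to absorb $-V^{\pi'}(\mu) = -\mathbb{E}[V^{\pi'}(s_0)]$. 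Then I would recognize that, conditioning on $(s_t, a_t)$ and taking expectation over $s_{t+1}\sim\mathcal{P}(\cdot|s_t,a_t)$, the inner bracket becomes exactly $Q^{\pi'}(s_t,a_t) - V^{\pi'}(s_t) = A^{\pi'}(s_t,a_t)$ by the definitions in \eqref{eq:state_action_value_function}. This leaves
\[
V^{\pi}(\mu) - V^{\pi'}(\mu) = \mathbb{E}\Bigg[\sum_{t=0}^{\infty}\gamma^{t} A^{\pi'}(s_t,a_t)\,\Big|\,\pi, s_0\sim\mu\Bigg].
\]

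Finally, I would rewrite the discounted sum over trajectory steps as an expectation against the discounted state visitation distribution: by definition $d_\mu^{\pi}(s) = (1-\gamma)\sum_{t=0}^{\infty}\gamma^{t}\Pr(s_t=s\,|\,s_0\sim\mu,\pi,\mathcal{P})$, so interchanging the sum over $t$ with the sum over states yields the factor $\frac{1}{1-\gamma}\mathbb{E}_{s\sim d_\mu^{\pi}}\mathbb{E}_{a\sim\pi(\cdot|s)}[A^{\pi'}(s,a)]$, which is the claimed identity. The one-hot specialization is immediate by taking $\mu$ to be the Dirac mass at $s_0$, giving $d_{s_0}^{\pi}$ in place of $d_\mu^{\pi}$. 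The main obstacle — really the only subtle point — is justifying the interchange of expectation and the infinite sum and the vanishing of the telescoping boundary term; both follow from the uniform boundedness of rewards in $[0,1]$ and $\gamma<1$, which gives absolute convergence and lets me apply Fubini/dominated convergence freely.
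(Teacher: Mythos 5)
Your proof is correct and is precisely the standard telescoping argument; the paper does not prove this lemma itself but cites it from \citet{Kakade2002approx} and \citet{agarwal2021theory}, where the proof proceeds exactly as you describe (telescope $\gamma^{t+1}V^{\pi'}(s_{t+1})-\gamma^{t}V^{\pi'}(s_t)$ along the $\pi$-trajectory, identify the conditional expectation of each bracket as $A^{\pi'}(s_t,a_t)$, and resum against $d_\mu^{\pi}$). Your handling of the boundary term and the Fubini interchange via boundedness of $V^{\pi'}$ is the right justification, so there is nothing to add.
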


\begin{lemma} $d^{\pi}_{\mu}(s) \ge (1-\gamma) \cdot \mu(s),
	\ \forall \pi, \forall s \in \mathcal{S}$, where $\mu(s)$ is any starting state distribution of the MDP.
\label{lemma:lower_bound_of_state_visitation_distribution}
\end{lemma}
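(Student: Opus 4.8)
The plan is to unfold the definition of the discounted state visitation distribution and keep only the $t=0$ term, which contributes $(1-\gamma)\mu(s)$ exactly, while discarding the remaining nonnegative terms. Concretely, by definition,
\[
d_{\mu}^{\pi}(s) = (1-\gamma) \sum_{t=0}^{\infty} \gamma^{t}\, \Pr(s_t = s \mid s_0 \sim \mu, \pi, \mathcal{P}),
\]
where I have taken the expectation over $s_0 \sim \mu$ inside the definition of $d_{s_0}^{\pi}$. Every summand is nonnegative because $\gamma \in [0,1)$ and each term is a probability, so the infinite sum is at least its first term, namely $t=0$.

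First I would observe that at $t=0$ the probability $\Pr(s_0 = s \mid s_0 \sim \mu) = \mu(s)$, since the chain starts from the distribution $\mu$ and no transition has yet occurred. Hence
\[
d_{\mu}^{\pi}(s) \;\ge\; (1-\gamma)\,\gamma^{0}\,\Pr(s_0 = s \mid s_0 \sim \mu)\;=\;(1-\gamma)\,\mu(s).
\]
This holds for every policy $\pi$ and every state $s \in \mathcal{S}$, which is exactly the claim.

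There is essentially no obstacle here — the only mild subtlety is bookkeeping the expectation over the initial state: the statement is phrased for a general starting distribution $\mu$, so one should either work with $d_{s_0}^{\pi}$ for a fixed $s_0$ and then average, or directly use the mixture $d_{\mu}^{\pi}(s) = \mathbb{E}_{s_0 \sim \mu}[d_{s_0}^{\pi}(s)]$ and note $\mathbb{E}_{s_0 \sim \mu}[\mathds{1}\{s_0 = s\}] = \mu(s)$. Either way the argument is a one-line truncation of a nonnegative series, so I expect the proof to be three or four lines total.
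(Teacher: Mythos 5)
Your proposal is correct and follows essentially the same route as the paper: both unfold $d_{\mu}^{\pi}(s) = \mathbb{E}_{s_0\sim\mu}\bigl[(1-\gamma)\sum_{t=0}^{\infty}\gamma^t \Pr(s_t = s\mid s_0,\pi)\bigr]$, drop all nonnegative terms except $t=0$, and use $\mathbb{E}_{s_0\sim\mu}[\Pr(s_0=s\mid s_0)]=\mu(s)$. No gaps.
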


\begin{proof}[Proof of \Cref{lemma:lower_bound_of_state_visitation_distribution}]  

Since $\gamma \in [0,1)$, we have
\begin{align*}
d^{\pi}_{\mu}(s)
&= \underset{s_0 \sim \mu}{\mathbb{E}} \left [ d^{\pi}_{\mu}(s) \right]\\
&= \underset{s_0 \sim \mu}{\mathbb{E}} \left [ (1-\gamma) \cdot \sum_{t=0}^{\infty}\gamma^t \cdot  \mathbb{P}(s_t=s \: | \: s_0, \pi) \right]\\
&\ge \underset{s_0 \sim \mu}{\mathbb{E}} \left [ (1-\gamma) \cdot \mathbb{P}(s_0=s \: | \: s_0, \pi) \right]\\
&= (1-\gamma) \cdot \mu(s).
\qedhere
\end{align*}
\end{proof}

\begin{lemma}
$V^{*}(\rho) - V^{{\pi}}(\rho) \le \frac{1}{1-\gamma} \cdot \Big \| \frac{{d^{\pi^*}_{\rho}}}{\mu} \Big \|_{\infty} \cdot \left ( V^{*}(\mu) - V^{{\pi}}(\mu) \right )$ for any probability distributions $\rho, \mu$ and any policy $\pi$.
\label{lemma:performance_difference_in_rho}
\end{lemma}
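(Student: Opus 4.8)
The plan is to apply the performance difference lemma (\Cref{lemma:perf_diff}) with the optimal policy $\pi^*$ against $\pi$, and then convert the $d^{\pi^*}_\rho$-weighted sum of advantages into a $\mu$-weighted sum of \emph{pointwise} sub-optimality gaps, which are nonnegative, so that the distribution-mismatch coefficient $\big\| d^{\pi^*}_\rho/\mu \big\|_\infty$ can be factored out.

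First I would invoke \Cref{lemma:perf_diff} to write
\[
V^*(\rho) - V^\pi(\rho) = \frac{1}{1-\gamma}\sum_{s\in\mathcal{S}} d^{\pi^*}_\rho(s)\,\mathbb{E}_{a\sim\pi^*(\cdot|s)}\!\left[A^\pi(s,a)\right].
\]
The per-state factor $\mathbb{E}_{a\sim\pi^*(\cdot|s)}[A^\pi(s,a)]$ is not sign-definite, so the next step is to dominate it by a nonnegative quantity. Using the monotonicity $Q^\pi(s,a) = r(s,a) + \gamma\sum_{s'}\mathcal{P}(s'|s,a)V^\pi(s') \le r(s,a) + \gamma\sum_{s'}\mathcal{P}(s'|s,a)V^*(s') = Q^*(s,a) \le V^*(s)$, I obtain $A^\pi(s,a) = Q^\pi(s,a) - V^\pi(s) \le V^*(s) - V^\pi(s)$ for every $a$, hence $\mathbb{E}_{a\sim\pi^*(\cdot|s)}[A^\pi(s,a)] \le V^*(s) - V^\pi(s)$, and this bound is nonnegative since $\pi^*$ is simultaneously optimal for all states.

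Finally I would substitute $d^{\pi^*}_\rho(s) = \big(d^{\pi^*}_\rho(s)/\mu(s)\big)\,\mu(s)$, bound $d^{\pi^*}_\rho(s)/\mu(s) \le \big\| d^{\pi^*}_\rho/\mu \big\|_\infty$ termwise (valid precisely because each remaining summand $\mu(s)\big(V^*(s)-V^\pi(s)\big)$ is nonnegative), and recognize $\sum_{s\in\mathcal{S}} \mu(s)\big(V^*(s)-V^\pi(s)\big) = V^*(\mu) - V^\pi(\mu)$, which yields the claimed inequality with the factor $1/(1-\gamma)$ coming entirely from the prefactor in the performance difference lemma. This argument is essentially routine; the only step that needs care is the domination $A^\pi(s,a) \le V^*(s)-V^\pi(s)$ via $Q^\pi \le Q^*$, since without first replacing the sign-indefinite advantage term by a nonnegative pointwise gap, the termwise $\ell_\infty$ factoring would be illegitimate.
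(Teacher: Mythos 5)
Your proof is correct, and it takes a genuinely different route from the paper's. The paper also starts from \Cref{lemma:perf_diff} applied to $(\pi^*,\pi)$ under $\rho$, but then rewrites $d^{\pi^*}_{\rho}(s) = d^{\pi^*}_{\mu}(s)\cdot\frac{d^{\pi^*}_{\rho}(s)}{d^{\pi^*}_{\mu}(s)}$, pulls out $\lVert d^{\pi^*}_{\rho}/d^{\pi^*}_{\mu}\rVert_{\infty}$ termwise, invokes \Cref{lemma:lower_bound_of_state_visitation_distribution} to pass from $d^{\pi^*}_{\mu}$ to $(1-\gamma)\mu$, and finally applies the performance difference lemma a second time at $\mu$ to recognize the remaining sum as $(1-\gamma)(V^*(\mu)-V^\pi(\mu))$. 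You instead dominate the advantage pointwise by the nonnegative gap $V^*(s)-V^\pi(s)$ via $Q^\pi \le Q^* \le V^*$ and then reweight directly against $\mu$. Your route buys two things. First, it is shorter: it needs neither the visitation lower bound $d^{\pi^*}_{\mu}(s)\ge(1-\gamma)\mu(s)$ nor a second invocation of \Cref{lemma:perf_diff}. Second, and more substantively, your caveat about sign-indefiniteness is exactly on point: the per-state quantity $\sum_a \pi^*(a|s)A^\pi(s,a)$ is \emph{not} nonnegative in general (only its $d^{\pi^*}_{\rho}$-weighted aggregate is -- one can build a two-state example where $\pi^*$'s action at $s_0$ has strictly negative advantage under a $\pi$ that is bad downstream), so the paper's first termwise $\ell_\infty$ pull-out is not justified as written, whereas your per-state summand $\mu(s)\big(V^*(s)-V^\pi(s)\big)$ is manifestly nonnegative and the factoring is legitimate. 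In that sense your argument is not merely an alternative but a repair of the step the paper glosses over; the final constant $\frac{1}{1-\gamma}\lVert d^{\pi^*}_{\rho}/\mu\rVert_\infty$ comes out identical either way.
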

\begin{proof}[Proof of \Cref{lemma:performance_difference_in_rho}]  
\begin{align}
V^{*}(\rho) - V^{{\pi}}(\rho)
&= \frac{1}{1-\gamma} \cdot \sum_{s \in \mathcal{S}} d^{\pi^*}_{\rho} (s) \sum_{a \in \mathcal{A}} \pi^*(a|s) \cdot A^{{\pi}}(s,a) \label{eq1: lemma:performance_difference_in_rho} \\
&= \frac{1}{1-\gamma} \cdot \sum_{s \in \mathcal{S}} d^{\pi^*}_{\mu}(s) \cdot \frac{d^{\pi^*}_{\rho}(s)}{d^{\pi^*}_{\mu}(s)}  \sum_{a' \in \mathcal{A}} \pi^*(a|s) \cdot A^{{\pi}}(s,a) \nonumber \\
&\le \frac{1}{1-\gamma} \cdot \left \| \frac{{d^{\pi^*}_{\rho}}}{d^{\pi^*}_{\mu}} \right \|_{\infty} \cdot \sum_{s \in \mathcal{S}} d^{\pi^*}_{\mu}(s)  \sum_{a' \in \mathcal{A}} \pi^*(a|s) \cdot A^{{\pi}}(s,a) \nonumber \\
&\le \frac{1}{(1-\gamma)^2} \cdot \left \| \frac{{d^{\pi^*}_{\rho}}}{\mu} \right \|_{\infty} \cdot \sum_{s \in \mathcal{S}} d^{\pi^*}_{\mu}(s)  \sum_{a' \in \mathcal{A}} \pi^*(a|s) \cdot A^{{\pi}}(s,a) \label{eq2: lemma:performance_difference_in_rho} \\
&= \frac{1}{1-\gamma} \cdot \left \| \frac{{d^{\pi^*}_{\rho}}}{\mu} \right \|_{\infty} \cdot \left ( V^{*}(\mu) - V^{{\pi}}(\mu) \right ), \label{eq3: lemma:performance_difference_in_rho}
\end{align}
where (\ref{eq1: lemma:performance_difference_in_rho}) and (\ref{eq3: lemma:performance_difference_in_rho}) hold by \Cref{lemma:perf_diff} and (\ref{eq2: lemma:performance_difference_in_rho}) holds due to \Cref{lemma:lower_bound_of_state_visitation_distribution}.
\end{proof}

\begin{lemma}
For any policy $\pi$, we have $\sum_{a \in \cA} \pi(a|s) A^{\pi}(s,a)
= 0, \ \forall s \in \cS$.
\label{lemma:sum_pi_A}
\end{lemma}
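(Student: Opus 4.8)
The plan is to reduce the identity to the Bellman consistency relation $V^{\pi}(s) = \sum_{a\in\cA}\pi(a|s)\,Q^{\pi}(s,a)$, which follows directly from the definitions of the value function and the $Q$-value in \cref{eq:state_value_function} and \cref{eq:state_action_value_function}. Once that relation is in hand, the claim is an immediate consequence of the definition of the advantage function.

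Concretely, first I would substitute $A^{\pi}(s,a) = Q^{\pi}(s,a) - V^{\pi}(s)$ and use that $\sum_{a\in\cA}\pi(a|s) = 1$ (since $\pi(\cdot|s)\in\Delta(\cA)$) to pull the $V^{\pi}(s)$ term out of the sum:
\begin{align*}
\sum_{a\in\cA}\pi(a|s)\,A^{\pi}(s,a) = \sum_{a\in\cA}\pi(a|s)\,Q^{\pi}(s,a) - V^{\pi}(s)\sum_{a\in\cA}\pi(a|s) = \sum_{a\in\cA}\pi(a|s)\,Q^{\pi}(s,a) - V^{\pi}(s).
\end{align*}
It then remains to establish $\sum_{a\in\cA}\pi(a|s)\,Q^{\pi}(s,a) = V^{\pi}(s)$. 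I would obtain this by plugging in $Q^{\pi}(s,a) = r(s,a) + \gamma\sum_{s'}\mathcal{P}(s'|s,a)V^{\pi}(s')$ and recognizing the right-hand side as the one-step unrolling of $V^{\pi}(s) = \mathbb{E}\big[\sum_{t\ge 0}\gamma^{t} r(s_t,a_t)\mid \pi, s_0 = s\big]$: conditioning on the first action $a_0\sim\pi(\cdot|s)$ and then on the next state $s_1\sim\mathcal{P}(\cdot|s,a_0)$, the law of total expectation together with the Markov property yields $V^{\pi}(s) = \sum_{a\in\cA}\pi(a|s)\big(r(s,a) + \gamma\sum_{s'}\mathcal{P}(s'|s,a)V^{\pi}(s')\big) = \sum_{a\in\cA}\pi(a|s)\,Q^{\pi}(s,a)$. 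Substituting back gives $\sum_{a\in\cA}\pi(a|s)\,A^{\pi}(s,a) = V^{\pi}(s) - V^{\pi}(s) = 0$, as desired.

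I do not anticipate any genuine obstacle here. The only step carrying any content is the Bellman consistency identity $V^{\pi}(s) = \sum_a \pi(a|s)Q^{\pi}(s,a)$, which is entirely standard and could be invoked by citation rather than re-derived; everything else is linearity of the finite sum over $\cA$ and the normalization $\sum_{a}\pi(a|s)=1$.
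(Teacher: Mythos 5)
Your proposal is correct and follows essentially the same route as the paper: expand $A^{\pi}(s,a) = Q^{\pi}(s,a) - V^{\pi}(s)$, use $\sum_{a}\pi(a|s)=1$, and invoke the Bellman consistency identity $\sum_{a}\pi(a|s)Q^{\pi}(s,a)=V^{\pi}(s)$ (which the paper cites from Lemma 1.4 of Agarwal et al. rather than re-deriving). The only difference is that you spell out the one-step unrolling argument for Bellman consistency, which the paper handles by citation.
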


\begin{proof}[Proof of \Cref{lemma:sum_pi_A}]  

\begin{align}
\sum_{a \in \cA} \pi(a|s) A^{\pi}(s,a) 
&= \sum_{a \in \cA} \pi(a|s) \Big( Q^{\pi}(s,a) - V^{\pi}(s) \Big) \nonumber \\
&= \sum_{a \in \cA} \pi(a|s) Q^{\pi}(s,a) - \sum_{a \in \cA} \pi(a|s) V^{\pi}(s) \nonumber \\
&= V^{\pi}(s) - V^{\pi}(s), \label{eq:sum_pi_A 1}\\
&=0, \nonumber
\end{align}
where \cref{eq:sum_pi_A 1} leverages the Bellman consistency equation in Lemma 1.4 of \citep{agarwal2019reinforcement}.
\end{proof}

\begin{lemma}
    \label{lemma: Q^* a_2}
	Assume \cref{assump:unique_optimal} holds.
Given any $s \in \mathcal{S}$,
    let $a^*(s)$ be the optimal action at the state $s$, and $a_i(s)$,
	where $2 \le i \le |\cA|$, be such that $Q^*(s, a^*(s)) \ge Q^*(s,
	a_2(s)) \ge \dots \ge Q^*(s, a_{|\cA|}(s))$. Then, for any policy
	$\pi$ and any $a_i(s)$ with $2 \le i \le |\cA|$, we have $Q^*(s, a_2(s)) \ge Q^{\pi}(s, a_i(s))$.
    Furthermore, we also have $V^*(s) = Q^*(s, a^*(s))$ for all $s\in\cS$.
 \end{lemma}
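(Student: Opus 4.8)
The plan is to prove Lemma~\ref{lemma: Q^* a_2} in two parts, corresponding to its two claims. First I would establish the final sentence, $V^*(s) = Q^*(s, a^*(s))$ for all $s$, since this is the cleaner statement and will be used implicitly in the first part. Recall that $V^*(s) = \max_{\pi}V^\pi(s)$ and that for the optimal policy $\pi^*$, the Bellman optimality/consistency relation gives $V^{\pi^*}(s) = \sum_a \pi^*(a|s)Q^{\pi^*}(s,a)$. Under Assumption~\ref{assump:unique_optimal}, the optimal policy is deterministic and selects $a^*(s)$ (this follows from standard MDP theory, e.g., the reference to Theorem~1.7 in \citep{agarwal2019reinforcement} cited earlier, together with uniqueness of the optimal action); hence $\pi^*(a^*(s)|s) = 1$ and therefore $V^*(s) = V^{\pi^*}(s) = Q^{\pi^*}(s, a^*(s)) = Q^*(s, a^*(s))$.

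Next I would prove the main inequality $Q^*(s, a_2(s)) \ge Q^\pi(s, a_i(s))$ for every policy $\pi$ and every $i$ with $2 \le i \le |\cA|$. The key observation is that $Q^*(s,a) \ge Q^\pi(s,a)$ pointwise for every state-action pair and every policy $\pi$: indeed, $Q^*(s,a) = r(s,a) + \gamma\sum_{s'}\mathcal{P}(s'|s,a)V^*(s') \ge r(s,a) + \gamma\sum_{s'}\mathcal{P}(s'|s,a)V^\pi(s') = Q^\pi(s,a)$, using $V^*(s') \ge V^\pi(s')$ for all $s'$ (definition of $V^*$ as the pointwise maximum value, which is attained by $\pi^*$) and $\gamma \ge 0$, $\mathcal{P}(\cdot|s,a)$ a probability distribution. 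Then for any $i \ge 2$ we chain: $Q^\pi(s, a_i(s)) \le Q^*(s, a_i(s)) \le Q^*(s, a_2(s))$, where the last step is just the assumed ordering $Q^*(s,a_2(s)) \ge \cdots \ge Q^*(s,a_{|\cA|}(s))$.

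I do not anticipate a serious obstacle here; the lemma is essentially a packaging of elementary monotonicity facts about $Q^*$. The one point requiring a little care is the justification that $\pi^*$ puts all its mass on $a^*(s)$ --- i.e., that the unique optimal action assumption, combined with the existence of a deterministic optimal policy, forces $\pi^*(a^*(s)|s)=1$. This follows because any optimal policy must have $\mathbb{E}_{a\sim\pi^*(\cdot|s)}[Q^*(s,a)] = V^*(s) = \max_a Q^*(s,a) = Q^*(s,a^*(s))$, and since $a^*(s)$ is the strict maximizer (uniqueness), $\pi^*$ cannot place positive probability on any suboptimal action; alternatively one simply invokes the standard result that there exists a deterministic optimal policy and it must select the $Q^*$-maximizing action at each state. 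With that in hand, both parts of the lemma follow from the displayed pointwise bound $Q^*(s,a) \ge Q^\pi(s,a)$ and the given ordering of the $Q^*(s,a_i(s))$.
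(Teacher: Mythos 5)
Your proof is correct and follows essentially the same route as the paper's: both reduce the main claim to the chain $Q^{\pi}(s, a_i(s)) \le Q^{*}(s, a_i(s)) \le Q^{*}(s, a_2(s))$, with the second inequality coming from the assumed ordering. The only difference is that the paper simply cites $Q^{*}(s,a) = \sup_{\bar{\pi}} Q^{\bar{\pi}}(s,a)$ (Theorem 1.7 of Agarwal et al.) for the first inequality and a textbook reference for $V^{*}(s) = Q^{*}(s, a^{*}(s))$, whereas you derive both from the one-step Bellman expansion and the deterministic optimal policy; either justification is valid.
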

    \begin{proof}[Proof of \Cref{lemma: Q^* a_2}]

			Theorem 1.7 of \citep{agarwal2019reinforcement} gives that
			\begin{equation}
				Q^*(s, a) = \sup_{\bar \pi} Q^{\bar \pi}(s, a),\quad \forall (s, a) \in \cS \times \cA.
				\label{eq:Q}
			\end{equation}
			Therefore,
			for any policy $\pi$ and $a_i(s)$ for $2 \le i \le |\cA|$, we have
        \begin{align}
			\nonumber
            Q^{\pi}(s, a_i(s)) &\le \sup_{\pi'} Q^{\pi'}(s, a_i(s))\\
            &= Q^*(s, a_i(s)) \label{Q^* a_2 eq2} \\
            &\le Q^*(s, a_2(s)), \label{Q^* a_2 eq3}
        \end{align}
		where \cref{Q^* a_2 eq2} holds by \cref{eq:Q}, and \cref{Q^*
		a_2 eq3} is from the definition of $a_2(s)$.
        Furthermore, by page 14 of \citep{szepesvari2022algorithms} and the definition of $a^*(s)$, we have $V^*(s) = \argmax_{a\in\cA} Q^*(s, a) = Q^*(s, a^*(s))$ for all $s\in\cS$.
    \end{proof}
    
\begin{lemma}
\label{lemma:pi diff infinite norm}
For any fixed state $s\in\cS$, any fixed action $a\in\cA$, and any policies $\pi_1, \pi_2$, we have
\begin{equation*}
    \Big\lVert  \pi_1(\cdot | s) - \pi_2(\cdot | s)
	\Big\rVert_{\infty} \le 1 - \min\{\pi_1(a | s), \pi_2(a | s)\}.
\end{equation*}
\end{lemma}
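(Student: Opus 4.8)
The plan is to fix the state $s$ and abbreviate $p := \pi_1(\cdot\,|\,s)$ and $q := \pi_2(\cdot\,|\,s)$, viewing both as points of the probability simplex $\Delta(\cA)$. Since both the quantity $\lVert \pi_1(\cdot|s) - \pi_2(\cdot|s)\rVert_\infty$ and $\min\{\pi_1(a|s), \pi_2(a|s)\}$ are symmetric under swapping $\pi_1$ and $\pi_2$, I may assume without loss of generality that $p_a \le q_a$, so that $\min\{p_a, q_a\} = p_a$ and it suffices to show $|p_{a'} - q_{a'}| \le 1 - p_a$ for every $a' \in \cA$.

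I would then split into two cases according to whether $a'$ equals the distinguished action $a$. For $a' = a$: since $q$ is a distribution, $q_a \le 1$, hence $|p_a - q_a| = q_a - p_a \le 1 - p_a$. For $a' \neq a$: the simplex constraint gives $p_{a'} + p_a \le \sum_{a'' \in \cA} p_{a''} = 1$, so $p_{a'} \le 1 - p_a$; likewise $q_{a'} \le 1 - q_a \le 1 - p_a$ using $p_a \le q_a$. Therefore $|p_{a'} - q_{a'}| \le \max\{p_{a'}, q_{a'}\} \le 1 - p_a$. Taking the maximum over $a' \in \cA$ yields $\lVert p - q \rVert_\infty \le 1 - p_a = 1 - \min\{\pi_1(a|s), \pi_2(a|s)\}$, which is the claim.

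I do not anticipate any real obstacle: the statement is elementary and follows purely from the simplex constraints. The only points requiring a little care are (i) noting that the coordinate $a' = a$ needs its own short argument (it uses $q_a \le 1$ rather than the sum-to-one bound applied at $a$), and (ii) confirming that the ``without loss of generality $p_a \le q_a$'' reduction is legitimate, which holds precisely because both sides of the inequality are invariant under interchanging the two policies.
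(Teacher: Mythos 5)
Your proof is correct and follows essentially the same route as the paper's: both arguments split into the cases $a'=a$ and $a'\neq a$, bound $|\pi_1(a'|s)-\pi_2(a'|s)|$ by $\max\{\pi_1(a'|s),\pi_2(a'|s)\}$ in the latter case, and use the simplex constraint to bound that maximum by $1-\min\{\pi_1(a|s),\pi_2(a|s)\}$. The only cosmetic difference is that you normalize via a (legitimate) symmetry reduction to $p_a\le q_a$, whereas the paper carries the $\max/\min$ notation throughout.
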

\begin{proof}[Proof of \Cref{lemma:pi diff infinite norm}]
    For any $a' \in \cA$, we have
    \begin{equation}
        \Big\lvert \pi_1(a' | s) - \pi_2(a' | s) \Big\rvert \le \max\{ \pi_1(a' | s), \pi_2(a' | s) \}, \label{eq: pi diff max}
    \end{equation}
	where \cref{eq: pi diff max} holds because both terms are nonnegative. In addition,
    \begin{align}
        \Big\lvert \pi_1(a' | s) - \pi_2(a' | s) \Big\rvert &\le \max\{ \pi_1(a' | s), \pi_2(a' | s) \} - \min\{\pi_1(a' | s), \pi_2(a' | s)\} \nonumber \\
        &\le 1 - \min\{\pi_1(a' | s), \pi_2(a' | s)\}. \label{eq: pi diff min}
    \end{align}
    Now, we fix $a \in \cA$ and show that for any $a' \in \cA$ we have
    \begin{equation}
        \Big\lvert \pi_1(a' | s) - \pi_2(a' | s) \Big\rvert \le 1 - \min\{\pi_1(a | s), \pi_2(a | s)\}. \label{eq: pi diff result}
    \end{equation}
    When $a' = a$, \cref{eq: pi diff result} holds by \cref{eq: pi
	diff min}. When $a' \neq a$, we note that the maximum of $\pi_1(a' | s)$ and $\pi_2(a' | s)$ can be at most $1 - \pi_1(a | s)$ and $1 - \pi_2(a | s)$, respectively.
	Thus, by \cref{eq: pi diff max}, we have
    \begin{align*}
        \Big\lvert \pi_1(a' | s) - \pi_2(a' | s) \Big\rvert &\le \max\{ \pi_1(a' | s), \pi_2(a' | s) \} \\
        &\le \max\{1 - \pi_1(a | s), 1 - \pi_2(a | s)\} \\
        &= 1 - \min\{\pi_1(a | s), \pi_2(a | s)\}.
    \end{align*}
	The desired result is then obtained by taking the maximum over all
	$a'\in\cA$.
\end{proof}

\begin{lemma}
\label{lemma: reward range to value range}
    Given an infinite horizon discount MDP $\mathcal{M}$ as defined in \Cref{sec:prelim}. Suppose that for all $(s, a) \in \mathcal{S}\times\mathcal{A}$, the reward function $r(s, a)$ of the MDP $\mathcal{M}$ lies in $[0, R_{\text{max}}]$ for some $R_{\text{max}} \ge 0$. Then, we have $V^{\pi}(s) \in [0, R_{\text{max}} / (1-\gamma)]$ for any $s\in\mathcal{S}$ and any policy $\pi$. Furthermore, $Q^{\pi}(s, a) \in [0, R_{\text{max}} / (1-\gamma)]$ and $\lvert A^{\pi}(s, a)\rvert \in [0, R_{\text{max}} / (1-\gamma)]$ for any $(s, a)\in\mathcal{S}\times \mathcal{A}$ and any policy $\pi$.
\end{lemma}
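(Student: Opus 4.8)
The plan is to bound each quantity directly from its defining series or Bellman relation, using only the hypotheses $0 \le r(s,a) \le R_{\max}$ and $\gamma \in [0,1)$; no structural properties of the MDP beyond these are needed.

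I would start with $V^{\pi}$. From \cref{eq:state_value_function}, along every trajectory the partial sums $\sum_{t=0}^{T} \gamma^{t} r(s_{t}, a_{t})$ are nonnegative (each summand is nonnegative) and nondecreasing in $T$, and are uniformly bounded above by $R_{\max} \sum_{t=0}^{\infty} \gamma^{t} = R_{\max}/(1-\gamma)$ since $r(s_t,a_t) \le R_{\max}$ and $\gamma \in [0,1)$. Hence the series converges pointwise to a limit in $[0, R_{\max}/(1-\gamma)]$, and by the monotone convergence theorem the expectation also lies in $[0, R_{\max}/(1-\gamma)]$; this gives $V^{\pi}(s) \in [0, R_{\max}/(1-\gamma)]$ for all $s \in \mathcal{S}$ and all $\pi$. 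Next, for $Q^{\pi}$ I would substitute these bounds into \cref{eq:state_action_value_function}: since $r(s,a) \ge 0$ and $V^{\pi}(s') \ge 0$, we get $Q^{\pi}(s,a) \ge 0$, while $r(s,a) \le R_{\max}$, $V^{\pi}(s') \le R_{\max}/(1-\gamma)$, and $\sum_{s'}\mathcal{P}(s'|s,a) = 1$ give
\[
Q^{\pi}(s,a) \le R_{\max} + \gamma \cdot \frac{R_{\max}}{1-\gamma} = \frac{R_{\max}}{1-\gamma}.
\]
Finally, $A^{\pi}(s,a) = Q^{\pi}(s,a) - V^{\pi}(s)$ is a difference of two numbers each in $[0, R_{\max}/(1-\gamma)]$, so $A^{\pi}(s,a) \in [-R_{\max}/(1-\gamma), R_{\max}/(1-\gamma)]$, i.e. $\lvert A^{\pi}(s,a)\rvert \le R_{\max}/(1-\gamma)$.

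The statement is elementary, so there is no real obstacle; the only technical point worth isolating is the interchange of expectation and the infinite sum in the definition of $V^{\pi}$, which is justified by the uniform geometric bound together with nonnegativity of the summands (Tonelli / monotone convergence). Everything else is a one-line substitution of the previously derived bounds.
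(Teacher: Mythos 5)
Your proof is correct and follows essentially the same route as the paper's: bound the geometric series defining $V^{\pi}$, substitute into the Bellman relation for $Q^{\pi}$, and bound $A^{\pi}$ as the difference of two quantities in $[0, R_{\max}/(1-\gamma)]$. The only addition is your explicit justification of the sum–expectation interchange via monotone convergence, which the paper leaves implicit.
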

\begin{proof}[Proof of \Cref{lemma: reward range to value range}]
    By the definition in \cref{eq:state_value_function} and that $r(s,
	a) \in [0, R_{\text{max}}]$, for any policy $\pi$ and any state $s\in\mathcal{S}$, we have
    \begin{equation*}
        V^\pi(s) = \mathbb{E}\bigg[\sum_{t=0}^{\infty} \gamma^{t} r(s_{t}, a_{t})\bigg\vert \pi, s_{0}=s\bigg] \le \mathbb{E}\bigg[\sum_{t=0}^{\infty} \gamma^{t} R_{\text{max}}\bigg\vert \pi, s_{0}=s\bigg] = \sum_{t=0}^{\infty} \gamma^{t} R_{\text{max}} = \frac{R_{\text{max}}}{1-\gamma}.
    \end{equation*}
    For the $Q$-value, by \cref{eq:state_action_value_function}, given any $(s, a)\in\mathcal{S}\times \mathcal{A}$ and any policy $\pi$,
    \begin{equation*}
        Q^\pi(s, a) = r(s, a) + \gamma \sum_{s^\prime}{ \mathcal{P}( s^\prime | s, a) V^\pi(s^\prime) } \le R_{\text{max}} + \gamma  \cdot\frac{R_{\text{max}}}{1-\gamma} = \frac{R_{\text{max}}}{1-\gamma}.
    \end{equation*}
    For the advantage function, by
	\cref{eq:state_action_value_function}, we combine the above
	results of $V^\pi(s)$ and $Q^\pi(s, a)$ and obtain that $\lvert A^{\pi}(s, a)\rvert \in [0, R_{\text{max}} / (1-\gamma)]$ for any $(s, a)\in\mathcal{S}\times \mathcal{A}$ and any policy $\pi$.
\end{proof}

\subsection{Useful Properties of Softmax Parameterization}
This section provides results specific for softmax parameterization.



\begin{lemma}[\textbf{Lemma 1. in \citep{mei2020global}}]
\label{lemma:softmax_pg}
The softmax policy gradient with respect to $\theta$ is
\begin{align*}
    \frac{\partial V^{\pi_{\theta}}(\mu)}{\partial \theta_{s,a}} = \frac{1}{1-\gamma} \cdot d^{\pi_\theta}_{\mu}(s) \cdot \pi_\theta(a|s) \cdot A^{\pi_\theta}(s,a), \text{ for each } (s, a) \in \cS \times \cA.
\end{align*}
\end{lemma}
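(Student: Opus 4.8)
The plan is to give a short self-contained derivation rather than merely cite \citep{mei2020global}. There are two natural routes. The first is to invoke the classical policy gradient theorem, $\nabla_\theta V^{\pi_\theta}(\mu) = \frac{1}{1-\gamma}\sum_{s}d^{\pi_\theta}_\mu(s)\sum_a \pi_\theta(a\mid s)\nabla_\theta\log\pi_\theta(a\mid s)Q^{\pi_\theta}(s,a)$, and then specialize to the softmax parameterization. The second, which uses only results already available in this excerpt, is to differentiate the Performance Difference Lemma (\Cref{lemma:perf_diff}); I would take this route.

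First I would fix a coordinate $(s,a)$, let $e_{s,a}$ be the corresponding standard basis vector of $\mathbb{R}^{|\cS||\cA|}$, and consider the perturbation $\theta' = \theta + t\,e_{s,a}$. Applying \Cref{lemma:perf_diff} to $\pi_{\theta'}$ against $\pi_\theta$, and using $\sum_{a'}\pi_\theta(a'\mid s')A^{\pi_\theta}(s',a')=0$ from \Cref{lemma:sum_pi_A} to subtract $\pi_\theta$ inside the inner sum without changing its value, I would obtain
\[
V^{\pi_{\theta'}}(\mu)-V^{\pi_\theta}(\mu)=\frac{1}{1-\gamma}\sum_{s'\in\cS}d^{\pi_{\theta'}}_\mu(s')\sum_{a'\in\cA}\bigl(\pi_{\theta'}(a'\mid s')-\pi_\theta(a'\mid s')\bigr)A^{\pi_\theta}(s',a').
\]
Dividing by $t$ and letting $t\to 0$: since $\theta\mapsto d^{\pi_\theta}_\mu(s')$ is smooth (a rational function of the entries of $\mathcal{P}$ and $\pi_\theta$) and $\pi_{\theta'}(a'\mid s')-\pi_\theta(a'\mid s')=O(t)$, the product rule kills any contribution coming from differentiating $d^{\pi_{\theta'}}_\mu$, so it may be replaced by $d^{\pi_\theta}_\mu(s')$ at $t=0$; and since only the entry $\theta_{s,a}$ is perturbed, only the $s'=s$ term survives.

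It then remains to plug in the softmax derivative. From $\log\pi_\theta(a'\mid s)=\theta_{s,a'}-\log\sum_{a''}\exp(\theta_{s,a''})$ one computes $\tfrac{\partial}{\partial\theta_{s,a}}\pi_\theta(a'\mid s)=\pi_\theta(a'\mid s)\bigl(\mathds{1}\{a'=a\}-\pi_\theta(a\mid s)\bigr)$, so that
\[
\frac{\partial V^{\pi_\theta}(\mu)}{\partial\theta_{s,a}}=\frac{1}{1-\gamma}d^{\pi_\theta}_\mu(s)\,\pi_\theta(a\mid s)\Bigl(A^{\pi_\theta}(s,a)-\sum_{a'}\pi_\theta(a'\mid s)A^{\pi_\theta}(s,a')\Bigr),
\]
and the trailing sum vanishes by \Cref{lemma:sum_pi_A}, which yields the claimed identity. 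I expect the only delicate point to be the justification that the $\theta$-dependence of the state-visitation distribution drops out of the first-order term; everything else is a routine first-order expansion. (If one is willing to simply cite the policy gradient theorem, the whole argument collapses to the one-line softmax derivative computation above together with \Cref{lemma:sum_pi_A}.)
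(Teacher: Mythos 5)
Your derivation is correct. Note, however, that the paper itself offers no proof of this statement: it is imported verbatim as Lemma~1 of \citep{mei2020global}, whose own derivation follows your first route — the policy gradient theorem $\nabla_\theta V^{\pi_\theta}(\mu)=\frac{1}{1-\gamma}\,\mathbb{E}_{s\sim d^{\pi_\theta}_\mu}\mathbb{E}_{a\sim\pi_\theta(\cdot\mid s)}\bigl[\nabla_\theta\log\pi_\theta(a\mid s)\,Q^{\pi_\theta}(s,a)\bigr]$ specialized to softmax, where $\frac{\partial}{\partial\theta_{s,a}}\log\pi_\theta(a'\mid s)=\mathds{1}\{a'=a\}-\pi_\theta(a\mid s)$ immediately yields the advantage form. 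Your second route, differentiating the Performance Difference Lemma, is a genuinely different and self-contained alternative that relies only on \Cref{lemma:perf_diff} and \Cref{lemma:sum_pi_A}, both stated in the paper; its one delicate step — that the first-order variation of $d^{\pi_{\theta'}}_\mu$ contributes nothing because it multiplies a factor $\pi_{\theta'}-\pi_\theta=O(t)$, so the cross term is $O(t^2)$ — is exactly the point you flag, and your justification (smoothness of $\theta\mapsto d^{\pi_\theta}_\mu=(1-\gamma)\mu^\top(I-\gamma P^{\pi_\theta})^{-1}$ for $\gamma<1$) is sound. What the PDL route buys is independence from the policy gradient theorem and a derivation that lives entirely inside the paper's own toolbox; what the classical route buys is brevity, since it reduces the whole lemma to the one-line softmax log-derivative computation. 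Either is acceptable here.
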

\begin{lemma}[\textbf{Equation in \citep{agarwal2021theory}}, page 18]
\label{lemma:softmax_pg_wrt_pi}
    The softmax policy gradient with respect to $\pi_{\theta}$ is
    \begin{align*}
        \frac{\partial V^{\pi_{\theta}}(\mu)}{\partial \pi_{\theta}(a|s)} = \frac{1}{1-\gamma} \cdot d^{\pi_\theta}_{\mu}(s) \cdot A^{\pi_\theta}(s,a), \text{ for each } (s, a) \in \cS \times \cA. 
    \end{align*}
\end{lemma}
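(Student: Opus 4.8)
The plan is to obtain the identity from the Performance Difference Lemma (\Cref{lemma:perf_diff}), which is already available. Fix $(s,a)\in\cS\times\cA$ and consider a policy $\pi_{\theta'}$ close to $\pi_\theta$. Applying \Cref{lemma:perf_diff} with outer policy $\pi_{\theta'}$ and reference policy $\pi_\theta$ gives the exact identity
\begin{equation*}
V^{\pi_{\theta'}}(\mu) - V^{\pi_\theta}(\mu) = \frac{1}{1-\gamma}\sum_{s'\in\cS} d^{\pi_{\theta'}}_\mu(s')\sum_{a'\in\cA}\pi_{\theta'}(a'|s')\,A^{\pi_\theta}(s',a').
\end{equation*}
The point is that the right-hand side is, in each state $s'$, a product of two factors: $d^{\pi_{\theta'}}_\mu(s')$ and $\sum_{a'}\pi_{\theta'}(a'|s')A^{\pi_\theta}(s',a')$. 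By \Cref{lemma:sum_pi_A}, the second factor equals $0$ at $\pi_{\theta'}=\pi_\theta$, so it is already first-order small in $\theta'-\theta$.

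Next I would take the directional derivative of both sides along a feasible perturbation $\theta'=\theta+\epsilon v$ and let $\epsilon\to 0$. On the right, expand each product: the factor $d^{\pi_{\theta'}}_\mu(s')$ differs from $d^{\pi_\theta}_\mu(s')$ by $o(1)$, while $\sum_{a'}\pi_{\theta'}(a'|s')A^{\pi_\theta}(s',a')$ equals $\epsilon\sum_{a'}\big(\partial_\epsilon\pi_{\theta'}(a'|s')\big)A^{\pi_\theta}(s',a')+o(\epsilon)$ because its zeroth-order term dropped out. Multiplying, the only surviving first-order contribution is $\tfrac{1}{1-\gamma}\,d^{\pi_\theta}_\mu(s')\sum_{a'}\big(\partial_\epsilon\pi_{\theta'}(a'|s')\big)A^{\pi_\theta}(s',a')$; in particular the variation of $d^{\pi_{\theta'}}_\mu$ never contributes. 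Matching this against the left-hand side $\langle\nabla_\theta V^{\pi_\theta}(\mu),v\rangle$ and reading off the coefficient of $\partial_\epsilon\pi_{\theta'}(a|s)$ identifies $\partial V^{\pi_\theta}(\mu)/\partial\pi_\theta(a|s)=\tfrac{1}{1-\gamma}d^{\pi_\theta}_\mu(s)A^{\pi_\theta}(s,a)$, as claimed. The only input needed beyond \Cref{lemma:perf_diff} and \Cref{lemma:sum_pi_A} is that $\pi'\mapsto d^{\pi'}_\mu$ is continuous, which follows from the closed form $d^{\pi'}_\mu=(1-\gamma)\mu^\top(I-\gamma P^{\pi'})^{-1}$, with $P^{\pi'}$ the $\pi'$-induced state transition matrix (linear in $\pi'$, and matrix inversion is smooth where defined).

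The main subtlety — and the only place real care is required — is the simplex constraint: $\pi_\theta(\cdot|s)$ must lie in $\Delta(\cA)$, so $\partial/\partial\pi_\theta(a|s)$ must be read as a directional derivative along feasible perturbations (those with $\sum_a\delta(a|s)=0$ in each state block), equivalently as the representative of the gradient on the tangent space of the product of simplices; for the softmax family every such direction is realized, so this is well posed. Under that reading the $A^{\pi_\theta}$-form is the natural normalization, since it is the unique representative satisfying $\sum_a\pi_\theta(a|s)A^{\pi_\theta}(s,a)=0$. Alternatively, one could unroll the Bellman recursion for $\partial V^{\pi_\theta}(s_0)/\partial\pi_\theta(a|s)$ treating the table entries as free, obtain the equivalent $Q^{\pi_\theta}$-form $\tfrac{1}{1-\gamma}d^{\pi_\theta}_\mu(s)Q^{\pi_\theta}(s,a)$, and note that it differs from the $A^{\pi_\theta}$-form only by the state baseline $V^{\pi_\theta}(s)$, which affects no feasible directional derivative. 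I expect this constraint bookkeeping, rather than any hard estimate, to be the sole obstacle; the rest is a routine application of \Cref{lemma:perf_diff} and \Cref{lemma:sum_pi_A}.
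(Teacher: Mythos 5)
Your proof is correct. Note, however, that the paper itself offers no proof of \Cref{lemma:softmax_pg_wrt_pi}: it is imported verbatim as an equation from \citet{agarwal2021theory} (page 18), so there is no in-paper argument to compare against. Your derivation is a legitimate self-contained route using only \Cref{lemma:perf_diff} and \Cref{lemma:sum_pi_A}: subtracting the zero term $\sum_{a'}\pi_\theta(a'|s')A^{\pi_\theta}(s',a')$ makes the inner sum exactly linear in $\pi_{\theta'}-\pi_\theta$ (since $A^{\pi_\theta}$ is held at the reference policy), so the variation of $d^{\pi_{\theta'}}_\mu$ indeed enters only at second order, and continuity of $\pi\mapsto(I-\gamma P^{\pi})^{-1}$ for $\gamma<1$ is all that is needed to pass to the limit. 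The standard derivation (the one behind the cited equation) instead differentiates the Bellman fixed-point or the resolvent $(I-\gamma P^{\pi})^{-1}$ directly with the table entries treated as free variables, which produces the $Q^{\pi_\theta}$-form first; your route gets the $A^{\pi_\theta}$-form immediately and makes transparent why the two agree. Your handling of the only real subtlety — that $\partial/\partial\pi_\theta(a|s)$ is only well defined as a tangent-space (feasible-direction) derivative, under which the per-state baseline $V^{\pi_\theta}(s)$ is invisible — is exactly right, and it matches how the paper actually uses the lemma (e.g.\ in the proof of \Cref{lemma:local nearly concavity}, where it is always paired against differences $\pi_{\theta'}-\pi_\theta$ that sum to zero within each state). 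Your observation that the softmax Jacobian is surjective onto that tangent space (because $\pi_\theta(a|s)>0$ everywhere) correctly closes the coefficient-matching step. No gap.
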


\begin{lemma}[\textbf{Lemma 2. in \citep{mei2020global}}]
\label{lemma:bandit_smoothness}
Under softmax policy parameterization,
$\theta \to \pi_\theta^\top r$ is $5/2$-smooth for any $r \in [0,
1]^{|\cA|}$.
\end{lemma}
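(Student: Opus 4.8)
The plan is to establish $\tfrac52$-smoothness by bounding the spectral norm of the Hessian of $f(\theta) := \pi_\theta^\top r = \sum_{a} \pi_\theta(a)\, r(a)$ uniformly in $\theta$; since $f$ is infinitely differentiable, a uniform bound $\lVert \nabla^2 f(\theta)\rVert_2 \le \tfrac52$ is equivalent to the stated $\tfrac52$-Lipschitz continuity of $\nabla f$. Equivalently, fixing any unit vector $u \in \mathbb{R}^{|\cA|}$ and writing $g(\alpha) := \pi_{\theta + \alpha u}^\top r$, it suffices to show $|g''(0)| \le \tfrac52$, since $g''(0) = u^\top \nabla^2 f(\theta)\, u$. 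The only computational input I would use is the softmax Jacobian identity, which along the line reads $\dot\pi_a = \pi_a (u_a - \bar u)$, where $\bar u := \langle \pi, u\rangle = \E_{a\sim\pi}[u_a]$ and $\pi := \pi_{\theta+\alpha u}$.

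First I would differentiate $g$ once to obtain $g'(\alpha) = \sum_a r_a\, \dot\pi_a = \Cov_\pi(r, u)$. Differentiating a second time, using $\dot\pi_a = \pi_a(u_a - \bar u)$ again together with $\dot{\bar u} = \Var_\pi(u)$, yields the compact form
\begin{equation*}
g''(\alpha) = \E_{a \sim \pi}\!\big[ r_a (u_a - \bar u)^2 \big] - \Var_\pi(u)\cdot \bar r, \qquad \bar r := \langle \pi, r\rangle.
\end{equation*}
The key observation is that the reward assumption $r \in [0,1]^{|\cA|}$ forces both terms into the interval $[0,\,\Var_\pi(u)]$: the first because $0 \le r_a \le 1$ and $(u_a-\bar u)^2 \ge 0$, and the second because $\bar r \in [0,1]$. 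Hence $g''(\alpha) \in [-\Var_\pi(u),\, \Var_\pi(u)]$, and finally $\Var_\pi(u) \le \E_\pi[u_a^2] = \sum_a \pi_a u_a^2 \le \sum_a u_a^2 = \lVert u\rVert_2^2 = 1$, which already gives $|g''(0)| \le 1 \le \tfrac52$.

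As a cross-check matching the route of \citep{mei2020global}, I would also record the closed-form Hessian. Setting $v_a := \pi_\theta(a)\,(r_a - \bar r)$, one verifies entrywise (diagonal and off-diagonal separately) that $\nabla^2 f(\theta) = \mathrm{diag}(v) - v\pi_\theta^\top - \pi_\theta v^\top$. Bounding by the triangle inequality and using $\lVert v\pi_\theta^\top\rVert_2 = \lVert v\rVert_2 \lVert \pi_\theta\rVert_2$ for rank-one matrices gives $\lVert \nabla^2 f(\theta)\rVert_2 \le \lVert v\rVert_\infty + 2\lVert v\rVert_2\lVert\pi_\theta\rVert_2$; with $|r_a - \bar r| \le 1$, $\lVert\pi_\theta\rVert_2^2 \le \sum_a \pi_a = 1$, and the short bound $|v_a| \le \pi_a(1-\pi_a) \le \tfrac14$ on $\lVert v\rVert_\infty$, this collapses to a constant no larger than $\tfrac52$.

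I expect the only delicate point to be the constant bookkeeping rather than the differentiation itself: the directional computation above already proves the (in fact sharper) bound $1$, so the stated $\tfrac52$ holds with room to spare, and the precise value $\tfrac52$ is recovered only if one insists on the cruder triangle-inequality split of the three Hessian terms. In writing the final proof I would present the directional-derivative argument as the main line, since it is self-contained and avoids having to control $\lVert v\rVert_\infty$ and $\lVert\pi_\theta\rVert_2$ separately.
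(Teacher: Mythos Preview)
Your proposal is correct. The paper itself does not prove this lemma; it cites it directly from \citep{mei2020global}, so the relevant comparison is with that original argument, which is precisely your cross-check route: write the Hessian as $\mathrm{diag}(v) - v\pi_\theta^\top - \pi_\theta v^\top$ with $v_a = \pi_\theta(a)(r_a - \bar r)$ and bound the three pieces by the triangle inequality.

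Your primary directional-derivative argument is genuinely different and sharper. By expressing $g''(\alpha) = \E_\pi[r_a(u_a-\bar u)^2] - \Var_\pi(u)\,\bar r$ and observing that both terms lie in $[0,\Var_\pi(u)]$ when $r\in[0,1]^{|\cA|}$, you obtain $|g''|\le \Var_\pi(u)\le 1$, i.e.\ $1$-smoothness rather than $\tfrac52$. The gain comes from handling the cancellation between the two terms analytically instead of splitting the Hessian into three operator-norm pieces; the triangle-inequality route necessarily discards the sign structure and can only reach a constant like $\tfrac94$ or $\tfrac52$ depending on which crude bounds one plugs in for $\|v\|_\infty$, $\|v\|_2$, $\|\pi_\theta\|_2$. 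Either argument suffices for the paper's downstream use (the lemma is only invoked to set a step size), but your main line is both shorter and tighter.
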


\begin{lemma}[\textbf{Lemma 7. in \citep{mei2020global}}]
\label{lemma:mdp_smoothness}
Under softmax policy parameterization,
$\theta \to V^{\pi_\theta}(\rho)$ is $\frac{8}{(1-\gamma)^3}$-smooth
for any $\gamma \in [0,1)$.
\end{lemma}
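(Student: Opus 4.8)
The plan is to bound the spectral norm of the Hessian of $\theta\mapsto V^{\pi_\theta}(\rho)$ uniformly by $8/(1-\gamma)^3$, which is exactly the claimed smoothness. First I would fix an arbitrary base point $\theta$ and an arbitrary direction $\bm u$ with $\lVert\bm u\rVert_2=1$, and reduce to showing $\bigl\lvert\tfrac{d^2}{d\alpha^2}V^{\pi_{\theta+\alpha\bm u}}(\rho)\bigr\rvert\le 8/(1-\gamma)^3$ for all $\alpha$ (if convenient one may further restrict to a one-hot $\rho$, since a common smoothness bound for the functions $\theta\mapsto V^{\pi_\theta}(s)$ is inherited by their $\rho$-average). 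The key structural fact is that, writing $P_\alpha$ for the $\lvert\cS\rvert\times\lvert\cS\rvert$ transition matrix with entries $[P_\alpha]_{s,s'}=\sum_a\pi_{\theta+\alpha\bm u}(a\mid s)\,\cP(s'\mid s,a)$ and $\bm r_\alpha$ for the expected-reward vector $[\bm r_\alpha]_s=\sum_a\pi_{\theta+\alpha\bm u}(a\mid s)\,r(s,a)\in[0,1]$, the vector of state values satisfies $\bm V^{\pi_{\theta+\alpha\bm u}}=M_\alpha\bm r_\alpha$ with $M_\alpha:=(I-\gamma P_\alpha)^{-1}$, hence $V^{\pi_{\theta+\alpha\bm u}}(\rho)=\rho^\top M_\alpha\bm r_\alpha$.

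Next I would record the elementary estimates that drive the bound. Since each row of $\gamma P_\alpha$ sums to $\gamma<1$, the $\ell_\infty$-induced operator norm satisfies $\lVert M_\alpha\rVert_\infty\le 1/(1-\gamma)$; also $\lVert\bm r_\alpha\rVert_\infty\le 1$ and $\lVert\rho\rVert_1=1$. From the explicit softmax Jacobian $\partial\pi_\theta(a\mid s)/\partial\theta_{s,a'}=\pi_\theta(a\mid s)\bigl(\delta_{a,a'}-\pi_\theta(a'\mid s)\bigr)$ and its derivative, one obtains $\lVert\partial_\alpha\pi_{\theta+\alpha\bm u}(\cdot\mid s)\rVert_1\le 2\lVert\bm u_s\rVert_2$ and $\lVert\partial_\alpha^2\pi_{\theta+\alpha\bm u}(\cdot\mid s)\rVert_1\le c\,\lVert\bm u_s\rVert_2^2$ for an absolute constant $c$; propagating these through $P_\alpha$ and $\bm r_\alpha$ and using $\sum_s\lVert\bm u_s\rVert_2^2=\lVert\bm u\rVert_2^2=1$ gives $\lVert\partial_\alpha P_\alpha\rVert_\infty\le 2$, $\lVert\partial_\alpha^2 P_\alpha\rVert_\infty\le c$, and the corresponding bounds for $\partial_\alpha\bm r_\alpha$ and $\partial_\alpha^2\bm r_\alpha$.

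Then I would differentiate. Implicitly differentiating $M_\alpha(I-\gamma P_\alpha)=I$ yields $\partial_\alpha M_\alpha=\gamma M_\alpha(\partial_\alpha P_\alpha)M_\alpha$ and $\partial_\alpha^2 M_\alpha=\gamma M_\alpha(\partial_\alpha^2 P_\alpha)M_\alpha+2\gamma^2 M_\alpha(\partial_\alpha P_\alpha)M_\alpha(\partial_\alpha P_\alpha)M_\alpha$, so that
\begin{equation*}
\tfrac{d^2}{d\alpha^2}\bigl(\rho^\top M_\alpha\bm r_\alpha\bigr)=\rho^\top(\partial_\alpha^2 M_\alpha)\bm r_\alpha+2\rho^\top(\partial_\alpha M_\alpha)(\partial_\alpha\bm r_\alpha)+\rho^\top M_\alpha(\partial_\alpha^2\bm r_\alpha).
\end{equation*}
Bounding each of the three summands by submultiplicativity of $\lVert\cdot\rVert_\infty$ with the estimates above (each term carries at most three factors of $M_\alpha$, each contributing $1/(1-\gamma)$) shows each summand is $O(1/(1-\gamma)^3)$; collecting constants gives the stated value $8/(1-\gamma)^3$.

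The main obstacle is not the structure of the argument — the matrix-calculus skeleton and the use of $\lVert M_\alpha\rVert_\infty\le 1/(1-\gamma)$ are routine — but the exact constant bookkeeping: one must pin down the precise first- and second-order softmax-derivative constants, track the powers of $\gamma$ in $\partial_\alpha M_\alpha$ and $\partial_\alpha^2 M_\alpha$, and apply Cauchy--Schwarz correctly when $\bm u$ is spread over several coordinates (so that the per-state quantities $\lVert\bm u_s\rVert_2$ recombine into $\lVert\bm u\rVert_2=1$), all so that the three contributions add up to exactly $8/(1-\gamma)^3$ rather than a looser bound. An alternative would be to build on \Cref{lemma:bandit_smoothness} (the $5/2$-smoothness of $\theta\mapsto\pi_\theta^\top r$) together with a separate smoothness estimate for the discounted state-visitation operator $\theta\mapsto(I-\gamma P_\theta)^{-1}$, but the direct computation above is more self-contained.
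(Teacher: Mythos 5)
Your proposal is correct and follows essentially the same route as the source this lemma is imported from: the paper states it without proof, citing \citep{mei2020global}, whose argument (adapted from Lemma E.2/E.4 of \citep{agarwal2021theory}) likewise fixes a unit direction $u$, bounds $\sum_a\lvert\partial_\alpha\pi_{\theta+\alpha u}(a\mid s)\rvert\le 2\lVert u_s\rVert_2$ and $\sum_a\lvert\partial_\alpha^2\pi_{\theta+\alpha u}(a\mid s)\rvert\le 6\lVert u_s\rVert_2^2$, and propagates these through the resolvent $(I-\gamma P_\alpha)^{-1}$ exactly as in your three-term expansion. With those constants your bookkeeping closes: the three contributions sum to $\bigl(8\gamma^2+14\gamma(1-\gamma)+6(1-\gamma)^2\bigr)/(1-\gamma)^3=(2\gamma+6)/(1-\gamma)^3\le 8/(1-\gamma)^3$, and note that one only needs $\max_s\lVert u_s\rVert_2\le\lVert u\rVert_2=1$, so no recombination of the per-state norms is actually required.
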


\begin{lemma}[\textbf{Lemma 8. in \citep{mei2020global}}]
\label{lemma:non-uniform L}
For all $\theta \in \mathbb{R}^{|\cS|\times|\cA|}$, we have,
\begin{align*}
    \Big\lVert \nabla_\theta V^{\pi_\theta} \rvert_{\theta = \theta^{(t)}} \Big\rVert_2
    \ge \frac{\min_s \pi_\theta(a^*(s)|S))}{\sqrt{|\cS|}\cdot \lVert d^{\pi^*}_{\rho} / d^{\pi^*}_{\mu} \rVert_\infty} \cdot (V^*(\rho) - V^{\pi_\theta}(\rho)).
\end{align*}
    
\end{lemma}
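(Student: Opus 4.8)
The final statement is a non-uniform gradient-domination inequality for the softmax value function, so the plan is to lower-bound the full gradient norm by its restriction to the optimal-action coordinates $\{(s,a^*(s))\}_{s\in\cS}$ and then tie that restricted gradient to the sub-optimality gap $V^*(\rho)-V^{\pi_\theta}(\rho)$. The only structural inputs I would need are the closed-form softmax gradient (\Cref{lemma:softmax_pg}), the performance difference identity (\Cref{lemma:perf_diff}), and the uniform lower bound on the discounted state-visitation distribution (\Cref{lemma:lower_bound_of_state_visitation_distribution}); \Cref{assump:unique_optimal} is used so that $\pi^*$ is deterministic and assigns mass only to $a^*(s)$.

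Concretely, I would first invert the softmax gradient formula. Since $\left.\partial V^{\pi_\theta}(\mu)/\partial\theta_{s,a^*(s)}\right. = \frac{1}{1-\gamma}\, d^{\pi_\theta}_{\mu}(s)\,\pi_\theta(a^*(s)|s)\,A^{\pi_\theta}(s,a^*(s))$, I can solve for $A^{\pi_\theta}(s,a^*(s))$ in terms of this single coordinate of the gradient. Substituting into \Cref{lemma:perf_diff} evaluated at $\pi^*$ under $\rho$ — and using that $\pi^*$ is deterministic, so that $\sum_{a}\pi^*(a|s)A^{\pi_\theta}(s,a)=A^{\pi_\theta}(s,a^*(s))$ — produces the exact identity $V^*(\rho)-V^{\pi_\theta}(\rho)=\langle w,g\rangle$, where $g=\big(\partial V^{\pi_\theta}(\mu)/\partial\theta_{s,a^*(s)}\big)_{s\in\cS}$ is the restricted gradient and $w$ is the weight vector with entries $w_s = d^{\pi^*}_{\rho}(s)\big/\big(d^{\pi_\theta}_{\mu}(s)\,\pi_\theta(a^*(s)|s)\big)$. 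Crucially this is an equality rather than an inequality, so no pointwise sign control on the advantages is needed at this stage.

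I would then apply Cauchy--Schwarz to $\langle w,g\rangle$ together with $\|g\|_2\le\big\|\nabla_\theta V^{\pi_\theta}(\mu)\big\|_2$ (dropping the non-negative off-optimal coordinates) to get $V^*(\rho)-V^{\pi_\theta}(\rho)\le \|w\|_2\,\big\|\nabla_\theta V^{\pi_\theta}(\mu)\big\|_2$. It then remains to bound $\|w\|_2$: I would use $\|w\|_2\le\sqrt{|\cS|}\,\|w\|_\infty$, factor out $1/\min_s\pi_\theta(a^*(s)|s)$, and control the residual state-visitation factor $\max_s d^{\pi^*}_{\rho}(s)/d^{\pi_\theta}_{\mu}(s)$ through \Cref{lemma:lower_bound_of_state_visitation_distribution}, expressing it via the mismatch coefficient $\big\|d^{\pi^*}_{\rho}/d^{\pi^*}_{\mu}\big\|_\infty$. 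Rearranging the resulting inequality yields exactly the claimed bound.

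The main obstacle is the state-visitation mismatch: the softmax gradient necessarily carries the on-policy visitation $d^{\pi_\theta}_{\mu}$, whereas the target coefficient is written through $d^{\pi^*}_{\rho}/d^{\pi^*}_{\mu}$, and bridging the two is the delicate bookkeeping step where the uniform lower bound $d^{\pi_\theta}_{\mu}(s)\ge(1-\gamma)\mu(s)$ enters. A secondary subtlety is that $A^{\pi_\theta}(s,a^*(s))$ need not be non-negative for a given $\theta$, which would break a naive term-by-term factoring of $\min_s\pi_\theta(a^*(s)|s)$; routing the argument through Cauchy--Schwarz applied to the exact identity circumvents this, since it only invokes the aggregate non-negativity $V^*(\rho)-V^{\pi_\theta}(\rho)\ge 0$ rather than the signs of individual summands.
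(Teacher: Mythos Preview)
The paper does not supply its own proof of this lemma; it is quoted verbatim as Lemma~8 of \citet{mei2020global}. Your overall plan---invert the softmax gradient formula to solve for $A^{\pi_\theta}(s,a^*(s))$, plug into the performance-difference identity under the deterministic $\pi^*$, and then pass to norms---is exactly the standard route and is essentially how the original reference argues. The identity $V^*(\rho)-V^{\pi_\theta}(\rho)=\langle w,g\rangle$ with $w_s=d^{\pi^*}_\rho(s)/\big(d^{\pi_\theta}_\mu(s)\,\pi_\theta(a^*(s)\mid s)\big)$ is correct, and applying Cauchy--Schwarz followed by $\|w\|_2\le\sqrt{|\cS|}\,\|w\|_\infty$ is fine.

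The gap is precisely at the step you flag as the main obstacle, and your proposed bridge does not close it. After factoring out $\min_s\pi_\theta(a^*(s)\mid s)$ you are left with $\max_s d^{\pi^*}_\rho(s)/d^{\pi_\theta}_\mu(s)$, where the denominator carries the \emph{on-policy} occupancy $d^{\pi_\theta}_\mu$. \Cref{lemma:lower_bound_of_state_visitation_distribution} only tells you $d^{\pi_\theta}_\mu(s)\ge(1-\gamma)\mu(s)$, which converts the denominator to $\mu(s)$ at the cost of a $1/(1-\gamma)$ factor; it does not let you replace $d^{\pi_\theta}_\mu(s)$ by $d^{\pi^*}_\mu(s)$. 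In fact, since $d^{\pi^*}_\mu(s)\ge(1-\gamma)\mu(s)$ as well, the inequality $1/\mu(s)\ge(1-\gamma)/d^{\pi^*}_\mu(s)$ points the wrong way for what you need. So your argument, as written, delivers the weaker coefficient $(1-\gamma)^{-1}\big\|d^{\pi^*}_\rho/\mu\big\|_\infty$, not the stated $\big\|d^{\pi^*}_\rho/d^{\pi^*}_\mu\big\|_\infty$. Either the coefficient as transcribed here should carry $d^{\pi_\theta}_\mu$ rather than $d^{\pi^*}_\mu$ (in which case no bridging step is needed at all and your Cauchy--Schwarz bound already matches), or you need to stop short at $\big\|d^{\pi^*}_\rho/d^{\pi_\theta}_\mu\big\|_\infty$ and not attempt to ``express it via'' the $\pi^*$-based mismatch.
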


\begin{definition}
[\textbf{Definition 3. in \citep{mei2021leveraging}}]
\label{definition: non-uniform smoothness in mei}
The function $f: \Theta \rightarrow \mathbb{R}$ satisfies $\beta(\omega)$ non-uniform smoothness if $f$ is differentiable and for all $\omega, \theta \in \Theta$, 
\begin{align}
\label{eq: non-uniform smoothness in mei}
    \Big| f(\theta) - f(\omega) - \langle \frac{d f(\omega)}{d \omega}, \theta - \omega \rangle \Big| \le \frac{\beta(\omega)}{2} \cdot \lVert \theta - \omega \rVert_{2}^{2},
\end{align}
where $\beta$ is a positive valued function: $\beta: \Theta \rightarrow (0, \infty)$.
\end{definition}

\begin{lemma}
\label{lemma: smoothness equivalence}
If a function satisfies \cref{eq: non-uniform smoothness in mei} in \Cref{definition: non-uniform smoothness in mei}, then $f$ satisfies
\begin{equation*}
    \left\lVert \frac{d f(\theta)}{d \theta} - \frac{d f(\omega)}{d \omega}\right\rVert_2 \le \beta(\omega) \cdot \lVert \theta - \omega\rVert_{2}.
\end{equation*}
\end{lemma}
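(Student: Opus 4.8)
This is an instance of the classical fact that a two-sided second-order majorization/minorization of $f$ forces the gradient map to be Lipschitz, here with a position-dependent modulus. The plan is to pass from the function-value inequality in \cref{eq: non-uniform smoothness in mei} to a pointwise bound on the second-order variation of $f$ along the segment joining $\omega$ and $\theta$, and then integrate that bound. I would not attempt to use \cref{eq: non-uniform smoothness in mei} at the single base point $\omega$ alone, since that controls only $\lvert f(\theta) - f(\omega) - \langle \nabla f(\omega), \theta - \omega\rangle\rvert$ and does not by itself bound $\lVert \nabla f(\theta) - \nabla f(\omega)\rVert$; the inequality must be exploited at base points moving along the whole segment.

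Concretely: fix $\omega, \theta$ and set $\xi(s) = \omega + s(\theta - \omega)$ for $s \in [0,1]$. Applying \cref{eq: non-uniform smoothness in mei} at base point $\xi(s)$ with two nearby targets $\xi(s) \pm t v$ (for a unit vector $v$ and small $t > 0$) and taking $t \to 0$ shows, for $f$ twice differentiable — which is the case for the RL objective $V^{\pi_\theta}(\mu)$, a $C^\infty$ function of $\theta$ — that $\lvert \langle \nabla^2 f(\xi(s)) v, v\rangle\rvert \le \beta(\xi(s)) \lVert v\rVert^2$, i.e. $\lVert \nabla^2 f(\xi(s))\rVert_{\mathrm{op}} \le \beta(\xi(s))$ by symmetry of the Hessian. (Without invoking $\nabla^2 f$, the same two-target comparison shows $s \mapsto \langle \nabla f(\xi(s)), \theta - \omega\rangle$ has a Lipschitz derivative on $[0,1]$, which already delivers the directional estimate $\lvert \langle \nabla f(\theta) - \nabla f(\omega), \theta - \omega\rangle\rvert \le \beta(\omega) \lVert \theta - \omega\rVert^2$ when $\beta$ is constant along the segment.) Then I would integrate: by the fundamental theorem of calculus, $\nabla f(\theta) - \nabla f(\omega) = \int_0^1 \nabla^2 f(\xi(s)) (\theta - \omega)\, ds$, so $\lVert \nabla f(\theta) - \nabla f(\omega)\rVert_2 \le \lVert \theta - \omega\rVert_2 \int_0^1 \lVert \nabla^2 f(\xi(s))\rVert_{\mathrm{op}}\, ds \le \lVert \theta - \omega\rVert_2 \cdot \beta(\omega)$, using the pointwise Hessian bound together with the fact that in the regime in which this lemma is applied $\beta$ may be taken (locally) constant on $[\omega, \theta]$; in general the same argument yields the bound with $\sup_{s \in [0,1]} \beta(\xi(s))$ in place of $\beta(\omega)$.

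The main obstacle is exactly this passage from a second-order statement about function values to a first-order statement about gradients. Two points require care: first, one must capture the full Euclidean norm of $\nabla f(\theta) - \nabla f(\omega)$ rather than only its component along $\theta - \omega$ (the two-sided inequality alone, used at the endpoints, controls only the latter), which is what twice-differentiability — or an equivalent Lipschitz-along-every-line argument — buys us; second, one must argue that the modulus comes out as $\beta(\omega)$ and not a strictly larger quantity, which is where the behavior of $\beta$ along the segment, and the specific setting in which the lemma is invoked, enter. Everything else (the limiting argument, the operator-norm identity for symmetric matrices, the integral bound) is routine.
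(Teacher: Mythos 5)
Your overall strategy---converting the two-sided second-order function-value bound into a gradient-Lipschitz estimate---is reasonable, and the intermediate steps are individually sound: the symmetric difference quotient at base point $\xi(s)$ does give $\lvert\langle\nabla^2 f(\xi(s))v,v\rangle\rvert \le \beta(\xi(s))\lVert v\rVert^2$, hence $\lVert\nabla^2 f(\xi(s))\rVert_{\mathrm{op}} \le \beta(\xi(s))$ by symmetry, and the fundamental theorem of calculus then applies. But this route does not prove the lemma as stated: it yields
\begin{equation*}
\left\lVert \frac{d f(\theta)}{d \theta} - \frac{d f(\omega)}{d \omega}\right\rVert_2 \;\le\; \lVert\theta-\omega\rVert_2\,\sup_{s\in[0,1]}\beta\bigl(\omega+s(\theta-\omega)\bigr),
\end{equation*}
and the replacement of the supremum by $\beta(\omega)$ is precisely the content you are asked to establish. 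Your justification---that in the regime where the lemma is applied $\beta$ ``may be taken (locally) constant'' on the segment---is not among the lemma's hypotheses and is not true in the actual application: in \Cref{lemma:NS1} the modulus is $\beta(\theta_\zeta)\propto\lVert\nabla_\theta V^{\pi_\theta}\rvert_{\theta=\omega_\zeta}\rVert_2$, which genuinely varies along $[\omega,\theta]$ and is only controlled up to a factor of $2$ of its endpoint value by \Cref{lemma:NS2} and \Cref{lemma: NS2-apg}, not held constant. As written, your argument therefore proves a strictly weaker inequality. (It also assumes twice differentiability, which happens to hold for the softmax value function but is an extra hypothesis beyond \Cref{definition: non-uniform smoothness in mei}.)

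The paper avoids the segment entirely with a purely first-order, four-point interpolation argument: it introduces an auxiliary displacement $p$, bounds $D \coloneqq f(\theta+p)-f(\omega)+f(\omega-p)-f(\theta)$ from above by applying \cref{eq: non-uniform smoothness in mei} around the two endpoints with targets $\theta+p$ and $\omega-p$, bounds the same $D$ from below by regrouping the terms as $f(\theta+p)-f(\theta)$ and $f(\omega-p)-f(\omega)$, combines the two bounds into $-\langle \nabla f(\omega)-\nabla f(\theta),\,\omega-\theta-2p\rangle \le \beta(\omega)\lVert\omega-\theta-p\rVert^2+\beta(\omega)\lVert p\rVert^2$, and then chooses $p=\tfrac12\bigl[\omega-\theta-\tfrac{1}{\beta(\omega)}(\nabla f(\omega)-\nabla f(\theta))\bigr]$ so that the quadratic terms collapse and isolate $\lVert\nabla f(\theta)-\nabla f(\omega)\rVert^2$. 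Because the defining inequality is only ever invoked at the two endpoints (with the single modulus $\beta(\omega)$ attached to the pair), the constant $\beta(\omega)$ comes out directly and no second derivatives are needed. To repair your proof you would have to either adopt this endpoint-only argument, or prove the weaker statement with $\sup_s\beta(\xi(s))$ and propagate that larger constant through every downstream use of the lemma.
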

\begin{proof}[Proof of \Cref{lemma: smoothness equivalence}]
    Let $\omega, \theta \in \Theta$ and let $p\in \Theta$ be chosen later.  By \cref{eq: non-uniform smoothness in mei}, we have the upper bound,
    \begin{align*}
        D &\coloneqq f(\theta + p) - f(\omega) + f(\omega - p) - f(\theta) \\
        &\le \big\langle \frac{d f(\omega)}{d \omega}, \theta + p - \omega \big\rangle + \frac{\beta(\omega)}{2} \lVert \theta + p - \omega \rVert_{2}^{2} + \big\langle \frac{d f(\theta)}{d \theta}, \omega - p - \theta \big\rangle + \frac{\beta(\omega)}{2} \lVert \omega - p - \theta \rVert_{2}^{2} \\
        &= - \big\langle \frac{d f(\omega)}{d \omega} - \frac{d f(\theta)}{d \theta}, \omega - \theta - p \big\rangle + \beta(\omega) \lVert \omega - \theta - p \rVert_{2}^{2}
    \end{align*}
    and the lower bound
    \begin{align*}
        D &= f(\theta + p) - f(\theta) + f(\omega - p) - f(\omega) \\
        &\ge \big\langle \frac{d f(\theta)}{d \theta}, p \big\rangle - \frac{\beta(\omega)}{2} \lVert p \rVert_{2}^{2} + \big\langle \frac{d f(\omega)}{d \omega}, -p \big\rangle - \frac{\beta(\omega)}{2} \lVert p\rVert_{2}^{2} \\
        &= - \big\langle \frac{d f(\omega)}{d \omega} - \frac{d f(\theta)}{d \theta}, p \big\rangle - \beta(\omega) \lVert p \rVert_{2}^{2}.
    \end{align*}
    By combining the above upper and lower bounds, we obtain
    \[
    - \big\langle \frac{d f(\omega)}{d \omega} - \frac{d f(\theta)}{d \theta}, \omega - \theta - 2p \big\rangle \le  \beta(\omega) \lVert \omega - \theta - p \rVert_{2}^{2} + \beta(\omega) \lVert p \rVert_{2}^{2}.
    \]
    Taking $p = \frac{1}{2} [\omega - \theta - \frac{1}{\beta(\omega)}(\frac{d f(\omega)}{d \omega} - \frac{d f(\theta)}{d \theta})]$ implies that
    \begin{align*}
        \frac{1}{\beta(\omega)} \left\lVert \frac{d f(\omega)}{d \omega} - \frac{d f(\theta)}{d \theta} \right\rVert_{2}^{2} &\le \frac{\beta(\omega)}{4} \left\lVert \omega - \theta + \frac{1}{\beta(\omega)} (\frac{d f(\omega)}{d \omega} - \frac{d f(\theta)}{d \theta})\right\rVert_{2}^{2} + \frac{\beta(\omega)}{4} \left\lVert \omega - \theta - \frac{1}{\beta(\omega)} (\frac{d f(\omega)}{d \omega} - \frac{d f(\theta)}{d \theta})\right\rVert_{2}^{2} \\
        &= \frac{\beta(\omega)}{2} \left\lVert \omega - \theta\right\rVert_{2}^{2} + \frac{1}{2\beta(\omega)} \left\lVert \frac{d f(\omega)}{d \omega} - \frac{d f(\theta)}{d \theta}\right\rVert_{2}^{2}.
    \end{align*}
    By rearraging the terms, we have
    \[
        \left\lVert \frac{d f(\theta)}{d \theta} - \frac{d f(\omega)}{d \omega}\right\rVert_2^{2} \le \beta(\omega) \cdot \lVert \theta - \omega\rVert_{2}^{2}.
    \]
    By taking square root on each side, we obtain the desired result.
\end{proof}

\begin{lemma}[\textbf{Lemma 6. in \citep{mei2021leveraging}}]
\label{lemma:NS1}
Denote $\omega_\zeta \coloneqq \omega + \zeta \cdot (\theta - \omega)$ with some $\zeta \in [0,1]$. $\theta \to V^{\pi_\omega}(\mu)$ satisfies $\beta(\theta_{\zeta})$ non-uniform smoothness with
\begin{align*}
    \beta(\theta_{\zeta}) = \Big[ 3 + \frac{4 \cdot (C_\infty - (1-\gamma))}{1-\gamma}\Big] \cdot \sqrt{|\cS|} \cdot \lVert \nabla_\theta V^{\pi_\theta} \rvert_{\theta = \omega_\zeta} \rVert_2,
\end{align*}
where $C_\infty \coloneqq \max_{\pi} \lVert \frac{d^{\pi}_\mu}{\mu} \rVert_\infty \le \frac{1}{\min_s \mu(s)} < \infty$.
\end{lemma}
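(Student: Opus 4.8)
The statement is a \emph{non-uniform} smoothness bound (in the sense of \Cref{definition: non-uniform smoothness in mei}) on $\theta\mapsto V^{\pi_\theta}(\mu)$: the second-order Taylor remainder between $\omega$ and $\theta$ should be controlled by $\tfrac12\beta(\omega_\zeta)\lVert\theta-\omega\rVert_2^2$, where the coefficient $\beta$ is evaluated at an intermediate point $\omega_\zeta$ and itself scales with the gradient norm there. The natural route is a second-order Taylor expansion with the exact (Lagrange) remainder: there exists $\zeta\in[0,1]$ such that
\[
V^{\pi_\theta}(\mu)-V^{\pi_\omega}(\mu)-\big\langle \nabla_\theta V^{\pi_\theta}(\mu)\big|_{\theta=\omega},\, \theta-\omega\big\rangle \;=\; \tfrac12\,(\theta-\omega)^\top\, \nabla_\theta^2 V^{\pi_\theta}(\mu)\big|_{\theta=\omega_\zeta}\,(\theta-\omega),
\]
with $\omega_\zeta=\omega+\zeta(\theta-\omega)$. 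Hence it suffices to show that for \emph{every} parameter $\bar\theta$ and every unit vector $u$, the directional second derivative $\big|u^\top\nabla_\theta^2 V^{\pi_\theta}(\mu)\big|_{\theta=\bar\theta}\,u\big|$ is at most $\big[3+\tfrac{4(C_\infty-(1-\gamma))}{1-\gamma}\big]\sqrt{|\cS|}\,\lVert\nabla_\theta V^{\pi_\theta}(\mu)\big|_{\theta=\bar\theta}\rVert_2$, i.e. a spectral-norm bound on the Hessian by a constant times the gradient norm at the same point.

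\textbf{Key steps.} For the directional second derivative I would fix $u$, set $\theta_\alpha=\bar\theta+\alpha u$, and differentiate $\alpha\mapsto V^{\pi_{\theta_\alpha}}(\mu)$ twice at $\alpha=0$ using the visitation representation $V^{\pi}(\mu)=\tfrac{1}{1-\gamma}\sum_s d^{\pi}_\mu(s)\sum_a\pi(a|s)r(s,a)$. The resulting expression splits into: (i) a \emph{policy-curvature} part, in which $d^{\pi}_\mu$ is frozen and only $\pi_{\theta_\alpha}(a|s)$ is differentiated (once or twice) — this is controlled by the softmax smoothness estimates underlying \Cref{lemma:bandit_smoothness} (boundedness of $|\partial_\alpha\pi_{\theta_\alpha}(a|s)|$ and $|\partial_\alpha^2\pi_{\theta_\alpha}(a|s)|$ along the segment), and yields the constant $3$; and (ii) \emph{visitation-distribution} parts, in which $d^{\pi_{\theta_\alpha}}_\mu(s)$ is differentiated — here one uses the standard derivative formula for the discounted state-visitation distribution, whose magnitude is governed by $\lVert d^{\pi}_\mu/\mu\rVert_\infty\le C_\infty$ and an extra geometric factor from the discounting, producing the $\tfrac{4(C_\infty-(1-\gamma))}{1-\gamma}$ contribution. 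Summing over the $|\cS|$ states and applying Cauchy--Schwarz supplies the $\sqrt{|\cS|}$ factor. Finally, the bound comes out naturally in terms of advantage-weighted sums $\sum_{s,a}d^{\pi_{\bar\theta}}_\mu(s)\pi_{\bar\theta}(a|s)\lvert A^{\pi_{\bar\theta}}(s,a)\rvert$; invoking \Cref{lemma:softmax_pg}, which states $\partial V^{\pi_\theta}(\mu)/\partial\theta_{s,a}=\tfrac{1}{1-\gamma}d^{\pi_\theta}_\mu(s)\pi_\theta(a|s)A^{\pi_\theta}(s,a)$, identifies exactly these quantities with the components of the gradient, closing the estimate with the claimed constants.

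\textbf{Main obstacle and an alternative.} The delicate part is step (ii): differentiating $d^{\pi_{\theta_\alpha}}_\mu$ — itself an infinite discounted sum of transition-probability products — and tracking every $\pi(a|s)$, $d^{\pi}_\mu(s)$ and $A^{\pi}(s,a)$ factor so that the Hessian bound collapses to a constant multiple of $\lVert\nabla V\rVert_2$ rather than to some larger, gradient-independent quantity; getting the exact pre-constant $3+\tfrac{4(C_\infty-(1-\gamma))}{1-\gamma}$ is a matter of careful bookkeeping here. A cleaner alternative I would try first is to avoid writing the Hessian explicitly: differentiate $\partial_\alpha V^{\pi_{\theta_\alpha}}(\mu)=\langle\nabla V^{\pi_{\theta_\alpha}}(\mu),u\rangle$ once more and bound $\partial_\alpha\nabla V^{\pi_{\theta_\alpha}}(\mu)$ directly via the gradient formula and the chain rule, which keeps the advantage functions manifest at every stage; since $\beta(\omega_\zeta)$ is evaluated at the intermediate point, by \Cref{lemma: smoothness equivalence} such a gradient-Lipschitz-type bound along the segment is equivalent to the stated non-uniform smoothness, so either formulation suffices.
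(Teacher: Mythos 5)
First, a point of order: the paper does not prove this lemma at all --- it is imported verbatim as ``Lemma 6 in \citep{mei2021leveraging}'' and used as a black box, so there is no in-paper proof to compare against. The relevant comparison is therefore with the proof in the cited source, and your outline does follow the same strategy used there: restrict to the segment, write the scalar function $g(\alpha)=V^{\pi_{\theta_\alpha}}(\mu)$, invoke the Lagrange form of the second-order remainder to reduce the claim to a bound of the form $\lvert g''(\alpha)\rvert \le \beta(\theta_\alpha)\lVert u\rVert_2^2$, and then split the second derivative into a part where only the softmax policy is differentiated (giving the constant $3$) and a part where the discounted visitation distribution, i.e.\ the resolvent $(I-\gamma P^{\pi})^{-1}$, is differentiated (giving the $4(C_\infty-(1-\gamma))/(1-\gamma)$ term), with \Cref{lemma:softmax_pg} used at the end to recognize the advantage-weighted sums as gradient components and Cauchy--Schwarz over states supplying the $\sqrt{\lvert\cS\rvert}$. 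That is the right decomposition and the right set of ingredients.

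That said, as a proof your submission is incomplete rather than wrong: the entire quantitative content of the lemma lives in the step you defer as ``careful bookkeeping.'' Bounding the derivative of $d^{\pi_{\theta_\alpha}}_\mu$ requires differentiating the Neumann series for $(I-\gamma P^{\pi_{\theta_\alpha}})^{-1}$, controlling $\lVert \tfrac{d}{d\alpha}\pi_{\theta_\alpha}(\cdot\vert s)\rVert_1$ by $2\lVert u_s\rVert_2$ (and the second derivative analogously), and assembling these so that every term is dominated by $d^{\pi}_\mu(s)\pi(a\vert s)\lvert A^{\pi}(s,a)\rvert$ rather than by a gradient-independent quantity; without executing this, the specific pre-constant is not established. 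Two smaller cautions: the coefficient in the statement is anchored at the \emph{intermediate} point $\omega_\zeta$, which is exactly what the Lagrange-remainder route delivers, so you should commit to that route rather than the ``alternative''; and your claim that \Cref{lemma: smoothness equivalence} makes a gradient-Lipschitz bound \emph{equivalent} to the stated inequality overreads that lemma, which as proved in the paper only goes from the Taylor-remainder form to the gradient-difference form, not back.
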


\begin{lemma}
[\textbf{Lemma 7. in \citep{mei2021leveraging}}]
    \label{lemma:NS2}
    Let $\eta = \frac{1-\gamma}{6\cdot(1-\gamma)+8\cdot(C_\infty-(1-\gamma))}\cdot\frac{1}{\sqrt{|\cS|}}$ and 
    \begin{align*}
        \theta \leftarrow \omega + \eta \cdot \nabla_\theta V^{\pi_\theta} \rvert_{\theta = \omega} / \lVert \nabla_\theta V^{\pi_\theta} \rvert_{\theta = \omega} \rVert_2.
    \end{align*}
    Denote $\omega_\zeta \coloneqq \omega + \zeta \cdot (\theta - \omega)$ with some $\zeta \in [0,1]$. We have,
    \begin{align*}
        \lVert \nabla_\theta V^{\pi_\theta} \rvert_{\theta = \omega_\zeta} \rVert_2 \le 2 \lVert \nabla_\theta V^{\pi_\theta} \rvert_{\theta = \omega} \rVert_2.
    \end{align*}
\end{lemma}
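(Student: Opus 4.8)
The plan is to run a bootstrapping (continuity) argument along the line segment joining $\omega$ to $\theta$, exploiting the fact that, by \Cref{lemma:NS1}, the local non-uniform smoothness constant of $\theta\mapsto V^{\pi_\theta}(\mu)$ at any point is proportional to the gradient norm there; consequently the gradient norm is self-regulating and cannot more than double over a sufficiently short step. Concretely, write $g:=\nabla_\theta V^{\pi_\theta}(\mu)\rvert_{\theta=\omega}$ (if $g=\vzero$ the iterate $\theta$ is undefined, so assume $g\ne\vzero$), set $u:=\theta-\omega=\eta\,g/\lVert g\rVert_2$ so that $\lVert u\rVert_2=\eta$, and recall $\omega_\zeta=\omega+\zeta u$. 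Define $h(\zeta):=\lVert\nabla_\theta V^{\pi_\theta}(\mu)\rvert_{\theta=\omega_\zeta}\rVert_2$, which is continuous in $\zeta$ by smoothness of the objective (\Cref{lemma:mdp_smoothness}); note $h(0)=\lVert g\rVert_2>0$. The goal is to show $h(\zeta)\le 2h(0)$ for all $\zeta\in[0,1]$.

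First I would introduce $\zeta^\star:=\sup\{\bar\zeta\in[0,1]:h(\zeta)\le 2h(0)\text{ for all }\zeta\in[0,\bar\zeta]\}$; since $h(0)\le 2h(0)$ and $h$ is continuous, $\zeta^\star>0$ and $h\le 2h(0)$ on $[0,\zeta^\star]$, so it suffices to rule out $\zeta^\star<1$. On $[0,\zeta^\star]$, \Cref{lemma:NS1} bounds the local smoothness constant at $\omega_\zeta$ by $\beta(\omega_\zeta)=c_1\sqrt{|\cS|}\,h(\zeta)\le 2c_1\sqrt{|\cS|}\,h(0)$, where $c_1:=3+\tfrac{4(C_\infty-(1-\gamma))}{1-\gamma}$. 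Turning non-uniform smoothness into a gradient-Lipschitz estimate via \Cref{lemma: smoothness equivalence} and applying it along the segment --- partitioning $[0,\zeta^\star]$ into small subintervals, applying the estimate piecewise, telescoping, and refining the mesh --- gives, by the reverse triangle inequality,
\begin{align*}
h(\zeta^\star)-h(0)\;\le\;\big\lVert\nabla_\theta V^{\pi_\theta}(\mu)\rvert_{\theta=\omega_{\zeta^\star}}-g\big\rVert_2\;\le\;2c_1\sqrt{|\cS|}\,h(0)\cdot\zeta^\star\eta.
\end{align*}

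Finally I would substitute the chosen step size: since $6(1-\gamma)+8(C_\infty-(1-\gamma))=2\big[3(1-\gamma)+4(C_\infty-(1-\gamma))\big]=2c_1(1-\gamma)$, we get $\eta=\tfrac{1-\gamma}{2c_1(1-\gamma)}\cdot\tfrac{1}{\sqrt{|\cS|}}=\tfrac{1}{2c_1\sqrt{|\cS|}}$, hence $2c_1\sqrt{|\cS|}\,\eta=1$ and the display reduces to $h(\zeta^\star)\le(1+\zeta^\star)h(0)$. If $\zeta^\star<1$ this is strictly below $2h(0)$, so by continuity $h<2h(0)$ on a neighbourhood of $\zeta^\star$, contradicting the maximality of $\zeta^\star$; therefore $\zeta^\star=1$, and $h(\zeta)\le 2h(0)$ on all of $[0,1]$, as claimed. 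I expect the main obstacle to be making the segment step rigorous: \Cref{lemma: smoothness equivalence} is stated with a single reference-point constant $\beta(\omega)$, whereas \Cref{lemma:NS1} gives a constant varying along the segment, so one must either localize the estimate by partitioning (as above) or work directly with the underlying Hessian operator-norm bound; one should also handle the degenerate case $g=\vzero$ separately.
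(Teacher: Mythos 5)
Your argument is correct, but note that the paper does not actually prove this lemma: it is imported verbatim as Lemma~7 of \citep{mei2021leveraging}, and the only place the paper engages with its proof is in \Cref{lemma: NS2-apg}, where it defers to ``the proofs of Lemmas 3 and 7 in \citep{mei2021leveraging}'' and merely swaps in a new bound on $\lVert\theta-\omega\rVert_2$. The route taken there (and in the original reference) is a self-bounding integral inequality rather than your continuation argument: one bounds $\bigl|\tfrac{d}{d\zeta}\lVert\nabla_\theta V^{\pi_\theta}\rvert_{\theta=\omega_\zeta}\rVert_2\bigr|$ by $c_1\sqrt{|\cS|}\,\eta\, h(\zeta)$ with $c_1=3+\tfrac{4(C_\infty-(1-\gamma))}{1-\gamma}=\tfrac{4C_\infty-(1-\gamma)}{1-\gamma}$, integrates, takes the supremum over $\zeta\in[0,1]$ on both sides, and solves $\sup h\le h(0)+c_1\sqrt{|\cS|}\,\eta\,\sup h$ to get $\sup_\zeta h(\zeta)\le h(0)/(1-c_1\sqrt{|\cS|}\,\eta)=2h(0)$, exactly the $\tfrac{1}{1-x}$ form appearing in \cref{eq: unsimplified}. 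Your bootstrap via the maximal $\zeta^\star$ reaches the same constant through the linear bound $h(\zeta)\le(1+\zeta)h(0)$ instead; it trades the one-line division step (which needs $c_1\sqrt{|\cS|}\,\eta<1$, here $=1/2$) for the continuity/maximality bookkeeping, and both exploit the same identity $\eta=\tfrac{1}{2c_1\sqrt{|\cS|}}$. The technical caveat you flag --- that \Cref{lemma: smoothness equivalence} carries a single reference-point constant while \Cref{lemma:NS1} gives a constant varying along the segment --- is genuine and affects both proofs; the cleaner resolution, used in \citep{mei2021leveraging}, is to differentiate the gradient norm along the segment and bound the resulting Hessian--vector product pointwise by $\beta(\omega_\zeta)\lVert\theta-\omega\rVert_2$, which avoids the partition-and-telescope step entirely. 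Your handling of the degenerate case $\nabla_\theta V^{\pi_\theta}\rvert_{\theta=\omega}=\vzero$ and the constant arithmetic are both correct.
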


\begin{lemma}
\label{lemma: state visitation distribution fix}
Consider softmax parameterization.
Given {$C > 1$} and a probability distribution $\mu$ over the
state space $\cS$, we let
\begin{equation*}
	M_C \coloneqq \ln \Big[ \frac{|\cS||\cA|^2}{{(C-1)}(1-\gamma)^2
\min_{s \in S} \mu(s)} \Big].
\end{equation*}
For any two parameters $\theta, \theta' \in \mathbb{R}^{|\cS||\cA|}$,
if
\begin{equation}
	\min \{ \theta'_{s, a^*(s)} - \theta'_{s, a}, \theta_{s, a^*(s)} -
	\theta_{s, a} \} > M_C,\quad \forall s\in\cS, \, \forall a \neq a^*(s),
	\label{eq:Mbound}
\end{equation}
then we have
\begin{equation*}
	\left\lVert \frac{d_{\mu}^{\pi_{\theta'}}}{d_{\mu}^{\pi_{\theta}}} 
	\right\rVert_{\infty} < C.
\end{equation*}
\end{lemma}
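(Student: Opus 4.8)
The plan is a three-step argument. \emph{Step 1}: both $\pi_\theta$ and $\pi_{\theta'}$ are close (per state, in $\ell_1$) to the deterministic greedy policy $s\mapsto a^*(s)$, because \eqref{eq:Mbound} forces the softmax at each state to concentrate on $a^*(s)$. \emph{Step 2}: closeness of the policies implies closeness of the induced discounted state-visitation distributions, via a resolvent-perturbation bound. \emph{Step 3}: combine this additive closeness with the uniform lower bound $d^{\pi}_\mu(s)\ge(1-\gamma)\mu(s)$ of \Cref{lemma:lower_bound_of_state_visitation_distribution} to turn it into the multiplicative bound $\lVert d^{\pi_{\theta'}}_\mu/d^{\pi_\theta}_\mu\rVert_\infty<C$. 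For Step 1, fix $s$ and use \eqref{eq:softmax}: under \eqref{eq:Mbound} we have $\pi_\theta(a^*(s)\mid s)=(1+\sum_{a\neq a^*(s)}e^{\theta_{s,a}-\theta_{s,a^*(s)}})^{-1}\ge(1+(|\cA|-1)e^{-M_C})^{-1}$, so $1-\pi_\theta(a^*(s)\mid s)\le(|\cA|-1)e^{-M_C}$, and likewise for $\theta'$. Applying \Cref{lemma:pi diff infinite norm} with the action $a^*(s)$ then gives $\lVert\pi_\theta(\cdot\mid s)-\pi_{\theta'}(\cdot\mid s)\rVert_\infty\le 1-\min\{\pi_\theta(a^*(s)\mid s),\pi_{\theta'}(a^*(s)\mid s)\}\le(|\cA|-1)e^{-M_C}$, hence $\max_{s}\lVert\pi_\theta(\cdot\mid s)-\pi_{\theta'}(\cdot\mid s)\rVert_1\le|\cA|^2 e^{-M_C}=:\epsilon$.

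For Step 2, let $P^\pi$ denote the state-to-state kernel $P^\pi(s'\mid s)=\sum_a\pi(a\mid s)\mathcal P(s'\mid s,a)$, which is row-stochastic, so that (as a row vector) $d^\pi_\mu=(1-\gamma)\,\mu^\top(I-\gamma P^\pi)^{-1}$. The resolvent identity $(I-\gamma P^{\pi_{\theta'}})^{-1}-(I-\gamma P^{\pi_\theta})^{-1}=\gamma(I-\gamma P^{\pi_{\theta'}})^{-1}(P^{\pi_{\theta'}}-P^{\pi_\theta})(I-\gamma P^{\pi_\theta})^{-1}$ yields $d^{\pi_{\theta'}}_\mu-d^{\pi_\theta}_\mu=\gamma\,d^{\pi_{\theta'}}_\mu(P^{\pi_{\theta'}}-P^{\pi_\theta})(I-\gamma P^{\pi_\theta})^{-1}$. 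Since a row vector times a row-stochastic matrix is $\ell_1$-nonexpansive, $\lVert v(I-\gamma P^{\pi_\theta})^{-1}\rVert_1=\lVert\sum_{k\ge0}\gamma^k v(P^{\pi_\theta})^k\rVert_1\le\lVert v\rVert_1/(1-\gamma)$; moreover $\lVert P^{\pi_{\theta'}}(\cdot\mid s)-P^{\pi_\theta}(\cdot\mid s)\rVert_1\le\lVert\pi_{\theta'}(\cdot\mid s)-\pi_\theta(\cdot\mid s)\rVert_1$ because $P^\pi(\cdot\mid s)$ is the $\pi(\cdot\mid s)$-weighted mixture of the $\mathcal P(\cdot\mid s,a)$. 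Combining these gives $\lVert d^{\pi_{\theta'}}_\mu-d^{\pi_\theta}_\mu\rVert_1\le\frac{\gamma}{1-\gamma}\,\epsilon$.

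For Step 3, for every $s$ we have $\frac{d^{\pi_{\theta'}}_\mu(s)}{d^{\pi_\theta}_\mu(s)}\le 1+\frac{|d^{\pi_{\theta'}}_\mu(s)-d^{\pi_\theta}_\mu(s)|}{d^{\pi_\theta}_\mu(s)}\le 1+\frac{\lVert d^{\pi_{\theta'}}_\mu-d^{\pi_\theta}_\mu\rVert_1}{(1-\gamma)\min_s\mu(s)}\le 1+\frac{\gamma\,|\cA|^2 e^{-M_C}}{(1-\gamma)^2\min_s\mu(s)}$, where the middle inequality invokes \Cref{lemma:lower_bound_of_state_visitation_distribution}. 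Substituting $M_C=\ln\big[\tfrac{|\cS||\cA|^2}{(C-1)(1-\gamma)^2\min_s\mu(s)}\big]$, the added term equals $\tfrac{\gamma(C-1)}{|\cS|}<C-1$ because $\gamma<1\le|\cS|$; hence $d^{\pi_{\theta'}}_\mu(s)/d^{\pi_\theta}_\mu(s)<C$ at every $s\in\cS$, and taking the (attained, since $\cS$ is finite) maximum gives $\lVert d^{\pi_{\theta'}}_\mu/d^{\pi_\theta}_\mu\rVert_\infty<C$.

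The main obstacle is Step 2: one must set up the resolvent-perturbation estimate for state-visitation distributions carefully, in particular ensuring that it is the $\ell_1\!\to\!\ell_1$ operator norm of $(I-\gamma P^{\pi})^{-1}$ that is used (equal to $1/(1-\gamma)$, with no spurious dimension factors) and that the per-state policy gaps translate into a per-row $\ell_1$ bound on $P^{\pi_{\theta'}}-P^{\pi_\theta}$. Step 1 (softmax concentration) and Step 3 (plugging in $M_C$ and simplifying) are routine, and the stated constant in $M_C$ is comfortably loose.
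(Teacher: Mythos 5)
Your proof is correct, but it reaches the key perturbation bound by a genuinely different route from the paper. Step 1 coincides with the paper's opening move (softmax concentration on $a^*(s)$ combined with \Cref{lemma:pi diff infinite norm}), and Step 3 is the same additive-to-multiplicative conversion via $d^{\pi}_\mu(s)\ge(1-\gamma)\mu(s)$ that the paper uses. The difference is Step 2. The paper bounds $\lvert d^{\pi_{\theta'}}_\mu(s)-d^{\pi_\theta}_\mu(s)\rvert$ state by state, by constructing for each $s$ an auxiliary MDP $\mathcal{M}(s)$ with reward $(1-\gamma)\mathbb{I}\{s'=s\}$ so that $d^{\pi}_\mu(s)=V^{\pi}_{\mathcal{M}(s)}(\mu)$, and then invoking the performance difference lemma together with $\sum_a \pi(a|s)A^{\pi}(s,a)=0$; this reuses RL machinery already established elsewhere in the paper and yields the per-state bound $\lvert d^{\pi_{\theta'}}_\mu(s)-d^{\pi_\theta}_\mu(s)\rvert \le \frac{|\cS||\cA|}{1-\gamma}\lVert\pi_{\theta'}-\pi_\theta\rVert_\infty$. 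You instead treat $d^{\pi}_\mu=(1-\gamma)\mu^\top(I-\gamma P^{\pi})^{-1}$ as a resolvent and use the second resolvent identity plus the $\ell_1$-nonexpansiveness of row-stochastic matrices acting on row vectors, obtaining the aggregate bound $\lVert d^{\pi_{\theta'}}_\mu-d^{\pi_\theta}_\mu\rVert_1\le\frac{\gamma}{1-\gamma}\max_s\lVert\pi_{\theta'}(\cdot|s)-\pi_\theta(\cdot|s)\rVert_1$. Your argument is more self-contained (pure linear algebra on the induced Markov chain, no auxiliary MDPs or advantage functions) and is in fact slightly sharper: after substituting $M_C$ your added term is $\gamma(C-1)/|\cS|$, leaving a factor of $\gamma/|\cS|$ of slack, whereas the paper's chain of inequalities uses the constant in $M_C$ essentially exactly. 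Both routes rely on the implicit assumption $\min_s\mu(s)>0$ (already built into the definition of $M_C$). Your concluding worry about Step 2 being the "main obstacle" is unfounded — you have executed it completely and correctly.
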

\begin{remark}
    \normalfont{
	{
	\Cref{lemma: state visitation distribution fix} is a general
	property of RL under softmax policies, regardless of the
	algorithm applied.
	In our subsequent proof of \Cref{theorem: MDP convergence rate},
	we will let $\theta$ and $\theta'$ respectively represent the
	parameters before and after an APG update.
	Moreover, we also establish in \Cref{lemma:local nearly concavity}
	that the condition \cref{eq:Mbound} is indeed achievavable for
APG.}}
%
\end{remark}



\begin{proof}[Proof of \Cref{lemma: state visitation distribution fix}]
		Using \Cref{lemma:pi diff infinite norm} and \cref{eq:Mbound}, we have
    \begin{align}
        \Big\lVert \pi_{\theta'} - \pi_{\theta} \Big\lVert_{\infty} 
        &\le \max_{s \in \cS} \big\{ 1 - \min\{ \pi_\theta(a^*(s)|s),
			\pi_{\theta'}(a^*(s)|s) \} \big\} \nonumber\\
        &\le \max_{s \in \cS} \big\{ 1 - \min\{ \frac{e^{\theta_{s, a^*(s)}}}{e^{\theta_{s, a^*(s)}} + (|\cA| - 1)e^{\theta_{s, a^*(s)}-M_C}}, \frac{e^{\theta'_{s, a^*(s)}}}{e^{\theta'_{s, a^*(s)}} + (|\cA| - 1)e^{\theta'_{s, a^*(s)}-M_C}} \} \big\} \nonumber \\
        &< \frac{|\cA|e^{-M_C}}{1 + |\cA|e^{-M_C}} \nonumber \\
        &< \frac{{(C-1)}(1 - \gamma)^2 \min_{s\in\cS}
	\mu(s)}{|\cS| |\cA|}. \label{eq1: state visitation distribution fix}
    \end{align}
    
    Next, we characterize the ratio $d_{\mu}^{\pi_{\theta'}}(s) /
	d_{\mu}^{\pi_{\theta}}(s)$ for a fixed $s$.
 {We can regard
	$d_{\mu}^{\pi}(s)$ as the value function of a specific
	MDP.}
Specifically, we construct this MDP, denoted as $\mathcal{M}(s)$, as follows.
We let $\mathcal{M}(s)$ share identical parameters with $\mathcal{M}$ defined in \Cref{sec:prelim}.
The only distinction lies in its reward function $r_{\mathcal{M}(s)}$,
which we define as $r_{\mathcal{M}(s)}(s', a') = (1 -
\gamma)\mathbb{I}\{s' = s\}$, where $\mathbb{I}\{s' = s\}$ is the
indicator function whose value is $1$ if $s' = s$ and $0$ otherwise.
We can observe that
\begin{equation}
\label{eq:d to V}
    {\mathbb{E}\Big[\sum_{t}^{\infty} \gamma^t (1 - \gamma)\mathbb{I}\{s_t = s\} \Big| \pi, s_0 \sim \mu\Big] = (1-\gamma) \sum_{t}^{\infty} \gamma^t Pr(s_t = s|\pi, s_0 \sim \mu).}
\end{equation}

We denote $V_{\mathcal{M}(s)}^{\pi}(\mu)$ as the value function of
$\pi$ calculated under the MDP $\mathcal{M}(s)$.
Moreover, we define $A^{\pi}_{\mathcal{M}(s)}$ as the advantage
function and $d_{\mathcal{M}(s), \mu}^{\pi}$ as the state visitation
distribution of $\pi$ in $\mathcal{M}(s)$, respectively. By the
definition of the state visitation distribution and the value
function, together with \cref{eq:d to V}, we obtain $d_{\mu}^{\pi}(s) = V_{\mathcal{M}(s)}^{\pi}(\mu)$ for any $\pi$.
    Consequently, we have 
    \begin{align}
        \Big\lvert d_{\mu}^{\pi_{\theta'}}(s) - d_{\mu}^{\pi_{\theta}}(s) \Big\rvert
        &= \Big\lvert V_{\mathcal{M}(s)}^{\pi_{\theta'}}(\mu) - V_{\mathcal{M}(s)}^{\pi_{\theta}}(\mu) \Big\rvert \nonumber\\
        &= \Big\lvert \sum_{s', a'} d_{\mathcal{M}(s), \mu}^{\pi_{\theta'}}(s') \pi_{\theta'}(a'|s') A^{\pi_{\theta}}_{\mathcal{M}(s)}(s', a') \Big\rvert \label{eq3: state visitation distribution fix}\\
        &= \Big\lvert \sum_{s', a'} d_{\mathcal{M}(s), \mu}^{\pi_{\theta'}}(s') (\pi_{\theta'}(a'|s') - \pi_{\theta}(a'|s')) A^{\pi_{\theta}}_{\mathcal{M}(s)}(s', a') \Big\rvert \label{eq4: state visitation distribution fix}\\
        &\le \sum_{s', a'} \Big\lvert \underbrace{d_{\mathcal{M}(s),
	\mu}^{\pi_{\theta'}}(s')}_{\le 1} (\pi_{\theta'}(a'|s') -
	\pi_{\theta}(a'|s'))
	\underbrace{A^{\pi_{\theta}}_{\mathcal{M}(s)}(s', a')}_{\le 1 \text{ {by \Cref{lemma: reward range to value range}}}}
	\Big\rvert \nonumber \\
        &\le \frac{1}{1-\gamma} \sum_{s', a'} \Big\lvert \pi_{\theta'}(a'|s') - \pi_{\theta}(a'|s') \Big\rvert \nonumber \\
        &\le \frac{|\cS||\cA|}{1-\gamma} \cdot \Big\lVert \pi_{\theta'} - \pi_{\theta} \Big\lVert_{\infty} \nonumber \\
        &< {(C-1)} (1 - \gamma) \min_{s' \in \cS} \mu(s')
		\label{eq:statevisit6} \\
        &\le {(C-1)} (1 - \gamma)\mu(s), \label{eq5: state visitation distribution fix}
    \end{align}
    where \cref{eq3: state visitation distribution fix} holds by
	leveraging \Cref{lemma:perf_diff} in the MDP $\mathcal{M}(s)$,
	\cref{eq4: state visitation distribution fix} is from \Cref{lemma:sum_pi_A},
	and we used \cref{eq1: state visitation distribution fix} to obtain
	\cref{eq:statevisit6}.
	Finally, \cref{eq5: state visitation distribution fix} leads to
    \begin{align}
        \frac{d_{\mu}^{\pi_{\theta'}}(s)}{d_{\mu}^{\pi_{\theta}}(s)}
        &< \frac{d_{\mu}^{\pi_{\theta}}(s) + {(C-1)} (1 -
	\gamma)\mu(s)}{d_{\mu}^{\pi_{\theta}}(s)} \nonumber\\
        &= 1 + \frac{{(C-1)} (1 - \gamma)\mu(s)}{d_{\mu}^{\pi_{\theta}}(s)} \nonumber \\
        &\le 1 + \frac{{(C-1)}(1 - \gamma) \mu(s)}{(1 - \gamma)\mu(s)} \label{eq7: state visitation distribution fix} \\
        &= {C}, \nonumber 
    \end{align}
    where
	\cref{eq7: state visitation distribution fix} used the fact that
	$d_{\mu}^{\pi_{\theta}}(s) \ge (1 - \gamma)\mu(s)$ for all $s \in
\cS$. Since $s$ is arbitrary, the desired result is obtained by taking
the maximum over all $s \in \cS$.
\qedhere

\end{proof}

According to \Cref{def: feasible update domain}, $\mathcal{U}$ is a
set consisting of vectors in the high-dimensional space of $\mathbb{R}^{|\cS| \times |\cA|}$.
Fortunately, by focusing at one state $s$ at a time,
we can greatly simplify the analysis.
We first define the following notations and then prove a critical
lemma for our analysis.

\begin{definition}
\label{def: subvector}
    Consider any $\theta\in \mathbb{R}^{|\cS||\cA|}$ and $\bm{d} \in \mathbb{R}^{|\cS||\cA|}$.
	For all $s \in \cS$, we define $\theta_{s_i, \cdot} \in
	\mathbb{R}^{|\cA|}$ and $\bm{d_{s_i, \cdot}} \in
	\mathbb{R}^{|\cA|}$ to be respectively the subvectors of $\theta$
	and  $\bm{d}$ along the state $s_i$ as follows:
    \begin{align*}
		\theta_{s_i,\cdot} \coloneqq [\theta_{s_i, a^*(s_i)},\,  \theta_{s_i,
		a_2(s_i)},\,  \cdots,\,  \theta_{s_i, a_{|\cA|}(s_i)}],\quad
		\bm{d_{s_i,\cdot}} \coloneqq [d_{s_i, a^*(s_i)} ,\, d_{s_i,
		a_2(s_i)} ,\, \cdots ,\, d_{s_i, a_{|\cA|}(s_i)}].
    \end{align*}
\end{definition}

\begin{lemma}
\label{lemma: theta to pi}
{Assume that $|\cA| \geq 2$. Consider any vector $\bm{d}$ in the feasible update
domain $\mathcal{U}$ such that $\bm{d_{s, \cdot}}$ is a unit vector
for all $s \in \cS$. Let
\begin{equation}
	M_d \coloneqq 2\max_{s \in \cS, i > 1}
	\Big\{\ln\Big[2(|\mathcal{A}| - 1)\Big] - \ln[d_{s, a^*(s)} - d_{s,
	a_i(s)}]\Big\}.
	\label{eq:Md}
\end{equation}
Let $\theta \in \mathbb{R}^{|\cS||\cA|}$ be a vector such that $\theta_{s, a^*(s)} - \theta_{s, a} > M_d$ for all $s \in \cS$ and $a \neq a^*(s)$.
Then, under softmax parameterization, we have the following
\begin{itemize}[leftmargin=*]
    \item The function $\theta \mapsto \pi_{\theta}(a | s)$ is convex {along the direction $\bm{d}$} for all $s\in\cS$ and $a \neq a^*(s)$.
    \item The function $\theta \mapsto \pi_{\theta}(a^*(s) | s)$ is concave {along the direction $\bm{d}$} for all $s\in\cS$.
\end{itemize}
}
\end{lemma}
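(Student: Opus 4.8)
The plan is to reduce the statement to a one-variable calculation at a single state and then to balance two exponentially small quantities whose ratio is exactly what the constant $M_d$ in \cref{eq:Md} is designed to control. Since $\pi_\theta(\cdot|s)$ depends on $\theta$ only through $\theta_{s,\cdot}$, and ``convex (resp.\ concave) along $\bm d$'' means that $\lambda\mapsto\pi_{\theta+\lambda\bm d}(a|s)$ has nonnegative (resp.\ nonpositive) second derivative, it suffices to work with the subvectors $\theta_{s,\cdot},\bm{d_{s,\cdot}}$ from \Cref{def: subvector}; I will drop the state index and write $p_a:=\pi_\theta(a|s)$, $d_a$, $\theta_a$. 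Differentiating $\ln p_a(\lambda)=\theta_a+\lambda d_a-\ln\sum_{b\in\cA}e^{\theta_b+\lambda d_b}$ twice gives, with $D:=\sum_{b\in\cA}p_b d_b$ and $\mathrm{Var}_p(d):=\sum_{b\in\cA}p_b(d_b-D)^2$,
\[
\frac{d^2}{d\lambda^2}\,\pi_{\theta+\lambda\bm d}(a|s)\Big|_{\lambda=0}=p_a\big[(d_a-D)^2-\mathrm{Var}_p(d)\big],
\]
so concavity at $a^*(s)$ reduces to $(d_{a^*}-D)^2\le\mathrm{Var}_p(d)$ and convexity at $a\ne a^*(s)$ to $(d_a-D)^2\ge\mathrm{Var}_p(d)$. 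I will also observe that since $\bm d\in\mathcal{U}$, the hypothesis $\theta_{s,a^*(s)}-\theta_{s,a}>M_d$ persists along $\theta+\lambda\bm d$ for every $\lambda\ge0$, so these sign conclusions actually hold along the whole forward ray, not just at $\lambda=0$.

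Next I would set up the bookkeeping: $\delta_a:=d_{a^*}-d_a>0$ (positive since $\bm d\in\mathcal U$), $\epsilon_a:=p_a$ for $a\ne a^*$, $E:=\sum_{a\ne a^*}\epsilon_a$, and $\sigma:=\sum_{a\ne a^*}\epsilon_a\delta_a$. Substituting $d_a=d_{a^*}-\delta_a$ into the definitions of $D$ and $\mathrm{Var}_p(d)$ yields the clean identities $d_{a^*}-D=\sigma$, $d_a-D=\sigma-\delta_a$, and $\mathrm{Var}_p(d)=\sum_{a\ne a^*}\epsilon_a\delta_a^2-\sigma^2$. The three estimates driving the proof are: (a) $\epsilon_a<e^{-M_d}$, from $\theta_{a^*}-\theta_a>M_d$ and $p_{a^*}\le1$; (b) $\delta_a\ge 2(|\cA|-1)e^{-M_d/2}$, which is precisely what the definition \cref{eq:Md} of $M_d$ buys us; and (c) $\delta_a\le\sqrt2$, because $\bm{d_{s,\cdot}}$ is a unit vector so $d_{a^*}^2+d_a^2\le1$ and Cauchy--Schwarz gives $d_{a^*}-d_a\le\sqrt2$. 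Combining (c) with $|\cA|\ge2$ also gives $M_d\ge\ln(2(|\cA|-1))$, hence $E<(|\cA|-1)e^{-M_d}\le\tfrac12$.

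For the concave direction I would apply Cauchy--Schwarz, $\sigma^2=\big(\sum_{a\ne a^*}\epsilon_a\delta_a\big)^2\le E\sum_{a\ne a^*}\epsilon_a\delta_a^2$, whence $(d_{a^*}-D)^2-\mathrm{Var}_p(d)=2\sigma^2-\sum_{a\ne a^*}\epsilon_a\delta_a^2\le(2E-1)\sum_{a\ne a^*}\epsilon_a\delta_a^2\le0$, giving concavity of $\pi_\theta(a^*(s)|s)$ along $\bm d$. For a suboptimal action $a$ I would write $(d_a-D)^2-\mathrm{Var}_p(d)=(\sigma-\delta_a)^2+\sigma^2-\sum_{a'\ne a^*}\epsilon_{a'}\delta_{a'}^2$, use the elementary inequality $x^2+y^2\ge\tfrac12(x-y)^2$ with $x=\sigma-\delta_a,\,y=\sigma$ to get $(\sigma-\delta_a)^2+\sigma^2\ge\tfrac12\delta_a^2$, and bound $\sum_{a'\ne a^*}\epsilon_{a'}\delta_{a'}^2<2(|\cA|-1)e^{-M_d}$ via (a) and (c); estimate (b) then gives $2(|\cA|-1)e^{-M_d}\le\tfrac12\delta_a^2$, so the bracket is positive and $\pi_\theta(a|s)$ is convex along $\bm d$.

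The differentiation identity and the three estimates are routine. The delicate point --- and the reason \cref{eq:Md} carries the term $-\ln[d_{s,a^*(s)}-d_{s,a_i(s)}]$ rather than a plain constant --- is the convex case for a suboptimal action whose gap $\delta_a$ may be arbitrarily small: there I need $\sum_{a'}\epsilon_{a'}\delta_{a'}^2\le\tfrac12\delta_a^2$, and this holds only because making any $\delta_a$ smaller forces $M_d$ larger, which shrinks every $\epsilon_{a'}$ exponentially; verifying that \cref{eq:Md} calibrates these two effects at exactly the matching rate (estimate (b)) is the main obstacle, the remainder being arithmetic.
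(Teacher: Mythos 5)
Your proof is correct, and it takes a genuinely different computational route from the paper's. The paper writes $\pi_{\theta+k\bm d}(a_i|s)=1/g(k)$ with $g$ a sum of exponentials, reduces convexity to $2(g'(0))^2-g''(0)g(0)\ge 0$, and verifies this by a long term-by-term expansion in which the dominant term $(d_{s,a^*(s)}-d_{s,a_i(s)})^2\exp(2(\theta_{s,a^*(s)}-\theta_{s,a_i(s)}))$ is shown to absorb all cross-terms once the gap exceeds $M_d$; concavity at $a^*(s)$ is then deduced indirectly from $\pi_\theta(a^*(s)|s)=1-\sum_{a\neq a^*(s)}\pi_\theta(a|s)$. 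You instead use the softmax score identity $\tfrac{d^2}{d\lambda^2}p_a=p_a[(d_a-D)^2-\mathrm{Var}_p(d)]$, which reduces both claims to comparing a single squared deviation against the $p$-weighted variance of $\bm d$, and you handle the optimal action directly via Cauchy--Schwarz ($\sigma^2\le E\sum\epsilon_a\delta_a^2$ with $E\le\tfrac12$) rather than by complementation. Your estimates (a)--(c) recover exactly the same calibration of $M_d$ as the paper's \cref{eq:Md}, and your variance formulation makes the role of that constant more transparent: it forces $\mathrm{Var}_p(d)$ below $\tfrac12\min_a\delta_a^2$. Two small points in your favor: your bound $\delta_a\le\sqrt2$ is the correct range (the paper asserts $d_{s,a^*(s)}-d_{s,a_i(s)}\in(0,1]$, which is slightly off, though harmless for its argument), and your observation that the gap condition persists along the forward ray $\theta+\lambda\bm d$, $\lambda\ge 0$, is actually needed for the downstream use of this lemma in the near-concavity argument, where the inequality must hold on the whole segment from $\theta$ to $\theta+\bm d$; the paper leaves this implicit.
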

\begin{proof}[Proof of \Cref{lemma: theta to pi}]
    {Given that the function $\theta \to \pi_\theta(a|s)$ is
	independent of any $s' \neq s$,} for the first item it suffices to establish the convexity of the function {$\theta_{s, \cdot} \to
	\pi_{\theta}(a|s)$ alone $\bm{d_{s, \cdot}}$} for all $a \neq a^{*}(s)$ for
	any given $s \in \cS$.
	Following that, since $\pi_{\theta}(a^*(s) | s) = 1 - \sum_{a \neq
	a^*} \pi_{\theta}(a | s)$ can be viewed as a finite summation of
	concave functions along $\bm{d}$ with respect to $\theta$, $\theta
	\to \pi_{\theta}(a^*(s) | s)$ is a concave along {$\bm{d_{s,
	\cdot}}$}, and the second item ensues.

	We first note that $\ln[2 (|\cA| - 1)] > 0$ because $|\cA| \geq 2$,
and $\ln[d_{s,a^*(s)} - d_{s,a_i(s)}] \le 0$ because $d_{s, a^*(s)} - d_{s, a_i(s)} \in (0,1]$ for all $i$ by the definition of $\cU$ and
the fact that $\|\bm{d_{s, \cdot}}\| \leq 1$.
		Therefore, we see that $M_d > 0$.
    Given an action $a_i(s) \neq a^*(s)$, since the convexity of a
	function is determined by its behavior along arbitrary lines
	in its domain (the domain $\cU$ is clearly convex), it suffices to show that the following
	twice-differentiable function is concave (\textit{i.e.}, the second derivative is non-positive) along the unit vector $\bm{d_{s, \cdot}}$ when $k \to 0$:
    \begin{align}
    f(k)
    &\coloneqq \frac{e^{\theta_{s, a_i(s)} + k \cdot d_{s, a_i(s)}}}{e^{\theta_{s, a^*(s)} + k \cdot d_{s, a^*(s)}} + e^{\theta_{s, a_2(s)} + k \cdot d_{s, a_2(s)}} + \cdots + e^{\theta_{s, a_{|\mathcal{A}|}(s)} + k \cdot d_{s, a_{|\cA|}(s)}}} \nonumber \\
    &= \frac{1}{e^{(\theta_{s, a^*(s)} - \theta_{s, a_i(s)}) + k \cdot
    (d_{s, a^*(s)} - d_{s, a_i(s)})} + 
    \sum_{j=2}^{|\mathcal{A}|} e^{(\theta_{s, a_j(s)} - \theta_{s, a_i(s)}) + k \cdot
    (d_{s, a_j(s)} - d_{s, a_i(s)})}}. \nonumber
    \end{align}

By defining $g(k) \coloneqq 1 / f(k)$ and taking the second derivative of $f(k)$, we have
\begin{align}
    f^{''}(k) 
    &= \frac{2 (g^{'}(k))^2}{g(k)^3} - \frac{g^{''}(k)}{g(k)^2}. \nonumber 
\end{align}
Let $k \to 0$, we then get
\begin{align*}
    f^{''}(0)
    &= \frac{1}{g(0)^2} \left( \frac{2 (g^{'}(0))^2}{g(0)} - g^{''}(0) \right).
\end{align*}

Since $g(k) \ge 0$ for any $k \in \bR$, $f^{''}(0) \geq 0$ if and only
if
\begin{align}
    2 (g^{'}(0))^2 - g^{''} (0) \cdot g(0) \geq 0. \label{eq: local_cocavity_2}
\end{align}
By direct calculation, we have
$m'(0) = \sum_{j \neq i} (d_j - d_i) \exp(\theta_{s, a_j(s)} - \theta_{s, a_i(s)})$ and $m''(0) = \sum_{j \neq i} (d_j - d_i)^2 \exp(\theta_{s, a_j(s)} - \theta_{s, a_i(s)})$.
Therefore, \cref{eq: local_cocavity_2} leads to
\begin{align}
    &~2(g'(0))^2 - g^{''} (0) \cdot g(0) \nonumber \\
    =&~ 2 \sum_{j \neq i} (d_{s, a_j(s)} - d_{s, a_i(s)})^2 \exp(2\theta_{s, a_j(s)} - 2\theta_{s, a_i(s)}) \\
    & \quad + 2 \sum_{u, v \neq i, u < v} 2(d_{s, a_u(s)} - d_{s, a_i(s)}) (d_{s, a_v(s)} - d_{s, a_i(s)}) \exp(\theta_{s, a_u(s)} + \theta_{s, a_v(s)} - 2\theta_{s, a_i(s)}) \nonumber \\
    & \quad - \sum_{j \neq i} (d_{s, a_j(s)} - d_{s, a_i(s)})^2 \exp(2\theta_{s, a_j(s)} - 2\theta_{s, a_i(s)})
    - \sum_{j \neq i} (d_{s, a_j(s)} - d_{s, a_i(s)})^2 \exp(\theta_{s, a_j(s)} - \theta_{s, a_i(s)})\nonumber  \\
    & \quad - \sum_{u, v \neq i, u < v} \left( (d_{s, a_u(s)} - d_{s, a_i(s)})^2 + (d_{s, a_v(s)} - d_{s, a_i(s)})^2 \right) \exp(\theta_{s, a_u(s)} + \theta_{s, a_v(s)} - 2\theta_{s. a_i(s)}) \nonumber \\
    =&~ \sum_{j \neq i} (d_{s, a_j(s)} - d_{s, a_i(s)})^2 \left( \exp(2\theta_{s, a_j(s)} - 2\theta_{s, a_i(s)}) - \exp(\theta_{s, a_j(s)} - \theta_{s, a_i(s)}) \right)\nonumber \\
    & \quad + \sum_{u, v \neq i, u < v} 2 (d_{s, a_u(s)} - d_{s, a_i(s)}) (d_{s, a_v(s)} - d_{s, a_i(s)}) \exp(\theta_{s, a_u(s)} + \theta_{s, a_v(s)} - 2\theta_{s, a_i(s)}) \nonumber \\
    & \quad - \sum_{u, v \neq i, u < v} {(d_{s, a_u(s)} - d_{s, a_v(s)})^2} \exp(\theta_{s, a_u(s)} + \theta_{s, a_v(s)} - 2\theta_{s, a_i(s)}) \nonumber \\
    \ge&~ (d_{s, a^*(s)} - d_{s, a_i(s)})^2 \left( \exp(2\theta_{s, a^*(s)} - 2\theta_{s, a_i(s)}) - \exp(\theta_{s, a^*(s)} - \theta_{s, a_i(s)}) \right)\nonumber \\
    & \quad - \sum_{j \neq i, a_j \neq a^*} (d_{s, a_j(s)} - d_{s, a_i(s)})^2 \exp(\theta_{s, a_j(s)} - \theta_{s, a_i(s)}) \nonumber \\
    & \quad + \sum_{u, v \neq i, u < v} 2 (d_{s, a_u(s)} - d_{s, a_i(s)}) (d_{s, a_v(s)} - d_{s, a_i(s)}) \exp(\theta_{s, a_u(s)} + \theta_{s, a_v(s)} - 2\theta_{s, a_i(s)}) \nonumber \\
    & \quad - \sum_{u, v \neq i, u < v} {(d_{s, a_u(s)} - d_{s, a_v(s)})^2} \exp(\theta_{s, a_u(s)} + \theta_{s, a_v(s)} - 2\theta_{s, a_i(s)}) \nonumber \\
    \ge&~ (d_{s, a^*(s)} - d_{s, a_i(s)})^2 \left( \exp(2\theta_{s, a^*(s)} - 2\theta_{s, a_i(s)}) - \exp(\theta_{s, a^*(s)} - \theta_{s, a_i(s)}) \right)\nonumber \\
    & \quad - \sum_{j \neq i, a_j \neq a^*} 2 \exp(\theta_{s, a_j(s)} - \theta_{s, a_i(s)}) \nonumber \\
    & \quad - \sum_{u, v \neq i, u < v} 2 \exp(\theta_{s, a_u(s)} + \theta_{s, a_v(s)} - 2\theta_{s, a_i(s)})
    - \sum_{u, v \neq i, u < v} {2} \exp(\theta_{s, a_u(s)} + \theta_{s, a_v(s)} - 2\theta_{s, a_i(s)}) \label{eq: theta to pi 3} \\
    \ge&~ (d_{s, a^*(s)} - d_{s, a_i(s)})^2 \exp(\theta_{s, a^*(s)} - \theta_{s, a_i(s)})
	\left( \exp(\theta_{s, a^*(s)} - \theta_{s, a_i(s)}) - 1 \right)
	\nonumber\\
	& \quad - 2(|\cA| - 1) \max_{j \neq i, a_j \neq a^*} \exp(\theta_{s, a_j(s)} - \theta_{s, a_i(s)})
    - {4} \binom{|\cA|-1}{2} \max_{u, v \neq i, u < v} \exp(\theta_{s, a_u(s)} + \theta_{s, a_v(s)} - 2\theta_{s, a_i(s)})\nonumber \\
    \ge&~ (d_{s, a^*(s)} - d_{s, a_i(s)})^2 \exp(\theta_{s, a^*(s)} - \theta_{s, a_i(s)}) \left( \exp(\theta_{s, a^*(s)} - \theta_{s, a_i(s)}) - 1 \right) \nonumber \\
	& \quad - 2(|\cA| - 1) \exp(\theta_{s, a^*(s)} - \theta_{s, a_i(s)} - M_d)
    - {4} \binom{|\cA|-1}{2} \exp(2\theta_{s, a^*(s)} - 2\theta_{s, a_i(s)} - M_d), \label{eq: theta to pi 5} \\
    \ge&~ \exp(\theta_{s, a^*(s)} - \theta_{s, a_i(s)}) \Big[(d_{s, a^*(s)} - d_{s, a_i(s)})^2 (\exp(\theta_{s, a^*(s)} - \theta_{s, a_i(s)}) - 1) \nonumber \\
    & \quad - 2(|\cA| - 1)^2 \exp(\theta_{s, a^*(s)} - \theta_{s, a_i(s)} - M_d)\Big], \label{eq: theta to pi 6}
\end{align}
where \cref{eq: theta to pi 3} holds by using the bounds $(d_{s, a_u(s)} - d_{s, a_i(s)}) (d_{s, a_v(s)} -
d_{s, a_i(s)}) \ge -1$ and {$(d_{s, a_u(s)} - d_{s, a_v(s)})^2 \le 2$}
(both from the Cauchy-Schwarz inequality and that $\|\bm{d_{s,\cdot}}\|
= 1$),
and \cref{eq: theta to pi 5,eq: theta to pi 6} is from our hypothesis of $\theta_{s, a^*(s)} - \theta_{s, a} > M_d > 0$ for all $a \neq a^*(s)$.

Since $\exp(\theta_{s, a^*(s)} - \theta_{s, a_i(s)}) > 0$, to make
\cref{eq: theta to pi 6} non-negative, we simply need to verify that
\begin{align*}
    (d_{s, a^*(s)} - d_{s, a_i(s)})^2 (\exp(\theta_{s, a^*(s)} - \theta_{s, a_i(s)}) - 1)
	\geq 2 (|\mathcal{A}| - 1)^2 \exp(\theta_{s, a^*(s)} - \theta_{s, a_i(s)} - M_d),
\end{align*}
which is equivalent to
\begin{align*}
    \exp(-M_d) &\leq \frac{(d_{s, a^*(s)} - d_{s, a_i(s)})^2 (\exp(\theta_{s, a^*(s)} - \theta_{s, a_i(s)}) - 1)}{ 2(|\mathcal{A}| - 1)^2 \exp(\theta_{s, a^*(s)} - \theta_{s, a_i(s)})} \\
    &= \frac{(d_{s, a^*(s)} - d_{s, a_i(s)})^2 }{ 2(|\mathcal{A}| - 1)^2 } \cdot \Big(1 - \frac{1}{\exp(\theta_{s, a^*(s)} - \theta_{s, a_i(s)})}\Big).
\end{align*}
\color{black}
From \cref{eq:Md},
since $2(|\cA|-1)  > \sqrt{2}$
and $-\ln[d_{s, a^*(s)} - d_{s, a_i(s)}] \ge 0$ for all $i >
1$ (as argued before), we get
$M_d \geq \ln(2)$.
Therefore, if $\theta_{s, a^*(s)} - \theta_{s, a_i(s)} > M_d$ for all
$s\in\cS$ and all $i > 1$,
we indeed obtain
\begin{align}
    \exp(-M_d) &\leq \frac{(d_{s, a^*(s)} - d_{s, a_i(s)})^2 }{ 4(|\mathcal{A}| - 1)^2 }
	\nonumber\\
	&= \frac{(d_{s, a^*(s)} - d_{s, a_i(s)})^2 }{ 2(|\mathcal{A}| - 1)^2 } \Big( 1 - \frac{1}{2} \Big) \nonumber \\
    &\leq \frac{(d_{s, a^*(s)} - d_{s, a_i(s)})^2 }{ 2(|\mathcal{A}| - 1)^2 } \Big( 1 - \frac{1}{\exp(M_d)} \Big) \label{eq: picking M 1} \\
    &\leq \frac{(d_{s, a^*(s)} - d_{s, a_i(s)})^2 }{2(|\mathcal{A}| - 1)^2} \Big( 1 - \frac{1}{\exp(\theta_{s, a^*(s)} - \theta_{s, a_i(s)})} \Big), \label{eq: picking M 2}
\end{align}
where \cref{eq: picking M 1,eq: picking M 2} hold by $\theta_{s,
a^*(s)} - \theta_{s, a_i(s)} > M_d \geq \ln(2)$ for all $i > 1$.
\end{proof}

\begin{lemma}
\label{lemma:feasible update domain improvement lemma}
{Under softmax parameterization, let $\theta \in
	\R^{|\mathcal{S}||\mathcal{A}|}$ be a parameter such that
	$V^{\pi_{\theta}}(s) > Q^*(s, a_2(s))$ for all $s\in\cS$.
Then, for any $s\in\cS$ and any $\theta' \coloneqq \theta + \bm{d}$ with
$\bm{d} \in \mathcal{U}$,
we have
\begin{equation*}
    \sum_{a\in\cA} \big(\pi_{\theta'}(a|s) - \pi_{\theta}(a|s)\big) A^{\pi_{\theta}}(s, a) \ge 0.
\end{equation*}
}
\end{lemma}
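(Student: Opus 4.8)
The plan is to fix an arbitrary state $s\in\cS$ and reduce the claim to an elementary sign argument, driven by \Cref{lemma:sum_pi_A} (that $\sum_{a}\pi_\theta(a\mid s)A^{\pi_\theta}(s,a)=0$), \Cref{lemma: Q^* a_2}, and the explicit form of the softmax reweighting. First I would record the two structural facts forced by the hypotheses. From $V^{\pi_\theta}(s)>Q^*(s,a_2(s))$ together with \Cref{lemma: Q^* a_2}, one gets $Q^{\pi_\theta}(s,a_i(s))\le Q^*(s,a_2(s))<V^{\pi_\theta}(s)$ for every $i\ge 2$, i.e. $A^{\pi_\theta}(s,a)<0$ for every $a\neq a^*(s)$. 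From $\bm{d}\in\mathcal{U}$, i.e. $d_{s,a^*(s)}>d_{s,a}$ for all $a\neq a^*(s)$, the multiplicative weights $\mu_a\coloneqq\exp\!\big(d_{s,a}-d_{s,a^*(s)}\big)$ satisfy $\mu_{a^*(s)}=1$ and $0<\mu_a<1$ for every $a\neq a^*(s)$.

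Next I would rewrite the softmax update at $s$ as a reweighting of $\pi_\theta(\cdot\mid s)$: since $\theta'=\theta+\bm{d}$, the normalizing factors cancel and $\pi_{\theta'}(a\mid s)=\pi_\theta(a\mid s)\,\mu_a/Z$ with $Z\coloneqq\sum_{a'\in\cA}\pi_\theta(a'\mid s)\,\mu_{a'}\in(0,1]$. Then, applying $\sum_{a}\pi_\theta(a\mid s)A^{\pi_\theta}(s,a)=0$ once to drop the $\pi_\theta$ term and once more to subtract a harmless $1$ from each weight,
\begin{align*}
\sum_{a\in\cA}\big(\pi_{\theta'}(a\mid s)-\pi_\theta(a\mid s)\big)A^{\pi_\theta}(s,a)
&=\sum_{a\in\cA}\pi_{\theta'}(a\mid s)A^{\pi_\theta}(s,a)\\
&=\frac1Z\sum_{a\in\cA}\pi_\theta(a\mid s)\,(\mu_a-1)\,A^{\pi_\theta}(s,a).
\end{align*}
In the final sum the $a^*(s)$ term vanishes because $\mu_{a^*(s)}-1=0$, while for each $a\neq a^*(s)$ the two factors $\mu_a-1$ and $A^{\pi_\theta}(s,a)$ are both negative, so $(\mu_a-1)A^{\pi_\theta}(s,a)\ge 0$; since $\pi_\theta(a\mid s)>0$ and $Z>0$, this forces the whole expression to be $\ge 0$, which is exactly the claim. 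Note that this argument needs no restriction on the size of $\bm{d}$.

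The only mildly delicate point — not really an obstacle — is the choice of centering in the displayed identity: centering the reweighting factors at $\mu_{a^*(s)}=1$ (rather than at their $\pi_\theta$-average) is what annihilates the optimal-action contribution and turns every surviving summand into a product of two quantities of the same sign. Everything else is bookkeeping: verifying that $\mathcal{U}$ yields the strict inequalities $\mu_a<1$ for $a\neq a^*(s)$, and that $V^{\pi_\theta}(s)>Q^*(s,a_2(s))$ yields $A^{\pi_\theta}(s,a)<0$ for all $a\neq a^*(s)$ via \Cref{lemma: Q^* a_2}. In particular, no smoothness, concavity, nor the threshold $M_d$ from \Cref{lemma: theta to pi} is needed for this step — those enter only later when upgrading ``update stays feasible'' to ``objective is nearly concave.''
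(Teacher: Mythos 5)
Your proof is correct and follows essentially the same route as the paper's: both reduce the claim to $\sum_{a}\pi_{\theta'}(a|s)A^{\pi_{\theta}}(s,a)\ge 0$ via \Cref{lemma:sum_pi_A}, establish $A^{\pi_{\theta}}(s,a)\le 0$ for $a\neq a^*(s)$ (and hence $A^{\pi_{\theta}}(s,a^*(s))\ge 0$) from the hypothesis and \Cref{lemma: Q^* a_2}, and then exploit the monotone exponential reweighting induced by $\bm{d}\in\cU$ together with a second application of \Cref{lemma:sum_pi_A}. The only cosmetic difference is the centering constant — you shift the exponents by $d_{s,a^*(s)}$ so the optimal-action term vanishes outright, whereas the paper shifts by $\max_{a\neq a^*(s)}d_{s,a}$ and bounds each term against its unshifted counterpart — but the sign argument is identical.
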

\begin{proof}[Proof of \Cref{lemma:feasible update domain improvement lemma}]
By \Cref{lemma: Q^* a_2} and our assumption on
$V^{\pi_\theta}(s)$, we see that $V^{\pi_{\theta}}(s) > Q^*(s, a_2(s)) \ge Q^{\pi_{\theta}}(s,
a_i(s))$ for all $i \ge 2$.
From \cref{eq:state_action_value_function}, this means
\begin{equation}
A^{\pi_{\theta}}(s, a) \le 0,\quad \forall a \neq a^*(s).
\label{eq:neg}
\end{equation}
Combining \cref{eq:neg} with \Cref{lemma:sum_pi_A} then gives
\begin{equation}
	A^{\pi_{\theta}}(s ,a^*(s)) \ge 0.
	\label{eq:pos}
\end{equation}

{
Since $\bm{d} \in \mathcal{U}$, we have $d_{s, a^*(s)} > d_{s, a}$ for
all $s \in \cS$ and all $a \neq a^*(s)$. Therefore,
\begin{align}
    &~\sum_{a \in \cA}(\pi_{\theta'}(a|s) - \pi_{\theta}(a|s)) A^{\pi_{\theta}}(s, a) \nonumber \\
    =&~ \sum_{a \in \cA} \pi_{\theta'}(a|s) A^{\pi_{\theta}}(s, a) \label{eq: sum pi A equals to 0 improvement} \\
    =&~ \sum_{a \in \cA} \frac{\exp{(\theta'_{s, a})}}{\sum_{a'} \exp{(\theta'_{s, a'})}}  A^{\pi_{\theta}}(s, a) \nonumber \\
    =&~ \frac{\sum_{a'} \exp{(\theta_{s, a'})}}{\sum_{a'} \exp{(\theta'_{s, a'})}} \sum_{a \in \cA} \frac{\exp{(\theta_{s, a} + d_{s, a})}}{\sum_{a'} \exp{(\theta_{s, a'})}}  A^{\pi_{\theta}}(s, a) \nonumber \\
    =&~ \frac{\sum_{a'} \exp{(\theta_{s, a'})}}{\sum_{a'} \exp{(\theta'_{s, a'} - \max_{a \neq a^*(s)} d_{s, a})}} \sum_{a \in \cA} \frac{\exp{(\theta_{s, a} + (d_{s, a} - \max_{a \neq a^*(s)} d_{s, a}))}}{\sum_{a'} \exp{(\theta_{s, a'})}}  A^{\pi_{\theta}}(s, a) \nonumber \\
    =&~ \frac{\sum_{a'} \exp{(\theta_{s, a'})}}{\sum_{a'} \exp{(\theta'_{s, a'} - \max_{a \neq a^*(s)} d_{s, a})}} \Big( \frac{\exp{(\theta_{s, a^*(s)} + (d_{s, a^*(s)} - \max_{a \neq a^*(s)} d_{s, a}))}}{\sum_{a'} \exp{(\theta_{s, a'})}}  A^{\pi_{\theta}}(s, a^*(s)) \nonumber \\
    &\quad + \sum_{a \neq a^*(s)} \frac{\exp{(\theta_{s, a} + (d_{s, a} - \max_{a \neq a^*(s)} d_{s, a}))}}{\sum_{a'} \exp{(\theta_{s, a'})}}  A^{\pi_{\theta}}(s, a) \Big) \nonumber \\
    \ge&~ \frac{\sum_{a'} \exp{(\theta_{s, a'})}}{\sum_{a'} \exp{(\theta'_{s, a'} - \max_{a \neq a^*(s)} d_{s, a})}} \Big( \frac{\exp{(\theta_{s, a^*(s)} )}}{\sum_{a'} \exp{(\theta_{s, a'})}}  A^{\pi_{\theta}}(s, a^*(s)) + \sum_{a \neq a^*(s)} \frac{\exp{(\theta_{s, a})}}{\sum_{a'} \exp{(\theta_{s, a'})}}  A^{\pi_{\theta}}(s, a) \Big) \label{eq: pi inscreasing 1-1} \\
    =&~ \frac{\sum_{a'} \exp{(\theta_{s, a'})}}{\sum_{a'} \exp{(\theta'_{s, a'} - \max_{a \neq a^*(s)} d_{s, a})}} \sum_{a \in \cA} \pi_{\theta}(a|s) A^{\pi_{\theta}}(s, a) = 0, \label{eq: pi inscreasing 1-2} 
\end{align}
where \cref{eq: sum pi A equals to 0 improvement} and \cref{eq: pi
inscreasing 1-2} hold due to \Cref{lemma:sum_pi_A},
and \cref{eq: pi inscreasing 1-1} holds by the definition of $\cU$
that $d_{s, a^*(s)} > \max_{a \neq a^*(s)} d_{s, a} \ge d_{s, a(s)}$
for all $s \in \cS$ and all $a \neq a^*$ and \cref{eq:neg,eq:pos}.
}    
\end{proof}

\begin{lemma}
\label{lemma:grad update improvement}
{Under softmax parameterization, for \hyperref[algorithm:APG]{APG},
we have $V^{\pi_{\theta}^{(t+1)}}(s) \ge V^{\pi_{\omega}^{(t)}}(s)$
for any $s\in\cS$ and any $t \ge 0$ as long as $\eta^{(t} > 0$.}
\end{lemma}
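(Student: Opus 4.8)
The plan is to reduce the claim to a per-state inequality via the Performance Difference Lemma, and then prove that inequality by collapsing the MDP policy-gradient step into a single-state (bandit) problem with \emph{frozen} advantages, where the update direction turns out to be a nonnegative multiple of the gradient of an objective sitting at a critical value. First I would fix an iteration $t \ge 0$ and write $\omega := \omega^{(t)}$, $\theta^+ := \theta^{(t+1)} = \omega + \eta\,\nabla_\theta V^{\pi_\theta}(\mu)\big\rvert_{\theta=\omega}$ with $\eta := \eta^{(t+1)} > 0$. Applying \Cref{lemma:perf_diff} with the one-hot initial distribution at $s$ gives
\[
V^{\pi_{\theta^+}}(s) - V^{\pi_\omega}(s) = \frac{1}{1-\gamma}\, \mathbb{E}_{s' \sim d^{\pi_{\theta^+}}_{s}}\Big[\textstyle\sum_{a\in\cA} \pi_{\theta^+}(a|s')\,A^{\pi_\omega}(s',a)\Big],
\]
so it suffices to show $\sum_{a\in\cA}\pi_{\theta^+}(a|s')\,A^{\pi_\omega}(s',a) \ge 0$ for every $s'\in\cS$. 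Since $\sum_{a}\pi_\omega(a|s')A^{\pi_\omega}(s',a)=0$ by \Cref{lemma:sum_pi_A}, this is equivalent to $g(\theta^+_{s',\cdot}) \ge g(\omega_{s',\cdot})$ where $g:\R^{|\cA|}\to\R$ is the bandit-type objective $g(x) := \sum_{a}\pi_x(a|s')\,A^{\pi_\omega}(s',a)$ with $\pi_x$ the softmax over $x$; under tabular softmax, $\pi_{\theta^+}(\cdot|s')$ depends on $\theta^+$ only through $\theta^+_{s',\cdot}$ and the reference advantages $A^{\pi_\omega}(s',\cdot)$ are held fixed, so this is legitimate.

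Next I would record three facts. (i) By the softmax Jacobian together with \Cref{lemma:sum_pi_A}, $\nabla g(\omega_{s',\cdot})_a = \pi_\omega(a|s')\,A^{\pi_\omega}(s',a)$. (ii) By \Cref{lemma:softmax_pg}, the increment along state $s'$ is $\theta^+_{s',a}-\omega_{s',a} = \kappa\,\pi_\omega(a|s')\,A^{\pi_\omega}(s',a) = \kappa\,\nabla g(\omega_{s',\cdot})_a$ with $\kappa := \eta\,d^{\pi_\omega}_\mu(s')/(1-\gamma) \ge 0$; if $\kappa = 0$ the two policies coincide at $s'$ and the inequality is trivial, so assume $\kappa>0$. (iii) $g$ is $L_g$-smooth with $L_g = 5/(1-\gamma)$: by \Cref{lemma:bandit_smoothness} the map $x\mapsto\pi_x^\top r$ is $5/2$-smooth for $r\in[0,1]^{|\cA|}$, and since $\|A^{\pi_\omega}(s',\cdot)\|_\infty \le 1/(1-\gamma)$ (\Cref{lemma: reward range to value range} with $R_{\max}=1$), the affine reparametrization $A^{\pi_\omega}(s',\cdot) = \big(2r - \mathbf 1\big)/(1-\gamma)$ with $r\in[0,1]^{|\cA|}$ multiplies the Hessian bound by $2/(1-\gamma)$. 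The one-step smoothness (ascent) inequality then yields
\[
g(\theta^+_{s',\cdot}) \;\ge\; g(\omega_{s',\cdot}) + \big\langle \nabla g(\omega_{s',\cdot}),\, \kappa\nabla g(\omega_{s',\cdot})\big\rangle - \frac{L_g}{2}\big\|\kappa\nabla g(\omega_{s',\cdot})\big\|^2 \;=\; g(\omega_{s',\cdot}) + \kappa\Big(1 - \frac{\kappa L_g}{2}\Big)\big\|\nabla g(\omega_{s',\cdot})\big\|^2,
\]
which is $\ge g(\omega_{s',\cdot})$ as soon as $\kappa \le 2/L_g$. To close, I would bound $\kappa \le \eta/(1-\gamma) \le (1-\gamma)^2/16$ (using $d^{\pi_\omega}_\mu(s')\le 1$ and $\eta^{(t)} \le (1-\gamma)^3/16$), compare with $2/L_g = 2(1-\gamma)/5$, and note $(1-\gamma)^2/16 \le 2(1-\gamma)/5 \iff 1-\gamma \le 32/5$, which holds since $\gamma\in[0,1)$. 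This gives the per-state inequality for all $s'$, hence $V^{\pi_{\theta^+}}(s) \ge V^{\pi_\omega}(s)$ for all $s$.

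The step I expect to be the main obstacle is purely the bookkeeping that turns the MDP gradient step into a bandit step with frozen advantages: one must carefully check that $\pi_{\theta^+}(\cdot|s')$ is a function of $\theta^+_{s',\cdot}$ alone, that the frozen-advantage objective $g$ is exactly the map of \Cref{lemma:bandit_smoothness} after the affine rescaling (so the smoothness constant is legitimately $5/(1-\gamma)$ and not the larger MDP constant $8/(1-\gamma)^3$), and that the increment is genuinely a \emph{nonnegative} scalar times $\nabla g$ evaluated \emph{at the current point}; once these are in place, the remainder is the textbook descent-lemma computation and the trivial step-size arithmetic. It is also worth remarking that this argument uses no assumption on $\mu$ beyond $\eta^{(t)}>0$ (the case $d^{\pi_\omega}_\mu(s')=0$ being handled trivially), matching the hypotheses of the lemma.
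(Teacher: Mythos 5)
Your reduction via \Cref{lemma:perf_diff} and \Cref{lemma:sum_pi_A} to the per-state inequality $\sum_{a}\pi_{\theta}^{(t+1)}(a|s')\,A^{\pi_{\omega}^{(t)}}(s',a)\ge 0$ is exactly the paper's first step, and your frozen-advantage objective $g$, the identity $\nabla g(\omega_{s',\cdot})_a=\pi_{\omega}(a|s')A^{\pi_{\omega}}(s',a)$, and the rescaled smoothness constant $5/(1-\gamma)$ are all correct. The gap is in how you close: the descent-lemma computation only gives $g(\theta^{(t+1)}_{s',\cdot})\ge g(\omega^{(t)}_{s',\cdot})$ under the condition $\kappa\le 2/L_g$, and to verify it you import the bound $\eta^{(t)}\le(1-\gamma)^3/16$, which is \emph{not} a hypothesis of the lemma. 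The statement asserts the improvement for \emph{every} $\eta^{(t)}>0$, and this generality is actually needed: the lemma is invoked in the time-varying analysis behind \Cref{theorem: adaptive apg convergence rate}, where $\eta^{(t)}=\beta^{t}\cdot(1-\gamma)^3/8$ grows without bound and the update is element-wise clipped, so the increment is no longer a scalar multiple of $\nabla g(\omega_{s',\cdot})$ and the ascent-lemma argument does not apply there at all. As written, you have proved a strictly weaker statement than the one claimed.

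The paper's own proof needs no smoothness and no step-size restriction: writing $\theta^{(t+1)}_{s,a}=\omega^{(t)}_{s,a}+\eta^{(t+1)}\partial^{(t)}_{s,a}$ and pulling out the normalizing constants, it compares $\sum_{a}\exp\bigl(\omega^{(t)}_{s,a}+\eta^{(t+1)}\partial^{(t)}_{s,a}\bigr)A^{\pi_{\omega}^{(t)}}(s,a)$ with $\sum_{a}\exp\bigl(\omega^{(t)}_{s,a}\bigr)A^{\pi_{\omega}^{(t)}}(s,a)=0$ term by term, using that $\eta^{(t+1)}\partial^{(t)}_{s,a}$ has the same sign as $A^{\pi_{\omega}^{(t)}}(s,a)$ (\Cref{lemma:softmax_pg}): when the advantage is nonnegative the exponential grows and the product increases, and when it is negative the exponential shrinks and the (negative) product again increases. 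This sign comparison is valid for any positive perturbation magnitude and survives clipping, which is exactly what the downstream uses of the lemma require. To repair your argument you would replace the quadratic ascent bound by this monotone sign observation along the ray $\{\omega_{s',\cdot}+c\,\nabla g(\omega_{s',\cdot}):c\ge 0\}$, showing $g\ge 0=g(\omega_{s',\cdot})$ at every point of the ray rather than only for $c$ below a smoothness threshold.
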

\begin{proof}[Proof of \Cref{lemma:grad update improvement}]
{By \Cref{lemma:perf_diff}, it suffices to show that
$\sum_{a\in\cA} \pi_{\theta}^{(t+1)}(a|s) A^{\pi_{\omega}^{(t)}}(s, a)
\ge 0$ for all $s\in\cS$.
Indeed,
by defining
\[
Z_{\theta}^{(t)}(s) \coloneqq
\sum_{a\in\cA}\exp(\theta^{(t)}_{s, a}), \quad Z_{\omega}^{(t)}(s)
\coloneqq \sum_{a\in\cA}\exp(\omega^{(t)}_{s, a}),\quad \partial_{s,
a}^{(t)} \coloneqq \left.\frac{\partial V^{\pi_{\theta}}(\mu)}{\partial
	\theta_{s, a}}\right|_{\theta = \omega^{(t)}},
\]
we have that for any $s\in\cS$,
\begin{align}
    \sum_{a\in\cA} \pi_{\theta}^{(t+1)}(a|s) A^{\pi_{\omega}^{(t)}}(s, a) &= \sum_{a\in\cA} \frac{\exp(\theta_{s, a}^{(t+1)})}{Z^{(t+1)}_{\theta}(s)} A^{\pi_{\omega}^{(t)}}(s, a) \nonumber \\
    &= \frac{Z^{(t)}_{\omega}(s)}{Z^{(t+1)}_{\theta}(s)}\sum_{a\in\cA} \frac{\exp(\theta_{s, a}^{(t+1)})}{Z^{(t)}_{\omega}(s)} A^{\pi_{\omega}^{(t)}}(s, a) \nonumber \\
    &= \frac{Z^{(t)}_{\omega}(s)}{Z^{(t+1)}_{\theta}(s)}\sum_{a\in\cA} \frac{\exp(\omega_{s, a}^{(t)} + \eta^{(t+1)} \partial_{s, a}^{(t)})}{Z^{(t)}_{\omega}(s)} A^{\pi_{\omega}^{(t)}}(s, a) \nonumber \\
    &\ge \frac{Z^{(t)}_{\omega}(s)}{Z^{(t+1)}_{\theta}(s)}\sum_{a\in\cA} \frac{\exp(\omega_{s, a}^{(t)})}{Z^{(t)}_{\omega}(s)} A^{\pi_{\omega}^{(t)}}(s, a) \label{eq: grad update eq1} \\
    &= \frac{Z^{(t)}_{\omega}(s)}{Z^{(t+1)}_{\theta}(s)}\sum_{a\in\cA} \pi_{\omega}^{(t)}(a|s) A^{\pi_{\omega}^{(t)}}(s, a) = 0, \label{eq: grad update eq2}
\end{align}
where \cref{eq: grad update eq1} holds by the fact that the sign of
$\eta^{(t+1)} \partial_{s, a}^{(t)}$ is the same as
$A^{\pi_{\omega}^{(t)}}(s, a)$ by \Cref{lemma:softmax_pg}, and
\cref{eq: grad update eq2} is true due to \Cref{lemma:sum_pi_A}.}
\end{proof}


\subsection{Useful Properties of Accelerated Policy Gradient}

Throughout the appendices, We use $\nabla_{s,a}^{(t)}$ as the shorthand for $\frac{\partial V^{\pi_{\theta}}(\mu)}{\partial \theta_{s,a}}\big\rvert_{\theta=\omega^{(t)}}$.
By $L$-smooth, we mean a function is differentiable with its gradient
$L$-Lipschitz continuous.
For any pair of positive integers $(j,t)$, we define
\begin{equation}
\label{eq:G}
    G(j,t)\coloneqq\begin{cases}
        1&,  \text{if } t=j {\text{ or } j=0},\\
        1+\mathbb{I} \left\{V^{\pi_{\varphi}^{(j+h)}}(\mu) \ge V^{\pi_{\theta}^{(j+h)}}(\mu) \enspace \text{ for } \enspace h=1\right \}\frac{j}{j+3}&,  \text{if } t=j+1,\\
        1+\mathbb{I} \left\{V^{\pi_{\varphi}^{(j+h)}}(\mu) \ge V^{\pi_{\theta}^{(j+h)}}(\mu) \enspace \text{ for } \enspace h=1 \right \}\frac{j}{j+3} \\
        \quad + \mathbb{I} \left \{V^{\pi_{\varphi}^{(j+h)}}(\mu) \ge V^{\pi_{\theta}^{(j+h)}}(\mu) \enspace \text{ for all } \enspace h=1,2 \right \}\frac{(j+1)j}{(j+4)(j+3)}&, \text{if } t=j+2,\\
        1+\mathbb{I}\left\{V^{\pi_{\varphi}^{(j+h)}}(\mu) \ge V^{\pi_{\theta}^{(j+h)}}(\mu) \enspace \text{ for } \enspace h=1\right \}\frac{j}{j+3}\\
        \quad +\mathbb{I}\left\{V^{\pi_{\varphi}^{(j+h)}}(\mu) \ge V^{\pi_{\theta}^{(j+h)}}(\mu) \enspace \text{ for all } \enspace h=1,2\right \}\frac{(j+1)j}{(j+4)(j+3)} \\
        \quad+\mathbb{I}\left\{V^{\pi_{\varphi}^{(j+h)}}(\mu) \ge V^{\pi_{\theta}^{(j+h)}}(\mu) \enspace \text{ for all } \enspace h=1,2,3\right \}\frac{(j+2)(j+1)j}{(j+5)(j+4)(j+3)}&, \text{if } t=j+3,\\
        1+\mathbb{I}\left\{V^{\pi_{\varphi}^{(j+h)}}(\mu) \ge V^{\pi_{\theta}^{(j+h)}}(\mu) \enspace \text{ for } \enspace h=1\right \}\frac{j}{j+3}\\
        \quad+\mathbb{I}\left\{V^{\pi_{\varphi}^{(j+h)}}(\mu) \ge V^{\pi_{\theta}^{(j+h)}}(\mu) \enspace \text{ for all } \enspace h=1,2\right \}\frac{(j+1)j}{(j+4)(j+3)}\\
        \quad+\mathbb{I}\left\{V^{\pi_{\varphi}^{(j+h)}}(\mu) \ge V^{\pi_{\theta}^{(j+h)}}(\mu) \enspace \text{ for all } \enspace h=1,2,3 \right \}\frac{(j+2)(j+1)j}{(j+5)(j+4)(j+3)}\\
        \quad+\sum_{k=4}^{t-j} \mathbb{I}\left\{V^{\pi_{\varphi}^{(j+h)}}(\mu) \ge V^{\pi_{\theta}^{(j+h)}}(\mu) \enspace \text{ for all } \enspace h=1,2,\dots,k\right \}\frac{(j+2)(j+1)j}{(j+k+2)(j+k+1)(j+k)}&, \text{if } t\geq j+4\\
        0&, \text{otherwise}.
    \end{cases}
\end{equation}


\begin{lemma}
    \label{lemma:theta as sum of gradients}
    Under APG, we could express the policy parameter as follows:

    \textbf{a)} For $t\in \{1,2,3,4\}$, we have
    \begin{align}
  \theta_{s,a}^{(1)}=&\eta^{(1)}\nabla_{s,a}^{(0)}+\theta_{s,a}^{(0)},\label{eq:theta as sum of gradients 1}\\
  \theta_{s,a}^{(2)}=&\eta^{(2)}\nabla_{s,a}^{(1)}+\eta^{(1)}\nabla_{s,a}^{(0)}+\theta_{s,a}^{(0)}\label{eq:theta as sum of gradients 2}\\
\theta_{s,a}^{(3)}=&\eta^{(3)}\nabla_{s,a}^{(2)}+\eta^{(2)}(1+\mathbb{I}\left\{V^{\pi_{\varphi}^{(2)}}(\mu) \ge V^{\pi_{\theta}^{(2)}}(\mu)\right \} \cdot \frac{1}{4})\nabla_{s,a}^{(1)}+\eta^{(1)}\nabla_{s,a}^{(0)}+\theta_{s,a}^{(0)}\label{eq:theta as sum of gradients 3}\\  
\theta_{s,a}^{(4)}=&\eta^{(4)}\nabla_{s,a}^{(3)} \nonumber \\
+&\eta^{(3)}\Big(1+\mathbb{I}\left\{V^{\pi_{\varphi}^{(3)}}(\mu) \ge V^{\pi_{\theta}^{(3)}}(\mu)\right \} \cdot\frac{2}{5} \Big) \nabla_{s,a}^{(2)} \nonumber \\
+&\eta^{(2)} \Big( 1+\mathbb{I}\left\{V^{\pi_{\varphi}^{(2)}}(\mu) \ge V^{\pi_{\theta}^{(2)}}(\mu)\right \} 
\Big( \frac{1}{4} + \mathbb{I}\left\{V^{\pi_{\varphi}^{(3)}}(\mu) \ge V^{\pi_{\theta}^{(3)}}(\mu)\right \} \cdot\frac{2\cdot 1}{5\cdot 4} \Big) \Big) \nabla_{s,a}^{(1)} \nonumber \\
+&\eta^{(1)}\nabla_{s,a}^{(0)} \nonumber \\
+&\theta_{s,a}^{(0)}\label{eq:theta as sum of gradients 4}
    \end{align}
    
    \textbf{b)} For $t\geq 4$, we have
    \begin{align}
  \theta_{s,a}^{(t+1)}
  =&\eta^{(t+1)}\nabla_{s,a}^{(t)} \nonumber \\
  +& \eta^{(t)}\Big(1+\mathbb{I}\left\{V^{\pi_{\varphi}^{(t)}}(\mu) \ge V^{\pi_{\theta}^{(t)}}(\mu)\right \} \cdot\frac{t-1}{t+2}\Big)\nabla^{(t-1)}_{s,a} \nonumber \\
  +& \eta^{(t-1)}\Big(1+\mathbb{I}\left\{V^{\pi_{\varphi}^{(t-1)}}(\mu) \ge V^{\pi_{\theta}^{(t-1)}}(\mu)\right \} \cdot \nonumber \\
  &\quad\quad\quad\enspace\Big( \frac{t-2}{t+1}+\mathbb{I}\left\{V^{\pi_{\varphi}^{(t)}}(\mu) \ge V^{\pi_{\theta}^{(t)}}(\mu)\right \} \cdot\frac{(t-1)(t-2)}{(t+2)(t+1)}\Big)\Big)\nabla^{(t-2)}_{s,a} \nonumber \\
  +&\sum_{j=1}^{t-3}\eta^{(j+1)}\Big(1+\mathbb{I}\left\{V^{\pi_{\varphi}^{(j+1)}}(\mu) \ge V^{\pi_{\theta}^{(j+1)}}(\mu)\right \} \cdot \nonumber \\
  &\quad\quad\enspace \Big( \frac{j}{j+3} +\mathbb{I}\left\{V^{\pi_{\varphi}^{(j+2)}}(\mu) \ge V^{\pi_{\theta}^{(j+2)}}(\mu)\right \} \cdot \nonumber \\
  &\quad\quad\enspace \Big( \frac{(j+1)j}{(j+4)(j+3)} +\mathbb{I}\left\{V^{\pi_{\varphi}^{(j+3)}}(\mu) \ge V^{\pi_{\theta}^{(j+3)}}(\mu)\right \} \cdot \nonumber \\
  & \quad\quad\enspace \Big( \frac{(j+2)(j+1)j}{(j+5)(j+4)(j+3)} + \sum_{k=4}^{t-j} \mathbb{I}\left\{V^{\pi_{\varphi}^{(j+k)}}(\mu) \ge V^{\pi_{\theta}^{(j+k)}}(\mu)\right \} \cdot \nonumber \\
  & \quad\quad\enspace\frac{(j+2)(j+1)j}{(j+k+2)(j+k+1)(j+k)} \Big)\Big)\Big)\Big)\nabla_{s,a}^{(j)} \nonumber \\
+&\eta^{(1)}\nabla_{s,a}^{(0)} \nonumber \\
+&\theta_{s,a}^{(0)}. \label{eq:theta as sum of gradients 5}
    \end{align} 

{Therefore, in summary, we have
\begin{equation}
	\theta_{s,a}^{(t+1)} ={\sum_{j=0}^{t}G(j,t)\cdot
	\eta^{(j+1)}\nabla_{s,a}^{(j)}+\theta_{s,a}^{(0)}}, \quad \forall
	t \geq 0.
	\label{eq:theta as sum of gradients 7}
\end{equation}}
\end{lemma}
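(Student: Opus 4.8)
The plan is to prove the ``summary'' identity \cref{eq:theta as sum of gradients 7} directly by unrolling the APG recursion; the four identities of part (a) and the one of part (b) then follow by specializing $t$ and expanding $G(j,t)$ case by case. First I would collapse the three updates of \Cref{algorithm:APG} into a single recursion. For $t\ge 1$ put $b^{(t)}\coloneqq\mathbb{I}\{V^{\pi_{\varphi}^{(t)}}(\mu)\ge V^{\pi_{\theta}^{(t)}}(\mu)\}\cdot\frac{t-1}{t+2}$, so that \cref{algorithm:eq2,algorithm:eq3} read $\omega^{(t)}=\theta^{(t)}+b^{(t)}(\theta^{(t)}-\theta^{(t-1)})$, where $b^{(1)}=0$ and $\omega^{(0)}=\theta^{(0)}$ by initialization. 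Composing this with the coordinatewise form $\theta_{s,a}^{(t+1)}=\omega_{s,a}^{(t)}+\eta^{(t+1)}\nabla_{s,a}^{(t)}$ of \cref{algorithm:eq1} gives, for the fixed pair $(s,a)$ and all $t\ge 1$,
\[
\theta_{s,a}^{(t+1)}=\theta_{s,a}^{(t)}+b^{(t)}\bigl(\theta_{s,a}^{(t)}-\theta_{s,a}^{(t-1)}\bigr)+\eta^{(t+1)}\nabla_{s,a}^{(t)}.
\]
Writing $\delta^{(\tau)}\coloneqq\theta_{s,a}^{(\tau)}-\theta_{s,a}^{(\tau-1)}$, this is the scalar linear recursion $\delta^{(\tau+1)}=b^{(\tau)}\delta^{(\tau)}+\eta^{(\tau+1)}\nabla_{s,a}^{(\tau)}$ with $\delta^{(1)}=\eta^{(1)}\nabla_{s,a}^{(0)}$ (using $\omega^{(0)}=\theta^{(0)}$), and $\theta_{s,a}^{(t+1)}=\theta_{s,a}^{(0)}+\sum_{\tau=1}^{t+1}\delta^{(\tau)}$.

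Second I would solve this recursion: a one-line induction on $\tau$ yields $\delta^{(\tau)}=\sum_{j=0}^{\tau-1}\bigl(\prod_{i=j+1}^{\tau-1}b^{(i)}\bigr)\eta^{(j+1)}\nabla_{s,a}^{(j)}$ with the empty product equal to $1$. Substituting this and interchanging the two finite sums, the coefficient of $\eta^{(j+1)}\nabla_{s,a}^{(j)}$ in $\theta_{s,a}^{(t+1)}$ becomes $\sum_{\tau=j+1}^{t+1}\prod_{i=j+1}^{\tau-1}b^{(i)}=1+\sum_{k=1}^{t-j}\prod_{i=j+1}^{j+k}b^{(i)}$. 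A product of the Boolean factors equals the indicator of their conjunction, so $\prod_{i=j+1}^{j+k}b^{(i)}=\mathbb{I}\{V^{\pi_{\varphi}^{(j+h)}}(\mu)\ge V^{\pi_{\theta}^{(j+h)}}(\mu)\text{ for all }h=1,\dots,k\}\cdot\prod_{i=j+1}^{j+k}\tfrac{i-1}{i+2}$, and the deterministic part telescopes to $\prod_{i=j+1}^{j+k}\tfrac{i-1}{i+2}=\tfrac{j(j+1)(j+2)}{(j+k)(j+k+1)(j+k+2)}$, which equals $\tfrac{j}{j+3}$, $\tfrac{(j+1)j}{(j+4)(j+3)}$, $\tfrac{(j+2)(j+1)j}{(j+5)(j+4)(j+3)}$ for $k=1,2,3$. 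Matching this against the definition \cref{eq:G} term by term shows the coefficient is exactly $G(j,t)$, including the boundary cases: the empty sum over $k$ gives $G(j,j)=1$, and for $j=0$ every product $\prod_{i=1}^{k}b^{(i)}$ carries the factor $b^{(1)}=0$, giving $G(0,t)=1$. This is \cref{eq:theta as sum of gradients 7}; the four identities of part (a) then come from plugging in $t\in\{1,2,3,4\}$ (using $b^{(1)}=0$) and the identity of part (b) from plugging in $t\ge 4$.

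I do not expect a genuine obstacle here: the whole lemma is an unrolled first-order linear recursion plus one telescoping product. The only real work is careful bookkeeping --- checking that the flat ``$1+\sum_k$'' coefficient obtained above lines up with the way $G(j,t)$ is written case by case (in particular that the $k\ge 3$ closed form $\tfrac{j(j+1)(j+2)}{(j+k)(j+k+1)(j+k+2)}$ also reproduces the separately listed $k=1,2$ terms), that products of indicators collapse to indicators of intersections of the corresponding events, and that the boundary indices $j=0$ and $\tau=1$, the empty-product convention, and the identity $b^{(1)}=0$ are all handled uniformly.
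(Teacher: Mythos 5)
Your proposal is correct. Every step checks out: the collapsed recursion $\theta_{s,a}^{(t+1)}=\theta_{s,a}^{(t)}+b^{(t)}(\theta_{s,a}^{(t)}-\theta_{s,a}^{(t-1)})+\eta^{(t+1)}\nabla_{s,a}^{(t)}$ is exactly what \cref{algorithm:eq1,algorithm:eq2,algorithm:eq3} give, the closed-form solution of the increment recursion is right, and the telescoping product $\prod_{i=j+1}^{j+k}\frac{i-1}{i+2}=\frac{j(j+1)(j+2)}{(j+k)(j+k+1)(j+k+2)}$ reproduces the $k=1,2,3$ coefficients listed separately in \cref{eq:G} as well as the general term, with $b^{(1)}=0$ correctly forcing $G(0,t)=1$.

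The route is mildly but genuinely different from the paper's. The paper verifies part (a) by direct computation and proves part (b) by induction on $t$, substituting the assumed expressions for $\theta_{s,a}^{(t)}$ and $\theta_{s,a}^{(t-1)}$ back into the update \cref{eq:theta as sum of gradients 8}; the details of matching coefficients against the case-split definition of $G(j,t)$ are left to the reader. You instead prove the summary identity \cref{eq:theta as sum of gradients 7} first, by solving the first-order linear recursion for the increments $\delta^{(\tau)}$ in closed form and interchanging the two finite sums, so that the coefficient of $\eta^{(j+1)}\nabla_{s,a}^{(j)}$ emerges as the single uniform expression $1+\sum_{k=1}^{t-j}\mathbb{I}\{\cdot\}\frac{j(j+1)(j+2)}{(j+k)(j+k+1)(j+k+2)}$; parts (a) and (b) then follow by specialization. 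What your approach buys is that the nested, case-by-case structure of $G(j,t)$ is explained rather than merely verified --- the boundary conventions ($j=0$, $t=j$, the empty product) all fall out of one formula --- at the cost of one sum interchange and one telescoping computation that the paper's induction avoids. Both are sound; yours is the more transparent write-up of the same bookkeeping.
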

\begin{proof}[Proof of Lemma \ref{lemma:theta as sum of gradients}]
    Regarding \text{a)}, one could verify \cref{eq:theta as sum of gradients 1}-\cref{eq:theta as sum of gradients 4} by directly using the APG update in Algorithm \ref{algorithm:APG}.

	For \text{b)}, we prove this by induction. Specifically, suppose \cref{eq:theta as sum of gradients 5} holds for all iterations up to $t$. By the APG update, we know
    \begin{equation}
        \theta_{s,a}^{(t+1)}= \theta_{s,a}^{(t)}+\eta^{(t+1)}\nabla_{s,a}^{(t)}+\mathbb{I}\left\{V^{\pi_{\varphi}^{(t)}}(\mu) \ge V^{\pi_{\theta}^{(t)}}(\mu)\right \} \frac{t-1}{t+2}(\theta_{s,a}^{(t)}-\theta_{s,a}^{(t-1)}).\label{eq:theta as sum of gradients 8}
    \end{equation}
    By plugging into \cref{eq:theta as sum of gradients 8} the
	expressions of $\theta_{s,a}^{(t)}$ and $\theta_{s,a}^{(t-1)}$
	from \cref{eq:theta as sum of gradients 5},
	we could see that \cref{eq:theta as sum of
	gradients 5} continues to hold at the $(t+1)$-th iteration.
\end{proof}

\begin{lemma}
\label{lemma:sum conservation}
Under APG for softmax parameterization, for any iteration $k$ and any state-action pair $(s,a)$, we have 
\begin{equation*}
\sum_{a\in \cA}\theta_{s,a}^{(k)}=\sum_{a\in \cA}\theta_{s,a}^{(0)}.    
\end{equation*}
\end{lemma}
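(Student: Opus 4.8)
The plan is to prove the identity by induction on the iteration counter $k$, the crux being that at any fixed state $s$ the action-sum of the softmax policy-gradient components vanishes. Indeed, combining \Cref{lemma:softmax_pg} with \Cref{lemma:sum_pi_A} yields, for any parameter $\omega$ and any $s\in\cS$,
\begin{equation*}
\sum_{a\in\cA}\left.\frac{\partial V^{\pi_\theta}(\mu)}{\partial\theta_{s,a}}\right|_{\theta=\omega}
= \frac{d^{\pi_\omega}_\mu(s)}{1-\gamma}\sum_{a\in\cA}\pi_\omega(a|s)\,A^{\pi_\omega}(s,a)=0,
\end{equation*}
so the gradient step \cref{algorithm:eq1} leaves $\sum_{a}\theta_{s,a}$ unchanged. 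It then only remains to check that the momentum step \cref{algorithm:eq2} and the restart selection \cref{algorithm:eq3} also preserve this quantity, which follows because $\theta\mapsto\sum_a\theta_{s,a}$ is linear and \cref{algorithm:eq2} is an affine combination of two iterates whose coefficients sum to one.

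For the induction, the base case $k=0$ is immediate, and $\omega^{(0)}=\theta^{(0)}$ gives $\sum_a\omega^{(0)}_{s,a}=\sum_a\theta^{(0)}_{s,a}$. Assume $\sum_a\theta^{(j)}_{s,a}=\sum_a\theta^{(0)}_{s,a}$ and $\sum_a\omega^{(j)}_{s,a}=\sum_a\theta^{(0)}_{s,a}$ for all $j\le k$. By \cref{algorithm:eq1} and the displayed identity,
\begin{equation*}
\sum_{a\in\cA}\theta^{(k+1)}_{s,a}=\sum_{a\in\cA}\omega^{(k)}_{s,a}+\eta^{(k+1)}\sum_{a\in\cA}\nabla_{s,a}^{(k)}=\sum_{a\in\cA}\omega^{(k)}_{s,a}=\sum_{a\in\cA}\theta^{(0)}_{s,a}.
\end{equation*}
For $\omega^{(k+1)}$, observe that $\varphi^{(k+1)}=\theta^{(k+1)}+\tfrac{k}{k+3}\bigl(\theta^{(k+1)}-\theta^{(k)}\bigr)$ is an affine combination of $\theta^{(k+1)}$ and $\theta^{(k)}$ with coefficients $1+\tfrac{k}{k+3}$ and $-\tfrac{k}{k+3}$ summing to one; since $\sum_a\theta^{(k+1)}_{s,a}=\sum_a\theta^{(k)}_{s,a}=\sum_a\theta^{(0)}_{s,a}$, we obtain $\sum_a\varphi^{(k+1)}_{s,a}=\sum_a\theta^{(0)}_{s,a}$. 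As $\omega^{(k+1)}$ equals either $\varphi^{(k+1)}$ or $\theta^{(k+1)}$ by \cref{algorithm:eq3}, the invariant carries over, completing the induction.

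Alternatively, one may bypass the induction entirely by invoking the closed form \cref{eq:theta as sum of gradients 7} of \Cref{lemma:theta as sum of gradients}: summing that identity over $a$ and applying the vanishing-sum fact to each term $\sum_a\nabla_{s,a}^{(j)}=0$ annihilates every gradient contribution, leaving $\sum_a\theta^{(t+1)}_{s,a}=\sum_a\theta^{(0)}_{s,a}$ for all $t\ge 0$. There is no substantial obstacle in this proof; the only point demanding (minor) care is verifying that the momentum update and the restart branch do not disturb the action-sum, which is handled above by the affine-combination observation.
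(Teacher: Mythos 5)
Your proof is correct and follows essentially the same route as the paper's: an induction in which the gradient contribution vanishes by combining \Cref{lemma:softmax_pg} with \Cref{lemma:sum_pi_A}, and the momentum/restart steps preserve the action-sum because they are affine combinations of earlier iterates with coefficients summing to one. Your closing remark that one could instead sum the closed form \cref{eq:theta as sum of gradients 7} over $a\in\cA$ is a valid (and slightly slicker) shortcut, but the substance matches the paper's argument.
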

\begin{proof}[Proof of Lemma \ref{lemma:sum conservation}]
    We prove this by induction through the following two claims.

\textbf{Claim (a)}. $\sum_{a\in \cA}\theta^{(1)}_{s,a}=\sum_{a\in \cA}\theta^{(0)}_{s,a}$ and $\sum_{a\in \cA}\theta^{(2)}_{s,a}=\sum_{a\in \cA}\theta^{(0)}_{s,a}$.

Note that under APG, we have
\begin{equation}
    \theta^{(1)}_{s,a}=\omega^{(0)}_{s,a}+\eta^{(1)} \frac{\partial V^{\pi_\theta}(\mu)}{\partial \theta_{s,a}}\Big\rvert_{\theta=\omega^{(0)}}=\theta^{(0)}_{s,a}+\eta^{(1)} \cdot\frac{1}{1-\gamma}d^{\pi_\theta^{(0)}}(s)\pi_{\theta}^{(0)}(a\rvert s)A^{\pi_\theta^{(0)}}(s,a),\label{eq:theta conservation eq1}
\end{equation}
where the second equality holds by the initial condition of APG (i.e., $\omega^{(0)}=\theta^{(0)}$) as well as the softmax policy gradient in Lemma \ref{lemma:softmax_pg}.
By summing \cref{eq:theta conservation eq1} over all the actions, we
have $\sum_{a\in \cA}\theta^{(1)}_{s,a}=\sum_{a\in
\cA}\theta^{(0)}_{s,a}$ from \Cref{lemma:sum_pi_A}.
Similarly, we have
\begin{align}
    \theta^{(2)}_{s,a}&=\omega^{(1)}_{s,a}+\eta^{(2)}\cdot \frac{\partial V^{\pi_\theta}(\mu)}{\partial \theta_{s,a}}\Big\rvert_{\theta=\omega^{(1)}} \nonumber \\
    &=\theta^{(1)}_{s,a}+\mathbb{I} \left\{V^{\pi_{\varphi}^{(1)}}(\mu) \ge V^{\pi_{\theta}^{(1)}}(\mu)\right \} \cdot \frac{0}{3}\cdot(\theta^{(1)}_{s,a}-\theta^{(0)}_{s,a})
    +\eta^{(2)} \cdot\frac{1}{1-\gamma}d^{\pi_\omega^{(1)}}(s)\pi_{\omega}^{(1)}(a\rvert s)A^{\pi_\omega^{(1)}}(s,a).\label{eq:theta conservation eq3}
\end{align}
By taking the sum of \cref{eq:theta conservation eq3} over all the
actions, we have $\sum_{a\in \cA}\theta^{(2)}_{s,a}=\sum_{a\in
\cA}\theta^{(0)}_{s,a}$ by $\sum_{a\in
\cA}\theta^{(1)}_{s,a}=\sum_{a\in \cA}\theta^{(0)}_{s,a}$ and
\Cref{lemma:sum_pi_A}.

\textbf{Claim (b).} If $\sum_{a\in \cA}\theta^{(k)}_{s,a}=\sum_{a\in \cA}\theta^{(0)}_{s,a}$ for all $k\in \{1,\cdots, M\}$, then $\sum_{a\in \cA}\theta^{(M+1)}_{s,a}=\sum_{a\in \cA}\theta^{(0)}_{s,a}$.

We use an argument similar to \cref{eq:theta conservation eq3} as follows.
\begin{align}
    \theta^{(M+1)}_{s,a}&=\omega^{(M)}_{s,a}+\eta^{(M+1)}\cdot \frac{\partial V^{\pi_\theta}(\mu)}{\partial \theta_{s,a}}\Big\rvert_{\theta=\omega^{(M)}} \nonumber \\
	&\begin{aligned}
    =&\theta^{(M)}_{s,a}+\mathbb{I} \left\{V^{\pi_{\varphi}^{(M)}}(\mu) \ge V^{\pi_{\theta}^{(M)}}(\mu)\right \} \cdot\frac{M-1}{M+2}\cdot(\theta^{(M)}_{s,a}-\theta^{(M-1)}_{s,a}) \\
    &\quad +\eta^{(M+1)} \cdot\frac{1}{1-\gamma}d^{\pi_\omega^{(M)}}(s)\pi_{\omega}^{(M)}(a\rvert s)A^{\pi_\omega^{(M)}}(s,a).
\end{aligned}
	\label{eq:theta conservation eq5}
\end{align}
By taking the sum of \cref{eq:theta conservation eq5} over $a \in
\mathcal{A}$, we see that $\sum_{a\in
\cA}\theta^{(M+1)}_{s,a}=\sum_{a\in \cA}\theta^{(0)}_{s,a}$.
\end{proof}

\subsection{\texorpdfstring{$O(1 / t^{2})$}{} Convergence Rate Under Nearly Concave Objectives}
\label{app:sup:stationary}


We present several theoretical results adapted from
\citep{ghadimi2016accelerated} to our nearly concave regime in this
subsection.

\begin{theorem}[\textbf{Theorem 1(b) in \citep{ghadimi2016accelerated} with a slight modification}]
\label{theorem:ghadimi_thm1}
Let $\left \{ \theta^{(t)}_{md}, \theta^{(t)}_{ag} \right \}_{t \ge
1}$ be computed by \Cref{algorithm:Ghadimi_AG} for softmax
parameterization and $\Gamma_t$ be defined as:
\begin{align}
    \Gamma^{(t)} \coloneqq  \left\{\begin{matrix}
    1,  & \text{if } \ t = 1, \\
    (1-\alpha^{(t)})\Gamma^{(t-1)},  & \text{if } \ t \ge 2.
    \end{matrix}\right.
    \label{eq:ghadimi_thm1_0}
\end{align}
Assume that the objective function $V^{\pi_\theta}(\mu)$ is $C$-nearly
concave at $\theta_{md}^{(t)}$ along the direction $\theta_{ag}^{(t)} - \theta_{md}^{(t)}$
with a constant $C \in (1, \frac{3}{2}]$ for every $t \ge 1$.
Suppose $\theta^{(t)}_{ag} - \theta_{md}^{(t)} \in \cU$ for every $t \ge 1$.
If ${\alpha^{(t)}}, {\beta^{(t)}}, {\lambda^{(t)}}$ are chosen such that
\begin{align}
    0 < \alpha^{(t)} \lambda^{(t)} \le \beta^{(t)} <
	\frac{2-C}{L},\quad
    \frac{\alpha^{(1)}}{\lambda^{(1)}\Gamma^{(1)}} \ge \frac{\alpha^{(2)}}{\lambda^{(2)}\Gamma^{(2)}} \ge \cdots,
    \label{eq:ghadimi_thm1_1}
\end{align}
where $L$ is the Lipschitz constant of the gradient of the objective.
Then given any $t \ge 1$, and for any $\theta^{**} = \theta^{(t)}_{md} + \bm{d}$ with some $\bm{d}\in\cU$, we have
\begin{align}
    V^{\pi_{\theta^{**}}}(\mu) - V^{\pi_{\theta_{ag}}^{(t)}}(\mu) \le \Gamma^{(t)}  \frac{\alpha^{(1)}\left \| \theta^{**} - \theta_{ag}^{(0)} + \bar{m}_{\text{init}}  \right \|^2}{\lambda^{(1)}} \label{eq:ghadimi_thm1_3} 
\end{align}
where $\bar{m}_{\text{init}},\theta_{ag}^{(0)} \in \mathbb{R}^n$ are input variables of \Cref{algorithm:Ghadimi_AG}.
\end{theorem}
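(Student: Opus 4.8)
The plan is to follow the proof of Theorem~1(b) in \citep{ghadimi2016accelerated}, replacing every appeal to convexity by the $C$-near concavity hypothesis and keeping all linearization inequalities inside the feasible cone $\cU$, while carrying the constant $C$ explicitly so that it reappears in the step-size condition $\beta^{(t)}<(2-C)/L$. Write $g^{(t)}\coloneqq \nabla_\theta V^{\pi_\theta}(\mu)\big|_{\theta=\theta_{md}^{(t)}}$ and $\delta^{(t)}\coloneqq V^{\pi_{\theta^{**}}}(\mu)-V^{\pi_{\theta_{ag}}^{(t)}}(\mu)$. The heart of the argument is to establish the one-step recursion
\[
	\delta^{(t)}-(1-\alpha^{(t)})\delta^{(t-1)}\;\le\;\frac{C\alpha^{(t)}}{2\lambda^{(t)}}\Big(\big\|\theta^{**}-\hat\theta^{(t-1)}\big\|^2-\big\|\theta^{**}-\hat\theta^{(t)}\big\|^2\Big).
\]
Given this, I would divide by $\Gamma^{(t)}$, use $(1-\alpha^{(t)})/\Gamma^{(t)}=1/\Gamma^{(t-1)}$ from \cref{eq:ghadimi_thm1_0} to turn the left side into a genuine telescope, sum over $k=1,\dots,t$, and apply Abel (summation-by-parts) to the resulting near-telescoping right side; since $\|\theta^{**}-\hat\theta^{(k)}\|^2\ge 0$ and $\alpha^{(k)}/(\lambda^{(k)}\Gamma^{(k)})$ is nonincreasing by \cref{eq:ghadimi_thm1_1}, the right side collapses to $\tfrac{C\alpha^{(1)}}{2\lambda^{(1)}}\|\theta^{**}-\hat\theta^{(0)}\|^2$. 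Finally, $\hat\theta^{(0)}=\theta_{ag}^{(0)}-\bar{m}_{\text{init}}$ by construction of \Cref{algorithm:Ghadimi_AG}, $\Gamma^{(1)}=1$, and $C/2\le 3/4<1$, which together give exactly \cref{eq:ghadimi_thm1_3}.

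To obtain the one-step recursion, I would first apply the $L$-smoothness of $\theta\mapsto V^{\pi_\theta}(\mu)$ (\Cref{lemma:mdp_smoothness}, $L=8/(1-\gamma)^3$) to the ascent step $\theta_{ag}^{(t)}=\theta_{md}^{(t)}+\beta^{(t)}g^{(t)}$, giving $V^{\pi_{\theta_{ag}}^{(t)}}(\mu)\ge V^{\pi_{\theta_{md}}^{(t)}}(\mu)+\big(\beta^{(t)}-\tfrac{L(\beta^{(t)})^2}{2}\big)\|g^{(t)}\|^2$. Next I would lower-bound $V^{\pi_{\theta_{md}}^{(t)}}(\mu)=(1-\alpha^{(t)})V^{\pi_{\theta_{md}}^{(t)}}(\mu)+\alpha^{(t)}V^{\pi_{\theta_{md}}^{(t)}}(\mu)$ by treating the first summand with the value-monotonicity $V^{\pi_{\theta_{md}}^{(t)}}(\mu)\ge V^{\pi_{\theta_{ag}}^{(t-1)}}(\mu)$ and the second with the $C$-near concavity inequality at $\theta_{md}^{(t)}$ evaluated at $\theta^{**}$ (legitimate because $\theta^{**}-\theta_{md}^{(t)}\in\cU$, cf.\ \Cref{def: feasible update domain,def:nearly_concavity_general}). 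Using $\theta_{md}^{(t)}=(1-\alpha^{(t)})\theta_{ag}^{(t-1)}+\alpha^{(t)}\hat\theta^{(t-1)}$ to trade $\langle g^{(t)},\theta^{**}-\theta_{md}^{(t)}\rangle$ for $\langle g^{(t)},\theta^{**}-\hat\theta^{(t-1)}\rangle$, then $\hat\theta^{(t)}=\hat\theta^{(t-1)}+\lambda^{(t)}g^{(t)}$ together with the identity $\langle u-v,w-v\rangle=\tfrac12(\|u-v\|^2+\|w-v\|^2-\|u-w\|^2)$, converts this inner product into the difference of squared distances plus a $\|g^{(t)}\|^2$ term. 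Collecting everything, the recursion holds as soon as the leftover coefficient $\beta^{(t)}-\tfrac{L(\beta^{(t)})^2}{2}-\tfrac{C\alpha^{(t)}\lambda^{(t)}}{2}$ is nonnegative, and by $\alpha^{(t)}\lambda^{(t)}\le\beta^{(t)}$ this is at least $\beta^{(t)}\big(1-\tfrac{L\beta^{(t)}}{2}-\tfrac{C}{2}\big)\ge 0$ precisely under $\beta^{(t)}<(2-C)/L$ with $C\in(1,\tfrac32]$ — this is where the hypotheses on $\{\alpha^{(t)},\beta^{(t)},\lambda^{(t)}\}$ are consumed.

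The step I expect to be the main obstacle is keeping every invocation of $C$-near concavity along directions inside $\cU$, which is exactly why the theorem assumes $\theta_{ag}^{(t)}-\theta_{md}^{(t)}\in\cU$ and $\theta^{**}=\theta_{md}^{(t)}+\bm d$ with $\bm d\in\cU$. In the textbook convex argument one would also bound $V^{\pi_{\theta_{ag}}^{(t-1)}}(\mu)$ by concavity at $\theta_{md}^{(t)}$, but here the pertinent direction $\theta_{ag}^{(t-1)}-\theta_{md}^{(t)}$ is the negative of the momentum and lies in $-\overline{\cU}$, so near concavity is unavailable along it; instead I would supply that estimate through the value-monotonicity $V^{\pi_{\theta_{md}}^{(t)}}(\mu)\ge V^{\pi_{\theta_{ag}}^{(t-1)}}(\mu)$, which holds because the restart is inactive on the relevant range (so $\omega^{(\cdot)}=\varphi^{(\cdot)}$, the hypothesis of \Cref{lemma:equivalent_algorithm}) and \Cref{lemma:grad update improvement} makes the gradient step ascending, and then carefully account for the residual cross term that this substitution introduces. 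Controlling that cross term is the delicate part: expressing $\hat\theta^{(t-1)}-\theta_{ag}^{(t-1)}$ as a positive multiple of $\theta_{ag}^{(t-1)}-\theta_{ag}^{(t-2)}$, one checks using \Cref{lemma:sum_pi_A,lemma:sum conservation} that at each state both $g^{(t)}$ and $\theta_{ag}^{(t-1)}-\theta_{ag}^{(t-2)}$ sum to zero over $\cA$, and then \Cref{lemma: enter local concavity}(iii)--(iv) — which say that at $\omega^{(t)}=\theta_{md}^{(t)}$ the gradient has strictly positive optimal-action coordinate and strictly negative others while the momentum's optimal-action coordinate dominates — pins down the sign of the relevant inner product so that the term can be absorbed. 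A secondary, conceptual difference from \citep{ghadimi2016accelerated} is that no exact maximizer exists under softmax parameterization, so the estimate must be carried for an arbitrary surrogate $\theta^{**}\in\theta_{md}^{(t)}+\cU$ with the distance term left explicit; it is instantiated at a bounded-norm, near-optimal parameter only later, in the proof of \Cref{theorem: MDP convergence rate}.
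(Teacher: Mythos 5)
Your outer scaffolding — the smoothness estimate for the ascent step, the three‑point identity converting $\langle g^{(t)},\theta^{**}-\hat\theta^{(t-1)}\rangle$ into differences of squared distances, division by $\Gamma^{(t)}$, telescoping under the monotone ratio condition in \cref{eq:ghadimi_thm1_1}, and the final absorption of $C/2\le 3/4$ — coincides with the paper's proof. The divergence, and the gap, is in how you handle the term $(1-\alpha^{(t)})\bigl[V^{\pi_{\theta_{ag}}^{(t-1)}}(\mu)-V^{\pi_{\theta_{md}}^{(t)}}(\mu)\bigr]$. The paper (see \cref{eq:ghadimi_thm1_45}) applies the $C$-near-concavity linearization to \emph{both} differences, so that the two inner products recombine exactly into $C\alpha^{(t)}\langle g^{(t)},\theta^{**}-\hat\theta^{(t-1)}\rangle$ via $\alpha^{(t)}(\theta^{**}-\theta_{md}^{(t)})+(1-\alpha^{(t)})(\theta_{ag}^{(t-1)}-\theta_{md}^{(t)})=\alpha^{(t)}(\theta^{**}-\hat\theta^{(t-1)})$, with nothing left over. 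You instead bound that difference by $0$ via value monotonicity and keep only $C\alpha^{(t)}\langle g^{(t)},\theta^{**}-\theta_{md}^{(t)}\rangle$.

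The problem is the residual you then have to pay: $\langle g^{(t)},\theta^{**}-\theta_{md}^{(t)}\rangle=\langle g^{(t)},\theta^{**}-\hat\theta^{(t-1)}\rangle+(1-\alpha^{(t)})\langle g^{(t)},\hat\theta^{(t-1)}-\theta_{ag}^{(t-1)}\rangle$, where $\hat\theta^{(t-1)}-\theta_{ag}^{(t-1)}=\tfrac{1-\alpha^{(t-1)}}{\alpha^{(t-1)}}(\theta_{ag}^{(t-1)}-\theta_{ag}^{(t-2)})$ is a \emph{positive} multiple of the momentum. In precisely the regime where the theorem is invoked, \Cref{lemma: enter local concavity}(iii)--(iv) force $g^{(t)}_{s,a^*(s)}>0>g^{(t)}_{s,a}$ and the momentum to have its dominant (positive, since it sums to zero per state by \Cref{lemma:sum conservation}) coordinate on $a^*(s)$; hence $\langle g^{(t)},\hat\theta^{(t-1)}-\theta_{ag}^{(t-1)}\rangle\ge 0$ rather than $\le 0$. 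A nonnegative residual cannot be dropped from an upper bound, and "absorbing" it into the $-\beta^{(t)}(2-C-L\beta^{(t)})\lVert g^{(t)}\rVert^2$ slack would require bounding the momentum magnitude by the \emph{current} gradient norm, which you do not establish and which the paper never needs. Your underlying worry is legitimate — $\theta_{ag}^{(t-1)}-\theta_{md}^{(t)}$ is the negated momentum and so lies in $-\cU$, a direction not literally covered by the near-concavity hypotheses as stated — but the monotonicity detour does not repair this; to close the argument you must either apply the linearization along $\theta_{ag}^{(t-1)}-\theta_{md}^{(t)}$ as the paper does (justifying near concavity along that direction as well), or find a genuinely different way to cancel the momentum cross term.
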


\begin{proof}[Proof of \Cref{theorem:ghadimi_thm1}]  
First, by \Cref{lemma:mdp_smoothness} and \cref{eq:Ghadimi_AG_3}, we have
\begin{align}
    -V^{\pi_{\theta_{ag}}^{(t)}}(\mu)
    &\le -V^{\pi_{\theta_{md}}^{(t)}}(\mu) - \left \langle \nabla_{\theta}{V^{\pi_{\theta}}(\mu)} \Big\rvert_{\theta = \theta_{md}^{(t)}}, \theta_{ag}^{(t)} - \theta_{md}^{(t)} \right \rangle + \frac{L}{2}\left \| \theta_{ag}^{(t)} - \theta_{md}^{(t)} \right \|^2
	\nonumber\\
    &= -V^{\pi_{\theta_{md}}^{(t)}}(\mu) - \beta^{(t)} \left \|
	\nabla_{\theta}{V^{\pi_{\theta}}(\mu)} \Big\rvert_{\theta =
		\theta_{md}^{(t)}} \right \|^2 + \frac{L{(\beta^{(t)}})^2}{2} \left \| \nabla_{\theta}{V^{\pi_{\theta}}(\mu)} \Big\rvert_{\theta = \theta_{md}^{(t)}} \right \|^2. \label{eq:ghadimi_thm1_4}
\end{align}

Therefore, by the near-concavity of the objective (from our assumption) and \cref{eq:Ghadimi_AG_1}, we have
\begin{align}
	\nonumber
    &~-V^{\pi_{\theta_{md}}^{(t)}}(\mu) + \left[ (1-\alpha^{(t)}) V^{\pi_{\theta_{ag}}^{(t-1)}}(\mu) + \alpha^{(t)} V^{\pi_{\theta^{**}}}(\mu) \right] \\
	\nonumber
    =&~ \alpha^{(t)} \left[ V^{\pi_{\theta^{**}}}(\mu) - V^{\pi_{\theta_{md}}^{(t)}}(\mu) \right] + (1-\alpha^{(t)}) \left[ V^{\pi_{\theta_{ag}}^{(t-1)}}(\mu) - V^{\pi_{\theta_{md}}^{(t)}}(\mu) \right] \\
	\nonumber
     \le&~ C \cdot \alpha^{(t)} \left \langle \nabla_{\theta}{V^{\pi_{\theta}}(\mu)} \Big\rvert_{\theta = \theta_{md}^{(t)}}, \theta^{**} - \theta_{md}^{(t)} \right \rangle
    + C \cdot (1-\alpha^{(t)}) \left \langle \nabla_{\theta}{V^{\pi_{\theta}}(\mu)} \Big\rvert_{\theta = \theta_{md}^{(t)}}, \theta_{ag}^{(t-1)} - \theta_{md}^{(t)} \right \rangle \\
	\nonumber
    =&~ C \cdot \left \langle \nabla_{\theta}{V^{\pi_{\theta}}(\mu)} \Big\rvert_{\theta = \theta_{md}^{(t)}}, \alpha^{(t)}(\theta^{**} - \theta_{md}^{(t)}) + (1-\alpha^{(t)})(\theta_{ag}^{(t-1)} - \theta_{md}^{(t)}) \right \rangle \\
    =&~ C \cdot \alpha^{(t)}\left \langle  \nabla_{\theta}{V^{\pi_{\theta}}(\mu)} \Big\rvert_{\theta = \theta_{md}^{(t)}}, \theta^{**} - \hat{\theta}^{(t-1)} \right \rangle 
	\label{eq:ghadimi_thm1_45}
\end{align}

By \cref{eq:Ghadimi_AG_2}, we have
\begin{align*}
    &~\left \| \theta^{**} - \hat{\theta}^{(t-1)} \right \|^2 - 2
	\lambda^{(t)} \left \langle \nabla_{\theta}{V^{\pi_{\theta}}(\mu)}
	\Big\rvert_{\theta = \theta_{md}^{(t)}}, \theta^{**} -
	\hat{\theta}^{(t-1)} \right \rangle + \left({\lambda^{(t)}}\right)^2 \left \| \nabla_{\theta}{V^{\pi_{\theta}}(\mu)} \Big\rvert_{\theta = \theta_{md}^{(t)}} \right \|^2 \\
    =&~ \left \| \theta^{**} - \hat{\theta}^{(t-1)} -
	\lambda^{(t)}\nabla_{\theta}{V^{\pi_{\theta}}(\mu)}
	\Big\rvert_{\theta = \theta_{md}^{(t)}} \right \|^2 \\
	=&~ \left \| \theta^{**} - \hat{\theta}^{(t)} \right \|^2,
\end{align*}
which directly leads to
\begin{align}
	\nonumber
    &~\alpha^{(t)} \left \langle
	\nabla_{\theta}{V^{\pi_{\theta}}(\mu)} \Big\rvert_{\theta =
		\theta_{md}^{(t)}}, \theta^{**} - \hat{\theta}^{(t-1)} \right
		\rangle \\
    =&~ \frac{\alpha^{(t)}}{2 \lambda^{(t)}} \left[ \left \|
	\theta^{**} - \hat{\theta}^{(t-1)}  \right \|^2 - \left \| \theta^{**} -
\hat{\theta}^{(t)} \right \|^2 \right]+
    \frac{\alpha^{(t)}\lambda^{(t)}}{2} \left \| \nabla_{\theta}{V^{\pi_{\theta}}(\mu)} \Big\rvert_{\theta = \theta_{md}^{(t)}} \right \|^2. \label{eq:ghadimi_thm1_5}
\end{align}

Combining \cref{eq:ghadimi_thm1_4}-\cref{eq:ghadimi_thm1_5}, we have
\begin{align*}
    -V^{\pi_{\theta_{ag}}^{(t)}}(\mu)
    &\le -\left[ (1-\alpha^{(t)}) V^{\pi_{\theta_{ag}}^{(t-1)}}(\mu) + \alpha^{(t)} V^{\pi_{\theta^{**}}}(\mu) \right] \\
    &\quad + C \cdot \frac{\alpha^{(t)}}{2 \lambda^{(t)}} \left[ \left \| \theta^{**} - \hat{\theta}^{(t-1)}  \right \|^2 - \left \| \theta^{**} - \hat{\theta}^{(t)} \right \|^2 \right] \\
    &\quad + C \cdot \frac{\alpha^{(t)}\lambda^{(t)}}{2} \left \| \nabla_{\theta}{V^{\pi_{\theta}}(\mu)} \Big\rvert_{\theta = \theta_{md}^{(t)}} \right \|^2 \\
    &\quad - \beta^{(t)} \left \| \nabla_{\theta}{V^{\pi_{\theta}}(\mu)} \Big\rvert_{\theta = \theta_{md}^{(t)}} \right \|^2 \\
    &\quad + \frac{L{(\beta^{(t)}})^2}{2} \left \| \nabla_{\theta}{V^{\pi_{\theta}}(\mu)} \Big\rvert_{\theta = \theta_{md}^{(t)}} \right \|^2 \\
    &\le -\left[ (1-\alpha^{(t)}) V^{\pi_{\theta_{ag}}^{(t-1)}}(\mu) + \alpha^{(t)} V^{\pi_{\theta^{**}}}(\mu) \right] \\
    &\quad + C \cdot \frac{\alpha^{(t)}}{2 \lambda^{(t)}} \left[ \left \| \theta^{**} - \hat{\theta}^{(t-1)}  \right \|^2 - \left \| \theta^{**} - \hat{\theta}^{(t)} \right \|^2 \right] \\
    &\quad - \frac{\beta^{(t)}}{2} \left(2 - C - L \beta^{(t)} \right) \left \| \nabla_{\theta}{V^{\pi_{\theta}}(\mu)} \Big\rvert_{\theta = \theta_{md}^{(t)}} \right \|^2 \\
\end{align*}

Adding $V^{\pi_{\theta^{**}}}(\mu)$ to both sides of the above inequality and using \cref{eq:ghadimi_thm1_0}, we have
\begin{align}
    \frac{V^{\pi_{\theta^{**}}}(\mu) -V^{\pi_{\theta_{ag}}^{(t)}}(\mu)}{\Gamma^{(t)}} 
    &\le C \cdot \sum_{k = 1}^{t} \frac{\alpha^{(k)}}{2 \lambda^{(k)} \Gamma^{(k)}} \left[ \left \| \theta^{**} - \hat{\theta}^{(k-1)}  \right \|^2 - \left \| \theta^{**} - \hat{\theta}^{(k)} \right \|^2 \right] \nonumber \\
    &\quad - \sum_{k = 1}^{t} \frac{\beta^{(k)}}{2\Gamma^{(k)}}
	\underbrace{\left(2 - C - L \beta^{(k)} \right)}_{> 0 \text{, by
		\cref{eq:ghadimi_thm1_1}}} \left \| \nabla_{\theta}{V^{\pi_{\theta}}(\mu)} \Big\rvert_{\theta = \theta_{md}^{(k)}} \right \|^2 \nonumber \\
    &\le \frac{\alpha^{(1)} C \cdot \left \| \theta^{**} - \hat{\theta}^{(0)}  \right \|^2}{2 \lambda^{(1)}} 
    \label{eq:ghadimi_thm1_6} \\
    &\le \frac{\alpha^{(1)}\left \| \theta^{**} - \hat{\theta}^{(0)}  \right \|^2}{\lambda^{(1)}} 
    \label{eq:ghadimi_thm1_7} \\
    &= \frac{\alpha^{(1)}\left \| \theta^{**} - \theta_{ag}^{(0)} + \bar{m}_{\text{init}}  \right \|^2}{\lambda^{(1)}} \nonumber
\end{align}
where \cref{eq:ghadimi_thm1_6} holds by the fact that
\begin{align*}
    \sum_{k = 1}^{t} \frac{\alpha^{(k)}}{\lambda^{(k)} \Gamma^{(k)}} \left[ \left \| \theta^{**} - \hat{\theta}^{(k-1)}  \right \|^2 - \left \| \theta^{**} - \hat{\theta}^{(k)} \right \|^2 \right] \le \frac{\alpha^{(1)} \left \| \theta^{**} - \hat{\theta}^{(0)}  \right \|^2}{ \lambda^{(1)} \Gamma^{(1)}} = \frac{ \alpha^{(1)} \left \| \theta^{**} - \hat{\theta}^{(0)}  \right \|^2}{ \lambda^{(1)}},
\end{align*}
and \cref{eq:ghadimi_thm1_7} is from our assumption of $C \leq \frac{3}{2} < 2$.
Finally, we obtain the desired result by rearranging \cref{eq:ghadimi_thm1_7}.
\end{proof}


\begin{corollary}[\textbf{Corollary 1 in \citep{ghadimi2016accelerated} with a slight modification}]
\label{cor:ghadimi_cor1}
Given $T_{\text{shift}} \in \mathbb{N}$, suppose that $\left \{ \alpha^{(t)} \right \}$, $\left \{
	\beta^{(t)} \right \}$ and $\left \{ \lambda^{(t)} \right \}$ in
	\Cref{algorithm:Ghadimi_AG} for softmax parameterization are set to
\begin{align}
    \alpha^{(t)} &= \frac{2}{(t+1)+T_{\text{shift}}}, \quad
	\beta^{(t)} = \frac{t+T_{\text{shift}}}{(t+1)+T_{\text{shift}}} \cdot \frac{1}{2L}, \quad 
    \lambda^{(t)} = \frac{(t+1)+T_{\text{shift}}}{2} \cdot \beta^{(t)}, \quad
    \text{where } T_{\text{shift}} > 0,
    \label{eq:ghadimi_cor1_1}
\end{align}
where $L$ is the Lipschitz constant of the gradient of the objective.
Assume that $V^{\pi_\theta}(\mu)$ is $\frac{3}{2}$-nearly
concave at $\theta_{md}^{(t)}$ along the direction $\theta_{ag}^{(t)} - \theta_{md}^{(t)}$ for every $t \ge 1$.
Suppose $\theta^{(t)}_{ag} - \theta_{md}^{(t)} \in \cU$ for every $t \ge 1$.
Then given any $t \ge 1$, and for any $\theta^{**} = \theta^{(t)}_{md} + \bm{d}$ with some $\bm{d}\in\cU$, we have
\begin{align*}
    V^{\pi_{\theta^{**}}}(\mu) - V^{\pi_{\theta}^{(t)}}(\mu)
    \le \frac{8L\left \| \theta^{**} - \theta_{ag}^{(0)} + \bar{m}_{\text{init}}\right \|^2}{(t+T_{\text{shift}}+1)(t+T_{\text{shift}})} 
	= O\left(\frac{1}{t^2}\right),
\end{align*}
where $\bar{m}_{\text{init}},\theta_{ag}^{(0)} \in \mathbb{R}^n$ are
input variables of \Cref{algorithm:Ghadimi_AG}.
\end{corollary}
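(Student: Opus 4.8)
The plan is to obtain Corollary~\ref{cor:ghadimi_cor1} as a direct specialization of Theorem~\ref{theorem:ghadimi_thm1}; write $T \coloneqq T_{\text{shift}}$ throughout. First I would check that the step-size sequences prescribed in \cref{eq:ghadimi_cor1_1} satisfy the hypotheses \cref{eq:ghadimi_thm1_1}. The identity $\alpha^{(t)}\lambda^{(t)} = \beta^{(t)}$ is immediate from $\lambda^{(t)} = \frac{(t+1)+T}{2}\beta^{(t)}$ and $\alpha^{(t)} = \frac{2}{(t+1)+T}$, so $0 < \alpha^{(t)}\lambda^{(t)} \le \beta^{(t)}$ holds (with equality). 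Since $\beta^{(t)} = \frac{t+T}{(t+1)+T}\cdot\frac{1}{2L} < \frac{1}{2L}$ and the near-concavity constant assumed here is $C = \frac{3}{2}$, we have $\beta^{(t)} < \frac{1}{2L} = \frac{2-C}{L}$, giving the required strict upper bound. The near-concavity of $V^{\pi_\theta}(\mu)$ at each $\theta_{md}^{(t)}$ along $\theta_{ag}^{(t)} - \theta_{md}^{(t)}$ and the membership $\theta_{ag}^{(t)} - \theta_{md}^{(t)} \in \cU$ are assumed outright, so nothing further is needed there.

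Next I would compute $\Gamma^{(t)}$ in closed form. From $1 - \alpha^{(k)} = \frac{(k-1)+T}{(k+1)+T}$ and the recursion \cref{eq:ghadimi_thm1_0}, the product telescopes:
\begin{align*}
\Gamma^{(t)} = \prod_{k=2}^{t}\frac{(k-1)+T}{(k+1)+T} = \frac{(1+T)(2+T)}{(t+T)(t+1+T)}.
\end{align*}
A useful consequence is that $\frac{\alpha^{(t)}}{\lambda^{(t)}\Gamma^{(t)}}$ is \emph{constant} in $t$: one computes $\frac{\alpha^{(t)}}{\lambda^{(t)}} = \frac{4}{((t+1)+T)^2\beta^{(t)}} = \frac{8L}{((t+1)+T)(t+T)}$, and dividing by $\Gamma^{(t)}$ cancels all $t$-dependence, leaving $\frac{8L}{(1+T)(2+T)}$. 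This certifies the remaining monotonicity requirement $\frac{\alpha^{(1)}}{\lambda^{(1)}\Gamma^{(1)}} \ge \frac{\alpha^{(2)}}{\lambda^{(2)}\Gamma^{(2)}} \ge \cdots$ (again with equality), so Theorem~\ref{theorem:ghadimi_thm1} applies verbatim.

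Finally I would invoke the bound \cref{eq:ghadimi_thm1_3} and substitute. With $\alpha^{(1)} = \frac{2}{2+T}$ and $\lambda^{(1)} = \frac{2+T}{2}\beta^{(1)} = \frac{1+T}{4L}$, we get $\frac{\alpha^{(1)}}{\lambda^{(1)}} = \frac{8L}{(1+T)(2+T)}$, hence
\begin{align*}
\Gamma^{(t)}\cdot\frac{\alpha^{(1)}}{\lambda^{(1)}} = \frac{(1+T)(2+T)}{(t+T)(t+1+T)}\cdot\frac{8L}{(1+T)(2+T)} = \frac{8L}{(t+T+1)(t+T)}.
\end{align*}
Recalling that the iterate denoted $\theta^{(t)}$ in the corollary's statement is precisely $\theta_{ag}^{(t)}$ of \Cref{algorithm:Ghadimi_AG}, the inequality \cref{eq:ghadimi_thm1_3} becomes $V^{\pi_{\theta^{**}}}(\mu) - V^{\pi_{\theta}^{(t)}}(\mu) \le \frac{8L\|\theta^{**} - \theta_{ag}^{(0)} + \bar{m}_{\text{init}}\|^2}{(t+T+1)(t+T)}$, which is exactly the claimed bound; since $T$ and the norm factor are fixed, the right-hand side is $O(1/t^2)$.

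I do not expect a genuine obstacle: the argument is a routine telescoping of $\Gamma^{(t)}$ followed by substitution of the step-size formulas into Theorem~\ref{theorem:ghadimi_thm1}. The only points warranting care are (i) verifying that $\beta^{(t)} < \frac{2-C}{L}$ is \emph{strict} at the boundary value $C = \frac{3}{2}$, which rests on $\frac{t+T}{(t+1)+T} < 1$, and (ii) keeping the notational identification $\theta^{(t)} \leftrightarrow \theta_{ag}^{(t)}$ consistent so the conclusion is correctly stated for the ``ag'' iterates, and translating it back (via \Cref{lemma:equivalent_algorithm}) to the APG iterates when this corollary is later applied.
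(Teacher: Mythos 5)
Your proposal is correct and follows essentially the same route as the paper's proof: verify the two conditions in \cref{eq:ghadimi_thm1_1} (using $\alpha^{(t)}\lambda^{(t)}=\beta^{(t)}<\frac{1}{2L}=\frac{2-C}{L}$ for $C=\frac{3}{2}$ and the constancy of $\frac{\alpha^{(t)}}{\lambda^{(t)}\Gamma^{(t)}}$), compute the closed form of $\Gamma^{(t)}$, and substitute into \cref{eq:ghadimi_thm1_3}. The arithmetic and the final bound match the paper exactly.
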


\begin{remark}
	\normalfont In \Cref{theorem:ghadimi_thm1} and \Cref{cor:ghadimi_cor1},
	addition to extending from concavity to near concavity, we
	have also made the following modifications: (i) We have introduced a
	iteration counter $T_{\text{shift}}$, since our objective is not nearly-concave
	initially and the theoretical result needs to be revised to
	account for the shifted initial step size when the nearly-concave
	region is entered. (ii) We have adjusted $\lambda$ from
	$\frac{t}{2}$ to $\frac{t+1}{2}$ and $\beta$ from $\frac{1}{2L}$
	to $\frac{t+T_{\text{shift}}}{(t+1)+T_{\text{shift}}} \cdot \frac{1}{2L}$ to ensure
	applicability of both \Cref{lemma:equivalent_algorithm} and
	\Cref{theorem:ghadimi_thm1}. 
    
\end{remark}

\begin{remark}
    \normalfont \Cref{theorem:ghadimi_thm1} and
	\Cref{cor:ghadimi_cor1} are built upon \textit{local near
	concavity} of the objective function. {In Appendix \ref{app:MDP},
we will show that such local near concavity indeed holds under APG in the MDP setting.}
\end{remark}
\begin{proof}[Proof of \Cref{cor:ghadimi_cor1}]  
We leverage \Cref{theorem:ghadimi_thm1} to reach our desired result.
It suffices to show that the choice of $\left \{\alpha^{(t)},
\lambda^{(t)}, \beta^{(t)} \right \}$ in \cref{eq:ghadimi_cor1_1}
satisfy \cref{eq:ghadimi_thm1_1}. Because $\alpha^{(t)} \cdot
\lambda^{(t)} = \beta^{(t)} < \frac{1}{2L}$, the first part of \cref{eq:ghadimi_thm1_1} holds.
By the definition of $\Gamma^{(t)}$ in \cref{eq:ghadimi_thm1_0}, we have:
\begin{align}
    \Gamma^{(t)} = \frac{(2+T_{\text{shift}})(1+T_{\text{shift}})}{((t+1)+T_{\text{shift}})(t+T_{\text{shift}})}.
	\label{eq:gamma}
\end{align}
Therefore,
\begin{align*}
    \frac{\alpha^{(t)}}{\lambda^{(t)}\Gamma^{(t)}} = \frac{\frac{2}{(t+1)+T_{\text{shift}}}}{\frac{(t+1)+T_{\text{shift}}}{2} \cdot \frac{t+T_{\text{shift}}}{(t+1)+T_{\text{shift}}} \cdot \frac{1}{2L} \cdot \frac{(2+T_{\text{shift}})(1+T_{\text{shift}})}{((t+1)+T_{\text{shift}})(t+T_{\text{shift}})}} = \frac{8L}{(2+T_{\text{shift}})(1+T_{\text{shift}})},
\end{align*}
which satisfies the second condition in \cref{eq:ghadimi_thm1_1}.
We hence reach the desired result by plugging \cref{eq:gamma} and the
choice of $\lambda^{(1)}$ in \cref{eq:ghadimi_cor1_1} into \cref{eq:ghadimi_thm1_3}.
\end{proof}

\endgroup

\newpage
\section{Asymptotic Convergence}
\label{app:asym_conv}



\subsection{Supporting Lemmas for Asymptotic Convergence of APG}
\label{app:asym_conv:lemmas}



\begin{lemma}
\label{lemma: convergence to stationarity}
Under APG for softmax parameterzation with {uniformly randomly initialized} surrogate initial state
distribution $\mu$,
and suppose there are $\eta' \geq \eta > 0$ such that $\eta^{(t)} \in [\eta, \eta']$ for all $t$.
{Then, almost surely,}
the limits of both $V^{\pi_{\omega}^{(t)}}(\mu)$
and $V^{\pi_{\theta}^{(t)}}(\mu)$ when $t$ approaches infinity exist,
$\lim_{t\rightarrow \infty} V^{\pi_{\omega}^{(t)}}(\mu) =\lim_{t\rightarrow \infty} V^{\pi_{\theta}^{(t)}}(\mu)$,
and
\begin{equation}
   \lim_{t\rightarrow \infty} \big\lVert \nabla_\theta
   V^{\pi_{\theta}}(s)\rvert_{\theta=\omega^{(t)}} \big\rVert =0,\quad
   \lim_{t\rightarrow \infty} \big\lVert \nabla_\theta V^{\pi_{\theta}}(s)\rvert_{\theta=\theta^{(t)}} \big\rVert =0.
   \label{eq:grad0}
\end{equation}
\end{lemma}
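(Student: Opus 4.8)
The plan is to show that the APG iterates produce a monotonically improving objective, use this to get the two limits for free, and then bootstrap the monotone improvement into square-summability of the gradients at $\omega^{(t)}$; the positivity of the randomly drawn $\mu$ is what lets us pass from the aggregate gradient to the per-state gradients. First, for the limits: the restart step (\ref{algorithm:eq3}) picks $\omega^{(t)}$ to be whichever of $\varphi^{(t)},\theta^{(t)}$ has the larger value, so $V^{\pi_{\omega}^{(t)}}(\mu)\ge V^{\pi_{\theta}^{(t)}}(\mu)$ for every $t$, and by \Cref{lemma:grad update improvement} the gradient update satisfies $V^{\pi_{\theta}^{(t+1)}}(s)\ge V^{\pi_{\omega}^{(t)}}(s)$ for all $s$, hence $V^{\pi_{\theta}^{(t+1)}}(\mu)\ge V^{\pi_{\omega}^{(t)}}(\mu)\ge V^{\pi_{\theta}^{(t)}}(\mu)$. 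Thus $\{V^{\pi_{\theta}^{(t)}}(\mu)\}$ is non-decreasing and, by \Cref{lemma: reward range to value range}, bounded above by $1/(1-\gamma)$, so it converges; since $V^{\pi_{\omega}^{(t)}}(\mu)$ is squeezed between $V^{\pi_{\theta}^{(t)}}(\mu)$ and $V^{\pi_{\theta}^{(t+1)}}(\mu)$, it converges to the same limit. This already proves the first two assertions (which hold for every $\mu$, not merely almost surely).

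Next, I would invoke $L$-smoothness of $\theta\mapsto V^{\pi_{\theta}}(\mu)$ (\Cref{lemma:mdp_smoothness}, $L=8/(1-\gamma)^3$) and the update $\theta^{(t)}=\omega^{(t-1)}+\eta^{(t)}\nabla_{\theta}V^{\pi_{\theta}}(\mu)|_{\omega^{(t-1)}}$ to get, from the ascent lemma, $V^{\pi_{\theta}^{(t)}}(\mu)-V^{\pi_{\omega}^{(t-1)}}(\mu)\ge \eta^{(t)}\bigl(1-\tfrac{L\eta^{(t)}}{2}\bigr)\,\lVert\nabla_{\theta}V^{\pi_{\theta}}(\mu)|_{\omega^{(t-1)}}\rVert^{2}\ge c_{0}\,\lVert\nabla_{\theta}V^{\pi_{\theta}}(\mu)|_{\omega^{(t-1)}}\rVert^{2}$, where $c_{0}:=\min_{x\in[\eta,\eta']}x(1-Lx/2)>0$ (positive in the setting of \Cref{theorem:convergeoptimal}, where $L\eta^{(t)}=\tfrac{t}{2(t+1)}<\tfrac12$ so $\eta'<2/L$). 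Combining with $V^{\pi_{\omega}^{(t-1)}}(\mu)\ge V^{\pi_{\theta}^{(t-1)}}(\mu)$ and telescoping over $t=1,\dots,T$, the bounded objective forces $\sum_{t\ge0}\lVert\nabla_{\theta}V^{\pi_{\theta}}(\mu)|_{\omega^{(t)}}\rVert^{2}\le \tfrac{1}{c_{0}(1-\gamma)}<\infty$, hence $\lVert\nabla_{\theta}V^{\pi_{\theta}}(\mu)|_{\omega^{(t)}}\rVert\to0$.

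Finally, I would descend from the aggregate gradient to the per-state gradients, which is where the random initialization enters: drawn uniformly from the simplex, $\mu(s)>0$ for all $s\in\cS$ almost surely. By \Cref{lemma:softmax_pg} together with $d^{\pi_{\theta}}_{\mu}(s)\ge(1-\gamma)\mu(s)$ (\Cref{lemma:lower_bound_of_state_visitation_distribution}), each coordinate obeys $\bigl|\tfrac{\partial V^{\pi_{\theta}}(\mu)}{\partial\theta_{s,a}}\bigr|\ge \mu(s)\,\pi_{\theta}(a\vert s)\,|A^{\pi_{\theta}}(s,a)|$, so $\lVert\nabla_{\theta}V^{\pi_{\theta}}(\mu)|_{\omega^{(t)}}\rVert\to0$ implies $\pi_{\omega}^{(t)}(a\vert s)\,|A^{\pi_{\omega}^{(t)}}(s,a)|\to0$ for every $(s,a)$. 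Applying the one-hot version of the gradient identity, $\bigl|\tfrac{\partial V^{\pi_{\theta}}(s_{0})}{\partial\theta_{s,a}}\bigr|=\tfrac{1}{1-\gamma}d^{\pi_{\theta}}_{s_{0}}(s)\pi_{\theta}(a\vert s)|A^{\pi_{\theta}}(s,a)|\le\tfrac{1}{1-\gamma}\pi_{\theta}(a\vert s)|A^{\pi_{\theta}}(s,a)|$ since $d^{\pi_{\theta}}_{s_{0}}(s)\le1$, so $\lVert\nabla_{\theta}V^{\pi_{\theta}}(s_{0})|_{\omega^{(t)}}\rVert\to0$ for every $s_{0}\in\cS$. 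To transfer this to $\theta^{(t)}$, note $\lVert\theta^{(t)}-\omega^{(t-1)}\rVert=\eta^{(t)}\lVert\nabla_{\theta}V^{\pi_{\theta}}(\mu)|_{\omega^{(t-1)}}\rVert\le\eta'\lVert\nabla_{\theta}V^{\pi_{\theta}}(\mu)|_{\omega^{(t-1)}}\rVert\to0$, and $L$-smoothness of $\theta\mapsto V^{\pi_{\theta}}(s_{0})$ (again \Cref{lemma:mdp_smoothness}, one-hot at $s_{0}$) gives $\lVert\nabla_{\theta}V^{\pi_{\theta}}(s_{0})|_{\theta^{(t)}}-\nabla_{\theta}V^{\pi_{\theta}}(s_{0})|_{\omega^{(t-1)}}\rVert\le L\lVert\theta^{(t)}-\omega^{(t-1)}\rVert\to0$, whence $\lVert\nabla_{\theta}V^{\pi_{\theta}}(s_{0})|_{\theta^{(t)}}\rVert\to0$ as well. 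There is no single deep obstacle here; the delicate points are (i) keeping the monotone-improvement chain intact despite the momentum term, which is exactly what the restart rule together with \Cref{lemma:grad update improvement} buys us, and (ii) the possible unboundedness of the iterates, which blocks a naive compactness/continuity argument and is circumvented by using the globally valid smoothness constant and the almost-sure strict positivity of $\mu$.
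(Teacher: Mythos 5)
Your proof is correct and follows essentially the same route as the paper's: the restart rule together with the gradient-improvement lemma gives the monotone chain $V^{\pi_{\omega}^{(t+1)}}(\mu)\ge V^{\pi_{\theta}^{(t+1)}}(\mu)\ge V^{\pi_{\omega}^{(t)}}(\mu)\ge V^{\pi_{\theta}^{(t)}}(\mu)$, monotone convergence and squeezing yield the common limit, the ascent lemma turns the telescoped improvement into $\|\nabla_\theta V^{\pi_\theta}(\mu)|_{\theta=\omega^{(t)}}\|\to 0$, and the almost-sure positivity of $\mu$ combined with the softmax gradient formula and $d^{\pi}_{\mu}(s)\ge(1-\gamma)\mu(s)$ descends to the per-state gradients, with Lipschitz continuity of the gradient transferring the conclusion from $\omega^{(t)}$ to $\theta^{(t)}$. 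You are in fact slightly more explicit than the paper about the quantitative ascent step, correctly flagging that the positive coefficient $c_0$ requires $\eta'<2/L$ (which holds for the step sizes the lemma is actually invoked with), but this is the same argument.
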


\begin{proof}[Proof of \Cref{lemma: convergence to stationarity}]
{For any $t\in \mathbb{N}$, we know that $V^{\pi_{\theta}^{(t+1)}}(s) \geq V^{\pi_{\omega}^{(t)}}(s)$ for all $s\in\cS$ by \Cref{lemma:grad update improvement}. Then, since $V^{\pi}(\mu) = \sum_{s} \mu(s) V^{\pi}(s)$, we have $V^{\pi_{\theta}^{(t+1)}}(\mu) \geq V^{\pi_{\omega}^{(t)}}(\mu)$.
This together with \cref{algorithm:eq3} implies that for any $t\in
\mathbb{N}$, we have}
\begin{equation}
   V^{\pi_{\omega}^{(t+1)}}(\mu) \geq V^{\pi_{\theta}^{(t+1)}}(\mu)\geq    V^{\pi_{\omega}^{(t)}}(\mu) \geq V^{\pi_{\theta}^{(t)}}(\mu).\label{eq: V omega >= Vtheta}
\end{equation}
By \cref{eq: V omega >= Vtheta} and that $V^{\pi_\theta}(\mu)\leq
\frac{1}{1-\gamma}$ for all $\theta$ from \Cref{lemma: reward range to
value range},
 we know from the monotone convergence theorem that the
limits of both $V^{\pi_{\omega}^{(t)}}(\mu)$ and $V^{\pi_{\theta}^{(t)}}(\mu)$ exist and 
$\lim_{t\rightarrow \infty} V^{\pi_{\omega}^{(t)}}(\mu) =\lim_{t\rightarrow \infty} V^{\pi_{\theta}^{(t)}}(\mu)$.
{Therefore,
by the update rule \cref{algorithm:eq1} of APG, the
Lipschitz continuity of $\nabla_\theta V^{\pi_\theta}(\mu)$ from
\Cref{lemma:mdp_smoothness}, and our assumption that $\eta^{(t)}$ is
bounded away from zero, we further see
\begin{equation}
   \lim_{t\rightarrow \infty} \big\lVert \nabla_\theta V^{\pi_{\theta}}(\mu)\rvert_{\theta=\omega^{(t)}} \big\rVert =0. \label{eq: staionarity in omega_t}
\end{equation}}
By \cref{eq: staionarity in omega_t}, {application
	of \cref{algorithm:eq1} again, the Lipschitz continuity of
	$\nabla_\theta V^{\pi_\theta}(\mu)$, and the boundedness
	of $\{\eta^{(t)}\}$},
we also have
\begin{equation}
   \lim_{t\rightarrow \infty} \big\lVert \nabla_\theta V^{\pi_{\theta}}(\mu)\rvert_{\theta=\theta^{(t)}} \big\rVert =0. \label{eq: staionarity in theta_t}
\end{equation}
By the uniformly random initialization of $\mu$, we know that $\min_{s\in\cS}\mu(s) > 0$ almost surely.
Thus, the expression of softmax policy gradient in
\Cref{lemma:softmax_pg}, \cref{eq: staionarity in omega_t}, \cref{eq:
staionarity in theta_t}, and
\Cref{lemma:lower_bound_of_state_visitation_distribution} imply that
with probability one, for all $(s,a)$,
\begin{align}
    &\pi_{\omega}^{(t)}(a|s)A^{\pi_{\omega}^{(t)}}(s,a)\rightarrow 0, \quad \text{as } t\rightarrow \infty,\label{eq:sum of Is+ and Is- eq1} \\
    &{\pi_{\theta}^{(t)}(a|s)A^{\pi_{\theta}^{(t)}}(s,a)\rightarrow 0, \quad \text{as } t\rightarrow \infty,}\label{eq:pi A theta goes to 0} 
\end{align}
proving \cref{eq:grad0}.
\end{proof}

In the proofs of the subsequent lemmas (from \Cref{lemma: monotone limits exist} to \Cref{lemma:compare pi(a+) and pi(a-)}), the upper bound for the step size is needed only for leveraging the result in \Cref{lemma: convergence to stationarity}.
Other than that,
these proofs only require the step sizes $\{\eta^{(t)}\}$ to be lower-bounded away from zero, but do not require an upper bound as in \Cref{lemma: convergence to stationarity}. We will demonstrate in \Cref{app:add-adaptive-apg} that \cref{eq:grad0} still holds under a time-varying step size setting, and therefore relaxing the requirement for an upper bound for the step sizes under such settings.

\begin{lemma}
\label{lemma: monotone limits exist}
Under the setting of \Cref{lemma: convergence to stationarity},
the following statements hold almost surely:
\begin{itemize}
    \item The limits $\lim_{t\rightarrow \infty}V^{\pi_\omega^{(t)}}(s)$, $\lim_{t\rightarrow \infty}Q^{\pi_\omega^{(t)}}(s,a)$, and $\lim_{t\rightarrow \infty}A^{\pi_\omega^{(t)}}(s,a)$ all exist for all $s\in \cS$ and all $a \in \cA$.
    \item The limits $\lim_{t\rightarrow \infty}V^{\pi_\theta^{(t)}}(s)$, $\lim_{t\rightarrow \infty}Q^{\pi_\theta^{(t)}}(s,a)$, and $\lim_{t\rightarrow \infty}A^{\pi_\theta^{(t)}}(s,a)$ all exist for all $s\in \cS$ and all $a \in \cA$.
\end{itemize}
Furthermore, we have
\begin{equation*}
    \lim_{t\rightarrow \infty}V^{\pi_\omega^{(t)}}(s) = \lim_{t\rightarrow \infty}V^{\pi_\theta^{(t)}}(s); \quad \lim_{t\rightarrow \infty}Q^{\pi_\omega^{(t)}}(s,a) = \lim_{t\rightarrow \infty}Q^{\pi_\theta^{(t)}}(s,a); \quad \lim_{t\rightarrow \infty}A^{\pi_\omega^{(t)}}(s,a) = \lim_{t\rightarrow \infty}A^{\pi_\theta^{(t)}}(s,a)
\end{equation*}
for all $s\in\cS$ and all $a\in\cA$ when these limits exist when these
limits exist.

\end{lemma}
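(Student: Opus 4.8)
\textbf{Proof proposal for Lemma \ref{lemma: monotone limits exist}.}

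The plan is to reduce everything to the convergence of the policies themselves and then invoke continuity of the value/$Q$/advantage functions in the policy. First I would observe that, by \Cref{lemma:sum conservation}, the iterates $\theta^{(t)}$ (and likewise $\omega^{(t)}$, $\varphi^{(t)}$) lie in the affine subspace where $\sum_{a}\theta_{s,a}$ is fixed per state. On this subspace, from \Cref{lemma: convergence to stationarity} we already know (almost surely) that $\|\nabla_\theta V^{\pi_\theta}(\mu)\|\to 0$ at both $\theta^{(t)}$ and $\omega^{(t)}$, and by \Cref{lemma:softmax_pg} together with \Cref{lemma:lower_bound_of_state_visitation_distribution} and the a.s.\ positivity of $\min_s\mu(s)$, this gives $\pi_{\theta}^{(t)}(a|s)A^{\pi_\theta^{(t)}}(s,a)\to 0$ and $\pi_{\omega}^{(t)}(a|s)A^{\pi_\omega^{(t)}}(s,a)\to 0$ for every $(s,a)$ (these are \cref{eq:sum of Is+ and Is- eq1}--\cref{eq:pi A theta goes to 0} already established in the proof of \Cref{lemma: convergence to stationarity}).

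Next I would argue existence of $\lim_t V^{\pi_\theta^{(t)}}(s)$ for each individual state $s$. We already have from the proof of \Cref{lemma: convergence to stationarity} (the chain \cref{eq: V omega >= Vtheta}) that for each state $s$ the scalar sequences satisfy $V^{\pi_\omega^{(t+1)}}(s)\ge V^{\pi_\theta^{(t+1)}}(s)\ge V^{\pi_\omega^{(t)}}(s)\ge V^{\pi_\theta^{(t)}}(s)$, i.e.\ both are monotone nondecreasing in $t$ for each fixed $s$ (this uses \Cref{lemma:grad update improvement} and \cref{algorithm:eq3}, which operate state-by-state). Since $V^{\pi}(s)\le 1/(1-\gamma)$ by \Cref{lemma: reward range to value range}, the monotone convergence theorem gives that $\lim_t V^{\pi_\omega^{(t)}}(s)$ and $\lim_t V^{\pi_\theta^{(t)}}(s)$ exist for every $s$; the interlacing in the displayed inequality forces the two limits to coincide. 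Then $Q^{\pi_\theta^{(t)}}(s,a) = r(s,a) + \gamma\sum_{s'}\mathcal{P}(s'|s,a)V^{\pi_\theta^{(t)}}(s')$ is a fixed affine function of the vector $(V^{\pi_\theta^{(t)}}(s'))_{s'}$, so $\lim_t Q^{\pi_\theta^{(t)}}(s,a)$ exists and equals $r(s,a)+\gamma\sum_{s'}\mathcal{P}(s'|s,a)\lim_t V^{\pi_\theta^{(t)}}(s')$; subtracting, $\lim_t A^{\pi_\theta^{(t)}}(s,a) = \lim_t Q^{\pi_\theta^{(t)}}(s,a) - \lim_t V^{\pi_\theta^{(t)}}(s)$ exists as well. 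The identical argument applies with $\omega$ in place of $\theta$, and since $\lim_t V^{\pi_\omega^{(t)}}(s)=\lim_t V^{\pi_\theta^{(t)}}(s)$ for all $s$, the same equality propagates to the $Q$- and $A$-limits via the affine relations.

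The one place that needs a little care — and which I expect to be the main (mild) obstacle — is making sure the almost-sure qualifier is handled uniformly: all the above holds on the full-measure event $\{\min_s\mu(s)>0\}\cap\{\text{conclusions of \Cref{lemma: convergence to stationarity} hold}\}$, so I would state at the outset that we work on this event and note that it has probability one, after which every claim is deterministic. (Strictly, the monotonicity argument for the per-state limits does not even need the stationarity information — only \Cref{lemma:grad update improvement}, \cref{algorithm:eq3}, and boundedness — so the only role of the a.s.\ event is to inherit the step-size boundedness hypothesis from \Cref{lemma: convergence to stationarity}; I would remark on this to keep the logic clean.) I would close by noting that the displayed equalities of the limits hold whenever the limits exist, which we have just shown, completing the proof.
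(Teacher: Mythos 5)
Your proof has a genuine gap at its central step: you assert that the chain $V^{\pi_\omega^{(t+1)}}(s)\ge V^{\pi_\theta^{(t+1)}}(s)\ge V^{\pi_\omega^{(t)}}(s)\ge V^{\pi_\theta^{(t)}}(s)$ holds \emph{for each fixed state $s$}, on the grounds that \cref{algorithm:eq3} "operates state-by-state." It does not. The restart test in \cref{algorithm:eq3} compares the aggregated values $V^{\pi_\varphi^{(t)}}(\mu)$ and $V^{\pi_\theta^{(t)}}(\mu)$; when the momentum step is accepted, one only learns that the $\mu$-weighted average did not decrease, and the momentum can perfectly well decrease $V^{\pi}(s)$ at some states while increasing it at others. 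Only the gradient half of the chain, $V^{\pi_\theta^{(t+1)}}(s)\ge V^{\pi_\omega^{(t)}}(s)$, is per-state (\Cref{lemma:grad update improvement}); the paper's \cref{eq: V omega >= Vtheta} is accordingly stated, and is only valid, for $V^\pi(\mu)$. Consequently the sequences $\{V^{\pi_\theta^{(t)}}(s)\}_t$ are not known to be monotone for individual $s$, the monotone convergence theorem does not apply state-wise, and the existence of $\lim_t V^{\pi_\theta^{(t)}}(s)$ — which is the entire content of the lemma — is not established. The downstream claims about $Q$ and $A$ (which are fine as conditional statements, via the affine Bellman relation, and match what the paper does) all rest on this missing step.

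The paper's actual argument is considerably heavier precisely because per-state monotonicity is unavailable. It introduces the finite set $\mathcal{V}$ of value vectors of policies satisfying $\pi(a|s)A^{\pi}(s,a)=0$ for all $(s,a)$ (cardinality at most $|\mathcal{A}|^{|\mathcal{S}|}$), shows via \cref{eq:sum of Is+ and Is- eq1} and the performance difference lemma that $d(\bm{V}^{\pi_\omega^{(t)}},\mathcal{V})\to 0$, and then rules out multiple limit points by combining (a) the almost-sure fact that distinct value vectors in $\mathcal{V}$ yield distinct values $V^{\pi}(\mu)$ under the uniformly random initialization of $\mu$, with (b) the monotonicity of the scalar sequence $V^{\pi_\omega^{(t)}}(\mu)$. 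The equality of the $\omega$- and $\theta$-limits is likewise obtained through this measure-zero argument rather than through interlacing of per-state sequences. If you want to salvage your write-up, you would need to replace the state-wise monotone convergence step with an argument of this type; as written, the proof does not go through.
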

\begin{proof}[Proof of \Cref{lemma: monotone limits exist}]
Recall that we use $\bm{V}^{\pi}$ to denote the $\lvert
\cS\rvert$-dimensional vector whose $s$-th component is $V^\pi(s)$ for
all $s\in \cS$. Our first goal is to show that the limit of the value
vector $\bm{V}^{\pi_{\omega}^{(t)}}$ exists, \textit{i.e.},
$\lim_{t\rightarrow\infty} V^{\pi_{\omega}^{(t)}}(s)$ exist for all
$s\in\cS$.
We define
\begin{align}
    \mathcal{V} \coloneqq \Big\{ \bm{V}^{\pi} : \big\lvert \pi(a | s)
	A^{\pi}(s, a)\big\rvert = 0 , \forall (s, a) \in \cS\times\cA \Big\}. \label{def:mathcal{V}}
\end{align}
Before starting our proof, we make the following claim.
\begin{claim}
	\label{claim}
	The cardinality of $\mathcal{V}$ is at most $\lvert \cA\rvert^{\lvert \cS\rvert}$.
\end{claim}
To prove \Cref{claim}, we first observe that there are $\lvert
\cA\rvert^{\lvert \cS\rvert}$ deterministic policies. {In addition, by
\Cref{lemma:sum_pi_A}, we know that the value vector of any deterministic policy belongs to $\mathcal{V}$.
For any stochastic policy $\pi$ such that $\bm{V}^{\pi} \in
\mathcal{V}$, by \cref{def:mathcal{V}}, it follows that for any
$(s, a)$ satisfying the condition $\pi(a|s) > 0$, we have $A^{\pi}(s, a)
= 0$. Consequently, selecting the deterministic policy $\pi'$
associated with $\pi$, wherein for all states $s \in \cS$ we set
$\pi'(a|s) = 1$ for some action $a$ satisfying $\pi(a|s) > 0$, allows
us to confirm that $\bm{V}^{\pi} = \bm{V}^{\pi'}$ using
\Cref{lemma:perf_diff}. Hence, for every value vector $\bm{V}^{\pi}$
in $\mathcal{V}$, there exists at least one corresponding
deterministic policy. Since there are only $\lvert
\cA\rvert^{\lvert \cS\rvert}$ deterministic policies,  the number of
distinct value vectors in $\mathcal{V}$ is at most $\lvert
\cA\rvert^{\lvert \cS\rvert}$, establishing the claim.}

With \Cref{claim} proven, we are now ready to prove \Cref{lemma:
monotone limits exist}.
For any point $x$ and any set $Y$,
we define 
\[
	d(x, Y) \coloneqq \inf_{y \in Y} \lVert y - x \rVert_{1}
\]
as the distance between $x$ and $Y$ measured by the $L_1$-norm.
We first show that 
\begin{equation}
	\label{eq:conv}
	\lim_{t\rightarrow\infty} d(\bm{V}^{\pi_{\omega}^{(t)}}, \cV) =
	0.
\end{equation}

Given any $\varepsilon > 0$, let $\varepsilon' = \frac{(1-\gamma)\varepsilon}{|\cS|^2}$.
By \cref{eq:sum of Is+ and Is- eq1}, we know that there exists
$T \geq 0$ such that
\begin{equation}
	\label{eq:piA}
	\lvert
	\pi_{\omega}^{(t)}(a|s)A^{\pi_{\omega}^{(t)}}(s,a)\rvert <
	\frac{\varepsilon'}{ |\cA|},\quad \forall (s,a) \in \cS \times \cA, \quad
	\forall t \geq T.
\end{equation}
Since $\sum_{a\in\cA} \pi(a|s) = 1$ for any $s \in \cS$ and any policy
$\pi$, there must be at least one action at each state satisfying
$\pi_{\omega}^{(t)}(a|s) \ge 1 / |\cA|$. We can thus observe from
\cref{eq:piA} that for
any $t \ge T$ and any $s\in\cS$, there must be an action $a_{s}^{(t)}$ such that $\lvert A^{\pi_{\omega}^{(t)}}(s,a_{s}^{(t)})\rvert < \varepsilon'$.
Therefore, for every $t \ge T$, we pick $\bm{V}^{(t)}
\in \cV$ as the value vector corresponding to the deterministic policy
$\pi^{(t)}$ with $\pi^{(t)}(a_s^{(t)} | s) = 1$ and $\pi^{(t)}(a|s) = 0$ for all $a\neq a_s^{(t)}$ for all $s\in\cS$.
(Note that the value vector of any deterministic policy belongs to
$\mathcal{V}$.)
Then, we can see that for every $t \ge T$,
\begin{align}
    d(\bm{V}^{\pi_{\omega}^{(t)}}, \cV) &\le
    \lVert \bm{V}^{(t)} - \bm{V}^{\pi_{\omega}^{(t)}} \rVert_{1} \nonumber \\
    &= \sum_{s} \Big\lvert \frac{1}{1-\gamma} \sum_{s'} d_{s}^{\pi^{(t)}}(s') \sum_{a} \pi^{(t)}(a | s') A^{\pi_{\omega}^{(t)}}(s', a)\Big\rvert \label{eq: norm convergence eq1} \\
    &= \sum_{s} \Big\lvert \frac{1}{1-\gamma} \sum_{s'}
	\underbrace{d_{s}^{\pi^{(t)}}(s')}_{\in [0, 1]}
	A^{\pi_{\omega}^{(t)}}(s', a_{{s'}}^{(t)})\Big\rvert \label{eq: norm convergence eq2} \\
    &\le \sum_{s} \frac{1}{1-\gamma} \sum_{s'} \Big\lvert
	A^{\pi_{\omega}^{(t)}}(s', a_{{s'}}^{(t)})\Big\rvert \nonumber \\
    &=\frac{|\cS|}{1-\gamma} \sum_{s'} \Big\lvert
	A^{\pi_{\omega}^{(t)}}(s', a_{{s'}}^{(t)})\Big\rvert \nonumber \\
    &< \frac{|\cS|^2}{1-\gamma} \varepsilon' = \varepsilon, \nonumber
\end{align}
where \cref{eq: norm convergence eq1} holds by \Cref{lemma:perf_diff}
and \cref{eq: norm convergence eq2} is due to the fact that
$\pi^{(t)}(a_s^{(t)} | s) = 1$. Since $\varepsilon$ is arbitrary, we have
proven \cref{eq:conv}.

Accordingly, since $\cV$ is a finite and bounded (and thus compact) set according to \Cref{claim},
there are two possible scenarios:
(i) There exists $\bm{V} \in \cV$ such that
$\lim_{t \rightarrow \infty} 
\bm{V}^{\pi_{\omega}^{(t)}}=\bm{V}$; (ii)
$\big\{\bm{V}^{\pi_{\omega}^{(t)}}\big\}_{t=1}^{\infty}$ has at least
two limit points in $\mathcal{V}$.

For (i), we obtain the desired results. Regarding (ii), we prove that
with probability one, this scenario will not occur.
{For any two distinct value vectors $\bm{V}^{\pi_1}\neq \bm{V}^{\pi_2}$
in $\mathcal{V}$, we know that the set of $\mu$ such that
$V^{\pi_1}(\mu) = V^{\pi_2}(\mu)$ is a subset in $\Delta(\cS)$ with dimension strictly less than the dimension of $\Delta(\cS)$ due
to the given hypothesis $\bm{V}^{\pi_1}\neq
\bm{V}^{\pi_2}$.
Thus, the set of $\mu$ such that $V^{\pi_1}(\mu) = V^{\pi_2}(\mu)$ is
of measure zero in $\Delta(\cS)$. Additionally, since there are only
finitely many distinct value vectors in $\mathcal{V}$ by \Cref{claim}, the union of those sets of $\mu$ is still of measure zero in $\Delta(\cS)$. Hence, by the uniformly random initialization of the surrogate initial state distribution $\mu$, we have that with probability one, any two distinct value vectors $\bm{V}^{\pi_1}\neq \bm{V}^{\pi_2}$ in $\mathcal{V}$ satisfy
$V^{\pi_1}(\mu) \neq V^{\pi_2}(\mu)$.}

Let $\bm{V}_1$ and $\bm{V}_2$ be two distinct limit points in
in scenario (ii), and $\{i_t\}$ and $\{j_t\}$ be two infinite index
sequences such that
\begin{equation*}
	\lim_{t \rightarrow \infty} \bm{V}^{\pi_{\omega}^{(i_t)}} = \bm{V}_1, \quad
	\lim_{t \rightarrow \infty} \bm{V}^{\pi_{\omega}^{(j_t)}} = \bm{V}_2,
\end{equation*}
which implies
\begin{equation}
	\label{eq:limit}
	\lim_{t \rightarrow \infty} V^{\pi_{\omega}^{(i_t)}}(\mu) =
	V_1(\mu), \quad
	\lim_{t \rightarrow \infty} V^{\pi_{\omega}^{(j_t)}}(\mu) =
	V_2(\mu).
\end{equation}

We have argued that with probability one, $V_1(\mu) \neq V_2(\mu)$,
and let us assume without loss of generality that $V_1(\mu) >
V_2(\mu)$ and denote $\epsilon \coloneqq V_1(\mu) - V_2(\mu)$.
From \cref{eq:limit}, almost surely there is $T_1 \geq 0$ such that
\begin{equation}
V^{\pi_{\omega}^{(i_t)}}(\mu) > V_1(\mu) - \frac{\epsilon}{2},\quad \forall
t \geq T_1.
\label{eq:Vvalue}
\end{equation}
By \cref{eq: V omega >= Vtheta}, we see that
$V^{\pi_{\omega}^{(t)}}(\mu)$ is monotonically increasing, which
together with \cref{eq:Vvalue} implies that
\[
V^{\pi_{\omega}^{(t)}}(\mu) \geq V_2 (\mu) + \frac{\epsilon}{2},
\quad \forall t \geq i_{T_1},
\]
contradicting the second equality of \cref{eq:limit} and that $\{j_t\}$ should
be an infinite sequence.
Therefore,
$\{\bm{V}^{\pi_{\omega}^{(t)}}\}$ converges to a point.
Since
$\lim_{t\rightarrow\infty} V^{\pi_{\omega}^{(t)}}(s)$ exists for all
$s\in\cS$,
we see that $\lim_{t\rightarrow \infty}Q^{\pi_\omega^{(t)}}(s,a)$ and
therefore $\lim_{t\rightarrow \infty}A^{\pi_\omega^{(t)}}(s,a)$ also
exist for all $s\in \cS$ and all $a \in \cA$ according to \cref{eq:state_value_function}.
Existence of the limit of $\{\bm{V}^{\pi_{\theta}^{(t)}}\}$,
$\{Q^{\pi_\theta^{(t)}}(s,a)\}$, and $\{A^{\pi_\theta^{(t)}}(s,a)\}$
follows the same argument above with \cref{eq:sum of Is+ and Is- eq1}
replaced by \cref{eq:pi A theta goes to 0}.

Finally, we demonstrate that the limits of the value function, the
$Q$-value function, and the advantage function with respect to
$\omega^{(t)}$ and $\theta^{(t)}$ are equal.
It suffices to show that $\lim_{t\rightarrow
\infty}V^{\pi_\omega^{(t)}}(s) = \lim_{t\rightarrow
\infty}V^{\pi_\theta^{(t)}}(s)$ for all $s\in\cS$, \textit{i.e.}, $\lim_{t\rightarrow \infty} \bm{V}^{\pi_{\omega}^{(t)}} = \lim_{t\rightarrow \infty} \bm{V}^{\pi_{\theta}^{(t)}}$.
We will prove it by contradiction.
Let 
\[
	\bm{V}^{\pi_{\omega}^{(\infty)}} \coloneqq \lim_{t\rightarrow
	\infty} \bm{V}^{\pi_{\omega}^{(t)}},\quad  \bm{V}^{\pi_{\theta}^{(\infty)}} \coloneqq \lim_{t\rightarrow \infty} \bm{V}^{\pi_{\theta}^{(t)}}
\]
and suppose that $\bm{V}^{\pi_{\omega}^{(\infty)}} \neq
\bm{V}^{\pi_{\theta}^{(\infty)}}$.
Recall that from our uniformly random initialization of $\mu$, if
$\bm{V}^{\pi_1}\neq \bm{V}^{\pi_2}$ in $\mathcal{V}$, then
$V^{\pi_1}(\mu) \neq V^{\pi_2}(\mu)$ almost surely.
We know that both $\bm{V}^{\pi_{\omega}^{(\infty)}}$ and $\bm{V}^{\pi_{\theta}^{(\infty)}}$ belong to $\mathcal{V}$.
By the hypothesis, with probability one we have
$V^{\pi_{\omega}^{(\infty)}}(\mu) \neq
V^{\pi_{\theta}^{(\infty)}}(\mu)$, leading to the contradiction due to
the fact that $\lim_{t\rightarrow \infty} V^{\pi_{\omega}^{(t)}}(\mu)
=\lim_{t\rightarrow \infty} V^{\pi_{\theta}^{(t)}}(\mu)$ from \cref{eq: V omega >= Vtheta}.
Hence, we obtain that with probability one, $\lim_{t\rightarrow \infty}V^{\pi_\omega^{(t)}}(s) = \lim_{t\rightarrow \infty}V^{\pi_\theta^{(t)}}(s)$ for all $s\in\cS$, and by \cref{eq:state_action_value_function} we also obtain the desired results for both the $Q$-value function and the advantage function.
\qedhere
\end{proof}
In the sequel, we use the following notations.
\begin{equation}
\begin{cases}
A^{(\infty)}(s,a) &\coloneqq
\lim_{t\rightarrow\infty}A^{\pi_{\omega}^{(t)}}(s,a) = \lim_{t\rightarrow\infty}A^{\pi_{\theta}^{(t)}}(s,a),\\
Q^{(\infty)}(s,a) &\coloneqq \lim_{t\rightarrow\infty}Q^{\pi_{\omega}^{(t)}}(s,a) = \lim_{t\rightarrow\infty}Q^{\pi_{\theta}^{(t)}}(s,a),\\
V^{(\infty)}(s) &\coloneqq
\lim_{t\rightarrow\infty}V^{\pi_{\omega}^{(t)}}(s) \quad = \lim_{t\rightarrow\infty}V^{\pi_{\theta}^{(t)}}(s).
\end{cases}
\label{eq:def}
\end{equation}
We divide the action space into the following subsets based on the advantage function:
\begin{align*}
    I_s^{+}&:=\{a\in \cA: A^{(\infty)}(s,a)>0\},\\
    I_s^{-}&:=\{a\in \cA: A^{(\infty)}(s,a)<0\},\\
    I_s^{0}&:=\{a\in \cA: A^{(\infty)}(s,a)=0\}.
\end{align*}
The above action sets are well-defined as the limiting value functions
exist by \Cref{lemma: monotone limits exist}.

For each state $s$, we define 
\begin{equation*}
    \Delta_s:=\min_{a\in I_s^+\cup I_s^-}\lvert A^{(\infty)}(s,a)\rvert.
\end{equation*}
Accordingly, we know that for each state $s\in \cS$, there must exist some $\bar{T}_s$ such that the following hold:
\begin{itemize}[leftmargin=*]
\item (i) For all $a\in I_s^+$,
    \begin{equation}
        A^{\pi_{\omega}^{(t)}}(s,a)\geq +\frac{\Delta_s}{4}, \quad \text{for all } t\geq \bar{T}_s,\label{eq:adv I_s+}
    \end{equation}
\item  (ii) For all $a\in I_s^-$,
\begin{equation}
    A^{\pi_{\omega}^{(t)}}(s,a)\leq -\frac{\Delta_s}{4}, \quad\text{for all }t\geq \bar{T}_s.\label{eq:adv I_s-}
\end{equation}
\item (iii) For all $a\in I_s^0$,
\begin{equation}
    \lvert A^{\pi_{\omega}^{(t)}}(s,a)\lvert \leq \frac{\Delta_s}{4}, \quad\text{for all }t\geq \bar{T}_s.\label{eq:adv I_s0}
\end{equation}
\end{itemize}

\begin{lemma}
\label{lemma:sum of Is+ and Is- goes to zero}
{Under the setting of \Cref{lemma: convergence to stationarity},}
for any state $s\in \cS$,
we have 
\begin{equation*}
	\lim_{t \rightarrow \infty}\,
	\sum_{a\in I_s^{+}\cup I_s^{-}}\pi_\omega^{(t)}(a\rvert
	s)= 0,\quad
	\lim_{t \rightarrow \infty}\,
	\sum_{a\in I_s^{+}\cup I_s^{-}}\pi_\theta^{(t)}(a\rvert
	s)= 0.
\end{equation*}
As a result, we also have
\begin{equation*}
	\lim_{t \rightarrow \infty}\,
	\sum_{a\in I_s^{0}}\pi_\omega^{(t)}(a\rvert
	s)= 1,\quad
	\lim_{t \rightarrow \infty}\,
	\sum_{a\in I_s^{0}}\pi_\theta^{(t)}(a\rvert
	s)= 1.
\end{equation*}
\end{lemma}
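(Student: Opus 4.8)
The plan is to derive the claim directly from the stationarity result \Cref{lemma: convergence to stationarity} together with the uniform lower bound on $|A^{\pi_{\omega}^{(t)}}(s,a)|$ that is available for actions in $I_s^+\cup I_s^-$ once $t$ is large. Fix a state $s\in\cS$. If $I_s^+\cup I_s^-=\emptyset$, the first pair of limits is vacuously $0$, so assume $I_s^+\cup I_s^-\neq\emptyset$; then $\Delta_s=\min_{a\in I_s^+\cup I_s^-}|A^{(\infty)}(s,a)|>0$ is well defined. First I would show that $\pi_{\omega}^{(t)}(a\rvert s)\to 0$ for each individual $a\in I_s^+\cup I_s^-$: by \cref{eq:adv I_s+} and \cref{eq:adv I_s-} we have $|A^{\pi_{\omega}^{(t)}}(s,a)|\ge \Delta_s/4$ for all $t\ge\bar{T}_s$, so
\[
0\le \pi_{\omega}^{(t)}(a\rvert s)\le \frac{4}{\Delta_s}\,\bigl|\pi_{\omega}^{(t)}(a\rvert s)\,A^{\pi_{\omega}^{(t)}}(s,a)\bigr|,\qquad \forall\, t\ge\bar{T}_s,
\]
and the right-hand side tends to $0$ by \cref{eq:sum of Is+ and Is- eq1} (a consequence of \Cref{lemma: convergence to stationarity}), so the squeeze theorem gives $\pi_{\omega}^{(t)}(a\rvert s)\to 0$. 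Summing over the finitely many actions in $I_s^+\cup I_s^-$ then yields $\lim_{t\to\infty}\sum_{a\in I_s^+\cup I_s^-}\pi_{\omega}^{(t)}(a\rvert s)=0$.

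The argument for $\pi_{\theta}^{(t)}$ is verbatim the same, replacing \cref{eq:sum of Is+ and Is- eq1} by \cref{eq:pi A theta goes to 0}; the thresholds $\bar{T}_s$ and the bound $|A^{\pi_{\theta}^{(t)}}(s,a)|\ge\Delta_s/4$ for large $t$ are legitimate because $A^{(\infty)}$ is the common limit of both $A^{\pi_{\omega}^{(t)}}(s,a)$ and $A^{\pi_{\theta}^{(t)}}(s,a)$ by \Cref{lemma: monotone limits exist}, so the same $\bar{T}_s$ (enlarged if necessary) works for both families. For the ``as a result'' statement, I would use that $\{I_s^+,I_s^-,I_s^0\}$ partitions $\cA$ and that $\sum_{a\in\cA}\pi_{\omega}^{(t)}(a\rvert s)=1$, whence $\sum_{a\in I_s^0}\pi_{\omega}^{(t)}(a\rvert s)=1-\sum_{a\in I_s^+\cup I_s^-}\pi_{\omega}^{(t)}(a\rvert s)\to 1$, and identically for $\pi_{\theta}^{(t)}$.

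There is essentially no serious obstacle here: the statement is a short corollary of the vanishing of $\pi(a\rvert s)A^{\pi}(s,a)$, once one observes that $|A^{\pi_{\omega}^{(t)}}(s,a)|$ (resp.\ $|A^{\pi_{\theta}^{(t)}}(s,a)|$) is bounded away from $0$ on $I_s^+\cup I_s^-$ for all large $t$. The only points needing minor care are the degenerate case $I_s^+\cup I_s^-=\emptyset$ (handled above) and the interchange of limit and finite sum, which is trivially valid since $\cA$ is finite.
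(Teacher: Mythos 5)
Your proposal is correct and follows essentially the same route as the paper: both arguments deduce $\pi^{(t)}(a\rvert s)\to 0$ for each $a\in I_s^{+}\cup I_s^{-}$ from the vanishing of $\pi^{(t)}(a\rvert s)A^{\pi^{(t)}}(s,a)$ (via \Cref{lemma: convergence to stationarity}) together with the fact that the advantage has a nonzero limit on those actions, then sum over the finite action set. Your explicit use of the $\Delta_s/4$ lower bound and the squeeze argument merely makes precise what the paper states more tersely, and your handling of the empty-set case and of the common threshold $\bar{T}_s$ for both the $\omega$ and $\theta$ iterates are harmless refinements.
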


\begin{proof}[Proof of \Cref{lemma:sum of Is+ and Is- goes to zero}]
	For any $a\in I_s^{+}$, we have
	$\lim_{t\rightarrow\infty}A^{\pi_{\omega}^{(t)}}(s,a) =
	\lim_{t\rightarrow\infty}A^{\pi_{\theta}^{(t)}}(s,a) > 0$. By
	\cref{eq:sum of Is+ and Is- eq1} and \cref{eq:pi A theta goes to
	0}, this implies that $\pi_\omega^{(t)}(a\rvert s)\rightarrow 0$
	and $\pi_\theta^{(t)}(a\rvert s)\rightarrow 0$ as $t\rightarrow
	\infty$ for all $a\in I_s^{+}$.
    Similarly, for any $a\in I_s^{-}$, we also have $\lim_{t\rightarrow\infty}A^{\pi_{\omega}^{(t)}}(s,a) = \lim_{t\rightarrow\infty}A^{\pi_{\theta}^{(t)}}(s,a) < 0$. Again, by \cref{eq:sum of Is+ and Is- eq1} and \cref{eq:pi A theta goes to 0}, this implies that $\pi_\omega^{(t)}(a\rvert s)\rightarrow 0$ and $\pi_\theta^{(t)}(a\rvert s)\rightarrow 0$ as $t\rightarrow \infty$ for all $a\in I_s^{-}$.
    Hence, we conclude that $\sum_{a\in I_s^{+}\cup I_s^{-}}\pi_\omega^{(t)}(a\rvert s)\rightarrow 0$ and $\sum_{a\in I_s^{+}\cup I_s^{-}}\pi_\theta^{(t)}(a\rvert s)\rightarrow 0$ as $t\rightarrow \infty$ due to the finiteness of the cardinality of $\mathcal{A}$.
\end{proof}

{\begin{lemma}
\label{lemma:Is+ bounded}
    Consider any state $s\in \cS$. Let $a$ be an action in
	$I_{s}^{+}$.
{Under the setting of \Cref{lemma: convergence to stationarity},}
	$\{\theta_{s,a}^{(t)}\}_{t=1}^{\infty}$ and $\{\omega_{s,a}^{(t)}\}_{t=1}^{\infty}$ are bounded from below.
\end{lemma}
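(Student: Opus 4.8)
The plan is to express the coordinate $\theta_{s,a}^{(t)}$ as a nonnegatively--weighted running sum of past gradients plus a bounded correction, and to exploit the fact that for $a\in I_s^{+}$ the relevant gradients are eventually strictly positive. Throughout I work on the almost--sure event on which the conclusions of \Cref{lemma: convergence to stationarity} hold and $\min_{s\in\cS}\mu(s)>0$.

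First I would localize the sign of the gradient. Since $a\in I_s^{+}$, we have $A^{(\infty)}(s,a)>0$, so by \cref{eq:adv I_s+} there is a finite $\bar{T}_s$ with $A^{\pi_{\omega}^{(t)}}(s,a)\ge \Delta_s/4>0$ for all $t\ge \bar{T}_s$. Combining this with \Cref{lemma:softmax_pg}, the bound $d^{\pi_\omega^{(j)}}_{\mu}(s)\ge(1-\gamma)\mu(s)>0$ from \Cref{lemma:lower_bound_of_state_visitation_distribution}, and $\pi_{\omega}^{(j)}(a|s)>0$, we get $\nabla_{s,a}^{(j)}>0$ for every $j\ge\bar{T}_s$. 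On the other hand, since $d_\mu^{\pi},\pi(a|s)\in[0,1]$ and $\lvert A^{\pi}(s,a)\rvert\le 1/(1-\gamma)$ by \Cref{lemma: reward range to value range} (with $R_{\max}=1$), we have the uniform bound $\lvert\nabla_{s,a}^{(j)}\rvert\le \tfrac{1}{(1-\gamma)^2}$ for all $j$.

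Next I would invoke the expansion in \Cref{lemma:theta as sum of gradients}, namely $\theta_{s,a}^{(t+1)}=\sum_{j=0}^{t}G(j,t)\,\eta^{(j+1)}\,\nabla_{s,a}^{(j)}+\theta_{s,a}^{(0)}$, together with two facts read off from \cref{eq:G}: $G(j,t)\ge 0$ for all $j,t$, and $\bar{G}_j:=\sup_{t}G(j,t)<\infty$ for each fixed $j$ (bounding every indicator in \cref{eq:G} by $1$ and summing the resulting series, using the telescoping identity $\sum_{k\ge 4}\tfrac{(j+2)(j+1)j}{(j+k)(j+k+1)(j+k+2)}=\tfrac{(j+2)(j+1)j}{2(j+4)(j+5)}$). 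Then, for every $t\ge \bar{T}_s$, splitting the sum at $\bar{T}_s$ gives
\begin{align*}
\theta_{s,a}^{(t+1)}
&=\sum_{j=\bar{T}_s}^{t}G(j,t)\,\eta^{(j+1)}\,\nabla_{s,a}^{(j)}
+\sum_{j=0}^{\bar{T}_s-1}G(j,t)\,\eta^{(j+1)}\,\nabla_{s,a}^{(j)}+\theta_{s,a}^{(0)}\\
&\ge \theta_{s,a}^{(0)}-\frac{\eta'}{(1-\gamma)^2}\sum_{j=0}^{\bar{T}_s-1}\bar{G}_j=:\theta_{s,a}^{(0)}-B,
\end{align*}
where the first sum is nonnegative because each of its terms has $G\ge 0$, $\eta^{(j+1)}>0$ and $\nabla_{s,a}^{(j)}>0$, and the second sum is bounded below using $G(j,t)\le\bar{G}_j$, $\eta^{(j+1)}\le\eta'$ and the uniform gradient bound. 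Since $B<\infty$ does not depend on $t$, and the finitely many iterates $\theta_{s,a}^{(t)}$ with $t\le\bar{T}_s$ are trivially bounded below, $\{\theta_{s,a}^{(t)}\}_{t}$ is bounded from below. For the $\omega$--sequence, \cref{algorithm:eq1} gives $\omega_{s,a}^{(t)}=\theta_{s,a}^{(t+1)}-\eta^{(t+1)}\nabla_{s,a}^{(t)}\ge\theta_{s,a}^{(t+1)}-\tfrac{\eta'}{(1-\gamma)^2}$, so boundedness below of $\{\theta_{s,a}^{(t)}\}$ immediately yields boundedness below of $\{\omega_{s,a}^{(t)}\}$.

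The main obstacle is the bookkeeping around the momentum coefficients $G(j,t)$: one must confirm both their nonnegativity and, crucially, that $\sup_{t}G(j,t)$ is finite for each $j$, so that the finitely many early gradients (whose signs are not controlled) contribute only a fixed finite amount; granting that, the argument is just the observation that the tail gradients all point ``up'' for an $I_s^{+}$ action, so $\theta_{s,a}^{(t)}$ can only be pushed upward after time $\bar{T}_s$.
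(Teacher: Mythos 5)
Your proof is correct, and it rests on the same two ingredients as the paper's: the gradients $\nabla_{s,a}^{(j)}$ are nonnegative for $j\ge\bar{T}_s$ when $a\in I_s^{+}$, and the cumulative momentum weights are controlled by the telescoping identity $\tfrac{1}{(j+k)(j+k+1)(j+k+2)}=\tfrac12\bigl[\tfrac{1}{(j+k)(j+k+1)}-\tfrac{1}{(j+k+1)(j+k+2)}\bigr]$ (the paper's \cref{eq:Is+ bounded eq8} is exactly your bound $\sup_t G(j,t)<\infty$ in disguise). The organization differs, though, in two ways worth noting. First, the paper splits on the sign of $\delta_{\bar{T}_s}:=\theta_{s,a}^{(\bar{T}_s)}-\theta_{s,a}^{(\bar{T}_s-1)}$ and, in the negative case, tracks only how that single momentum increment propagates forward while discarding the nonnegative gradient contributions; you instead invoke the global expansion \cref{eq:theta as sum of gradients 7}, split the sum at $\bar{T}_s$, and bound the entire pre-$\bar{T}_s$ history uniformly via $G(j,t)\le\bar{G}_j$ and the uniform gradient bound $1/(1-\gamma)^2$. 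This avoids the case analysis entirely and yields an explicit constant $B$. Second, your treatment of the $\omega$-sequence — simply $\omega_{s,a}^{(t)}=\theta_{s,a}^{(t+1)}-\eta^{(t+1)}\nabla_{s,a}^{(t)}\ge\theta_{s,a}^{(t+1)}-\eta'/(1-\gamma)^2$ from \cref{algorithm:eq1} — is cleaner and more direct than the paper's argument, which reasons about whether $\{\theta_{s,a}^{(t)}\}$ is eventually monotone and bounds $\omega$ through the momentum update \cref{algorithm:eq2}. Both approaches are sound; yours trades the paper's minimal bookkeeping for a more uniform and arguably more transparent accounting of the momentum coefficients.
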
}
\begin{proof}[Proof of \Cref{lemma:Is+ bounded}]
	Recall the part of our definition in \cref{eq:adv I_s+} of
	$\bar{T}_s$.
	From \Cref{lemma:softmax_pg}, we see that
\begin{equation}
	\eta^{(t)} \nabla_{\theta_{s,a}}{V^{\pi_{\theta}}(\mu)}
	\Big\rvert_{\theta = \omega^{(t-1)}} \geq 0,\quad \forall t \geq
	\bar{T}_s+1.
	\label{eq:posgrad}
\end{equation}
	We let
	$\delta_{\bar{T}_s}:=\theta_{s,a}^{(\bar{T}_s)}-\theta_{s,a}^{(\bar{T}_s-1)}$.
	Regarding the case $\delta_{\bar{T}_s} \geq 0$,
	by \cref{eq:posgrad} and the update rules in
	\cref{algorithm:eq1,algorithm:eq2},
	the sequences $\{\theta_{s,a}^{(t)}\}_{t\geq \bar T_s+1},
	\{\omega_{s,a}^{(t)}\}_{t\geq \bar T_s+1}$ are monotonically
	increasing and therefore lower-bounded.

	On the other hand, if $\delta_{\bar{T}_s} < 0$, for any $M\in \bbN$, we have
    \begin{align}
        \theta^{(\bar{T}_s+M)}_{s,a}
        &=\omega^{(\bar{T}_s+M-1)}_{s,a}+\eta^{(\bar{T}_s+M)} \frac{\partial V^{\pi_\theta}(\mu)}{\partial \theta_{s,a}}\Big\rvert_{\theta=\omega^{(\bar{T}_s+M-1)}} \nonumber \\
        &\geq \theta^{(\bar{T}_s+M-1)}_{s,a}
         +\mathbb{I} \left\{V^{\pi_{\varphi}^{(\bar{T}_s + M -1)}}(\mu) \ge V^{\pi_{\theta}^{(\bar{T}_s + M -1)}}(\mu)\right \} \cdot \frac{\bar{T}_s+M-2}{\bar{T}_s+M+1}(\theta_{s,a}^{(\bar{T}_s+M-1)}-\theta_{s,a}^{(\bar{T}_s+M-2)}) \nonumber \\
        &\geq \theta_{s,a}^{(\bar{T}_s)}+\frac{\bar{T}_s-1}{\bar{T}_s+2}\delta_{\bar{T}_s}+\frac{\bar{T}_s(\bar{T}_s-1)}{(\bar{T}_s+3)(\bar{T}_s+2)}\delta_{\bar{T}_s}+\cdots+\frac{(\bar{T}_s+M-2)\cdots (\bar{T}_s-1)}{(\bar{T}_s+M+1)\cdots (\bar{T}_s+2)}\delta_{\bar{T}_s}\label{eq:Is+ bounded eq3}\\
        &= \theta_{s,a}^{(\bar{T}_s)}+\Big[\frac{\bar{T}_s-1}{\bar{T}_s+2}+\frac{\bar{T}_s(\bar{T}_s-1)}{(\bar{T}_s+3)(\bar{T}_s+2)}+\sum_{\tau=2}^{M}\frac{(\bar{T}_s+1)\bar{T}_s(\bar{T}_s-1)}{(\bar{T}_s+\tau+2)(\bar{T}_s+\tau+1)(\bar{T}_s+\tau)}\Big] \delta_{\bar{T}_s},\nonumber
    \end{align}
    where \cref{eq:Is+ bounded eq3} holds by disregarding the updates
	contributed by the gradients taken at all $t > \bar{T}_s$, which
	are always nonnegative according to \cref{eq:posgrad}.
    
    We note that for any $M\in \bbN$,
    \begin{align}
        &~\sum_{\tau=2}^{M}\frac{(\bar{T}_s+1)\bar{T}_s(\bar{T}_s-1)}{(\bar{T}_s+\tau+2)(\bar{T}_s+\tau+1)(\bar{T}_s+\tau)}\nonumber\\
        =&~(\bar{T}_s+1)\bar{T}_s(\bar{T}_s-1)\sum_{\tau=2}^{M}\frac{1}{2}\Big(\frac{1}{(\bar{T}_s+\tau)(\bar{T}_s+\tau+1)}-\frac{1}{(\bar{T}_s+\tau+1)(\bar{T}_s+\tau+2)}
		\Big)\nonumber\\
        =&~(\bar{T}_s+1)\bar{T}_s(\bar{T}_s-1)\cdot
		\frac{1}{2}\Big(\frac{1}{(\bar{T}_s+2)(\bar{T}_s+3)}-\frac{1}{(\bar{T}_s+M+1)(\bar{T}_s+M+2)}\Big)\nonumber\\
        \leq&~ \frac{\bar{T}_s}{2}.\label{eq:Is+ bounded eq8}
    \end{align}
    Therefore, it follows that for any $M\in \bbN$,
    \begin{align*}
        \theta^{(\bar{T}_s+M)}_{s,a}\geq
		\theta^{(\bar{T}_s)}_{s,a}-\left(2+\frac{\bar{T}_s}{2}\right)\lvert \delta_{\bar{T}_s}\rvert.
    \end{align*}
    Hence, $\theta^{(t)}_{s,a}\geq \theta^{(\bar{T}_s)}_{s,a}-(2+\frac{\bar{T}_s}{2})\lvert \delta_{\bar{T}_s}\rvert $ for all $t\geq \bar{T}_s$.
    {By \cref{eq:posgrad}, if there is some $\bar{T}_s' \ge \bar{T}_s$ such that $\theta^{(t)}_{s, a} - \theta^{(t-1)}_{s, a} \ge 0$ for any $a\in I_{s}^{+}$, we obtain that $\{\omega^{(t)}_{s, a}\}_{t=1}^{\infty}$ is also bounded from below since $a\in I_{s}^{+}$. Otherwise, if there is no such $\bar{T}_s'$, then $\{\theta\}_{t=1}^{\infty}$ is monotonically decreasing after time $\bar{T}_s'$. Thus, we can bound $\{\omega^{(t)}_{s, a}\}_{t=1}^{\infty}$ from below due to the momentum update \cref{algorithm:eq2}, where $\omega^{(t)}_{s, a} \ge 2(\theta^{(\bar{T}_s)}_{s,a}-(2+\frac{\bar{T}_s}{2})\lvert \delta_{\bar{T}_s}\rvert ) - \max_{t \le \bar{T}_s} \theta_{s, a}^{(t)}$ for all $t \ge \bar{T}_s$.
    Hence, we obtain that $\{\omega^{(t)}_{s,a}\}_{t=1}^{\infty}$ is alsobounded from below for all $s\in\cS$ and $a\in I_{s}^{+}$.}
\end{proof}

{\begin{lemma}
\label{lemma:Is- bounded}
    Consider any state $s\in \cS$. Let $a$ be an action in
	$I_{s}^{-}$.
{Under the setting of \Cref{lemma: convergence to stationarity},}
	$\{\theta_{s,a}^{(t)}\}_{t=1}^{\infty}$ and $\{\omega_{s,a}^{(t)}\}_{t=1}^{\infty}$ are bounded from above.
\end{lemma}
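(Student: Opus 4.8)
The plan is to mirror the proof of \Cref{lemma:Is+ bounded} with all signs reversed, since an action $a\in I_s^-$ satisfies $A^{(\infty)}(s,a)<0$. First I would use the characterization of $\bar T_s$ in \cref{eq:adv I_s-}: for every $t\ge \bar T_s$ we have $A^{\pi_\omega^{(t)}}(s,a)\le -\Delta_s/4<0$. By the softmax gradient formula in \Cref{lemma:softmax_pg}, $\nabla_{\theta_{s,a}}V^{\pi_\theta}(\mu)$ has the same sign as $A^{\pi_\theta}(s,a)$ because $d^{\pi_\theta}_\mu(s)\ge(1-\gamma)\mu(s)>0$ almost surely (by \Cref{lemma:lower_bound_of_state_visitation_distribution} together with the random initialization of $\mu$) and $\pi_\theta(a|s)>0$; hence $\eta^{(t)}\nabla_{\theta_{s,a}}V^{\pi_\theta}(\mu)\big\rvert_{\theta=\omega^{(t-1)}}\le 0$ for all $t\ge \bar T_s+1$, so every gradient step past $\bar T_s$ can only decrease $\theta_{s,a}$.

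Next I would split on the sign of $\delta_{\bar T_s}\coloneqq\theta_{s,a}^{(\bar T_s)}-\theta_{s,a}^{(\bar T_s-1)}$. If $\delta_{\bar T_s}\le 0$, then combining the nonpositive gradient step with the momentum update \cref{algorithm:eq1,algorithm:eq2} shows that $\{\theta_{s,a}^{(t)}\}_{t\ge\bar T_s}$ and $\{\omega_{s,a}^{(t)}\}_{t\ge\bar T_s}$ are monotonically nonincreasing, hence bounded from above. If $\delta_{\bar T_s}>0$, I would discard the nonpositive gradient contributions for all $t>\bar T_s$ and retain only the momentum terms; by the same telescoping identity that produced \cref{eq:Is+ bounded eq8}, their cumulative effect is at most $(2+\bar T_s/2)\,\delta_{\bar T_s}$, yielding $\theta_{s,a}^{(\bar T_s+M)}\le\theta_{s,a}^{(\bar T_s)}+\bigl(2+\tfrac{\bar T_s}{2}\bigr)\delta_{\bar T_s}$ for every $M\in\bbN$, i.e., a uniform upper bound on $\{\theta_{s,a}^{(t)}\}$.

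It then remains to bound $\{\omega_{s,a}^{(t)}\}$ from above in the $\delta_{\bar T_s}>0$ branch, which I would handle exactly as in \Cref{lemma:Is+ bounded}: if the increments $\theta_{s,a}^{(t)}-\theta_{s,a}^{(t-1)}$ are eventually nonpositive then $\omega_{s,a}^{(t)}\le\theta_{s,a}^{(t)}$ by \cref{algorithm:eq2} and the $\theta$-bound transfers; otherwise the momentum magnitude $\tfrac{t-1}{t+2}\lvert\theta_{s,a}^{(t)}-\theta_{s,a}^{(t-1)}\rvert$ is controlled by the (two-sided) bound just established on the $\theta$-sequence, keeping $\omega_{s,a}^{(t)}$ bounded above for all $t\ge\bar T_s$. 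I do not expect a genuinely new difficulty beyond bookkeeping; the only step requiring slight care is this last $\omega$-bound in the $\delta_{\bar T_s}>0$ case, where the momentum term must be upper-bounded rather than lower-bounded, but it is the exact reflection of the corresponding step in the proof of \Cref{lemma:Is+ bounded}.
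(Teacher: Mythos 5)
Your proposal is correct and follows essentially the same route as the paper's proof: reverse the signs in the argument of \Cref{lemma:Is+ bounded}, use \cref{eq:adv I_s-} and \Cref{lemma:softmax_pg} to get nonpositive gradient steps after $\bar T_s$, split on the sign of $\delta_{\bar T_s}$, apply the telescoping bound from \cref{eq:Is+ bounded eq8} to cap the cumulative momentum contribution by $(2+\bar T_s/2)\lvert\delta_{\bar T_s}\rvert$, and then transfer the bound to $\omega_{s,a}^{(t)}$ via \cref{algorithm:eq2}. Your observation that the final $\omega$-bound needs control of $\theta^{(t)}-\theta^{(t-1)}$ from both sides matches what the paper does implicitly (its stated bound involves both the upper bound on $\theta_{s,a}^{(t)}$ and $\min_{t\le\bar T_s}\theta_{s,a}^{(t)}$), so there is no gap.
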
}
\begin{proof}[Proof of \Cref{lemma:Is- bounded}]
We follow the same procedure as that in
\Cref{lemma:Is+ bounded}.  Again, let $\delta_{\bar{T}_s}:=\theta_{s,a}^{(\bar{T}_s)}-\theta_{s,a}^{(\bar{T}_s-1)}$.
	From \cref{eq:adv I_s+} and \Cref{lemma:softmax_pg}, we see that
\begin{equation}
	\eta^{(t)} \nabla_{\theta_{s,a}}{V^{\pi_{\theta}}(\mu)}
	\Big\rvert_{\theta = \omega^{(t-1)}} \leq 0,\quad \forall t \geq
	\bar{T}_s+1.
	\label{eq:neggrad}
\end{equation}
Therefore, by \cref{algorithm:eq1,algorithm:eq2,eq:neggrad}, we have
\begin{align*}
    \theta^{(\bar{T}_s+1)}_{s,a}&=\omega^{(\bar{T}_s)}_{s,a}+\eta^{(\bar{T}_s+1)}\cdot \frac{\partial V^{\pi_\theta}(\mu)}{\partial \theta_{s,a}}\Big\rvert_{\theta=\omega^{(\bar{T}_s)}}\leq \omega^{(\bar{T}_s)}_{s,a}.
\end{align*}
Regarding the case of $\delta_{\bar{T}_s} \leq 0$, the result directly
holds by \cref{eq:neggrad} as argued in the proof of \Cref{lemma:Is+
bounded}.
Considering $\delta_{\bar{T}_s} > 0$, for any $M\in \bbN$, we have
\begin{align*}
    \theta^{(\bar{T}_s+M)}_{s,a}
    &=\omega^{(\bar{T}_s+M-1)}_{s,a}+\eta^{(\bar{T}_s+M)}\cdot \frac{\partial V^{\pi_\theta}(\mu)}{\partial \theta_{s,a}}\Big\rvert_{\theta=\omega^{(\bar{T}_s+M-1)}}\\
    &\leq \theta^{(\bar{T}_s+M-1)}_{s,a}
    + \mathbb{I} \left\{V^{\pi_{\varphi}^{(T_0 + M -1)}}(\mu) \ge V^{\pi_{\theta}^{(T_0 + M -1)}}(\mu)\right \} \cdot\frac{\bar{T}_s+M-2}{\bar{T}_s+M+1}(\theta_{s,a}^{(\bar{T}_s+M-1)}-\theta_{s,a}^{(\bar{T}_s+M-2)})\\
    &\leq \theta_{s,a}^{(\bar{T}_s)}+\frac{\bar{T}_s-1}{\bar{T}_s+2}\delta_{\bar{T}_s}+\frac{\bar{T}_s(\bar{T}_s-1)}{(\bar{T}_s+3)(\bar{T}_s+2)}\delta_{\bar{T}_s}+\cdots+\frac{(\bar{T}_s+M-2)\cdots (\bar{T}_s-1)}{(\bar{T}_s+M+1)\cdots (\bar{T}_s+2)}\delta_{\bar{T}_s}\\
    &=\theta_{s,a}^{(\bar{T}_s)}+\Big[\frac{\bar{T}_s-1}{\bar{T}_s+2}+\frac{\bar{T}_s(\bar{T}_s-1)}{(\bar{T}_s+3)(\bar{T}_s+2)}+\sum_{\tau=2}^{M}\frac{(\bar{T}_s+1)\bar{T}_s(\bar{T}_s-1)}{(\bar{T}_s+\tau+2)(\bar{T}_s+\tau+1)(\bar{T}_s+\tau)}\Big] \delta_{\bar{T}_s}.
\end{align*}
By \cref{eq:Is+ bounded eq8}, we know $\sum_{\tau=2}^{M}\frac{(\bar{T}_s+1)\bar{T}_s(\bar{T}_s-1)}{(\bar{T}_s+\tau+2)(\bar{T}_s+\tau+1)(\bar{T}_s+\tau)}\leq \frac{\bar{T}_s}{2}$. 
As a result, for any $M\in \bbN$,
\begin{align*}
    \theta^{(\bar{T}_s+M)}_{s,a}\leq \theta^{(\bar{T}_s)}_{s,a}+(2+\frac{\bar{T}_s}{2})\lvert \delta_{\bar{T}_s}\rvert.
\end{align*}
Hence, $\theta^{(t)}_{s,a}\leq \theta^{(\bar{T}_s)}_{s,a}+(2+\frac{\bar{T}_s}{2})\lvert \delta_{\bar{T}_s}\rvert $ for all $t\geq \bar{T}_s$.
{By using the same argument as in \Cref{lemma:Is+ bounded}, we have $\omega_{s, a}^{(t)} \le 2(\theta^{(\bar{T}_s)}_{s,a}+(2+\frac{\bar{T}_s}{2})\lvert \delta_{\bar{T}_s}\rvert ) - \min_{t\le\bar{T}_s}$ for all $s\in I_{s}^{-}$.
Hence, we obtain that $\{\omega^{(t)}_{s,a}\}_{t=1}^{\infty}$ is bounded from above for all $s\in\cS$ and $a\in I_{s}^{+}$.}
\end{proof}

\begin{lemma}
\label{lemma:max Is0}
Consider any state $s\in \cS$.
{Under the setting of \Cref{lemma: convergence to stationarity},}
if $I_s^{+}$ is non-empty, then
	\[
		\lim_{t \rightarrow \infty}\; \max_{a\in I_s^{0}}\,
		\omega_{s,a}^{(t)}= \infty,\quad
		\lim_{t \rightarrow \infty}\;
		\max_{a\in I_s^{0}} \theta_{s,a}^{(t)}= \infty.
	\]
\end{lemma}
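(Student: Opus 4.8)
The plan is to exploit the conservation law $\sum_{a\in\cA}\theta^{(t)}_{s,a}=\sum_{a\in\cA}\theta^{(0)}_{s,a}$ from \Cref{lemma:sum conservation} together with the bounds established in \Cref{lemma:Is+ bounded} and \Cref{lemma:Is- bounded}. The key observation is that the action space at state $s$ decomposes as $\cA = I_s^+ \cup I_s^- \cup I_s^0$. First I would argue that, since $I_s^+$ is non-empty, \Cref{lemma:sum of Is+ and Is- goes to zero} gives $\sum_{a\in I_s^+}\pi^{(t)}_\omega(a|s)\to 0$, and for a fixed action $a_+\in I_s^+$ this forces $\pi^{(t)}_\omega(a_+|s)\to 0$, i.e. $\omega^{(t)}_{s,a_+}-\ln Z^{(t)}_\omega(s)\to-\infty$ where $Z^{(t)}_\omega(s)=\sum_{a}\exp(\omega^{(t)}_{s,a})$. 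But by \Cref{lemma:Is+ bounded}, $\omega^{(t)}_{s,a_+}$ is bounded from below, so we conclude $\ln Z^{(t)}_\omega(s)\to\infty$, hence $\max_{a\in\cA}\omega^{(t)}_{s,a}\to\infty$ (since $\ln Z^{(t)}_\omega(s)\le \ln|\cA| + \max_a \omega^{(t)}_{s,a}$).

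Next I would show the maximizing action must eventually lie in $I_s^0$. By \Cref{lemma:Is+ bounded}, the coordinates $\{\omega^{(t)}_{s,a}\}$ for $a\in I_s^+$ are bounded below but, by the same $\pi^{(t)}_\omega(a|s)\to 0$ argument applied to each $a\in I_s^+$ together with $\ln Z^{(t)}_\omega(s)\to\infty$, they cannot achieve the maximum in the limit — more carefully, $\omega^{(t)}_{s,a}-\ln Z^{(t)}_\omega(s)\to -\infty$ means $\omega^{(t)}_{s,a} - \max_{a'}\omega^{(t)}_{s,a'}\to-\infty$, so no $a\in I_s^+$ can be within bounded distance of the max. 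By \Cref{lemma:Is- bounded}, the coordinates for $a\in I_s^-$ are bounded from above, so they too cannot diverge to $\infty$ and hence cannot be the (diverging) maximizer. Therefore for all large $t$ the maximum over $\cA$ of $\omega^{(t)}_{s,a}$ is attained at some $a\in I_s^0$, which yields $\max_{a\in I_s^0}\omega^{(t)}_{s,a}\to\infty$. The identical argument with $\theta^{(t)}$ in place of $\omega^{(t)}$, using \cref{eq:pi A theta goes to 0} and the $\theta$-parts of Lemmas \ref{lemma:Is+ bounded} and \ref{lemma:Is- bounded}, gives $\max_{a\in I_s^0}\theta^{(t)}_{s,a}\to\infty$.

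The main obstacle I anticipate is handling the case where $I_s^0$ might a priori seem to force a contradiction with conservation — one needs to be sure that $\ln Z^{(t)}_\omega(s)\to\infty$ is consistent, i.e. that the "mass" going to infinity is genuinely carried by $I_s^0$ and not spuriously by bounded-below $I_s^+$ coordinates that happen to grow. This is resolved precisely because $\pi^{(t)}_\omega(a|s)\to 0$ for every $a\in I_s^+$ (not merely in sum), so each such coordinate's gap to the maximum diverges; combined with the upper bound on $I_s^-$ coordinates, the partition function's divergence is pinned to $I_s^0$. A secondary technical point is ensuring the limits of $V$, $Q$, $A$ (hence the well-definedness of $I_s^+, I_s^-, I_s^0$ and the validity of \cref{eq:adv I_s+}--\cref{eq:adv I_s0}) are in force, which holds almost surely by \Cref{lemma: monotone limits exist} under the random initialization of $\mu$; all steps above are therefore "almost surely" statements inheriting that event.
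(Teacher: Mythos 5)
Your proof is correct and follows essentially the same route as the paper: both deduce $\ln Z^{(t)}\to\infty$ from the fact that $\pi^{(t)}(a_+|s)\to 0$ while $\omega^{(t)}_{s,a_+}$ stays bounded below (Lemmas \ref{lemma:sum of Is+ and Is- goes to zero} and \ref{lemma:Is+ bounded}), and then read off the divergence of the $I_s^0$ coordinates from the softmax form. The only cosmetic difference is that you pin the global argmax to $I_s^0$ via \Cref{lemma:Is- bounded}, whereas the paper's (terse) argument can conclude more directly from $\sum_{a\in I_s^0}\pi^{(t)}(a|s)\to 1$, which forces $\max_{a\in I_s^0}\omega^{(t)}_{s,a}\ge \ln Z^{(t)}-\ln(2|\cA|)$ eventually; both are valid.
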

\begin{proof}
    By \Cref{lemma:sum of Is+ and Is- goes to zero},
	we know $\sum_{a\in I_s^{0}}\pi_{\omega}^{(t)}(a\rvert
	s)\rightarrow 1$ and $\sum_{a\in I_s^{0}}\pi_{\theta}^{(t)}(a\rvert s)\rightarrow 1$ as $t\rightarrow \infty$.
    Moreover, by \Cref{lemma:Is+ bounded}, we know
	$\{\omega_{s,a}^{(t)}\}$ and $\{\theta_{s,a}^{(t)}\}$ are lower-bounded for all $a\in I_s^{+}$.
    Therefore, under the softmax policy parameterization, we obtain
	the desired results.
\end{proof}

Recall from \cref{eq:adv I_s-} that for all $a\in I_s^-$, we have $A^{\pi_{\omega}^{(t)}}(s,a)\leq -\frac{\Delta_s}{4}$ for all $t\geq \bar{T}_s$.
{\begin{lemma}
\label{lemma:Is- negative infinity}
Consider any state $s\in \cS$.
{Under the setting of \Cref{lemma: convergence to stationarity},}
if $I_s^{+}$ is non-empty, then for any $a\in I_s^{-}$, we have $\omega_{s,a}^{(t)}\rightarrow -\infty$ as $t\rightarrow \infty$.
\end{lemma}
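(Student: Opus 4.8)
The plan is to first reduce the claim about $\{\omega_{s,a}^{(t)}\}$ to the analogous claim about $\{\theta_{s,a}^{(t)}\}$, then show $\theta_{s,a}^{(t)}$ is eventually nonincreasing so it converges in $[-\infty,\infty)$, and finally rule out a finite limit by contradiction.

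\textbf{Step 1 (Eventual monotonicity and reduction to $\theta$).} Fix $a\in I_s^-$. By \cref{eq:adv I_s-} we have $A^{\pi_{\omega}^{(t)}}(s,a)\le -\Delta_s/4$ for all $t\ge\bar T_s$, so by \Cref{lemma:softmax_pg}, \Cref{lemma:lower_bound_of_state_visitation_distribution}, and the fact that $\min_{s'}\mu(s')>0$ almost surely under the random initialization of $\mu$,
\begin{equation*}
\nabla_{s,a}^{(t)}=\frac{d^{\pi_{\omega}^{(t)}}_{\mu}(s)}{1-\gamma}\,\pi_{\omega}^{(t)}(a|s)\,A^{\pi_{\omega}^{(t)}}(s,a)\;\le\;-\tfrac{\mu(s)\Delta_s}{4}\,\pi_{\omega}^{(t)}(a|s)<0,\qquad \forall\,t\ge\bar T_s .
\end{equation*}
Writing $d_t:=\theta_{s,a}^{(t+1)}-\theta_{s,a}^{(t)}$, the updates \cref{algorithm:eq1,algorithm:eq2,algorithm:eq3} give the recursion $d_t=\beta_t d_{t-1}+\eta^{(t+1)}\nabla_{s,a}^{(t)}$ with $\beta_t\in[0,1)$ equal to $\frac{t-1}{t+2}$ times the restart indicator. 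Arguing exactly as in the proof of \Cref{lemma:Is- bounded} (split on the sign of $d_{\bar T_s-1}$; when positive, use that $\prod_i\beta_i\to 0$ so the initial positive momentum is eventually absorbed while the strictly negative gradient terms persist) shows $d_t<0$ for all large $t$, so $\{\theta_{s,a}^{(t)}\}$ is eventually nonincreasing and converges to some $L_a\in[-\infty,\infty)$. Since $\omega_{s,a}^{(t)}-\theta_{s,a}^{(t)}=\beta_t d_{t-1}\le 0$ once $d_{t-1}<0$, we have $\omega_{s,a}^{(t)}\le\theta_{s,a}^{(t)}$ eventually, so it suffices to prove $L_a=-\infty$.

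\textbf{Step 2 (If $L_a$ is finite, the mass on $a$ is summable).} Suppose $L_a>-\infty$. Then $d_t\to 0$, hence $\omega_{s,a}^{(t)}\to L_a$ and $\inf_t\omega_{s,a}^{(t)}>-\infty$. By \Cref{lemma:sum of Is+ and Is- goes to zero} we have $\pi_{\omega}^{(t)}(a|s)\to 0$; since $\pi_{\omega}^{(t)}(a|s)=\exp(\omega_{s,a}^{(t)})/Z_{\omega}^{(t)}(s)$ with $\exp(\omega_{s,a}^{(t)})$ bounded away from $0$, this forces $Z_{\omega}^{(t)}(s)\to\infty$ and $\pi_{\omega}^{(t)}(a|s)\ge c_1/Z_{\omega}^{(t)}(s)$ for some $c_1>0$. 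Moreover, $\theta_{s,a}^{(t)}$ eventually nonincreasing and bounded below gives $\sum_t(\theta_{s,a}^{(t)}-\theta_{s,a}^{(t+1)})<\infty$, and once $d_{t-1}\le 0$ we have $\theta_{s,a}^{(t)}-\theta_{s,a}^{(t+1)}=-\beta_t d_{t-1}+\eta^{(t+1)}|\nabla_{s,a}^{(t)}|\ge\eta^{(t+1)}|\nabla_{s,a}^{(t)}|\ge 0$; combined with $\eta^{(t+1)}\ge\eta$ and the bound on $|\nabla_{s,a}^{(t)}|$ from Step 1 this yields $\sum_t\pi_{\omega}^{(t)}(a|s)<\infty$, equivalently $\sum_t 1/Z_{\omega}^{(t)}(s)<\infty$.

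\textbf{Step 3 (Deriving the contradiction — the main obstacle).} It remains to show that $L_a>-\infty$ is incompatible with $I_s^+\neq\emptyset$, and this is where the memory effect of the momentum makes things delicate. The plan is a contradiction argument that controls the growth rate of $Z_{\omega}^{(t)}(s)$: since $\pi_{\omega}^{(t)}(a'|s)\to 0$ for every $a'\notin I_s^0$, the normalizer is, to leading order, carried by the coordinates in $I_s^0$, which are driven only by advantage terms that vanish along the trajectory (\cref{eq:adv I_s0}); the conserved parameter-sum (\Cref{lemma:sum conservation}), together with the lower boundedness of the $I_s^+$-coordinates (\Cref{lemma:Is+ bounded}) and the upper boundedness of the $I_s^-$-coordinates (\Cref{lemma:Is- bounded}), pins down the admissible growth, and quantifying it — while accounting for the momentum coefficients $G(j,t)$ in \cref{eq:theta as sum of gradients 7} that amplify past gradients — shows that the divergence $\max_{b\in I_s^0}\omega_{s,b}^{(t)}\to\infty$ guaranteed by \Cref{lemma:max Is0} cannot be reconciled with the finiteness of $L_a$ and the conservation law, a contradiction. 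Once $L_a=-\infty$, the reduction in Step 1 ($\omega_{s,a}^{(t)}\le\theta_{s,a}^{(t)}$ eventually) gives $\omega_{s,a}^{(t)}\to-\infty$, completing the proof. I expect Step 3 to be the bottleneck, precisely because the momentum couples the evolution of $\theta_{s,a}^{(t)}$ for $a\in I_s^-$ to the far-away, unboundedly growing coordinates in $I_s^0$, so a careful bookkeeping of the cumulative momentum is needed to close the argument.
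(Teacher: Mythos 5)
Your Steps 1--2 are essentially sound (modulo one point below), but Step 3 --- the only place a contradiction can actually come from --- is a statement of intent rather than an argument, and the plan you sketch omits the mechanism that makes the contradiction work. Assuming $L_a>-\infty$, the conservation law of \Cref{lemma:sum conservation} together with \Cref{lemma:max Is0} only tells you that \emph{some} coordinate $\theta_{s,a'}^{(t)}$ must satisfy $\liminf_t\theta_{s,a'}^{(t)}=-\infty$ to balance the growth of the $I_s^0$ coordinates; it is perfectly consistent with the conservation law for that sacrificial coordinate to be some $a'\neq a$, with your fixed $a\in I_s^-$ staying bounded. Likewise, the summability $\sum_t 1/Z_{\omega}^{(t)}(s)<\infty$ you derive in Step 2 is not in tension with anything: $Z_{\omega}^{(t)}(s)$ growing like $t^2$ or faster is entirely admissible. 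The missing ingredient --- and the heart of the paper's proof --- is a \emph{gradient-ratio comparison} coupling $a$ to $a'$: on any iteration $t'$ where $\theta_{s,a'}^{(t')}\le\vartheta-\delta$ while $\theta_{s,a}^{(t')}\ge\vartheta$, softmax parameterization gives $\pi_{\omega}^{(t')}(a|s)/\pi_{\omega}^{(t')}(a'|s)\ge e^{\delta}$, and since $|A^{\pi_{\omega}^{(t')}}(s,a)|\ge\Delta_s/4$ while $|A^{\pi_{\omega}^{(t')}}(s,a')|\le 1/(1-\gamma)$, one gets $\lvert\nabla_{s,a}^{(t')}\rvert\ge e^{\delta}\tfrac{(1-\gamma)\Delta_s}{4}\lvert\nabla_{s,a'}^{(t')}\rvert$. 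The unbounded cumulative negative drift that drives $\theta_{s,a'}$ down (tracked through the momentum weights $G(j,t)$, and split into the two cases of whether the last excursion time $\nu(t)$ above $\vartheta-\delta$ stays bounded or not) then forces a proportional unbounded negative drift on $\theta_{s,a}$, contradicting $L_a>-\infty$. Without this coupling your Step 3 does not close.

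A secondary issue in Step 1: your claim that $d_t<0$ for all large $t$ does not follow from the argument given. It is true that once $d_{t_0}\le 0$ for some $t_0\ge\bar T_s$ the sign is locked in (since $\beta_{t}d_{t-1}\le 0$ and the gradient term is strictly negative), but the existence of such a $t_0$ is not implied by ``$\prod_i\beta_i\to 0$ while the negative gradients persist,'' because the gradient terms themselves vanish at the rate $\pi_{\omega}^{(t)}(a|s)\to 0$ and could a priori be outrun by the decaying positive momentum. The paper avoids this by establishing $\liminf_t\theta_{s,a}^{(t)}=-\infty$ \emph{first} and only then deducing eventual monotone decrease (if $d_t>0$ forever, $\theta_{s,a}^{(t)}$ would be increasing, contradicting the liminf). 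Your overall structure can absorb this --- in the ``$d_t>0$ forever'' case $\theta_{s,a}^{(t)}$ is increasing and bounded above by \Cref{lemma:Is- bounded}, so $L_a$ is finite and your Step 3 contradiction would still have to apply --- but as written the monotonicity claim is unjustified, and the final reduction $\omega_{s,a}^{(t)}\le\theta_{s,a}^{(t)}$ also relies on it.
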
}
\begin{proof}[Proof of \Cref{lemma:Is- negative infinity}]
We prove this by contradiction. 
Motivated by the proof of Lemma C11 in \citep{agarwal2021theory}, our
proof here extends their argument to the case with momentum by
considering the cumulative effect of all the gradient terms on the
policy parameter $\theta^{(t)}$.

{We first show that $\lim_{t\rightarrow\infty}\theta^{(t)}_{s, a} = -\infty$.
We prove 
\begin{equation}
	\liminf_{t\rightarrow\infty}\theta^{(t)}_{s, a} = -\infty
	\label{eq:liminf}
\end{equation}
first by contradiction.} Given an action $a\in I_s^{-}$, suppose that there exists $\vartheta$ such that $\theta_{s,a}^{(t)}>\vartheta$ for all $t\geq\bar{T}_s$.
Then, by \Cref{lemma:sum conservation,lemma:max Is0}, we know there must exist an action $a'\in \cA$ such that $\liminf_{t\rightarrow \infty}\theta_{s,a'}^{(t)}=-\infty$.
Let $\delta>0$ be some positive scalar such that $\theta_{s,a}^{(\bar{T}_s)}\geq \vartheta -\delta$. For each $t\geq \bar{T}_s$, define 
\begin{equation}
    \nu(t):=\sup\{\tau:\theta_{s,a'}^{(\tau)}\geq \vartheta-\delta,\quad \bar{T}_s\leq \tau\leq t \},\label{eq:Is- negative infinity 1}
\end{equation}
which is the latest iteration that $\theta_{s,a'}^{(\tau)}$ remains no
smaller than $\vartheta-\delta$.
We consider two cases: (i) $\{\nu(t)\}_{t=1}^{\infty}$ is bounded, and
(ii) $\lim_{t \rightarrow \infty} \nu(t) = \infty$.

For case (i), we define the following index set
\begin{equation}
    \cJ^{(t)}:=\Big\{\tau: \frac{\partial V^{\pi_\theta}(\mu)}{\partial \theta_{s,a'}}\Big\rvert_{\theta=\omega^{(\tau)}}<0,\quad \nu(t)<\tau < t\Big\}.\label{eq:Is- negative infinity 2}
\end{equation}
and denote the cumulative effect (up to iteration $t$) of the gradient terms from those iterations in $\cJ^{(t)}$ as
\begin{equation}
    Z^{(t)}:=\sum_{t'\in \cJ^{(t)}} \eta^{(t'+1)}\cdot \frac{\partial V^{\pi_\theta}(\mu)}{\partial \theta_{s,a'}}\Big\rvert_{\theta=\omega^{(t')}}\cdot G(t',t),\label{eq:Is- negative infinity 3}
\end{equation}
where $G(t',t)$ is defined in \cref{eq:G}.
If $\cJ^{(t)}=\emptyset$, we define $Z^{(t)}=0$.

By setting $R_{\text{max}} = 1$ in \Cref{lemma: reward range to value range}, we have the upper bounds $V^{\pi}(s) \le 1/(1-\gamma)$ for any $s\in\cS$ and any $\pi$, $Q^{\pi}(s, a) \le 1/(1-\gamma)$ for any $(s, a) \in\cS\times\cA$ and any $\pi$, and also $\lvert A(s, a)\rvert \le 1/(1-\gamma)$ for any $(s, a) \in\cS\times\cA$ and any $\pi$.
Leveraging these upper bounds and \Cref{lemma:softmax_pg}, we have
\begin{equation}
	\lvert\partial V^{\pi_{\theta}}(\mu)/\partial\theta_{s,a}\rvert \leq
	1/(1-\gamma)^2,\quad \forall (s, a) \in\cS\times\cA.
	\label{eq:boundV}
\end{equation}
Accordingly, for any $t>\bar{T}_s$, we have
\begin{align}
	\nonumber
    Z^{(t)}
    &\leq \sum_{t'\in \cJ^{(t)}} \eta^{(t'+1)}\cdot \frac{\partial V^{\pi_\theta}(\mu)}{\partial \theta_{s,a'}}\Big\rvert_{\theta=\omega^{(t')}}\cdot
		G(t',t)+\sum_{t':t'\notin \cJ^{(t)}, \nu(t)<t'<t}
		\underbrace{\eta^{(t'+1)}\cdot \frac{\partial V^{\pi_\theta}(\mu)}{\partial \theta_{s,a'}}\Big\rvert_{\theta=\omega^{(t')}}\cdot G(t',t)}_{\geq
				0, \text{by the definition of }\cJ^{(t)}}\\
    &\quad\quad +\sum_{t'\leq\nu(t)}\underbrace{\eta^{(t'+1)}\cdot
	\Big(\frac{\partial V^{\pi_\theta}(\mu)}{\partial \theta_{s,a'}}\Big\rvert_{\theta=\omega^{(t')}}+\frac{1}{(1-\gamma)^2}\Big)\cdot
G(t',t)}_{\geq 0, \text{ by \cref{eq:boundV}
}}\nonumber\\
    &= \sum_{t' \le t} \eta^{(t'+1)}\cdot \frac{\partial V^{\pi_\theta}(\mu)}{\partial \theta_{s,a'}}\Big\rvert_{\theta=\omega^{(t')}}\cdot G(t',t)
		+\sum_{t'\leq\nu(t)}\eta^{(t'+1)}\frac{1}{(1-\gamma)^2}G(t',t)
		\nonumber\\
    &= (\theta_{s,a'}^{(t)}-\theta_{s,a'}^{(1)}) +\sum_{t'\leq\nu(t)}\eta^{(t'+1)}\frac{1}{(1-\gamma)^2}G(t',t),\label{eq:Is- negative infinity 6}
\end{align}
where \cref{eq:Is- negative infinity 6} is from the update scheme of
APG as in \Cref{algorithm:APG}.
Since $\{\nu(t)\}_{t=1}^{\infty}$ is bounded, there is a finite constant $M$ such that
$\sum_{t'\leq\nu(t)}\eta^{(t'+1)}\frac{1}{(1-\gamma)^2}G(t',t) \leq M$
for all $t$.
Therefore, by taking the limit infimum on \cref{eq:Is- negative infinity 6}, we know 
\begin{equation}
    \liminf_{t\rightarrow \infty} Z^{(t)} = -\infty.\label{eq:Is- negative infinity 7}
\end{equation}
Now we are ready to quantify $\theta_{s,a}^{(t)}$ for the action $a\in I_s^-$. 
For all $t'\in\cJ^{(t)}$, we have
\begin{align}
    \frac{\lvert (\partial V^{\pi_{\theta}}(\mu)/\partial \theta_{s,a})|_{\theta=\omega^{(t')}}\rvert}{\lvert (\partial V^{\pi_{\theta}}(\mu)/\partial \theta_{s,a'})|_{\theta=\omega^{(t')}}\rvert}&=\bigg\lvert \frac{\pi_{\omega}^{(t')}(a\rvert s)A^{\pi_{\omega}^{(t')}}(s,a)}{\pi_{\omega}^{(t')}(a'\rvert s)A^{\pi_{\omega}^{(t')}}(s,a')} \bigg\rvert\geq \exp(\vartheta-\theta_{s,a'}^{(t')})\cdot\frac{(1-\gamma)\Delta_s}{4}\geq \exp(\delta)\cdot\frac{(1-\gamma)\Delta_s}{4},\label{eq:Is- negative infinity 8}
\end{align}
where the first inequality follows from the softmax parameterization,
the assumed upper bound of $\theta_{s,a}^{(t)}$, that $\lvert
A^{\pi_{\omega}^{(t')}}(s,a')\rvert \leq 1/(1-\gamma)$, and that $A^{\pi_{\omega}^{(t')}}(s,a) \leq
-\Delta_s/4$, and the second one holds by the definition of $\nu(t)$
in \cref{eq:Is- negative infinity 1}.
For any $\cJ^{(t)}\neq \emptyset$, we have
\begin{align}
	\nonumber
    \theta_{s,a}^{(t)}-\theta^{(1)}_{s,a}&=\sum_{t':1\leq
		t'<\bar{T}_s}\eta^{(t'+1)}\cdot \frac{\partial V^{\pi_\theta}(\mu)}{\partial \theta_{s,a}}\Big\rvert_{\theta=\omega^{(t')}}\cdot
			G(t',t)+\sum_{t': t'\geq\bar{T}_s}\eta^{(t'+1)}\cdot
			\frac{\partial V^{\pi_\theta}(\mu)}{\partial \theta_{s,a}}\Big\rvert_{\theta=\omega^{(t')}}\cdot
			G(t',t)\\
    &\leq \sum_{t':1\leq t'<\bar{T}_s}\eta^{(t'+1)}\cdot \frac{\partial V^{\pi_\theta}(\mu)}{\partial \theta_{s,a}}\Big\rvert_{\theta=\omega^{(t')}}\cdot G(t',t)+\sum_{t': t'\in \cJ^{(t)}}\eta^{(t'+1)}\cdot \frac{\partial V^{\pi_\theta}(\mu)}{\partial \theta_{s,a}}\Big\rvert_{\theta=\omega^{(t')}}\cdot G(t',t)\label{eq:Is- negative infinity 10}\\
    &\leq \underbrace{\sum_{t':1\leq t'<\bar{T}_s}\eta^{(t'+1)} \frac{\partial V^{\pi_\theta}(\mu)}{\partial \theta_{s,a}}\Big\rvert_{\theta=\omega^{(t')}} G(t',t)}_{<\infty \text{ and does not depend on }t }+\exp(\delta) \frac{(1-\gamma)\Delta_s}{4}\underbrace{\sum_{t': t'\in \cJ^{(t)}}\eta^{(t'+1)} \frac{\partial V^{\pi_\theta}(\mu)}{\partial \theta_{s,a'}}\Big\rvert_{\theta=\omega^{(t')}} G(t',t)}_{\equiv Z^{(t)}},\label{eq:Is- negative infinity 11}
\end{align}
where \cref{eq:Is- negative infinity 10} holds by the fact that
$A^{\pi_{\omega}^{(t')}}(s,a)<0$ for all $t\geq \bar{T}_s$, and \cref{eq:Is- negative infinity 11} is a direct result of \cref{eq:Is- negative infinity 8}.
Therefore, by taking the limit infimum on \cref{eq:Is- negative infinity 11}, we have $\liminf_{t\rightarrow \infty}\theta_{s,a}^{(t)} =-\infty$, which leads to a contradiction.

For the case (ii), given that $\nu(t)$ becomes unbounded as $t\rightarrow\infty$, there are infinitely many $\tau \ge \bar{T}_{s}$ such that $\theta_{s,a'}^{(\tau)}\geq \vartheta-\delta$. In addition, recall that $a'$ is an action that satisfies $\liminf_{t\rightarrow \infty}\theta_{s,a'}^{(t)}=-\infty$. Therefore, we can find a subsequence $\{t_i\}_{i=1}^{\infty}$ of time indices satisfying the following properties:
\begin{itemize}
    \item $t_1 \ge \bar{T}_s$
    \item $\lim_{i\rightarrow\infty} \theta_{s, a'}^{(t_i)} = -\infty$.
    \item $\theta_{s, a'}^{(t_i)} < \vartheta - \delta$ for all $i \ge 1$.
    \item There is always a $\tau_{i} \ge \bar{T}_s$ satisfying that
		$\theta_{s,a'}^{(\tau_i)}\geq \vartheta-\delta$, $\tau_i \in
		(t_i, t_{i+1})$, and we specifically let $\tau_i \coloneqq \inf\{\tau \in (t_i, t_{i+1}): \theta_{s,a'}^{(\tau_i)}\geq \vartheta-\delta\}$ for each $i \ge 1$.
\end{itemize}
{Since $\lim_{i\rightarrow\infty} \theta_{s, a'}^{(t_i)} = -\infty$,
we can select $\{t_i\}$ in a way such that $\theta^{(t_i)}_{s, a'} -
\theta^{(t_i-1)}_{s, a'} < 0$ for all $i$.
Therefore,}
\begin{align}
    \theta^{(\tau_i)}_{s, a'} - \theta^{(t_i)}_{s, a'}
	&= {(\theta^{(t_i)}_{s, a'} - \theta^{(t_i-1)}_{s,
	a'})\cdot (G({t_i-1}, \tau_i) - 1) + \sum_{t_i \le t' <
	\tau_i} \eta^{(t'+1)}\cdot \frac{\partial V^{\pi_\theta}(\mu)}{\partial \theta_{s,a'}}\Big\rvert_{\theta=\omega^{(t')}}\cdot G(t',\tau_i)} \nonumber \\
    &\le \sum_{t_i \le t' < \tau_i} \eta^{(t'+1)}\cdot \frac{\partial V^{\pi_\theta}(\mu)}{\partial \theta_{s,a'}}\Big\rvert_{\theta=\omega^{(t')}}\cdot G(t',\tau_i) \nonumber \\
    &\le \sum_{t_i \le t' < \tau_i} \Big\lvert \eta^{(t'+1)}\cdot \frac{\partial V^{\pi_\theta}(\mu)}{\partial \theta_{s,a'}}\Big\rvert_{\theta=\omega^{(t')}}\cdot G(t',\tau_i) \Big\rvert \nonumber \\
    &\le \frac{4}{(1-\gamma)\Delta_s} \cdot \sum_{t_i \le t' < \tau_i}
	\Big\lvert \eta^{(t'+1)}\cdot \frac{\partial
		V^{\pi_\theta}(\mu)}{\partial \theta_{s,{a}}}\Big\rvert_{\theta=\omega^{(t')}} \cdot G(t',\tau_i)\Big\rvert, \label{eq:Is- negative infinity 12}
\end{align}
where \cref{eq:Is- negative infinity 12} is from that
\begin{equation}
    \frac{\lvert (\partial V^{\pi_{\theta}}(\mu)/\partial \theta_{s,a'})|_{\theta=\omega^{(t')}}\rvert}{\lvert (\partial V^{\pi_{\theta}}(\mu)/\partial \theta_{s,a})|_{\theta=\omega^{(t')}}\rvert}=\bigg\lvert \frac{\pi_{\omega}^{(t')}(a'\rvert s)A^{\pi_{\omega}^{(t')}}(s,a')}{\pi_{\omega}^{(t')}(a\rvert s)A^{\pi_{\omega}^{(t')}}(s,a)} \bigg\rvert \leq \bigg\lvert \frac{A^{\pi_{\omega}^{(t')}}(s,a')}{A^{\pi_{\omega}^{(t')}}(s,a)} \bigg\rvert \leq \frac{4}{(1-\gamma)\Delta_s},
\end{equation}
where the first inequality holds because $\theta^{(t')}_{s, a'} \le
\theta^{(t')}_{s, a}$, implying $\pi_{\omega}^{(t')}(a'\rvert s) \le \pi_{\omega}^{(t')}(a\rvert s)$. Since $\theta^{(\tau_i)}_{s, a'} - \theta^{(t_i)}_{s, a'} \rightarrow \infty$ as $i\rightarrow\infty$, we have
\begin{equation}
\label{eq: grad Is- infinity}
    \lim_{i \rightarrow\infty}\sum_{t_i \le t' < \tau_i} \Big\lvert \eta^{(t'+1)}\cdot \frac{\partial V^{\pi_\theta}(\mu)}{\partial \theta_{s,a}}\Big\rvert_{\theta=\omega^{(t')}}\cdot G(t',\tau_i) \Big\rvert = \lim_{i \rightarrow\infty}\Big\lvert\sum_{t_i \le t' < \tau_i} \eta^{(t'+1)}\cdot \frac{\partial V^{\pi_\theta}(\mu)}{\partial \theta_{s,a}}\Big\rvert_{\theta=\omega^{(t')}}\cdot G(t',\tau_i) \Big\rvert = \infty,
\end{equation}
where the first equality holds because $a\in I^{-}_s$ together with
$t' \ge \bar{T}_s$ implies that $\frac{\partial
	V^{\pi_\theta}(\mu)}{\partial
		\theta_{s,a}}\big\rvert_{\theta=\omega^{(t')}}< 0$ for all $t'$
	included in the summation. {
		Now consider
		\[
    \theta^{(t)}_{s, a} - \theta^{(\bar T_s)}_{s, a}
	= {(\theta^{(\bar T_s)}_{s, a} - \theta^{(\bar T_s-1)}_{s,
	a})\cdot (G({\bar T_s -1}, t) - 1) + \sum_{t' = \bar T_s
	}^{t-1} \eta^{(t'+1)}\cdot \frac{\partial
		V^{\pi_\theta}(\mu)}{\partial
			\theta_{s,a}}\Big\rvert_{\theta=\omega^{(t')}}\cdot
		G(t',\tau_i)}.
		\]
		By an argument similar to \cref{eq:Is+ bounded eq8},
		the first term above is at most $(\theta^{(\bar{T}_s)}_{s, a}
		- \theta^{(\bar{T}_s-1)}_{s, a}) \cdot O(\bar{T}_s)$
		regardless of $t$.
Combining this with \cref{eq: grad Is- infinity,eq:neggrad},
we obtain $\liminf_{t \rightarrow\infty} \theta^{(t)}_{s, a} = -\infty$,
which also leads to a contradiction.}
Therefore, \cref{eq:liminf} is proven.

Then, we show that there exists a time $\bar{T}_s'
\ge \bar{T}_s$ such that $\theta^{(\bar T_s'+1)}_{s, a} -
\theta^{(\bar T_s')}_{s, a}
\le 0$.
If such a $\bar{T}_s'$ does not exist, which means that $\{\theta_{s,
a}^{(t)}\}$ is monotonically increasing, we see a contradiction with
\cref{eq:liminf}.
Since $a\in I^{-}_s$, from \cref{eq:neggrad}, we know that if
$\theta^{(\bar{T}_s'+1)}_{s, a} - \theta^{(\bar{T}_s')}_{s, a} \leq
0$, then $\theta^{(t+1)}_{s, a} - \theta^{(t)}_{s, a} \leq 0$ for all
$t \geq \bar T_s'$, meaning that $\theta^{(t)}_{s, a}$ is monotonically
decreasing after $\bar T_s'$.
Therefore, combining this monotinicity after $\bar T_s'$ and
\cref{eq:liminf},
we obtain
$\lim_{t\rightarrow\infty} \theta^{(t)}_{s, a} = -\infty$.

Finally, we show that $\lim_{t\rightarrow\infty} \omega^{(t)}_{s, a} = -\infty$. Since $\theta^{(t+1)}_{s, a} - \theta^{(t)}_{s, a} \leq 0$ for all
$t \geq \bar{T}_s'$, by \cref{algorithm:eq2} and that $a\in I_{s}^{-}$, we know that $\omega_{s, a}^{(t)} \le \theta_{s, a}^{(t)}$ and thus $\omega^{(t)}_{s, a}$ is decreasing for all $t \ge \bar T_s'$. Combining the above properties with $\lim_{t\rightarrow\infty} \theta^{(t)}_{s, a} = -\infty$, we arrive at $\lim_{t\rightarrow\infty} \omega^{(t)}_{s, a} = -\infty$.
\end{proof}

In the following lemma, we use the notation
{$\Delta\theta_{s,a}^{(t)}\coloneqq\theta_{s,a}^{(t+1)}-\theta_{s,a}^{(t)}$},
for each state-action pair $(s,a)$ and each $t\in \bbN$.

\begin{lemma}
    \label{lemma: ordering unchanged}
    Consider any state $s$ with non-empty $I_s^+$ and some $a_+\in
	I_s^+$ and $a\in I_s^0$.
{Under the setting of \Cref{lemma: convergence to stationarity},}
	if
	{$\omega_{s,a_+}^{(\tau)}>\omega_{s,a}^{(\tau)}$} and
	$\Delta\theta_{s,a_+}^{(\tau)}>\Delta\theta_{s,a}^{(\tau)}$ for
	some $\tau > \bar{T}_s$, then
	{$\omega_{s,a_+}^{(t)}>\omega_{s,a}^{(t)}$} and
	$\Delta\theta_{s,a_+}^{(t)}>\Delta\theta_{s,a}^{(t)}$ for all $t>
	\tau$.
\end{lemma}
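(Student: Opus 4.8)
The plan is to prove the two assertions jointly by induction on $t \ge \tau$, with the hypothesis $P(t)$: ``$\omega_{s,a_+}^{(t)} > \omega_{s,a}^{(t)}$ \emph{and} $\Delta\theta_{s,a_+}^{(t)} > \Delta\theta_{s,a}^{(t)}$''. The base case $P(\tau)$ is precisely the hypothesis of the lemma. The engine of the induction is the following one-step gradient comparison, which I would isolate first: whenever $\omega_{s,a_+}^{(t)} > \omega_{s,a}^{(t)}$ and $t \ge \bar{T}_s$, we have $\nabla_{s,a_+}^{(t)} > \nabla_{s,a}^{(t)}$. Indeed, the $\omega$-ordering gives $\pi_{\omega}^{(t)}(a_+|s) > \pi_{\omega}^{(t)}(a|s) > 0$ under softmax; \cref{eq:adv I_s+,eq:adv I_s0} give $A^{\pi_{\omega}^{(t)}}(s,a_+) \ge \Delta_s/4 \ge A^{\pi_{\omega}^{(t)}}(s,a)$ with the left side strictly positive (here $\Delta_s>0$ because $I_s^+\neq\emptyset$); and $d_{\mu}^{\pi_{\omega}^{(t)}}(s)\ge(1-\gamma)\mu(s)>0$ almost surely by \Cref{lemma:lower_bound_of_state_visitation_distribution} and the a.s.\ positivity of the random $\mu$. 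Plugging these into the softmax gradient formula of \Cref{lemma:softmax_pg}, the desired strict inequality reduces to $\pi_{\omega}^{(t)}(a_+|s)A^{\pi_{\omega}^{(t)}}(s,a_+) > \pi_{\omega}^{(t)}(a|s)A^{\pi_{\omega}^{(t)}}(s,a)$, which holds because if $A^{\pi_{\omega}^{(t)}}(s,a)\le 0$ the left side is positive and the right side non-positive, while if $0 < A^{\pi_{\omega}^{(t)}}(s,a) \le \Delta_s/4$ both factors on the left weakly dominate those on the right and the $\pi$-factor dominates strictly.

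With this in hand, I would carry out the inductive step $P(t)\Rightarrow P(t+1)$ in three short moves, writing $c_{t} \coloneqq \mathbb{I}\{\omega^{(t)}=\varphi^{(t)}\}\cdot\tfrac{t-1}{t+2}\in[0,1)$ for the (possibly zero) momentum coefficient, so that $\omega_{s,a}^{(t)} = \theta_{s,a}^{(t)} + c_t\Delta\theta_{s,a}^{(t-1)}$ and, from \cref{algorithm:eq1}, $\Delta\theta_{s,a}^{(t)} = c_t\Delta\theta_{s,a}^{(t-1)} + \eta^{(t+1)}\nabla_{s,a}^{(t)}$. \emph{Move 1:} from $\theta_{s,a}^{(t+1)} = \omega_{s,a}^{(t)} + \eta^{(t+1)}\nabla_{s,a}^{(t)}$, the $\omega$-part of $P(t)$ and the gradient comparison (applicable since $t\ge\tau>\bar{T}_s$) give $\theta_{s,a_+}^{(t+1)} > \theta_{s,a}^{(t+1)}$. \emph{Move 2:} since $\omega_{s,a}^{(t+1)} = \theta_{s,a}^{(t+1)} + c_{t+1}\Delta\theta_{s,a}^{(t)}$ with $c_{t+1}\ge 0$, Move 1 together with the $\Delta\theta$-part of $P(t)$ yields $\omega_{s,a_+}^{(t+1)} > \omega_{s,a}^{(t+1)}$ --- the first half of $P(t+1)$. \emph{Move 3:} re-apply the gradient comparison at $t+1$ (now legitimate, by Move 2 and $t+1>\bar{T}_s$) to get $\nabla_{s,a_+}^{(t+1)} > \nabla_{s,a}^{(t+1)}$, and feed this, together with the $\Delta\theta$-part of $P(t)$ weighted by $c_{t+1}\ge 0$ and with $\eta^{(t+2)}>0$, into $\Delta\theta_{s,a}^{(t+1)} = c_{t+1}\Delta\theta_{s,a}^{(t)} + \eta^{(t+2)}\nabla_{s,a}^{(t+1)}$ to obtain $\Delta\theta_{s,a_+}^{(t+1)} > \Delta\theta_{s,a}^{(t+1)}$, completing $P(t+1)$.

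The delicate point --- and what I expect to be the main obstacle --- is disentangling the momentum coupling: $\Delta\theta^{(t)}$ is not a single gradient term but a $c_t$-weighted carry-over of the previous increment plus a fresh gradient, so the induction must propagate the $\omega$-ordering and the $\Delta\theta$-ordering \emph{together} (the $\omega$-ordering is what licenses each gradient comparison, whereas the $\Delta\theta$-ordering is what survives the momentum carry-over), and one must verify that the coefficient $c_t\ge 0$ never flips an inequality regardless of whether the restart \cref{algorithm:eq3} is active. The remaining items are routine: $\tau>\bar{T}_s$ guarantees $t>\bar{T}_s$ throughout, so \cref{eq:adv I_s+,eq:adv I_s0} are always available; all $\pi_{\omega}^{(t)}(\cdot|s)$ and $d_{\mu}^{\pi_{\omega}^{(t)}}(s)$ are strictly positive, so no gradient comparison degenerates; and the strictness in the conclusions stems from the strictly positive factor $\eta^{(t+1)}$ multiplying a strict gradient gap.
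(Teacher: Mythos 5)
Your proposal is correct and follows essentially the same route as the paper's proof: a joint induction propagating the $\omega$-ordering and the $\Delta\theta$-ordering together, with the one-step gradient comparison (via \Cref{lemma:softmax_pg} and \cref{eq:adv I_s+,eq:adv I_s0}) as the engine, and the nonnegative momentum coefficient absorbing the restart indicator. Your explicit sign-based case analysis for the strictness of $\pi_{\omega}^{(t)}(a_+|s)A^{\pi_{\omega}^{(t)}}(s,a_+) > \pi_{\omega}^{(t)}(a|s)A^{\pi_{\omega}^{(t)}}(s,a)$ is in fact slightly more careful than the paper's one-line justification, but the argument is otherwise the same.
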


\begin{proof}[Proof of \Cref{lemma: ordering unchanged}]
    We prove this by induction. Suppose at some time $\tau > \bar{T}_s$, we have $\omega_{s,a_+}^{(\tau)}>\omega_{s,a}^{(\tau)}$ and $\Delta\theta_{s,a_+}^{(\tau)}>\Delta\theta_{s,a}^{(\tau)}$.
    Recalling our definition in \cref{eq:adv I_s+} and \cref{eq:adv I_s0}, we have from \Cref{lemma:softmax_pg} that
    \begin{align}
        \left.\frac{\partial V^{\pi_{\theta}}(\mu)}{\partial\theta_{s,a_{+}}}\right|_{\theta = \omega^{(\tau)}}&=\frac{1}{1-\gamma} \cdot d^{\pi^{(\tau)}_{\omega}}_{\mu}(s) \cdot \pi^{(\tau)}_\omega(a_+\rvert s) \cdot A^{\pi_{\omega}^{(\tau)}}(s,a_+)\nonumber\\
        &>\frac{1}{1-\gamma} \cdot d^{\pi^{(\tau)}_{\omega}}_{\mu}(s) \cdot \pi_\omega^{(\tau)}(a\rvert s) \cdot A^{\pi_{\omega}^{(\tau)}}(s,a)\label{eq: ordering unchanged 3}\\
        &=\left.\frac{\partial V^{\pi_{\theta}}(\mu)}{\partial\theta_{s,a}}\right|_{\theta = \omega^{(\tau)}},\label{eq: ordering unchanged 4}
    \end{align}
    where \cref{eq: ordering unchanged 3} holds by
	$\omega_{s,a_+}^{(\tau)}>\omega_{s,a}^{(\tau)}$ and the fact that
	$\tau >\bar{T}_s$ implies $A^{\pi_{\omega}^{(\tau)}}(s,a_+)\geq
	A^{\pi_{\omega}^{(\tau)}}(s,a)$ from \cref{eq:adv I_s+}.
	By the update rule \cref{algorithm:eq1}, \cref{eq: ordering
	unchanged 4} and that
	$\omega_{s,a_+}^{(\tau)}>\omega_{s,a}^{(\tau)}$ imply
		\begin{equation}
			\label{eq:thetaorder}
			\theta_{s,a_+}^{(\tau + 1)}>\theta_{s,a}^{(\tau + 1)}.
		\end{equation}
	Thus,
	from $\Delta\theta_{s,a_+}^{(\tau)}>\Delta\theta_{s,a}^{(\tau)}$,
	\cref{algorithm:eq2,algorithm:eq3,eq:thetaorder}, we have
    \begin{align}
        \omega_{s,a_+}^{(\tau+1)}
        &=\theta_{s,a_+}^{(\tau+1)}+\mathbb{I} \left\{V^{\pi_{\varphi}^{(\tau+1)}}(\mu) \ge V^{\pi_{\theta}^{(\tau+1)}}(\mu)\right \} \cdot\frac{\tau}{\tau+3}(\theta_{s,a_+}^{(\tau+1)}-\theta_{s,a_+}^{(\tau)}) \nonumber \\
        &>\theta_{s,a}^{(\tau+1)}+\mathbb{I}
		\left\{V^{\pi_{\varphi}^{(\tau+1)}}(\mu) \ge
		V^{\pi_{\theta}^{(\tau+1)}}(\mu)\right \}
		\cdot\frac{\tau}{\tau+3}(\theta_{s,a}^{(\tau+1)}-\theta_{s,a}^{(\tau)})=\omega_{s,a}^{(\tau+1)}.\label{eq: ordering unchanged 1}
    \end{align}
    By \cref{eq: ordering
	unchanged 1}, \cref{eq:softmax}, and that $A^{\pi_{\omega}^{(\tau+1)}}(s,a_+)\geq A^{\pi_{\omega}^{(\tau+1)}}(s,a)$, we have 
    \begin{equation}
        \left.\frac{\partial V^{\pi_{\theta}}(\mu)}{\partial\theta_{s,a_{+}}}\right|_{\theta = \omega^{(\tau+1)}} > \left.\frac{\partial V^{\pi_{\theta}}(\mu)}{\partial\theta_{s,a}}\right|_{\theta = \omega^{(\tau + 1)}}.\label{eq: ordering unchanged tau + 1}
    \end{equation}
    Furthermore, by \cref{eq: ordering unchanged tau + 1} and that
	$\Delta\theta_{s,a_+}^{(\tau)}>\Delta\theta_{s,a}^{(\tau)}$, we have
    \begin{align*}
        \Delta\theta_{s,a_+}^{(\tau+1)}&=\mathbb{I} \left\{V^{\pi_{\varphi}^{(\tau+1)}}(\mu) \ge V^{\pi_{\theta}^{(\tau+1)}}(\mu)\right \} \cdot\frac{\tau}{\tau+3}\Delta\theta_{s,a_+}^{(\tau)}+\eta^{(\tau+2)}\left.\frac{\partial V^{\pi_{\theta}}(\mu)}{\partial\theta_{s,a_{+}}}\right|_{\theta = \omega^{(\tau+1)}}\\
        &>\mathbb{I} \left\{V^{\pi_{\varphi}^{(\tau+1)}}(\mu) \ge V^{\pi_{\theta}^{(\tau+1)}}(\mu)\right \} \cdot\frac{\tau}{\tau+3}\Delta\theta_{s,a}^{(\tau)}+\eta^{(\tau+2)}\left.\frac{\partial V^{\pi_{\theta}}(\mu)}{\partial\theta_{s,a}}\right|_{\theta = \omega^{(\tau+1)}}=\Delta\theta_{s,a}^{(\tau+1)}.
    \end{align*}
    By repeating the above argument, we know $\omega_{s,a_+}^{(t)}>\omega_{s,a}^{(t)}$ and $\Delta\theta_{s,a_+}^{(t)}>\Delta\theta_{s,a}^{(t)}$ for all $t>\tau$.
\end{proof}

Next, we take a closer look at the actions in $I_s^0$. We further
decompose $I_s^0$ into two subsets as follows: Consider any state $s$ with
a non-empty $I_s^+$, for any $a_+\in I_s^+$, we define
\begin{align*}
    B_s^0(a_+)&:=\Big\{a\in I_s^0: \text{For any } t\geq \bar{T}_s, \text{either } {\omega^{(t)}_{s,a}\geq \omega^{(t)}_{s,a_+}} \text{or } \Delta\theta^{(t)}_{s,a}\geq \Delta\theta^{(t)}_{s,a_+}\Big\}
\end{align*}
and use $\bar{B}_s^0(a_+)$ to denote its complement in $I_s^0$. We could thus also write $\bar{B}_s^0(a_+)$ as
\begin{equation*}
    \bar{B}_s^0(a_+):=\Big\{a\in I_s^0: {\omega^{(t)}_{s,a}< \omega^{(t)}_{s,a_+}} \text{and } \Delta\theta^{(t)}_{s,a}<\Delta\theta^{(t)}_{s,a_+} \text{for some } t\geq \bar{T}_s \Big\}.
\end{equation*}

\begin{lemma}
    \label{lemma:sum of Bs0 goes to 1}
	Consider any state $s$.
{Under the setting of \Cref{lemma: convergence to stationarity},}
    if $I_s^+$ is nonempty, then:
    
    \textbf{(a)} For all $a_+\in I_s^+$, we have
    \begin{equation*}
	\lim_{t\rightarrow \infty}\,
	\sum_{a\in B_s^{0}(a_+)}\pi_\omega^{(t)}(a\rvert s) = 1.
    \end{equation*}

    \textbf{(b)} For all $a_+\in I_s^+$, we have
    \begin{equation*}
	\lim_{t\rightarrow \infty}\,
	\max_{a\in B_s^{0}(a_+)}\omega_{s,a}^{(t)} = \infty.
    \end{equation*}

    \textbf{(c)} For all $a_+\in I_s^+$, we have
    \begin{equation*}
	\lim_{t\rightarrow \infty}\,
        \sum_{a\in B_s^{0}(a_+)}\omega_{s,a}^{(t)} = \infty.
    \end{equation*}    
\end{lemma}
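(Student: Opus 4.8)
Fix a state $s$ with $I_s^+ \neq \emptyset$ and an action $a_+ \in I_s^+$; the plan is to prove (a)--(c) in turn, reusing the earlier lemmas. Write $c_t$ for the (coordinate‑independent) momentum coefficient in $\omega^{(t)} - \theta^{(t)} = c_t(\theta^{(t)} - \theta^{(t-1)})$, so $c_t \in [0,1)$. The workhorses are: (i) by \Cref{lemma: ordering unchanged}, once $\omega_{s,a_+}^{(t)} > \omega_{s,a}^{(t)}$ and $\Delta\theta_{s,a_+}^{(t)} > \Delta\theta_{s,a}^{(t)}$ both hold they persist, so by the definition of $\bar{B}_s^0(a_+)$ every $a \in \bar{B}_s^0(a_+)$ satisfies $\omega_{s,a}^{(t)} < \omega_{s,a_+}^{(t)}$ and $\Delta\theta_{s,a}^{(t)} < \Delta\theta_{s,a_+}^{(t)}$ for all $t$ beyond some finite $T_a$; (ii) since $a_+ \in I_s^+$, \Cref{lemma:sum of Is+ and Is- goes to zero} gives $\pi_\omega^{(t)}(a_+|s) \to 0$ and \Cref{lemma:Is+ bounded} gives $\omega_{s,a_+}^{(t)} \ge -L_+$ for all $t$ and some finite $L_+$; (iii) for $t \ge \bar{T}_s$ the gradient $\nabla_{s,a_+}^{(t)}$ is non‑negative (by \cref{eq:adv I_s+} and \Cref{lemma:softmax_pg}), so from $\Delta\theta_{s,a_+}^{(t)} = c_t \Delta\theta_{s,a_+}^{(t-1)} + \eta^{(t+1)} \nabla_{s,a_+}^{(t)} \ge c_t \Delta\theta_{s,a_+}^{(t-1)}$ a one‑line induction (using $c_t \in [0,1)$) yields $\Delta\theta_{s,a_+}^{(t)} \ge -D_+$ for all $t \ge \bar{T}_s$, with $D_+ \coloneqq \max\{0, -\Delta\theta_{s,a_+}^{(\bar{T}_s)}\}$.

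Part (a) is then immediate: $\bar{B}_s^0(a_+)$ is finite and each of its actions eventually has $\omega_{s,a}^{(t)} < \omega_{s,a_+}^{(t)}$, hence $\pi_\omega^{(t)}(a|s) < \pi_\omega^{(t)}(a_+|s)$ (the softmax normalizer is common), so $\sum_{a \in \bar{B}_s^0(a_+)} \pi_\omega^{(t)}(a|s) \to 0$; subtracting this from $\sum_{a \in I_s^0} \pi_\omega^{(t)}(a|s) \to 1$ (\Cref{lemma:sum of Is+ and Is- goes to zero}) gives (a). For (b), for large $t$ the maximum of $\{\omega_{s,a'}^{(t)} : a' \in \cA\}$ is attained neither on $I_s^-$ (those entries tend to $-\infty$ by \Cref{lemma:Is- negative infinity}) nor on $I_s^+$ (a maximal entry forces $\pi_\omega^{(t)}(a'|s) \ge 1/|\cA|$, contradicting $\pi_\omega^{(t)}(a'|s) \to 0$), so it equals $\max_{a \in I_s^0} \omega_{s,a}^{(t)}$, which $\to \infty$ by \Cref{lemma:max Is0}; since $\omega_{s,a_+}^{(t)}$ is strictly below this maximum (same $\pi_\omega^{(t)}(a_+|s)\to 0$ argument) and every $a \in \bar{B}_s^0(a_+)$ has $\omega_{s,a}^{(t)} < \omega_{s,a_+}^{(t)}$ for large $t$, the maximum over $I_s^0$ is attained on $B_s^0(a_+)$; in particular $B_s^0(a_+) \neq \emptyset$ and $\max_{a \in B_s^0(a_+)} \omega_{s,a}^{(t)} = \max_{a \in I_s^0} \omega_{s,a}^{(t)} \to \infty$.

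For (c), the plan is to reduce to a uniform lower bound: if $\omega_{s,a}^{(t)} \ge -\kappa$ for every $a \in B_s^0(a_+)$ and every large $t$, then $\sum_{a \in B_s^0(a_+)} \omega_{s,a}^{(t)} \ge \max_{a \in B_s^0(a_+)} \omega_{s,a}^{(t)} + (|\cA|-1)(-\kappa) \to \infty$ by (b), which is (c). (One may alternatively route through the conservation identity $\sum_{a \in \cA} \omega_{s,a}^{(t)} = \sum_{a \in \cA} \theta_{s,a}^{(0)}$, which follows from \Cref{lemma:sum conservation} since $\omega^{(t)} - \theta^{(t)}$ is a common scalar multiple of $\theta^{(t)} - \theta^{(t-1)}$.) To produce the lower bound, fix $a \in B_s^0(a_+)$. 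By the definition of $B_s^0(a_+)$, at every $t \ge \bar{T}_s$ either (Case 1) $\omega_{s,a}^{(t)} \ge \omega_{s,a_+}^{(t)} \ge -L_+$, which pins the value directly, or (Case 2) $\Delta\theta_{s,a}^{(t)} \ge \Delta\theta_{s,a_+}^{(t)} \ge -D_+$, which pins the next increment from below. One then argues that $\omega_{s,a}^{(t)}$ cannot drop far below $-L_+$ at a Case‑2 step either, by tracking $\theta_{s,a}^{(t)}$ between consecutive Case‑1 times (at which $\omega_{s,a}^{(t)} \ge -L_+$ re‑pins the trajectory) while noting that all intervening increments are at least $-D_+$.

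The hard part will be exactly this last step. The obstacle is that the momentum coefficient $c_t = \frac{t-1}{t+2}$ tends to $1$, so one cannot pass to a contractive recursion on the increments $\Delta\theta_{s,a}^{(t)}$ for $a \in B_s^0(a_+)$ (an action with $A^{(\infty)}(s,a) = 0$, whose gradient $\nabla_{s,a}^{(t)} \to 0$ but may be of either sign): a naive bound of the type $\limsup |\Delta\theta_{s,a}^{(t)}| \le (\text{small})/(1-c_t)$ degenerates. The resolution must instead exploit the dichotomous structure of $B_s^0(a_+)$ — the value is re‑pinned at infinitely many times, or else the increments are eventually controlled — together with the explicit representation $\theta_{s,a}^{(t+1)} = \sum_{j=0}^{t} G(j,t)\, \eta^{(j+1)} \nabla_{s,a}^{(j)} + \theta_{s,a}^{(0)}$ from \Cref{lemma:theta as sum of gradients} and the uniform estimate $G(j,t) = O(j)$, in order to bound the negative drift of $\theta_{s,a}^{(t)}$ between consecutive Case‑1 times. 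Carrying this bookkeeping through carefully is where the real work lies; parts (a) and (b) are short by comparison.
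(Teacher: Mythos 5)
Your parts (a) and (b) are correct. Part (a) is the paper's argument verbatim (persistence of the ordering from \Cref{lemma: ordering unchanged} forces $\pi_\omega^{(t)}(a|s) < \pi_\omega^{(t)}(a_+|s) \to 0$ on $\bar{B}_s^0(a_+)$, then subtract from \Cref{lemma:sum of Is+ and Is- goes to zero}). Part (b) takes a slightly different but equivalent route: the paper simply reruns the softmax-normalizer argument of \Cref{lemma:max Is0} on the conclusion of (a), whereas you locate the global argmax of $\omega_{s,\cdot}^{(t)}$ and rule out $I_s^-$, $I_s^+$, and $\bar{B}_s^0(a_+)$; both are fine.

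Part (c) has a genuine gap, and it sits exactly where you flag it. Your reduction (a uniform lower bound on $\omega_{s,a}^{(t)}$ over $a \in B_s^0(a_+)$ suffices, given (b)) is the same as the paper's, but the quantitative estimate you actually establish cannot close it. At a Case-2 step you only get $\Delta\theta_{s,a}^{(t)} \ge \Delta\theta_{s,a_+}^{(t)} \ge -D_+$ with $D_+$ a fixed constant, so over a run of $\ell$ consecutive Case-2 steps the accumulated drop in $\theta_{s,a}^{(t)}$ is only controlled by $-\ell D_+$; since nothing bounds the run length (Case 1 may in fact never recur), this gives no uniform lower bound whenever $D_+ > 0$. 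The ingredient that rescues the computation is not the crude bound $\Delta\theta_{s,a_+}^{(t)} \ge -D_+$ but the summable refinement implicit in the proof of \Cref{lemma:Is+ bounded}: since $\nabla_{s,a_+}^{(t)} \ge 0$ for $t \ge \bar{T}_s$, one has $\Delta\theta_{s,a_+}^{(t)} \ge -D_+ \prod_{\tau=\bar{T}_s+1}^{t} \frac{\tau-1}{\tau+2}$, and by the telescoping computation in \cref{eq:Is+ bounded eq8} the sum over $t$ of these negative parts is bounded by $O(\bar{T}_s) D_+$. With that, the total downward drift transferred to $\theta_{s,a}^{(t)}$ through the Case-2 inequality over any (even infinite) run is uniformly bounded, and the bookkeeping you describe goes through. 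As written, your proposal neither states this summability nor carries out the transfer, and your closing sentence explicitly defers "the real work," so (c) is not proved. For reference, the paper closes (c) differently: by contradiction, assuming $\liminf_t \omega_{s,\bar{a}}^{(t)} = -\infty$ for some $\bar{a} \in B_s^0(a_+)$, noting that whenever $\omega_{s,\bar{a}}^{(t)} < \omega_{s,a_+}^{(t)}$ the definition of $B_s^0(a_+)$ forces $\Delta\theta_{s,\bar{a}}^{(t)} \ge \Delta\theta_{s,a_+}^{(t)}$, and then propagating the divergence to $\omega_{s,a_+}^{(t)}$ (via the subsequence construction of case (ii) in \Cref{lemma:Is- negative infinity}), contradicting \Cref{lemma:Is+ bounded}. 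Either route works, but each requires the step you left open.
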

\begin{proof}[Proof of \Cref{lemma:sum of Bs0 goes to 1}]
Regarding \textbf{(a)}, by the definition of $\bar{B}_{s}^{0}(a_+)$, for each $a\in \bar{B}_{s}^{0}(a_+)$, there must exist some $T'\geq \bar{T}_s$ such that {$\omega_{s,a_+}^{(T')}>\omega_{s,a}^{(T')}$} and $\Delta\theta_{s,a_+}^{(T')}>\Delta\theta_{s,a}^{(T')}$.
Then, by \Cref{lemma: ordering unchanged}, we know 
\begin{equation}
    {\omega_{s,a_+}^{(t)}>\omega_{s,a}^{(t)}} \quad \text{ and }\quad \Delta\theta_{s,a_+}^{(t)}>\Delta\theta_{s,a}^{(t)},\quad \text{for all }t\geq T'.\label{eq:sum of Bs0 goes to 1 eq4}
\end{equation}
Moreover, by \Cref{lemma:sum of Is+ and Is- goes to zero} and that
$a_+\in I_s^+$, we have {$\pi_{\omega}^{(t)}(a_+\rvert s)\rightarrow 0$} as
$t\rightarrow \infty$. Based on \cref{eq:sum of Bs0 goes to 1 eq4},
this further implies that {$\pi_{\omega}^{(t)}(a\rvert s)\rightarrow 0$} as
$t\rightarrow \infty$, for all $a\in \bar{B}_{s}^{0}(a_+)$.
Hence, together with \cref{lemma:sum of Is+ and Is- goes to zero} we
obtain the desired result.

Regarding \textbf{(b)}, by the result in \textbf{(a)} and
the argument in the proof of \Cref{lemma:max Is0}, we directly see
that the statement holds true.

Regarding \textbf{(c)}, let us consider any action $a\in {B}_{s}^{0}(a_+)$.
By the definition of $B_s^{0}(a_+)$, at each iteration $t\geq \bar{T}_s$, either {$\omega^{(t)}_{s,a}\geq \omega^{(t)}_{s,a_+}$} or $\Delta\theta^{(t)}_{s,a}\geq \Delta\theta^{(t)}_{s,a_+}$ holds.
Given the result in \textbf{(b)}, it suffices to show that $\{\omega_{s,a}^{(t)}\}_{t=1}^{\infty}$ is bounded from below for all $a \in {B}_{s}^{0}(a_+)$. We prove it by contradiction.
Suppose that there exists an action
$\bar{a} \in {B}_{s}^{0}(a_+)$ such that $\liminf_{t\rightarrow\infty}
\omega^{(t)}_{s, \bar{a}} = -\infty$.
By \Cref{lemma:Is+ bounded}, we know that
$\{\omega_{s,a_+}^{(t)}\}_{t=1}^{\infty}$ is bounded from below.
Therefore, once $\omega^{(t)}_{s,\bar{a}}< \omega^{(t)}_{s,a_+}$ for
some $t$, we will have $\Delta\theta^{(t)}_{s,\bar{a}}\geq
\Delta\theta^{(t)}_{s,a_+}$.
Following an argument similar to that in
the
proof of \Cref{lemma:Is- negative infinity} (for case (ii)) to choose a subsequence of
$\omega^{(t)}_{s,\bar{a}}$ converging to negative infinity and since $\omega^{(t)}$ is $\theta^{(t)}$ add momentum with restart, we know that if momentum goes to negative infinity, by $\Delta\theta^{(t)}_{s,\bar{a}}\geq \Delta\theta^{(t)}_{s,a_+}$, we will have $\liminf_{t\rightarrow\infty}
\omega^{(t)}_{s, a_+} = -\infty$. Otherwise, if $\theta^{(t)}$ goes to negative infinity, we can observe that the total momentum effect goes to negative infinity, which also leads to $\liminf_{t\rightarrow\infty}
\omega^{(t)}_{s, a_+} = -\infty$ by $\Delta\theta^{(t)}_{s,\bar{a}}\geq \Delta\theta^{(t)}_{s,a_+}$.
This leads to the desired contradiction and concludes our proof.
\end{proof}

\begin{lemma}
\label{lemma:B_s0 bar}
	Consider any state $s$.
{Under the setting of \Cref{lemma: convergence to stationarity},}
    for any $a_+\in I_s^+$, the following hold.
    
    \textbf{(a)} There exists $T_{a_+}$ such that for all $a\in \bar{B}_s^{0}(a_+)$,
    \begin{equation*}
    {\pi_{\omega}^{(t)}(a_+\rvert s )>\pi_{\omega}^{(t)}(a\rvert s ),\quad \text{for all } t>T_{a_+}.}
    \end{equation*}
    \textbf{(b)} There exists $T_{a_+}^{\dagger}$ such that for all $a\in \bar{B}_s^{0}(a_+)$, 
    \begin{equation}
        {\lvert A^{\pi_\omega^{(t)}}(s,a)\rvert < \frac{\pi_{\omega}^{(t)}(a_+\rvert s)}{\pi_{\omega}^{(t)}(a\rvert s)} \cdot \frac{\Delta_s}{16 \lvert \cA\rvert}, \quad \text{for all } t>T_{a_+}^\dagger.}\label{eq:B_s0 bar eq2}
    \end{equation}
    This also implies that 
    \begin{equation}
        {\sum_{a\in \bar{B}_s^{0}(a_+)}\pi_{\omega}^{(t)}(a\rvert s) A^{\pi_\omega^{(t)}}(s,a) > -{\pi_{\omega}^{(t)}(a_+\rvert s)}\cdot \frac{\Delta_s}{16 }, \quad \text{for all } t>T_{a_+}^\dagger.}\label{eq:B_s0 bar eq3}
    \end{equation}
\end{lemma}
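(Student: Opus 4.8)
The plan is to obtain both parts directly from \Cref{lemma: ordering unchanged} combined with the asymptotic facts already established, exploiting that $\bar{B}_s^{0}(a_+)\subseteq\cA$ is finite so that the finitely many per-action time thresholds can be merged into a single $T_{a_+}$ (resp.\ $T_{a_+}^{\dagger}$). No new machinery is needed; the content is bookkeeping around the persistence of orderings.

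First, for part \textbf{(a)}, I would fix $a\in\bar{B}_s^{0}(a_+)$ and use the very definition of $\bar{B}_s^{0}(a_+)$ to locate a time $\tau_a\geq\bar{T}_s$ at which the two strict inequalities $\omega_{s,a_+}^{(\tau_a)}>\omega_{s,a}^{(\tau_a)}$ and $\Delta\theta_{s,a_+}^{(\tau_a)}>\Delta\theta_{s,a}^{(\tau_a)}$ hold \emph{simultaneously}. Feeding this into \Cref{lemma: ordering unchanged} propagates the ordering forward: $\omega_{s,a_+}^{(t)}>\omega_{s,a}^{(t)}$ for all $t>\tau_a$. Taking $T_{a_+}:=\max_{a\in\bar{B}_s^{0}(a_+)}\tau_a$ and invoking the monotonicity of the exponential inside the softmax (the normalizer $Z_\omega^{(t)}(s)$ being shared across actions) then yields $\pi_\omega^{(t)}(a_+\rvert s)>\pi_\omega^{(t)}(a\rvert s)$ for every $t>T_{a_+}$ and every $a\in\bar{B}_s^{0}(a_+)$.

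For part \textbf{(b)}, I would use that $a\in I_s^0$ forces $A^{(\infty)}(s,a)=0$, so \Cref{lemma: monotone limits exist} gives $A^{\pi_\omega^{(t)}}(s,a)\to 0$; hence for each such $a$ there is $T_a^{\dagger}\geq T_{a_+}$ past which $\lvert A^{\pi_\omega^{(t)}}(s,a)\rvert<\Delta_s/(16\lvert\cA\rvert)$. Setting $T_{a_+}^{\dagger}:=\max_{a\in\bar{B}_s^{0}(a_+)}T_a^{\dagger}$ and combining with part \textbf{(a)} — which makes $\pi_\omega^{(t)}(a_+\rvert s)/\pi_\omega^{(t)}(a\rvert s)>1$ for $t>T_{a_+}$ — immediately gives \cref{eq:B_s0 bar eq2}. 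For \cref{eq:B_s0 bar eq3}, the plan is to multiply \cref{eq:B_s0 bar eq2} by the weight $\pi_\omega^{(t)}(a\rvert s)$ and sum over $a\in\bar{B}_s^{0}(a_+)$: the denominators cancel, leaving $\lvert\bar{B}_s^{0}(a_+)\rvert\cdot\pi_\omega^{(t)}(a_+\rvert s)\cdot\Delta_s/(16\lvert\cA\rvert)\leq\pi_\omega^{(t)}(a_+\rvert s)\cdot\Delta_s/16$ as an upper bound on the absolute value of the sum (using $\lvert\bar{B}_s^{0}(a_+)\rvert\leq\lvert\cA\rvert$), from which the one-sided bound in \cref{eq:B_s0 bar eq3} follows.

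The only point that needs genuine care — and the closest thing to an obstacle here — is the invocation of \Cref{lemma: ordering unchanged}: its hypothesis asks for the two strict inequalities at a time $\tau$ with $\tau>\bar{T}_s$, whereas membership in $\bar{B}_s^{0}(a_+)$ only guarantees them at some $\tau\geq\bar{T}_s$. I would resolve this either by checking that the realization time may be chosen strictly after $\bar{T}_s$, or by noting that the proof of \Cref{lemma: ordering unchanged} already goes through for $\tau=\bar{T}_s$ since the advantage bounds \cref{eq:adv I_s+} and \cref{eq:adv I_s0} are in force for all $t\geq\bar{T}_s$. Everything else — the softmax monotonicity, the finite maximum over $\bar{B}_s^{0}(a_+)$, and the summation estimate — is elementary.
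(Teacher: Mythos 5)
Your proposal is correct and follows essentially the same route as the paper: locate, for each $a\in\bar{B}_s^{0}(a_+)$, a time at which both strict inequalities hold, propagate the ordering forward via \Cref{lemma: ordering unchanged}, take a finite maximum over actions for part \textbf{(a)}, and for part \textbf{(b)} combine $A^{\pi_\omega^{(t)}}(s,a)\to 0$ (from $a\in I_s^0$) with the ratio bound from \textbf{(a)}, then sum using $\lvert\bar{B}_s^{0}(a_+)\rvert\le\lvert\cA\rvert$. Your flagged edge case at $\tau=\bar{T}_s$ is a genuine (if minor) mismatch that the paper itself glosses over, and your proposed fix — observing that the proof of \Cref{lemma: ordering unchanged} only needs \cref{eq:adv I_s+} and \cref{eq:adv I_s0}, which hold for all $t\ge\bar{T}_s$ — is the right one.
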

\begin{proof}[Proof of \Cref{lemma:B_s0 bar}]
    Regarding \textbf{(a)}, for each $a\in \bar{B}_s^{0}(a_+)$, we define 
    \begin{equation*}
        u_a(a_+):=\inf\{\tau\geq \bar{T}_s: {\omega_{s,a_+}^{(\tau)}>\omega_{s,a}^{(\tau)}} \text{ and } \Delta\theta_{s,a_+}^{(\tau)}>\Delta\theta_{s,a}^{(\tau)}\}.
    \end{equation*}
    By the definition of $\bar{B}_s^{0}(a_+)$, we know the following
	two facts: (i) $u_a(a_+)$ is finite for all $a\in
	\bar{B}_s^{0}(a_+)$. (ii) By \Cref{lemma: ordering unchanged}, for all $t \geq u_a(a_+)$, we have {$\omega_{s,a_+}^{(t)}>\omega_{s,a}^{(t)}$} and $\Delta\theta_{s,a_+}^{(t)}>\Delta\theta_{s,a}^{(t)}$.
    Therefore, by choosing $T_{a_+}:=\max_{a\in \bar{B}_s^{0}(a_+)} u_a(a_+)$, we must have {$\pi_{\omega}^{(t)}(a_+\rvert s )>\pi_{\omega}^{(t)}(a\rvert s )$} for all $t>T_{a_+}$.

	Regarding \textbf{(b)}, we have from \textbf{(a)} 
	\begin{equation*}
		\frac{\pi_{\omega}^{(t)}(a_+\rvert
	s)}{\pi_{\omega}^{(t)}(a\rvert s)}>1, \quad \forall t> T_{a_+}.
\end{equation*}
 Thus, we know that for each $a\in \bar{B}_s^{0}(a_+)\subseteq I_s^0$, there exists some finite $t'_{a}>T_{a_+}$ such that 
     \begin{equation*}
         {\lvert A^{\pi_\omega^{(t)}}(s,a)\rvert < \frac{\pi_\omega^{(t)}(a_+\rvert s)}{\pi_\omega^{(t)}(a\rvert s)} \cdot \frac{\Delta_s}{16 \lvert \cA\rvert}, \quad \text{for all } t\geq t'_a.}
     \end{equation*}
     As a result, by choosing $T_{a_+}^{\dagger}:=\max_{a\in\bar{B}_{s}^{0}(a_+)}t'_a$, we conclude that \cref{eq:B_s0 bar eq2}-\cref{eq:B_s0 bar eq3} indeed hold. 
\end{proof}

\begin{lemma}
    \label{lemma:compare pi(a+) and pi(a-)}
{Under the setting of \Cref{lemma: convergence to stationarity},}
    if $I_s^{+}$ is non-empty, then for any $a_+\in I_s^+$, there exists some finite $\tilde{T}_{a_+}$ such that
    \begin{equation}
        {\sum_{a\in I_s^-}\pi_{\omega}^{(t)}(a\rvert
		s)A^{\pi_\omega^{(t)}}(s,a)> -\pi_{\omega}^{(t)}(a_+\rvert
		s)\frac{\Delta_s}{16}, \quad \forall t\geq \tilde{T}_{a_+}.}\label{eq:compare pi(a+) and pi(a-) eq1}
    \end{equation}
\end{lemma}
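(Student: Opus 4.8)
The plan is to reduce the claim to a simple comparison between the probabilities $\pi_{\omega}^{(t)}(a\rvert s)$ for $a\in I_s^-$ and the probability $\pi_{\omega}^{(t)}(a_+\rvert s)$, and then to invoke the divergence results already established in \Cref{lemma:Is- negative infinity} and \Cref{lemma:Is+ bounded}. First I would dispose of the trivial case $I_s^- = \emptyset$: then the left-hand side of \cref{eq:compare pi(a+) and pi(a-) eq1} is an empty sum equal to $0$, while the right-hand side is strictly negative, so the inequality holds for all $t$. Hence assume $I_s^-\neq\emptyset$ and fix $a_+\in I_s^+$.

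Next, for every $t\ge\bar{T}_s$ and every $a\in I_s^-$ we have $A^{\pi_{\omega}^{(t)}}(s,a)\le -\Delta_s/4<0$ by \cref{eq:adv I_s-}, and $\lvert A^{\pi_{\omega}^{(t)}}(s,a)\rvert\le 1/(1-\gamma)$ by \Cref{lemma: reward range to value range} with $R_{\max}=1$. Consequently the sum in \cref{eq:compare pi(a+) and pi(a-) eq1} is negative and its absolute value is at most $\frac{1}{1-\gamma}\sum_{a\in I_s^-}\pi_{\omega}^{(t)}(a\rvert s)\le \frac{\lvert I_s^-\rvert}{1-\gamma}\max_{a\in I_s^-}\pi_{\omega}^{(t)}(a\rvert s)$. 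Therefore it suffices to exhibit a finite $\tilde{T}_{a_+}\ge\bar{T}_s$ such that, for all $t\ge \tilde{T}_{a_+}$ and all $a\in I_s^-$, $\tfrac{\pi_{\omega}^{(t)}(a\rvert s)}{\pi_{\omega}^{(t)}(a_+\rvert s)}<\tfrac{(1-\gamma)\Delta_s}{16\lvert I_s^-\rvert}$, since this yields $\frac{\lvert I_s^-\rvert}{1-\gamma}\max_{a\in I_s^-}\pi_{\omega}^{(t)}(a\rvert s)<\pi_{\omega}^{(t)}(a_+\rvert s)\frac{\Delta_s}{16}$, which is exactly the required bound.

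To obtain this, I would write the ratio in the softmax form $\tfrac{\pi_{\omega}^{(t)}(a\rvert s)}{\pi_{\omega}^{(t)}(a_+\rvert s)}=\exp\!\big(\omega_{s,a}^{(t)}-\omega_{s,a_+}^{(t)}\big)$. Since $I_s^+$ is non-empty, \Cref{lemma:Is- negative infinity} gives $\omega_{s,a}^{(t)}\to-\infty$ as $t\to\infty$ for each $a\in I_s^-$, while \Cref{lemma:Is+ bounded} gives that $\{\omega_{s,a_+}^{(t)}\}_{t}$ is bounded from below. Hence $\omega_{s,a}^{(t)}-\omega_{s,a_+}^{(t)}\to-\infty$ and the ratio tends to $0$ for each $a\in I_s^-$; because $I_s^-$ is finite (as $\cA$ is finite), one may take the maximum over this finite set and choose a single finite $\tilde{T}_{a_+}\ge\bar{T}_s$ after which the displayed strict inequality holds simultaneously for all $a\in I_s^-$. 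Combining this with the reduction of the previous paragraph completes the argument.

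I do not expect a genuine obstacle here: the lemma is essentially a routine estimate built entirely on the two asymptotic divergence facts cited above (which are the real work). The only point requiring a little care is to ensure that the divergence $\omega_{s,a}^{(t)}\to-\infty$ dominates the lower bound on $\omega_{s,a_+}^{(t)}$ \emph{uniformly} over the finitely many $a\in I_s^-$, which is handled by the max over the finite action subset, and to keep track of the constants so that the threshold $\tfrac{(1-\gamma)\Delta_s}{16\lvert I_s^-\rvert}$ is exactly what makes the final inequality come out with the factor $\Delta_s/16$.
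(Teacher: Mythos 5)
Your proposal is correct and follows essentially the same route as the paper: both reduce the claim to showing $\pi_{\omega}^{(t)}(a\rvert s)/\pi_{\omega}^{(t)}(a_+\rvert s)\to 0$ for each $a\in I_s^-$ via \Cref{lemma:Is- negative infinity} and \Cref{lemma:Is+ bounded}, bound $\lvert A^{\pi_\omega^{(t)}}(s,a)\rvert$ by $1/(1-\gamma)$, and take a maximum of the resulting thresholds over the finite set $I_s^-$. The only cosmetic difference is that you normalize by $\lvert I_s^-\rvert$ where the paper uses the cruder $\lvert\cA\rvert$; both yield the stated constant $\Delta_s/16$.
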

\begin{proof}
    {Consider any $a_-\in I_s^-$. By \Cref{lemma:Is+
	bounded,lemma:Is- negative infinity}, we know
	$\{\omega_{s,a_+}^{(t)}\}$ is bounded from below and $\omega_{s,a_-}^{(t)}\rightarrow -\infty$ as $t\rightarrow\infty$.
    This implies that $\pi_{\omega}^{(t)}(a_-\rvert s)/\pi_{\omega}^{(t)}(a_+\rvert
	s)\rightarrow 0$ as $t \rightarrow \infty$.
    Therefore, there exists some finite $t'_{a_-} \ge \bar{T}_s$ such that
    \begin{equation}
        \frac{\pi_{\omega}^{(t)}(a_-\rvert
	s)}{\pi_{\omega}^{(t)}(a_+\rvert s)} < \frac{\Delta_s
	(1-\gamma)}{16\lvert \cA\rvert }, \quad \forall t \geq t'_{a_-}.
	\label{eq:piratio}
    \end{equation}
    By the definition of $\bar{T}_s$ in \cref{eq:adv I_s-} and
	\Cref{lemma: reward range to value range}, we know that
	\begin{equation}
		-\frac{1}{1-\gamma} \le A^{\pi_\omega^{(t)}}(s, a_{-}) < 0,\quad
		\forall t \geq t'_{a_-}.
		\label{eq:Abound}
	\end{equation}
	Combining \cref{eq:piratio,eq:Abound} leads to
    \begin{equation}
        \pi_{\omega}^{(t)}(a_-\rvert s) A^{\pi_\omega^{(t)}}(s, a_{-}) > -\pi_{\omega}^{(t)}(a_+\rvert s)\frac{\Delta_s}{16\lvert\mathcal{A}\rvert}.
		\label{eq:singleitem}
    \end{equation}
    By choosing $\tilde{T}_{a_+}:=\max_{a_-\in I_s^-}t'_{a_-}$ and
	summing \cref{eq:singleitem} over $a_i \in I_s^-$, we obtain
	\cref{eq:compare pi(a+) and pi(a-) eq1}.}
\end{proof}
\newpage
\subsection{Putting Everything Together: Asymptotic Convergence of APG}
\label{app:asym_conv:all}
Now we are ready to put everything together and prove \Cref{theorem:convergeoptimal},
which we restate below.

\convergeoptimalthm*

\begin{proof}[Proof of \Cref{theorem:convergeoptimal}]

By \Cref{lemma:perf_diff}, it suffices to prove that $I_s^+ = \emptyset$ for all $s \in \cS$.
We prove this by contradiction.
Suppose there exists at least one
state $s\in \cS$ with a non-empty $I_s^+$.
Consider an action $a_+\in I_s^+$. 
Recall the definitions of $\bar{T}_s$, $T_{a_+},T_{a_+}^\dagger$, and
$\tilde{T}_{a_+}$ from \cref{eq:adv I_s+}-\cref{eq:adv I_s0},
\Cref{lemma:B_s0 bar}, and \Cref{lemma:compare pi(a+) and pi(a-)}.
We define $T_{\max}:=\max\{\bar{T}_s,T_{a_+},T_{a_+}^{\dagger}, \tilde{T}_{a_+}\}$,
then we have for all {$t>T_{\max}$},
\begin{align}
    0&= \sum_{a\in B_s^0(a_+)} \pi_{\omega}^{(t)}(a\rvert
	s)A^{\pi_{\omega}^{(t)}}(s,a)+\underbrace{\sum_{a\in \bar{B}_s^0(a_+)}
	\pi_{\omega}^{(t)}(a\rvert s)A^{\pi_{\omega}^{(t)}}(s,a)}_{>-\pi_{\omega}^{(t)}(a_+\rvert
	s)\frac{\Delta_s}{16} \text{ by \Cref{lemma:B_s0 bar}}} \nonumber \\
    &\quad +\underbrace{\sum_{a\in I_s^+} \pi_{\omega}^{(t)}(a\rvert
	s)A^{\pi_{\omega}^{(t)}}(s,a)}_{\geq \pi_{\omega}^{(t)}(a_+\rvert
	s)\frac{\Delta_s}{4}}+\underbrace{\sum_{a\in I_s^-}
	\pi_{\omega}^{(t)}(a\rvert s)A^{\pi_{\omega}^{(t)}}(s,a)}_{>-\pi_{\omega}^{(t)}(a_+\rvert
	s)\frac{\Delta_s}{16} \text{by \Cref{lemma:compare pi(a+) and pi(a-)}} }\label{eq:convergeoptimal 1}\\
    &>\sum_{a\in B_s^0(a_+)} \pi_{\omega}^{(t)}(a\rvert s)A^{\pi_{\omega}^{(t)}}(s,a)+\frac{1}{8}\cdot\pi_{\omega}^{(t)}(a_+\rvert s)\Delta_s\nonumber \\
    &>\sum_{a\in B_s^0(a_+)} \pi_{\omega}^{(t)}(a\rvert s)A^{\pi_{\omega}^{(t)}}(s,a),\label{eq:convergeoptimal 3}
\end{align}
where \cref{eq:convergeoptimal 1} uses \Cref{lemma:sum_pi_A}.
By \Cref{lemma:softmax_pg}, \cref{eq:convergeoptimal 3} further implies
\begin{equation}
	\label{eq:negative}
	\sum_{a\in B_s^0(a_+)} \left.\frac{\partial V^{\pi_{\theta}}(\mu)}{\partial
		\theta_{s,a}}\right|_{{\theta = \omega^{(t)}}}<0, \quad \forall t> T_{\max}.
\end{equation}
Moreover, {for any $t>T_{\max} + 4$}, we have
\begin{align}
    &~\sum_{a\in
	B_s^0(a_+)}\theta_{s,a}^{(t)}-\theta_{s,a}^{(0)}\nonumber\\
	=&~\sum_{a\in B_s^0(a_+)}\sum_{t'=0}^{t-1}\Big(
	\eta^{(t'+1)}\cdot \left.\frac{\partial V^{\pi_{\theta}}(\mu)}{\partial
		\theta_{s,a}}\right|_{{\theta = \omega^{(t')}}}\cdot G(t',t)\Big)\label{eq:update expansion}\\
		=&~\sum_{t'=0}^{t-1} \eta^{(t'+1)}G(t',t)\cdot
		\Big(\sum_{a\in B_s^0(a_+)} \left.\frac{\partial V^{\pi_{\theta}}(\mu)}{\partial
		\theta_{s,a}}\right|_{{\theta = \omega^{(t')}}}\Big)\nonumber\\
    =&~\underbrace{\sum_{t'=0}^{T_{\max}} \eta^{(t'+1)}G(t',t)\cdot
	\Big(\sum_{a\in B_s^0(a_+)} \left.\frac{\partial V^{\pi_{\theta}}(\mu)}{\partial
		\theta_{s,a}}\right|_{{\theta = \omega^{(t')}}}\Big)}_{\text{Independent of $t$ and thus finite}}
	+ \sum_{t'=T_{\max}+1}^{t-1} \eta^{(t'+1)}G(t',t)\cdot \underbrace{\Big(\sum_{a\in B_s^0(a_+)} \left.\frac{\partial V^{\pi_{\theta}}(\mu)}{\partial
		\theta_{s,a}}\right|_{{\theta = \omega^{(t')}}}\Big)}_{<0 \text{ by \cref{eq:negative}}}\label{eq:convergeoptimal 7-2},
\end{align}
where \cref{eq:update expansion} holds due to \Cref{lemma:theta as sum of gradients} and $t > T_{\text{max}} + 4 > 4$. By taking the limit of $t$ to infinity at both sides of \cref{eq:convergeoptimal 7-2}, the left-hand side
goes to
positive infinity by \Cref{lemma:sum of Bs0 goes to 1} but the
right-hand side is bounded from above. This leads to a contradiction and hence completes the proof.
\end{proof}

\begin{corollary}
\label{cor:policy convergence}
Consider a tabular softmax parameterized policy and assume that
\Cref{assump:unique_optimal} holds.
Under the setting of \Cref{theorem:convergeoptimal},
the following holds almost surely:
\begin{equation*}
   \lim_{t\rightarrow \infty}V^{\pi_{\omega}^{(t)}}(s) = V^{*}(s),\quad
    \lim_{t\rightarrow \infty} \pi_{\theta}(a^*(s) | s) = 1, \quad
	\lim_{t\rightarrow \infty} \pi_{\omega}(a^*(s) | s) = 1, \quad \forall s\in\cS.
\end{equation*}
\end{corollary}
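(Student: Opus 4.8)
The plan is to obtain \Cref{cor:policy convergence} as a bookkeeping consequence of \Cref{theorem:convergeoptimal} together with the limit lemmas of \Cref{app:asym_conv:lemmas}, all on the single probability-one event (coming from the uniformly random $\mu$) on which those results hold. First I would dispatch the value-function claim: \Cref{theorem:convergeoptimal} already gives $\lim_{t\to\infty} V^{\pi_\theta^{(t)}}(s) = V^*(s)$ for every $s\in\cS$, and \Cref{lemma: monotone limits exist} shows $\lim_{t\to\infty} V^{\pi_\omega^{(t)}}(s) = \lim_{t\to\infty} V^{\pi_\theta^{(t)}}(s)$, so the two combine to give $\lim_{t\to\infty} V^{\pi_\omega^{(t)}}(s) = V^*(s)$ directly.

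Next I would pin down the action sets $I_s^+, I_s^0, I_s^-$ under \Cref{assump:unique_optimal}. Since $V^{(\infty)}(s') = V^*(s')$ for all $s'$ by the previous step, substituting into the definition of $Q$ in \cref{eq:state_action_value_function} yields $Q^{(\infty)}(s,a) = r(s,a) + \gamma\sum_{s'}\mathcal{P}(s'|s,a)V^{(\infty)}(s') = Q^*(s,a)$ for every $(s,a)$, and hence $A^{(\infty)}(s,a) = Q^*(s,a) - V^*(s)$. By \Cref{lemma: Q^* a_2} we have $V^*(s) = Q^*(s,a^*(s))$, so $A^{(\infty)}(s,a^*(s)) = 0$; and uniqueness of the optimal action (\Cref{assump:unique_optimal}) forces $Q^*(s,a) < Q^*(s,a^*(s))$, hence $A^{(\infty)}(s,a) < 0$, for every $a\neq a^*(s)$. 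Therefore $I_s^0 = \{a^*(s)\}$, $I_s^- = \cA\setminus\{a^*(s)\}$, and $I_s^+ = \emptyset$ for every $s\in\cS$.

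Finally I would invoke \Cref{lemma:sum of Is+ and Is- goes to zero}, which gives $\lim_{t\to\infty}\sum_{a\in I_s^0}\pi_\omega^{(t)}(a\mid s) = 1$ and $\lim_{t\to\infty}\sum_{a\in I_s^0}\pi_\theta^{(t)}(a\mid s) = 1$; since $I_s^0$ is the singleton $\{a^*(s)\}$, these collapse to $\lim_{t\to\infty}\pi_\omega^{(t)}(a^*(s)\mid s) = 1$ and $\lim_{t\to\infty}\pi_\theta^{(t)}(a^*(s)\mid s) = 1$, completing the proof. There is no genuinely hard step here; the only point that needs care is that \Cref{assump:unique_optimal} is exactly what is used (it is what reduces $I_s^0$ to the single action $a^*(s)$ — without uniqueness one would only conclude that the policy concentrates on the set of optimal actions), and that all the almost-sure qualifiers refer to the same event so their conjunction remains almost sure.
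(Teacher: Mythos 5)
Your proposal is correct. The first claim is handled exactly as in the paper: combine \Cref{theorem:convergeoptimal} with the equality of limits from \Cref{lemma: monotone limits exist}. For the policy-convergence claims the paper is much terser — it simply invokes \Cref{assump:unique_optimal} together with an external textbook fact (page 14 of \citet{szepesvari2022algorithms}) to pass from $V^{\pi_\theta^{(t)}}(s)\to V^*(s)$ to $\pi_\theta^{(t)}(a^*(s)\mid s)\to 1$ — whereas you make this step self-contained using the paper's own machinery: you identify $Q^{(\infty)}=Q^*$ and $A^{(\infty)}(s,\cdot)=Q^*(s,\cdot)-V^*(s)$, conclude from uniqueness that $I_s^0=\{a^*(s)\}$ and $I_s^+=\emptyset$, and then apply \Cref{lemma:sum of Is+ and Is- goes to zero}. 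Both routes are valid (the paper's citation can be unpacked via the performance difference lemma and $d_{s_0}^{\pi}(s_0)\ge 1-\gamma$); yours has the advantage of not leaning on an external reference and of making explicit exactly where \Cref{assump:unique_optimal} enters (collapsing $I_s^0$ to a singleton), at the cost of being slightly longer. Your closing remarks — that the almost-sure events intersect into one almost-sure event, and that without uniqueness one only gets concentration on the set of optimal actions — are both accurate.
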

\begin{proof}[Proof of \Cref{cor:policy convergence}]
By \Cref{lemma: monotone limits exist} and
\Cref{theorem:convergeoptimal}, we know that almost surely
\begin{equation}
	\lim_{t\rightarrow \infty}V^{\pi_{\omega}^{(t)}}(s) =
	\lim_{t\rightarrow \infty}V^{\pi_{\theta}^{(t)}}(s) =
	V^*(s),\quad \forall s\in\cS.
	\label{eq:sameV}
\end{equation}
From \Cref{assump:unique_optimal} and page 14 of \citep{szepesvari2022algorithms},
\cref{eq:sameV} then implies that the
policy weight of the optimal action $a^*(s)$ converges to $1$, proving
the desired results.
\end{proof}

\newpage
\section{Convergence Rate of APG With Nearly Constant Step Sizes}
\label{app:MDP}

\begingroup
\allowdisplaybreaks

\subsection{Proofs of \texorpdfstring{\Cref{lemma:local nearly concavity,lemma: enter local concavity}}{}}
%
This subsection proves \Cref{lemma:local nearly concavity,lemma: enter
local concavity} using the lemmas we have developed to prepare for
proving \Cref{theorem: MDP convergence rate}.
{Before delving into the proof of \Cref{lemma:local nearly
concavity,lemma: enter local concavity}, we state a lemma useful for
proving them.}

\begin{lemma}
\label{lemma: Q-ordering fixed}
Under the setting of \Cref{cor:policy convergence}, there exists a
finite time  $T_V$ such that $V^{\pi_{\theta}^{(t)}}(s) > Q^*(s,
a_2(s))$ for all $s \in \cS$ and $t \ge T_V$ {almost surely}.
\end{lemma}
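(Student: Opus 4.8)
The plan is to combine the asymptotic convergence of the value function established in \Cref{cor:policy convergence} with the strict optimality gap guaranteed by \Cref{assump:unique_optimal}. First I would recall from \Cref{lemma: Q^* a_2} that $V^{*}(s) = Q^{*}(s, a^{*}(s))$ for every $s\in\cS$, and that the actions at each state are ordered so that $Q^{*}(s, a^{*}(s)) > Q^{*}(s, a_2(s)) \ge \dots \ge Q^{*}(s, a_{|\cA|}(s))$, where the first inequality is strict precisely because \Cref{assump:unique_optimal} postulates a unique optimal action. Consequently the quantity $\Delta^{*} \coloneqq \min_{s\in\cS}\big\{Q^{*}(s, a^{*}(s)) - Q^{*}(s, a_2(s))\big\}$ is strictly positive, being a minimum of finitely many strictly positive numbers (here finiteness of $\cS$ is used).

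Second, I would invoke \Cref{cor:policy convergence}, which states that almost surely $\lim_{t\rightarrow\infty} V^{\pi_{\theta}^{(t)}}(s) = V^{*}(s) = Q^{*}(s, a^{*}(s))$ for every $s\in\cS$. By the definition of a limit, for each fixed $s$ there exists a finite time $T_s$ such that $V^{\pi_{\theta}^{(t)}}(s) > Q^{*}(s, a^{*}(s)) - \Delta^{*}/2$ for all $t \ge T_s$. Chaining this with the elementary bound $Q^{*}(s, a^{*}(s)) - \Delta^{*}/2 \ge Q^{*}(s, a_2(s)) + \Delta^{*}/2 > Q^{*}(s, a_2(s))$ yields $V^{\pi_{\theta}^{(t)}}(s) > Q^{*}(s, a_2(s))$ for all $t \ge T_s$.

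Finally I would set $T_V \coloneqq \max_{s\in\cS} T_s$, which is finite because $\cS$ is a finite set; then for all $t \ge T_V$ and all $s\in\cS$ we simultaneously have $V^{\pi_{\theta}^{(t)}}(s) > Q^{*}(s, a_2(s))$, which is the claimed conclusion. Since the only probabilistic ingredient in the whole argument is the almost-sure convergence supplied by \Cref{cor:policy convergence}, the statement holds almost surely. There is no substantive obstacle in this proof; the only two points requiring a moment of care are (i) deriving strict positivity of $\Delta^{*}$ from \Cref{assump:unique_optimal} and the fixed $Q^{*}$-ordering, and (ii) using finiteness of $\cS$ to pass from the per-state finite times $T_s$ to a single uniform finite time $T_V$.
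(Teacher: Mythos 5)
Your proof is correct and follows essentially the same route as the paper's: both invoke the almost-sure convergence $V^{\pi_{\theta}^{(t)}}(s)\to V^{*}(s)$, use the strict gap $V^{*}(s)-Q^{*}(s,a_2(s))>0$ guaranteed by the unique-optimal-action assumption together with \Cref{lemma: Q^* a_2}, and take a maximum over the finitely many per-state times. The only cosmetic difference is that you work with a uniform margin $\Delta^{*}/2$ while the paper uses the per-state gap directly as its $\varepsilon$.
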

\begin{proof}[Proof of \Cref{lemma: Q-ordering fixed}]
    By \Cref{theorem:convergeoptimal}, $V^{\pi_{\theta}^{(t)}}(s) \to
	V^*(s)$ for all $s \in \cS$ almost surely. Hence, given $\varepsilon = V^*(s) -
	Q^*(s, a_2(s)) > 0$ {by \Cref{lemma: Q^* a_2},}
almost surely there is a finite $T_V(s)$ such that $V^{\pi_{\theta}^{(t)}}(s) > Q^*(s, a_2(s))$ for any $t \ge T_V(s)$.
Accordingly, by noting that $\cS$ is a finite set and that
any finite intersection of measure zero events is still measure zero,
we let $T_V = \max_{s\in\cS} T_V(s)$ and the proof is completed.
\end{proof}

\textbf{\Cref{lemma:local nearly concavity} (Locally $C$-Near Concavity; Formal).}
Given any $C > 1$ and a direction $\bm{d}$ satisfying the setting of \Cref{lemma: theta to pi}. Let $\theta$ be a policy parameter satisfying the following conditions: (i) $V^{\pi_{\theta}}(s) > Q^*(s, a_2(s))$ for all $s \in \cS$; (ii) $\theta_{s, a^*(s)} - \theta_{s, a} > M_{C, \bm{d}}$, for all $s\in\cS$, and $a \neq a^*(s)$, where
\begin{equation}
    M_{C, \bm{d}} \coloneqq \max\left\{
 {2 \max_{s \in \cS, i > 1} \Big\{\ln\Big[2(|\mathcal{A}| - 1)\Big] - \ln[d_{s, a^*(s)} - d_{s, a_i(s)}]\Big\}}, \ln \Big[ \frac{|\cS||\cA|^2}{{(C-1)}(1-\gamma)^2 \min_{s \in S} \mu(s)} \Big] \right\}.
 \label{eq:Mc}
\end{equation}
Then, we have that the objective function $\theta \mapsto V^{\pi_{\theta}}(\mu)$ is $C$-nearly concave along the direction $\bm{d}$.



\begin{proof}[Proof of \Cref{lemma:local nearly concavity}]

Given any $s \in \cS$, according to \Cref{lemma: theta to pi},
provided that $\theta_{s, a^{*}} - \theta_{s, a} > {M_d}$ for all $a \neq
a^{*}$, the function $\theta \to \pi_{\theta}(a^{*}(s) | s)$ is
concave {{along the direction $\bm{d}$}},
and the functions
$\theta \to \pi_{\theta}(a | s)$ for all $a \neq a^*(s)$ are convex
along the direction $\bm{d}$.
Suppose $\theta$ is a parameter satisfies $\theta_{s, a^*(s)} -
\theta_{s, a} > M_{C, \bm{d}}$
for all $s\in\cS$ and all $a \neq a^*(s)$, and consider $\theta'$ as
a parameter obtained from updating $\theta$ along the direction $\bm{d}$. To show that $\theta \to
V^{\pi_{\theta}}(\mu)$ is $C$-nearly concave along the desired
direction, we have to show that
\begin{equation*}
    V^{\pi_{\theta'}}(\mu) - V^{\pi_{\theta}}(\mu) \le C \cdot \Big\langle \nabla_{\theta} V^{\pi_{\theta}}(\mu)\Big|_{\theta=\theta}, \theta' - \theta\Big\rangle.
\end{equation*}

By \Cref{lemma:perf_diff,lemma:sum_pi_A}, 
\begin{align}
    V^{\pi_{\theta'}}(\mu) - V^{\pi_{\theta}}(\mu) &= \frac{1}{1 - \gamma} \sum_{s, a} d_{\mu}^{\pi_{\theta'}}(s) \pi_{\theta'}(a|s) A^{\pi_{\theta}}(s, a) \nonumber \\
    &= \frac{1}{1 - \gamma} \sum_{s, a} d_{\mu}^{\pi_{\theta'}}(s) (\pi_{\theta'}(a|s) - \pi_{\theta}(a|s)) A^{\pi_{\theta}}(s, a) \nonumber \\
    &= \sum_{s, a} \frac{d_{\mu}^{\pi_{\theta'}}(s)}{d_{\mu}^{\pi_{\theta}}(s)} (\pi_{\theta'}(a|s) - \pi_{\theta}(a|s)) \frac{1}{1 - \gamma} d_{\mu}^{\pi_{\theta}}(s) A^{\pi_{\theta}}(s, a) \nonumber \\
    &\le \left\lVert \frac{d_{\mu}^{\pi_{\theta'}}}{d_{\mu}^{\pi_{\theta}}} 
    \right\rVert_{\infty} \Big\langle \nabla_{\pi_{\theta}} V^{\pi_{\theta}}(\mu)|_{\pi_{\theta}=\pi_{\theta}}, \pi_{\theta'} - \pi_{\theta} \Big\rangle, \label{eq: local nearly concave d_pi ratio}
\end{align}

{where \cref{eq: local nearly concave d_pi ratio} holds by \Cref{lemma:softmax_pg_wrt_pi,lemma:feasible update domain improvement lemma}.}

Since $\theta'$ is obtained by updating $\theta$ with the direction $\bm{d}$, we have
\begin{equation*}
    \theta_{s, a^{*}(s)}' - \theta_{s, a}' > \theta_{s, a^{*}(s)} -
	\theta_{s, a} > M_{C, \bm{d}} \ge \Big[ \frac{|\cS||\cA|^2}{{(C-1)}(1-\gamma)^2
\min_{s \in S} \mu(s)} \Big].
\end{equation*}
Thus, by \Cref{lemma: state visitation distribution fix}, we have that
$\lVert d_{\mu}^{\pi_{\theta'}}/d_{\mu}^{\pi_{\theta}}\rVert_{\infty}
< C$.
Moreover, by \Cref{lemma:softmax_pg_wrt_pi}, we have
\begin{equation*}
    \Big\langle \nabla_{\pi_{\theta}} V^{\pi_{\theta}}(\mu)|_{\pi_{\theta}=\pi_{\theta}}, \pi_{\theta'} - \pi_{\theta} \Big\rangle = \Big\langle \nabla_{\theta} V^{\pi_{\theta}}(\mu)|_{\theta=\theta}, \frac{\pi_{\theta'} - \pi_{\theta}}{\pi_{\theta}} \Big\rangle.
\end{equation*}
Hence, it remains to show that
\begin{equation}
    \Big\langle \nabla_{\theta} V^{\pi_{\theta}}(\mu)|_{\theta=\theta}, \frac{\pi_{\theta'} - \pi_{\theta}}{\pi_{\theta}} \Big\rangle \le \Big\langle \nabla_{\theta} V^{\pi_{\theta}}(\mu)|_{\theta=\theta}, \theta' - \theta \Big\rangle. \label{eq: nearly concave target}
\end{equation}

According to the concavity of $\theta \to \pi_{\theta}(a^*(s) | s)$ along $\bm{d}$ {for all $s\in\cS$}, we have
\begin{align}
    \frac{\pi_{\theta'}(a^*(s) | s) - \pi_{\theta}(a^*(s) | s)}{\pi_{\theta}(a^*(s) | s)} &\le \frac{\Big\langle \nabla_{\theta} \pi_{\theta}(a^*(s) | s)|_{\theta=\theta}, \theta' - \theta\Big\rangle}{\pi_{\theta}(a^*(s) | s)} \nonumber \\
    &= \frac{\sum_{a} \frac{\partial \pi_{\theta}(a^*(s) | s)}{\partial \theta_{s, a}} \cdot (\theta'_{s, a} - \theta_{s, a})}{\pi_{\theta}(a^*(s) | s)} \nonumber \\
    &= (1 - \pi_{\theta}(a^*(s) | s)) \cdot (\theta'_{s, a^*(s)} - \theta_{s, a^*(s)})
    - \sum_{a \neq a^*(s)} \pi_{\theta}(a|s) \cdot (\theta'_{s, a} - \theta_{s, a}) \nonumber \\
    &= (\theta'_{s, a^*(s)} - \theta_{s, a^*(s)})
    - \sum_{a} \pi_{\theta}(a|s) \cdot (\theta'_{s, a} - \theta_{s, a}).\label{eq: local concave a^*}
\end{align}


On the other hand, since $\theta \to \pi_{\theta}(a_i | s)$ is convex
along $\bm{d}$ for all $a_i \neq a^*(s)$, we have
\begin{align}
    \frac{\pi_{\theta'}(a_i | s) - \pi_{\theta}(a_i | s)}{\pi_{\theta}(a_i | s)} &\ge \frac{\Big\langle \nabla_{\theta} \pi_{\theta}(a_i | s)\Big|_{\theta=\theta}, \theta' - \theta\Big\rangle}{\pi_{\theta}(a_i | s)} \nonumber \\
    &= \frac{\sum_{a} \frac{\partial \pi_{\theta}(a_i | s)}{\partial \theta_{s, a}} \cdot (\theta'_{s, a} - \theta_{s, a})}{\pi_{\theta}(a_i | s)} \nonumber \\
    &= (1 - \pi_{\theta}(a_i | s)) \cdot (\theta'_{s, a_i} - \theta_{s, a_i})
    - \sum_{a \neq a_i} \pi_{\theta}(a|s) \cdot (\theta'_{s, a} - \theta_{s, a}) \nonumber \\
    &= (\theta'_{s, a_i} - \theta_{s, a_i})
    - \sum_{a} \pi_{\theta}(a|s) \cdot (\theta'_{s, a} - \theta_{s, a}). \label{eq: local concave a_i}
\end{align}

{
%
Finally, we put everything together.
By \cref{eq: local concave a^*,eq: local concave a_i,eq:neg,eq:pos},
we have
\begin{align}
    \Big\langle \nabla_{\theta} V^{\pi_{\theta}}(\mu)\Big|_{\theta=\theta}, \frac{\pi_{\theta'} - \pi_{\theta}}{\pi_{\theta}} \Big\rangle 
    &= \sum_{s, a} \left.\frac{\partial V^{\pi_{\theta}}(\mu)}{\partial \theta_{s, a}}\right|_{\theta=\theta_{s, a}} \frac{\pi_{\theta'}(a|s) - \pi_{\theta}(a|s)}{\pi_{\theta}(a|s)}  \nonumber \\
    &\le \sum_{s, a} \left.\frac{\partial V^{\pi_{\theta}}(\mu)}{\partial \theta_{s, a}}\right|_{\theta=\theta_{s, a}} \Big( (\theta'_{s, a} - \theta_{s, a}) -  \sum_{a'} \pi_{\theta}(a'|s) \cdot (\theta'_{s, a'} - \theta_{s, a'}) \Big) \label{eq: nearly concave put everything 1} \\
    &= \Big\langle \nabla_{\theta} V^{\pi_{\theta}}(\mu)\Big|_{\theta=\theta}, \theta' - \theta \Big\rangle\nonumber \\
    &\quad - \sum_{s}\sum_{a'} \pi_{\theta}(a'|s) (\theta'_{s, a'} - \theta_{s, a'}) \cdot \sum_{a} \left.\frac{\partial V^{\pi_{\theta}}(\mu)}{\partial \theta_{s, a}}\right|_{\theta=\theta_{s, a}} \nonumber\\
    &= \Big\langle \nabla_{\theta} V^{\pi_{\theta}}(\mu)\Big|_{\theta=\theta}, \theta' - \theta \Big\rangle, \label{eq: nearly concave put everything 3}
\end{align}
where \cref{eq: nearly concave put everything 1} is from
\cref{eq: local concave a^*,eq: local concave a_i,eq:neg,eq:pos},
	and
	\cref{eq: nearly concave put everything 3} is because }
\begin{align*}
    \sum_{a} \left.\frac{\partial V^{\pi_{\theta}}(\mu)}{\partial \theta_{s, a}}\right|_{\theta = \theta_{s, a}} &= \sum_{a} \frac{1}{1 - \gamma} d_{\mu}^{\pi_{\theta}}(s) \pi_{\theta}(a | s) A^{\pi_{\theta}}(s, a) \\
    &= \frac{1}{1 - \gamma} d_{\mu}^{\pi_{\theta}}(s)\sum_{a}  \pi_{\theta}(a | s) A^{\pi_{\theta}}(s, a) = 0
\end{align*}
from \Cref{lemma:softmax_pg,lemma:sum_pi_A}.
We have therefore obtained (\ref{eq: nearly concave target}) and thus
the proof is completed.
\end{proof}

\stationarylemma*
\begin{proof}[Proof of \Cref{lemma: enter local concavity}]
	We first show the $\theta^{(t)}_{s, a^*(s)} - \theta^{(t)}_{s, a} > M$ part.
	{By \Cref{cor:policy convergence}, almost surely we have}
\begin{equation}
	\lim_{t \to \infty} \, \pi_{\theta}^{(t)}(a^*(s) | s) = 1, \quad
	\forall s \in \cS.
	\label{eq:pilimit}
\end{equation}
Under this event,
assume for contradiction that there is a $M > 0$ such that
	no such a finite $T$ exists.
	Then there is an infinite sequence $\{t_i\}$ such that for each
	$t_i$, there is a state $s_{t_i}$ and an action $a_{t_i} \neq
	a^*(s_{t_i})$ satisfying that $\theta_{s_{t_i},
	a^*(s_{t_i})}^{(t_i)} - \theta_{s_{t_i}, a_{t_i}}^{(t_i)} \le M$.
	Since the action space and the state space are finite, there must
	be a state $s$, an action $\tilde{a}$, and an infinite subsequence
	$\{t_{i_j}\}_{j=1}^{\infty}$ of $\{t_i\}$ such that $\theta_{s,
	a^*(s)}^{(t_{i_j})} - \theta_{s, \tilde{a}}^{(t_{i_j})} \le M$ for
	all $j \in \bbN$.
	For each of such $t_{i_j}$, we have
    \begin{equation*}
		\pi_{\theta}^{(t_{i_j})}(a^*(s) | s) = \frac{\exp(\theta_{s,
		a^*(s)}^{(t_{i_j})})}{\sum_{a} \exp(\theta_{s,
		a}^{(t_{i_j})})} \le \frac{\exp(\theta_{s,
			\tilde{a}}^{(t_{i_j})} + M)}{\exp(\theta_{s,
				\tilde{a}}^{(t_{i_j})} + M) + \exp(\theta_{s,
					\tilde{a}}^{(t_{i_j})})} = \frac{\exp(M)}{\exp(M) + 1} < 1,
    \end{equation*}
	which contradicts with \cref{eq:pilimit}.
	Therefore, with probability one, for any $M > 0$,
	there is a corresponding $T_M < \infty$ such that
	\begin{equation}
		\theta^{(t)}_{s, a^*(s)} - \theta^{(t)}_{s, a} > M,\quad
		\forall s \in \cS, a \neq a^*(s),\quad
		\forall t \geq T_M.
		\label{eq:part1}
	\end{equation}
	
	From \Cref{lemma: Q-ordering fixed}, we directly see that with
	probability one, there is $T_V < \infty$ such that
	\begin{equation}
		\label{eq:part2}
		V^{\pi_{\theta}^{(t)}}(s) > Q^*(s, a_2(s)), \quad \forall s
		\in \cS, \quad \forall t \geq T_V.
	\end{equation}

	Given any $s \in \cS$, by \Cref{cor:policy convergence} and \cref{eq:part2}, we
	know that almost surely there is a finite $T_1(s)$
	such that $V^{\pi_{\omega}^{(t)}}(s) > Q^*(s, a_2(s))$ for all $t
	\geq T_1(s)$.
    By \Cref{lemma: Q^* a_2}, it then implies
	\[
		Q^{\pi_{\omega}^{(t)}}(s, a) \le Q^*(s, a_2(s)) <
		V^{\pi_{\omega}^{(t)}}(s), \quad \forall a \neq a^*(s),\quad \forall
		t \geq T_1(s).
	\]
	Hence, we have from \cref{eq:state_action_value_function} that
	$A^{\pi_{\omega}^{(t)}}(s, a) < 0$ for all $a \neq a^*(s)$ and all
	$t \geq T_1(s)$.
	This together with \Cref{lemma:sum_pi_A} and that
	$\pi_{\omega^{(t)}}(a|s) > 0$ for all $a$ (from the nature of
	softmax parameterization) then implies $A^{\pi_{\omega}^{(t)}}(s,
	a^*(s)) > 0$.
	Therefore, by \Cref{lemma:softmax_pg}, we see that with
	probability one,
	\begin{equation}
		\left.\frac{\partial V^{\pi_{\theta}}(\mu)}{\partial\theta_{s,
		a^*(s)}}\right\rvert_{\theta = \omega^{(t)}} > 0 >
		\left.\frac{\partial V^{\pi_{\theta}}(\mu)}{\partial\theta_{s,
		a}}\right\rvert_{\theta = \omega^{(t)}},\quad \forall a \neq
		a^*(s),\quad \forall t \geq T_1(s).
		\label{eq:part3}
	\end{equation}
	By taking $T_1 \coloneqq \max_{s \in \cS} T_1(s)$, \cref{eq:part3}
	implies that with probability one,
	\begin{equation}
		\left.\frac{\partial V^{\pi_{\theta}}(\mu)}{\partial\theta_{s,
		a^*(s)}}\right\rvert_{\theta = \omega^{(t)}} > 0 >
		\left.\frac{\partial V^{\pi_{\theta}}(\mu)}{\partial\theta_{s,
		a}}\right\rvert_{\theta = \omega^{(t)}},\quad \forall a \neq
		a^*(s),\quad \forall s \in \cS,\quad \forall t \geq T_1.
		\label{eq:part3_1}
	\end{equation}

    We now discuss the part regarding (iv).
	We claim that
	\begin{claim}
		\label{claim1}
		For any $t \ge T_1$ in \cref{eq:part3_1}, if there is $(s,a')
		\in \cS \times \cA$ such that
		\begin{equation}
			\omega_{s,a^*(s)}^{(t)} - \theta_{s,a^*(s)}^{(t)} \ge
			\omega_{s,a'}^{(t)} - \theta_{s,a'}^{(t)},
			\label{eq:part4assum}
		\end{equation}
		then
		\begin{equation*}
			\omega_{s,a^*(s)}^{(t+1)} - \theta_{s,a^*(s)}^{(t+1)} \ge
			\omega_{s,a'}^{(t+1)} - \theta_{s,a'}^{(t+1)}.
		\end{equation*}
	\end{claim}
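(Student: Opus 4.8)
The plan is to establish \Cref{claim1} by a direct one-step computation that tracks how the momentum vector $\omega^{(t)} - \theta^{(t)}$ evolves under the APG update. First I would dispose of the trivial case $a' = a^*(s)$, for which the claimed inequality holds with equality. For $a' \neq a^*(s)$, the starting observation — coming from \cref{algorithm:eq1} — is that for every action $a$,
\begin{equation*}
\theta^{(t+1)}_{s,a} - \theta^{(t)}_{s,a} = \big(\omega^{(t)}_{s,a} - \theta^{(t)}_{s,a}\big) + \eta^{(t+1)} \nabla^{(t)}_{s,a},
\end{equation*}
so the increment of $\theta$ at coordinate $(s,a)$ splits into the \emph{current} momentum term plus a gradient term. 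I would then combine the induction hypothesis \cref{eq:part4assum} (which compares the momentum parts) with \cref{eq:part3_1} and $\eta^{(t+1)}>0$ (which gives $\eta^{(t+1)}\nabla^{(t)}_{s,a^*(s)} > 0 > \eta^{(t+1)}\nabla^{(t)}_{s,a'}$, comparing the gradient parts) to deduce the strict inequality $\theta^{(t+1)}_{s,a^*(s)} - \theta^{(t)}_{s,a^*(s)} > \theta^{(t+1)}_{s,a'} - \theta^{(t)}_{s,a'}$.

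Next I would push this comparison through to the momentum at iteration $t+1$. From \cref{algorithm:eq2,algorithm:eq3}, for every action $a$,
\begin{equation*}
\omega^{(t+1)}_{s,a} - \theta^{(t+1)}_{s,a} = \mathbb{I}\big\{V^{\pi_{\varphi}^{(t+1)}}(\mu) \ge V^{\pi_{\theta}^{(t+1)}}(\mu)\big\} \cdot \tfrac{t}{t+3} \, \big(\theta^{(t+1)}_{s,a} - \theta^{(t)}_{s,a}\big).
\end{equation*}
The point is that the restart indicator is a single global quantity (it depends on $V^{\pi}(\mu)$, not on $s$ or $a$) and the Nesterov weight $t/(t+3)$ is nonnegative, so both are common factors shared by $a^*(s)$ and $a'$. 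Multiplying the strict inequality from the previous step by this common nonnegative factor preserves the inequality (and yields $0 \ge 0$ in the branch where the restart is inactive), giving exactly $\omega^{(t+1)}_{s,a^*(s)} - \theta^{(t+1)}_{s,a^*(s)} \ge \omega^{(t+1)}_{s,a'} - \theta^{(t+1)}_{s,a'}$, which is the claim.

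This argument is essentially bookkeeping, so I do not expect a genuine obstacle; the only care required is (i) correctly identifying $\omega^{(t)} - \theta^{(t)}$ with the possibly-zeroed momentum produced by the restart mechanism in \cref{algorithm:eq2,algorithm:eq3}, and (ii) noting that since the restart test is based on $V^{\pi}(\mu)$ it acts uniformly on all coordinates and hence factors cleanly out of the coordinatewise comparison. With \Cref{claim1} in place, the proof of \Cref{lemma: enter local concavity} would be completed by supplying a base case for condition (iv) — e.g., at the first iteration past $T_1$ the momentum is zero or, more carefully, has the right sign pattern by \cref{eq:part3_1} — and then inducting via \Cref{claim1} to obtain (iv) for all large $t$; conditions (i)–(iii) have already been secured through \Cref{cor:policy convergence} and \Cref{lemma: Q-ordering fixed}, as recorded in \cref{eq:part1,eq:part2,eq:part3_1}. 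Taking $T$ to be the maximum of the finite times appearing in these ingredients then yields the lemma.
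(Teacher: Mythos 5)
Your proof of the claim is correct and essentially identical to the paper's: both expand $\theta^{(t+1)}_{s,a}-\theta^{(t)}_{s,a}$ as the current momentum plus the gradient step, compare the $a^*(s)$ and $a'$ coordinates via \cref{eq:part4assum} and \cref{eq:part3_1}, and observe that the common nonnegative factor $\mathbb{I}\{V^{\pi_{\varphi}^{(t+1)}}(\mu) \ge V^{\pi_{\theta}^{(t+1)}}(\mu)\}\cdot\tfrac{t}{t+3}$ preserves the inequality. One peripheral remark: your closing suggestion to finish condition (iv) by supplying a base case does not match the paper, which instead argues by contradiction (the momentum ordering at $T_1$ itself is not guaranteed, so no base case is available); but this concerns the surrounding lemma, not the claim under review.
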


	If $a' = a^*(s)$, there is nothing to prove.
	When $a' \neq a^*(s)$,
	by considering the update \cref{algorithm:eq1}, we have
    \begin{align}
        \theta^{(t+1)}_{s,a} = \theta^{(t)}_{s,a} +
		(\omega^{(t)}_{s,a} - \theta^{(t)}_{s,a}) +
		\eta^{(t+1)}\left.\frac{\partial
			V^{\pi_{\theta}}(\mu)}{\partial\theta_{s,
			a}}\right\rvert_{\theta = \omega^{(t)}},
			\quad \forall a \in \cA. \label{eq: momentum dominant 1}
    \end{align}
	Let $\mathbb{I}^{(t+1)} \coloneqq \mathbb{I} \left\{V^{\pi_{\varphi}^{(t+1)}}(\mu) \ge V^{\pi_{\theta}^{(t+1)}}(\mu)\right \}$.
	Combining \cref{eq: momentum dominant 1} with
    \cref{algorithm:eq2,algorithm:eq3} then leads to
    \begin{align}
        \omega_{s,a^*(s)}^{(t+1)} - \theta_{s,a^*(s)}^{(t+1)} &= \mathbb{I}^{(t+1)} \frac{t}{t+3}(\theta_{s,a^*(s)}^{(t+1)} - \theta_{s,a^*(s)}^{(t)}) \nonumber \\
        &= \mathbb{I}^{(t+1)} \frac{t}{t+3}\left(\omega^{(t)}_{s,a^*(s)} - \theta^{(t)}_{s,a^*(s)} + \eta^{(t+1)}\left.\frac{\partial V^{\pi_{\theta}}(\mu)}{\partial\theta_{s, a^*(s)}}\right\rvert_{\theta = \omega^{(t)}}\right) \nonumber \\
        &\ge \mathbb{I}^{(t+1)} \frac{t}{t+3}\left(\omega^{(t)}_{s,a'}
		- \theta^{(t)}_{s,a'} + \eta^{(t+1)}\left.\frac{\partial
			V^{\pi_{\theta}}(\mu)}{\partial\theta_{s, a'}}\right\rvert_{\theta = \omega^{(t)}}\right) \label{eq: momentum dominant 2} \\
        &= \mathbb{I}^{(t+1)}\frac{t}{t+3}(\theta_{s,a'}^{(t+1)} -
		\theta_{s,a'}^{(t)})  \nonumber \\ 
        &= \omega_{s,a'}^{(t+1)} - \theta_{s,a'}^{(t+1)},\nonumber 
    \end{align}
	proving \Cref{claim1}, where (\ref{eq: momentum dominant 2}) follows
	from \cref{eq:part4assum,eq:part3_1}.

	With \Cref{claim1}, we are now ready to prove that with
	probability one, there exists a finite $T_{\omega} \ge T_1$ such
	that
	\begin{equation}
		\omega_{s, a^*(s)}^{(t)} - \theta_{s, a^*(s)}^{(t)} \ge
		\omega_{s, a}^{(t)} - \theta_{s, a}^{(t)}, \quad \forall s \in
		\cS, \quad \forall a \in \cA, \quad \forall t \geq T_{\omega}.
		\label{eq:part4}
	\end{equation}
	Suppose for contradiction that there is no such a $T_{\omega}$.
	Then, as $\cS$ and $\cA$ are finite sets, there is an infinite
	sequence $\{t_i\}$ and a state-action pair $(s,\tilde a)$ such that
	\begin{equation}
		\omega_{s,a^*(s)}^{(t_i)} - \theta_{s,a^*(s)}^{(t_i)} <
		\omega_{s,\tilde{a}}^{(t_i)} - \theta_{s,\tilde{a}}^{(t_i)},
		\quad \forall i \geq 0.
		\label{eq:part4_2}
	\end{equation}
    If there is a $\hat t \ge T_1$ such that $\omega_{s,a^*(s)}^{(\hat
	t)} - \theta_{s,a^*(s)}^{(\hat t)} \ge \omega_{s,\tilde{a}}^{(\hat
	t)} - \theta_{s,\tilde{a}}^{(\hat t)}$, \Cref{claim1} implies
	$\omega_{s,a^*(s)}^{(t)} - \theta_{s,a^*(s)}^{(t)} \ge
	\omega_{s,\tilde{a}}^{(t)} - \theta_{s,\tilde{a}}^{(t)}$ for every
	$t \ge \hat t$, which contradicts \cref{eq:part4_2}.
	Therefore, 
	\begin{equation}
		\omega_{s,a^*(s)}^{(t)} - \theta_{s,a^*(s)}^{(t)} <
		\omega_{s,\tilde{a}}^{(t)} - \theta_{s,\tilde{a}}^{(t)}, \quad
		\forall t \ge T_1.
		\label{eq:wrongbound}
	\end{equation}
	The strict inequality in \cref{eq:wrongbound} implies
	that
	\begin{equation}
		\mathbb{I}^{(t)}\neq 0, \quad \forall t \geq T_1,
		\label{eq:norestart}
	\end{equation}
	for it otherwise leads to $\omega^{(t)} - \theta^{(t)} = 0$, and
	thus $\omega_{s,a^*(s)}^{(t)} - \theta_{s,a^*(s)}^{(t)} =
	\omega_{s,\tilde{a}}^{(t)} - \theta_{s,\tilde{a}}^{(t)}$ by
	\cref{algorithm:eq3}.

    For any $N > T_1$, we have
    \begin{align}
        \theta_{s,a^*(s)}^{(N)} &= \theta_{s,a^*(s)}^{(T_1(s))} +
		\sum_{t=T_1}^{N-1} (\theta_{s,a^*}^{(t+1)} -
		\theta_{s,a^*}^{(t)})\nonumber\\
        &= \theta_{s,a^*}^{(T_1)} + \sum_{t=T_1}^{N-1} \frac{t+3}{t}(\omega_{s,a^*}^{(t+1)} - \theta_{s,a^*}^{(t+1)}) \label{eq: momentum dominant 4} \\
        &< \theta_{s,a^*}^{(T_1)} + \sum_{t=T_1}^{N-1} \frac{t+3}{t}(\omega_{s,\tilde{a}}^{(t+1)} - \theta_{s,\tilde{a}}^{(t+1)}) \label{eq: momentum dominant 5} \\
        &=\theta_{s,a^*}^{(T_1)} + \sum_{t=T_1}^{N-1} (\theta_{s,\tilde{a}}^{(t+1)} - \theta_{s,\tilde{a}}^{(t)})  \label{eq: momentum dominant 6} \\
        &= \theta_{s,a^*}^{(T_1)} - \theta_{s,\tilde{a}}^{(T_1)} + \theta_{s,\tilde{a}}^{(N)}, \label{eq: momentum dominant 7}
    \end{align}
    where \cref{eq: momentum dominant 4,eq: momentum dominant 6} used
	\cref{algorithm:eq2,algorithm:eq3,eq:norestart}, and \cref{eq:
	momentum dominant 5} is from \cref{eq:wrongbound}.
	Letting $N \rightarrow \infty$ in \cref{eq: momentum dominant
	7},
	we see a direct contradiction with \cref{eq:pilimit}.
	Hence, there is a $T_{\omega} \ge T_1$ such that \cref{eq:part4}
	holds.

	Finally, the proof is completed by letting $T = \max\{T_M, T_V,
	T_\omega\}$.
\end{proof}

\subsection{Convergence Rate of APG}
We need thew following lemma to faciliate the proof of \Cref{theorem:
MDP convergence rate}.

\begin{lemma}
\label{lemma: di_lower_bdd}
    Given any state $s \in \cS$ and a corresponding vector $\bm{d_{s, \cdot}}
	\coloneqq [d_{s, a^*(s)}, d_{s, a_2(s)}, \cdots, d_{s,
	a_{|\cA|}(s)}]$ satisfying (i) $d_{s, a^*(s)} + \sum_{i=2}^{|\cA|}
	d_{s, a_i(s)} = 0$, and (ii) $\lVert \bm{d_{s, \cdot}} \rVert_2^2 = 1$, we have
    \begin{itemize}
        \item If $d_{s, a^*(s)} > 0 \ge \max_{i \ge 2} d_{s,
			a_i(s)} $, then $\min_{i \ge 2} \{ d_{s, a^*(s)} -  d_{s, a_i(s)} \} > |\cA|^{-\frac{3}{2}}$.
        \item If $d_{s, a^*(s)} > (1+\varepsilon_r) \max_{i \ge 2}
				d_{s, a_i(s)} > 0$, then $\min_{i \ge 2} \{ d_{s, a^*(s)} -  d_{s, a_i(s)} \} > \frac{\varepsilon_r}{1+\varepsilon_r}|\cA|^{-\frac{3}{2}}$.
    \end{itemize}
\end{lemma}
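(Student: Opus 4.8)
The plan is to reduce both items to a single lower bound on $d_{s,a^*(s)}$. Write $n\coloneqq|\cA|$ and, to lighten notation, set $d_1\coloneqq d_{s,a^*(s)}$ and $d_i\coloneqq d_{s,a_i(s)}$ for $i=2,\dots,n$, so that hypotheses (i)-(ii) become $\sum_{i=1}^n d_i=0$ and $\sum_{i=1}^n d_i^2=1$. In either item the hypothesis forces $d_1\ge d_i$ for every $i$ — in the first item because $d_1>0\ge\max_{i\ge2}d_i$, and in the second because $d_1>(1+\varepsilon_r)\max_{i\ge2}d_i\ge\max_{i\ge2}d_i$ — so $d_1=\max_{1\le i\le n}d_i>0$.

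The key step is to establish $d_1>n^{-3/2}$. I would argue by contradiction: if $d_1\le n^{-3/2}$, then $d_i\le n^{-3/2}$ for all $i$, hence $\sum_{i:\,d_i\ge0}d_i\le n\cdot n^{-3/2}=n^{-1/2}$; since the coordinates sum to zero, $\sum_{i:\,d_i<0}|d_i|=\sum_{i:\,d_i\ge0}d_i\le n^{-1/2}$, and therefore $\sum_{i:\,d_i<0}d_i^2\le\big(\sum_{i:\,d_i<0}|d_i|\big)^2\le n^{-1}$. On the other hand $\sum_{i:\,d_i\ge0}d_i^2\le n\cdot n^{-3}=n^{-2}$. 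Adding the two estimates gives $\sum_{i=1}^n d_i^2\le n^{-1}+n^{-2}<1$ for every $n\ge2$, contradicting $\lVert\bm{d_{s,\cdot}}\rVert_2^2=1$.

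With $d_1>n^{-3/2}$ in hand, both items follow at once. For the first item, $d_i\le0$ for all $i\ge2$, so $\min_{i\ge2}\{d_1-d_i\}\ge d_1>n^{-3/2}$. For the second item, write $m\coloneqq\max_{i\ge2}d_i>0$; the hypothesis $d_1>(1+\varepsilon_r)m$ gives $m<d_1/(1+\varepsilon_r)$, whence $\min_{i\ge2}\{d_1-d_i\}=d_1-m>d_1\big(1-\tfrac{1}{1+\varepsilon_r}\big)=\tfrac{\varepsilon_r}{1+\varepsilon_r}\,d_1>\tfrac{\varepsilon_r}{1+\varepsilon_r}\,n^{-3/2}$.

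The argument is essentially elementary; the only mild subtlety — and the step I would be most careful about — is the contradiction above, where the (possibly large-magnitude) negative coordinates are controlled not coordinate-wise but through the zero-sum identity together with the $\ell_1$-versus-$\ell_2$ inequality $\sum x_i^2\le(\sum x_i)^2$ for nonnegative $x_i$; everything else is bookkeeping. I might also note that a sharper constant $1/\sqrt{n(n-1)}$ is obtainable by a similar estimate, but since $1/\sqrt{n(n-1)}>n^{-3/2}$ the weaker bound stated here already suffices for the downstream use in Lemma~\ref{lemma: theta to pi} and Theorem~\ref{theorem: MDP convergence rate}.
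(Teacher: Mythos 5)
Your proposal is correct and follows essentially the same route as the paper: both arguments reduce the two items to the single key bound $d_{s,a^*(s)} > |\cA|^{-3/2}$, derived from the zero-sum and unit-norm constraints, and then conclude identically. The only difference is cosmetic — the paper obtains the bound directly (via $|d_{s,a}| \le \sum_{a':\,d_{a'}>0} d_{s,a'}$, squaring, and summing to get $1 \le |\cA|\bigl(\sum_{a':\,d_{a'}>0} d_{s,a'}\bigr)^2$), whereas you argue by contradiction using the $\ell_1$-versus-$\ell_2$ inequality on the negative coordinates.
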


\begin{proof}[Proof of \Cref{lemma: di_lower_bdd}]
We fix some state $s\in\cS$. Since $d_{s, a^*(s)} + \sum_{i=2}^{|\cA|} d_{s, a_i(s)} = 0$, we have
\begin{align}
    |d_{s, a}| \le \sum_{a': d_{a'} > 0} d_{s, a'}, \quad \forall a \in \cA.  \label{eq: di_lower_bdd_1}
\end{align}
By taking square on both side of \cref{eq: di_lower_bdd_1}, we obtain
from $\lVert \bm{d_{s, \cdot}} \rVert_2^2 = 1$ that
\begin{align}
    1
    = \sum_{a \in \cA} d_{s, a}^2
    \le \sum_{a \in \cA} \big( \sum_{a': d_{a'} > 0} d_{s, a'} \big)^2  
    = |\cA| \big( \sum_{a': d_{a'} > 0} d_{s, a'} \big)^2. \label{eq: di_lower_bdd_2}
\end{align}

Using the assumption of $d_{s, a^*(s)} > \max_{i \ge 2} d_{s, a_i(s)}$
and taking the square root of \cref{eq: di_lower_bdd_2}, we get
\begin{align}
    d_{s, a^*(s)} > \frac{1}{|\cA|} \big( \sum_{a': d_{a'} > 0} d_{s, a'} \big) \ge |A|^{-\frac{3}{2}}. \label{eq: di_lower_bdd_3}
\end{align}
Thus, if $d_{s, a^*(s)} > 0 \ge \max_{i \ge 2} d_{s, a_i(s)}$, by \cref{eq: di_lower_bdd_3}, we have
\begin{align*}
    \min_{i \ge 2} \{ d_{s, a^*(s)} -  d_{s, a_i(s)} \} \ge d_{s, a^*(s)} > |\cA|^{-\frac{3}{2}}.
\end{align*}
On the other hand, if $d_{s, a^*(s)} > (1+\varepsilon_r) \max_{i \ge 2} d_{s, a_i(s)} > 0$, again by \cref{eq: di_lower_bdd_3}, we have 
\begin{align*}
    \min_{i \ge 2} \{ d_{s, a^*(s)} -  d_{s, a_i(s)} \} > d_{s, a^*(s)} -\frac{d_{s, a^*(s)}}{(1+\varepsilon_r)} \ge  \frac{\varepsilon_r}{1+\varepsilon_r}|\cA|^{-\frac{3}{2}}.
\end{align*}
\end{proof}

With all the tools prepared, now we restate and then prove
\Cref{theorem: MDP convergence rate}.

\textbf{\Cref{theorem: MDP convergence rate} (Convergence Rate of APG; Formal)}.
Consider reinforcement learning with tabular softmax parameterized policies $\pi_{\theta}$. Under \hyperref[algorithm:APG]{APG} with {$\eta^{(t)} = \frac{t}{t+1} \cdot \frac{(1 - \gamma)^3}{16}$} {and $\mu$ initialized uniformly at random}, the following holds almost surely: There exists a finite time $T$ such that for all $t \ge T$, we have
{
    \begin{align*}
        V^{*}(\rho) - V^{{\pi_\theta^{(t)}}}(\rho) \le \frac{1}{(1-\gamma)^3} \left \| \frac{{d^{\pi^*}_{\rho}}}{\mu} \right \|_{\infty} \Bigg( \frac{2 |\cS|(|\cA| - 1)}{t^2 + |\cA| - 1} +  \frac{512|\cS| \ln^2(t) + 32\left \| 2\theta^{(T)} - (2+T)\big(\omega^{(T)} - \theta^{(T)}\big)\right \|^2}{(1-\gamma)(t+1)t} \Bigg).
    \end{align*}
}
    
\begin{proof}[Proof of \Cref{theorem: MDP convergence rate}]
{We will first show that there exists a finite time $T$ such that
	for all $t \ge T$, the restart mechanism is inactive and the
	objective function $V^{\pi_{\theta}}(\mu)$ will be
	$\frac{3}{2}$-nearly concave at each iterate along the APG update
	directions. We will then leverage \Cref{lemma:equivalent_algorithm} and \Cref{cor:ghadimi_cor1} to characterize the convergence rate.}

{Let 
	\[
		M' \coloneqq \ln \Big[ \frac{2|\cS||\cA|^2}{(1-\gamma)^2 \min_{s \in S} \mu(s)} \Big].
	\]
By \Cref{lemma: enter local concavity}, there exists a finite time $T'$ such that for all $t \ge T'$, $s\in\cS$, and $a\neq a^*(s)$, we have (i) $\theta_{s, a^*(s)} - \theta_{s, a} > M'$, (ii) $V^{\pi_{\theta}^{(t)}}(s) > Q^*(s, a_2(s))$, (iii) $\left.\frac{\partial V^{\pi_{\theta}}(\mu)}{\partial\theta_{s, a^*(s)}}\right\rvert_{\theta = \omega^{(t)}} > 0 > \left.\frac{\partial V^{\pi_{\theta}}(\mu)}{\partial\theta_{s, a}}\right\rvert_{\theta = \omega^{(t)}}$, (iv) $\omega_{s, a^*(s)}^{(t)} - \theta_{s, a^*(s)}^{(t)} \ge \omega_{s, a}^{(t)} - \theta_{s, a}^{(t)}$.}

{We first show that the restart mechanism is inactive for all $t >
	T'$. The momentum term at time $t$ is
\begin{equation}
\label{eq: momentum at time t}
\frac{t-1}{t+2} (\theta^{(t)} - \theta^{(t-1)}) = \frac{t-1}{t+2}\Big(\omega^{(t-1)}  - \theta^{(t-1)} + \eta^{(t)} \nabla_{\theta}{V^{\pi_{\theta}}(\mu)} \Big\rvert_{\theta = \omega^{(t-1)}}\Big).
\end{equation}
We can observe that \cref{eq: momentum at time t} lies in the feasible
update domain $\mathcal{U}$ for any $t > T'$ by the properties (iii)
and (iv) above. Hence, by combining \Cref{lemma:perf_diff,lemma:feasible update domain
improvement lemma,lemma:sum_pi_A}, we know that
$V^{\pi_{\varphi^{(t)}}}(\mu) \ge V^{\pi_{\theta^{(t)}}}(\mu)$,
meaning that there is no restart at $t > T'$.
Additionally, by (iii), we know that the gradient updates also lie in $\mathcal{U}$.
}

{To show that the near concavity of the objective, we separately consider two cases for each state $s\in\cS$: (I)
	$\max_{a \neq a^*(s)} \Big\{ \omega^{(T')}_{s, a} -
	\theta^{(T')}_{s, a} + \eta^{(T'+1)}\left.\frac{\partial
		V^{\pi_{\theta}}(\mu)}{\partial\theta_{s,
		a}}\right\rvert_{\theta = \omega^{(T')}} \Big\} \le 0$; (II)
		$\max_{a \neq a^*(s)} \Big\{ \omega^{(T')}_{s, a} -
		\theta^{(T')}_{s, a} + \eta^{(T'+1)}\left.\frac{\partial
			V^{\pi_{\theta}}(\mu)}{\partial\theta_{s,
			a}}\right\rvert_{\theta = \omega^{(T')}} \Big\} > 0$. For
			case (I), we can see that for any $t \ge T'$, $\max_{a
			\neq a^*(s)} \Big\{ \omega^{(t)}_{s, a} - \theta^{(t)}_{s,
			a} + \eta^{(t+1)}\left.\frac{\partial
				V^{\pi_{\theta}}(\mu)}{\partial\theta_{s,
				a}}\right\rvert_{\theta = \omega^{(t)}} \Big\} \le 0$
				by the APG updates and (iii).
				Let
\begin{equation}
\label{eq: nonpositive d_2}
    M_{(i)} \coloneqq \max\left\{2\ln\Big[2(|\mathcal{A}| - 1)\Big] - 2\ln\Big[|\cA|^{-\frac{3}{2}}\Big], M' \right\}.
\end{equation}
From \Cref{lemma: enter local concavity},
we know that there is a finite time $T_s \ge T'$ such that (i)-(iv) above with
$M'$ replaced by $M_{(i)}$ remain true
for all $t > T_s$.
Let $\bm{d^{(t)}_{s, \cdot}}
\coloneqq \frac{\theta^{(t)}_{s, \cdot} - \theta^{(t-1)}_{s,
\cdot}}{\lVert \theta^{(t)}_{s, \cdot} - \theta^{(t-1)}_{s, \cdot}
\rVert_2}$ be the scaled APG update direction at time $t$ such that
$\lVert \bm{d^{(t)}_{s, \cdot}} \rVert_2 = 1$.
By \Cref{lemma:sum conservation}, we have $\sum_{a\in\cA} d^{(t)}_{s, a} = 0$ for any $t$ and $s\in\cS$.
We also know that in this case, $\max_{a\neq a^*(s)} d^{(t)}_{s, a}
\le 0$ for all $t > T_s$. \Cref{lemma: di_lower_bdd} then leads to $\min_{a\neq a^*(s)} \{ d^{(t)}_{s, a^*(s)} - d^{(t)}_{s, a} \} > |\cA|^{-\frac{3}{2}}$ for any $t > T_s$.
By noting that $M'$ is exactly the second term in the maximum in
\cref{eq:Mc} with $C = 3/2$,
we observe that the setting of $M_{(i)}$ in \Cref{eq: nonpositive d_2}
satisfies all the conditions in \Cref{lemma:local nearly concavity},
so the objective function $\theta \to V^{\pi_{\theta}}(\mu)$ is
$\frac{3}{2}$-nearly concave along our APG update directions for all
$t > T_s$ for the state $s$ in the case (I).}

{Regarding case (II), let
\begin{equation*}
    \varepsilon_{T'} \coloneqq \frac{ \omega^{(T')}_{s, a^*(s)} -
	\theta^{(T')}_{s, a^*(s)} +{\eta^{(t'+1)}} \left.\frac{\partial V^{\pi_{\theta}}(\mu)}{\partial\theta_{s, a^*(s)}}\right\rvert_{\theta = \omega^{(T')}}}{ \max_{a \neq a^*(s)} \Big\{ \omega^{(T')}_{s, a} - \theta^{(T')}_{s, a} + {\eta^{(t'+1)}} \left.\frac{\partial V^{\pi_{\theta}}(\mu)}{\partial\theta_{s, a}}\right\rvert_{\theta = \omega^{(T')}} \Big\}} - 1.
\end{equation*}
Note that $\varepsilon_{T'} > 0$ due to (iii) and (iv) and the hypothesis of this case.
Let
\begin{equation}
\label{eq: positive d_2}
    M_{(ii)} \coloneqq \max\left\{2\ln\Big[2(|\mathcal{A}| - 1)\Big] - 2\ln\Big[\frac{\varepsilon_{T'}}{1+\varepsilon_{T'}} |\cA|^{-\frac{3}{2}}\Big], M' \right\}.
\end{equation}
Again by \Cref{lemma: enter local concavity},
we know that there is a finite time $T'_{s} \ge T'$ such that (i)-(iv) above with
$M'$ replaced by $M_{(ii)}$ remain true
for all $t > T'_{s}$.
Similarly, we demonstrate that the objective function $\theta \to
V^{\pi_{\theta}}(\mu)$ is $\frac{3}{2}$-nearly concave along our APG
update directions for all $t > T'_{s}$ for the state $s$ in this case.
Following the argument for (I), we only need to show that
$\frac{\varepsilon_{T'}}{1+\varepsilon_{T'}} |\cA|^{-\frac{3}{2}}$ is
a lower bound for $\min_{a\neq a^*(s)} \{ d^{(t)}_{s, a^*(s)} -
d^{(t)}_{s, a} \}$ for every $t > T'_s$.
We use the same definition of $\bm{d^{(t)}_{s, \cdot}}$ as above.
Then, for any $k \in \mathbb{N}_+$ and $s \in \cS$ satisfying
$\max_{a\neq a^*(s)} d^{(T'+k)}_{s, a} \le 0$, from the argument for
(I), $|\cA|^{-\frac{3}{2}}$ is a lower bound for $\min_{a\neq a^*(s)} \{ d^{(t)}_{s, a^*(s)} - d^{(t)}_{s, a} \}$.
Moreover, since $\frac{\varepsilon_{T'}}{1+\varepsilon_{T'}} < 1$,
$\frac{\varepsilon_{T'}}{1+\varepsilon_{T'}} |\cA|^{-\frac{3}{2}}$ is
also a valid lower bound for it.
On the other hand,
for any $k \in \mathbb{N}_+$ and $s \in \cS$ satisfying $\max_{a\neq
a^*(s)} d^{(T'+k)}_{s, a} > 0$, by the APG updates, (iii) and (iv)
abvove, and that there is no restart after time $T'$ (as proven above), we have
\begin{align}
    &~\frac{d^{(T'+k)}_{s, a^*(s)}}{\max_{a\neq a^*(s)} \{ d^{(T'+k)}_{s, a} \}} \nonumber\\ 
    =&~\frac{\theta^{(T'+k)}_{s, a^*(s)} - \theta^{(T'+k-1)}_{s, a^*(s)}}{\max_{a \neq a^*(s)} \{ \theta^{(T'+k)}_{s, a} - \theta^{(T'+k-1)}_{s, a} \}} \nonumber\\
    =&~ \frac{ \big(\prod_{j=1}^{k-1} \frac{T'+k-1-j}{T'+k+2-j}\big) (\omega^{(T')}_{s, a^*(s)} - \theta^{(T')}_{s, a^*(s)}) + \sum_{t=1}^{k} \Big( \big( \prod_{j=1}^{t-1} \frac{T'+k-1-j}{T'+k+2-j} \big) \eta^{(T'+k-t+1)}\overbrace{\left.\frac{\partial V^{\pi_{\theta}}(\mu)}{\partial\theta_{s, a^*(s)}}\right\rvert_{\theta = \omega^{(T'+k-t)}}}^{>0} \Big)}{ \max_{a \neq a^*(s)} \Big\{ \big(\prod_{j=1}^{k-1} \frac{T'+k-1-j}{T'+k+2-j}\big) (\omega^{(T')}_{s, a} - \theta^{(T')}_{s, a})+ \sum_{t=1}^{k} \Big( \big( \prod_{j=1}^{t-1} \frac{T'+k-1-j}{T'+k+2-j} \big) \eta^{(T'+k-t+1)} \underbrace{\left.\frac{\partial V^{\pi_{\theta}}(\mu)}{\partial\theta_{s, a}}\right\rvert_{\theta = \omega^{(T'+k-t)}}}_{<0} \Big) \Big\}} \nonumber\\
    >&~ \frac{ \big(\prod_{j=1}^{k-1} \frac{T'+k-1-j}{T'+k+2-j}\big) \Big(\omega^{(T')}_{s, a^*(s)} - \theta^{(T')}_{s, a^*(s)} + \eta^{(T'+1)}\left.\frac{\partial V^{\pi_{\theta}}(\mu)}{\partial\theta_{s, a^*(s)}}\right\rvert_{\theta = \omega^{(T')}}\Big)}{ \max_{a \neq a^*(s)} \Big\{ \big(\prod_{j=1}^{k-1} \frac{T'+k-1-j}{T'+k+2-j}\big) \Big(\omega^{(T')}_{s, a} - \theta^{(T')}_{s, a} + \eta^{(T'+1)}\left.\frac{\partial V^{\pi_{\theta}}(\mu)}{\partial\theta_{s, a}}\right\rvert_{\theta = \omega^{(T')}}\Big) \Big\}} \nonumber\\
    =&~ \frac{ \omega^{(T')}_{s, a^*(s)} - \theta^{(T')}_{s, a^*(s)} + \eta^{(T'+1)}\left.\frac{\partial V^{\pi_{\theta}}(\mu)}{\partial\theta_{s, a^*(s)}}\right\rvert_{\theta = \omega^{(T')}}}{ \max_{a \neq a^*(s)} \Big\{ \omega^{(T')}_{s, a} - \theta^{(T')}_{s, a} + \eta^{(T'+1)}\left.\frac{\partial V^{\pi_{\theta}}(\mu)}{\partial\theta_{s, a}}\right\rvert_{\theta = \omega^{(T')}} \Big\}} \nonumber\\
    =&~ 1 + \varepsilon_{T'}. \label{eq: epsilon_r}
\end{align}
Therefore, by \Cref{lemma: di_lower_bdd} and \cref{eq: epsilon_r}, we
obtain $\min_{a\neq a^*(s)} \{ d^{(t)}_{s, a^*(s)} - d^{(t)}_{s, a} \}
> \frac{\varepsilon_r}{1+\varepsilon_r} |\cA|^{-\frac{3}{2}}$ for any
$t > T'_s$.
Finally, let $T \coloneqq \max_{s\in\cS} \max\{T_s, T'_s\} + 1$, we
obtain a finite time $T$ (since $|\mathcal{S}|$ is finite) such that
the objective function $V^{\pi_{\theta}}(\mu)$ is indeed
$\frac{3}{2}$-nearly concave along the APG update directions and the
restart mechanism is inactive for all $t \ge T$.
}

    As the objective function is $\frac{3}{2}$-nearly concave along the APG update directions and there will be no restart mechanism for all $t \ge T$,
we now apply \Cref{lemma:equivalent_algorithm} to establish the
asymptotic convergence rate.
	It is necessary to account for a shift in the initial step size due to the passage of time $T$. 
    Specifically, if we divide the update process into two phases based on time $T$, the latter phase will commence with a modified step size. 
    {Specifically, the latter phase could be considered as  \Cref{algorithm:APG} with $\theta^{(0)} = \theta^{(T)}$, $\omega^{(0)} = \omega^{(T)}$ and $\eta^{(0)} = \eta^{(T+1)}$. }
    {We define the surrogate solution $\theta^{**(t)}$ at time $t$ such that $\theta^{**(t)}_{s,
	\cdot} \coloneqq [\theta^{**(t)}_{s, a^*(s)}, \theta^{**(t)}_{s,
	a_2(s)}, \cdots, \theta^{**(t)}_{s, a_{|\cA|}(s)}]$
	satisfies $\theta^{**(t)}_{s,\cdot}= [2\ln(t), 0, 0, \cdots, 0]$
	for all $s \in \cS$.}

    By \Cref{lemma:mdp_smoothness,lemma:equivalent_algorithm} and
	\Cref{cor:ghadimi_cor1} with $T_{\text{shift}} = T$ and $\theta^{**} = \theta^{**(t)}$,  we have that 
    {
    \begin{align}          
        V^{\pi^{(t+T)}_{\theta^{**}}}(\mu) - V^{\pi^{(t+T)}_{\theta}}(\mu) 
        &\le \frac{16\left \| 2\theta^{**(t+T)} - 2\theta^{(T)} - (2+T)\big(\omega^{(T)} - \theta^{(T)}\big)\right \|^2}{(1-\gamma)^3(t+T+1)(t+T)} \nonumber \\
		&\le \frac{{128} \left \| \theta^{**(t+T)} \right\|^2
		+ {32}\left \| 2\theta^{(T)} - (2+T)\big(\omega^{(T)} - \theta^{(T)}\big)\right \|^2}{(1-\gamma)^3(t+T+1)(t+T)} \nonumber \\
		&= \frac{{512}|\cS| \ln^2(t+T) + {32}\left \| 2\theta^{(T)} - (2+T)\big(\omega^{(T)} - \theta^{(T)}\big)\right \|^2}{(1-\gamma)^3(t+T+1)(t+T)}. \label{eq: theorem convergence rate 1-1}
    \end{align}
    }

    Additionally, the sub-optimality gap between the optimal
	$V^*$ and the surrogate optimal solution can be bounded by

    \begin{align}
        V^*(\mu) - V^{\pi^{(t)}_{\theta^{**}}}(\mu) &= \frac{1}{1 - \gamma} \sum_{s} d^{\pi^*}_{\mu}(s) \sum_{a} (\pi^*(a | s) - \pi^{(t)}_{\theta^{**}}(a | s)) \cdot A^{\pi^{(t)}_{\theta^{**}}}(s ,a) \label{eq: theorem convergence rate 1} \\
        &\le \frac{1}{1 - \gamma} \sum_{s} \underbrace{d^{\pi^*}_{\mu}(s)}_{\le 1} \sum_{a} \lvert \pi^*(a | s) - \pi^{(t)}_{\theta^{**}}(a | s)\rvert  \cdot \underbrace{\lvert A^{\pi^{(t)}_{\theta^{**}}}(s ,a)\rvert}_{\le \frac{1}{1-\gamma} \text{by \Cref{lemma: reward range to value range}}} \nonumber \\
        &\le \frac{1}{(1 - \gamma)^2} \sum_{s} \sum_{a} \lvert \pi^*(a | s) - \pi^{(t)}_{\theta^{**}}(a | s)\rvert \nonumber \\
        &= \frac{1}{(1 - \gamma)^2} \sum_{s} \Big(\lvert \pi^*(a^*(s) | s) - \pi^{(t)}_{\theta^{**}}(a^*(s) | s)\rvert + \sum_{a \neq a^*(s)} \lvert \pi^*(a | s) - \pi^{(t)}_{\theta^{**}}(a | s)\rvert \Big) \nonumber \\
		&= \frac{1}{(1 - \gamma)^2} \sum_{s} {2}\Big(1 - \pi^{(t)}_{\theta^{**}}(a^*(s) | s)\Big) \label{eq: theorem convergence rate 2-1}\\
        &\le \frac{2 |\cS|}{(1 - \gamma)^2} \left( 1 - \frac{\exp(2\ln(t))}{\exp(2\ln(t)) + \exp(0) \cdot (|\cA| - 1)} \right) \label{eq: theorem convergence rate 2}\\
        &= \frac{2 |\cS|}{(1 - \gamma)^2}\left( \frac{|\cA| - 1}{t^2 + |\cA| - 1} \right), \label{eq: theorem convergence rate 3}
    \end{align}
    
    {where \cref{eq: theorem convergence rate 1} holds by
	\Cref{lemma:perf_diff} and \Cref{lemma:sum_pi_A}, \cref{eq:
	theorem convergence rate 2-1} holds due to $\pi^*(a^*(s) | s) = 1$
and $\pi^*(a | s) = 0$ for all $a \neq a^*(s)$.}


    Finally, by \Cref{lemma:performance_difference_in_rho}, \cref{eq: theorem convergence rate 1-1,eq: theorem convergence rate 3}, we can then change
	the convergence rate bound for $V^{\pi}(\mu)$ to for $V^{\pi}(\rho)$ and get the convergence rate:
 {
    \begin{align*}
        V^{*}(\rho) - V^{{\pi_\theta^{(t)}}}(\rho)
        &\le \frac{1}{1-\gamma} \cdot \left \| \frac{{d^{\pi^*}_{\rho}}}{\mu} \right \|_{\infty} \cdot \left ( V^{*}(\mu) - V^{{\pi_\theta^{(t)}}}(\mu) \right ) \\
        &= \frac{1}{1-\gamma} \cdot \left \| \frac{{d^{\pi^*}_{\rho}}}{\mu} \right \|_{\infty} \cdot \left(\left( V^{{\pi_\theta^{**(t)}}}(\mu) - V^{{\pi_\theta^{(t)}}}(\mu) \right) + \left( V^{*}(\mu) - V^{{\pi_\theta^{**(t)}}}(\mu) \right)\right) \\
        &\le \frac{1}{1-\gamma} \cdot \left \| \frac{{d^{\pi^*}_{\rho}}}{\mu} \right \|_{\infty} \\
        &\quad \cdot \Bigg( \frac{2 |\cS|}{(1 - \gamma)^2} \frac{|\cA| - 1}{t^2 + |\cA| - 1}  \\
        &\quad\quad + \frac{512|\cS| \ln^2(t) + 32\left \| 2\theta^{(T)} - (2+T)\big(\omega^{(T)} - \theta^{(T)}\big)\right \|^2}{(1-\gamma)^3(t+1)t}\Bigg) \\
        &= \frac{1}{(1-\gamma)^3} \left \| \frac{{d^{\pi^*}_{\rho}}}{\mu} \right \|_{\infty} \Bigg( \frac{2 |\cS|(|\cA| - 1)}{t^2 + |\cA| - 1} +  \frac{512|\cS| \ln^2(t) + 32\left \| 2\theta^{(T)} - (2+T)\big(\omega^{(T)} - \theta^{(T)}\big)\right \|^2}{(1-\gamma)(t+1)t} \Bigg) 
    \end{align*}
}
for all $t \ge T$, and the proof is complete.
\end{proof}

\endgroup

\newpage
\section{APG With Time-Varying Step Sizes}
\label{app:add-adaptive-apg}

\begingroup
\allowdisplaybreaks

\subsection{Asymptotic Convergence of APG With Time-Varying Step Sizes}

\begin{lemma}
    \label{lemma: NS2-apg}
    Given any $t \in \mathbb{N}_{+}$, let $\eta^{(t)} = \beta^t \cdot \frac{(1-\gamma)^3}{8}$ for any $\exp(\frac{1}{8\sqrt{|\cA|}|\cS|} \cdot \frac{1-\gamma}{4 C_\infty - (1-\gamma)}) > \beta > 1$ and $K = \frac{1}{4\sqrt{|\cA|}|\cS|} \cdot \frac{1-\gamma}{4 C_\infty - (1-\gamma)} + 2\ln \beta$ where $C_\infty \coloneqq \max_{\pi} \lVert \frac{d^{\pi}_\mu}{\mu} \rVert_\infty$. Consider
    \begin{align*}
        \theta &\leftarrow \omega + \Big[ \eta^{(t)} \nabla_{\theta}{V^{\pi_{\theta}}(\mu)} \Big\rvert_{\theta = \omega} \Big]^{+K} 
    \end{align*}
    and denote $\omega_\zeta \coloneqq \omega + \zeta \cdot (\theta - \omega)$ with some $\zeta \in [0,1]$. Under softmax parameterization, we have,
    \begin{align*}
        \lVert \nabla_\theta V^{\pi_\theta} \rvert_{\theta = \omega_\zeta} \rVert_2 \le 2 \lVert \nabla_\theta V^{\pi_\theta} \rvert_{\theta = \omega} \rVert_2.
    \end{align*}
\end{lemma}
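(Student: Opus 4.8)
The plan is to mirror the proof of \Cref{lemma:NS2} (i.e., Lemma~7 of \citep{mei2021leveraging}), replacing the role that the normalized step size $\eta$ played there with the element-wise clipping threshold $K$. The key structural observation is that, since the update is clipped coordinate-wise at level $K$, each of the $|\cS||\cA|$ coordinates of $\theta-\omega$ has absolute value at most $K$ regardless of the magnitude of $\eta^{(t)}\nabla_{\theta}V^{\pi_{\theta}}(\mu)\rvert_{\theta=\omega}$, hence
\[
    \lVert \theta - \omega\rVert_2 \le \sqrt{|\cS||\cA|}\, K .
\]
In particular, the exponentially growing step size $\eta^{(t)}$ itself never enters this lemma; only $K$ and the admissible range of $\beta$ matter. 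If $\nabla_{\theta}V^{\pi_{\theta}}(\mu)\rvert_{\theta=\omega}=0$ the update is trivial and the claim is immediate, so I assume $\lVert\nabla_{\theta}V^{\pi_{\theta}}(\mu)\rvert_{\theta=\omega}\rVert_2>0$ from now on.

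Next I would record the arithmetic fact that the prescribed $K$ is small enough. Writing $c \coloneqq 3 + \tfrac{4(C_\infty-(1-\gamma))}{1-\gamma}$ for the constant appearing in \Cref{lemma:NS1}, one has $2c = \tfrac{2(4C_\infty-(1-\gamma))}{1-\gamma}$, so the inequality $2c\,|\cS|\sqrt{|\cA|}\,K < 1$ that we ultimately need is equivalent to $K < \tfrac{1}{2\sqrt{|\cA|}|\cS|}\cdot\tfrac{1-\gamma}{4C_\infty-(1-\gamma)}$. Since $\beta < \exp\!\big(\tfrac{1}{8\sqrt{|\cA|}|\cS|}\cdot\tfrac{1-\gamma}{4C_\infty-(1-\gamma)}\big)$ gives $2\ln\beta < \tfrac{1}{4\sqrt{|\cA|}|\cS|}\cdot\tfrac{1-\gamma}{4C_\infty-(1-\gamma)}$, the definition $K = \tfrac{1}{4\sqrt{|\cA|}|\cS|}\cdot\tfrac{1-\gamma}{4C_\infty-(1-\gamma)} + 2\ln\beta$ indeed yields the required bound. (This is exactly why the constant in the first term of $K$ is $\tfrac14$ and the tail $2\ln\beta$ is controlled by the upper bound on $\beta$.)

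The core is then a continuity/contradiction argument along the segment $\{\omega_\zeta\}_{\zeta\in[0,1]}$, identical in structure to the one in \Cref{lemma:NS2}. Suppose, for contradiction, that $\lVert\nabla_{\theta}V^{\pi_{\theta}}(\mu)\rvert_{\theta=\omega_\zeta}\rVert_2 > 2\lVert\nabla_{\theta}V^{\pi_{\theta}}(\mu)\rvert_{\theta=\omega}\rVert_2$ for some $\zeta\in(0,1]$. By continuity of $\zeta\mapsto\lVert\nabla_{\theta}V^{\pi_{\theta}}(\mu)\rvert_{\theta=\omega_\zeta}\rVert_2$ (the gradient is continuous in $\theta$) and its value $\lVert\nabla_{\theta}V^{\pi_{\theta}}(\mu)\rvert_{\theta=\omega}\rVert_2$ at $\zeta=0$, there is a smallest $\zeta'\in(0,1]$ with $\lVert\nabla_{\theta}V^{\pi_{\theta}}(\mu)\rvert_{\theta=\omega_{\zeta'}}\rVert_2 = 2\lVert\nabla_{\theta}V^{\pi_{\theta}}(\mu)\rvert_{\theta=\omega}\rVert_2$, and $\lVert\nabla_{\theta}V^{\pi_{\theta}}(\mu)\rvert_{\theta=\omega_\zeta}\rVert_2 \le 2\lVert\nabla_{\theta}V^{\pi_{\theta}}(\mu)\rvert_{\theta=\omega}\rVert_2$ for all $\zeta\in[0,\zeta']$. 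Combining \Cref{lemma:NS1} with \Cref{lemma: smoothness equivalence} along the segment $[\omega,\omega_{\zeta'}]$, whose local non-uniform smoothness constant is at most $c\sqrt{|\cS|}\cdot\sup_{\zeta\in[0,\zeta']}\lVert\nabla_{\theta}V^{\pi_{\theta}}(\mu)\rvert_{\theta=\omega_\zeta}\rVert_2 \le 2c\sqrt{|\cS|}\lVert\nabla_{\theta}V^{\pi_{\theta}}(\mu)\rvert_{\theta=\omega}\rVert_2$, together with $\lVert\omega_{\zeta'}-\omega\rVert_2 = \zeta'\lVert\theta-\omega\rVert_2 \le \sqrt{|\cS||\cA|}\,K$, gives
\[
    \bigl\lVert \nabla_{\theta}V^{\pi_{\theta}}(\mu)\rvert_{\theta=\omega_{\zeta'}} - \nabla_{\theta}V^{\pi_{\theta}}(\mu)\rvert_{\theta=\omega}\bigr\rVert_2 \le 2c\,|\cS|\sqrt{|\cA|}\,K \cdot \bigl\lVert \nabla_{\theta}V^{\pi_{\theta}}(\mu)\rvert_{\theta=\omega}\bigr\rVert_2 < \bigl\lVert \nabla_{\theta}V^{\pi_{\theta}}(\mu)\rvert_{\theta=\omega}\bigr\rVert_2 ,
\]
using the bound from the second paragraph. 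By the triangle inequality this forces $\lVert\nabla_{\theta}V^{\pi_{\theta}}(\mu)\rvert_{\theta=\omega_{\zeta'}}\rVert_2 < 2\lVert\nabla_{\theta}V^{\pi_{\theta}}(\mu)\rvert_{\theta=\omega}\rVert_2$, contradicting the choice of $\zeta'$. Hence no such $\zeta$ exists and $\lVert\nabla_{\theta}V^{\pi_{\theta}}(\mu)\rvert_{\theta=\omega_\zeta}\rVert_2 \le 2\lVert\nabla_{\theta}V^{\pi_{\theta}}(\mu)\rvert_{\theta=\omega}\rVert_2$ for all $\zeta\in[0,1]$.

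The step I expect to need the most care is the segment estimate in the third paragraph: invoking \Cref{lemma:NS1} and \Cref{lemma: smoothness equivalence} so that the effective smoothness along $[\omega,\omega_{\zeta'}]$ is governed by the \emph{supremum} of the gradient norm over that segment (which is where the inductive bound ``$\le 2\lVert\nabla V(\omega)\rVert$'' is consumed), rather than by the gradient norm at a single endpoint; this is the same subtlety present in Lemma~6 of \citep{mei2021leveraging}, but it must be reinstantiated for our pair $\omega,\theta$. Everything else — the clipping bound $\lVert\theta-\omega\rVert_2\le\sqrt{|\cS||\cA|}K$, the admissibility check on $K$ under the constraint on $\beta$, and the continuity argument — is routine.
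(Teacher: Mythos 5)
Your proof is correct and follows essentially the same route as the paper's: both arguments rest on the observation that element-wise clipping forces $\lVert \theta - \omega\rVert_2 \le \sqrt{|\cS||\cA|}\,K$ independently of $\eta^{(t)}$, and on the arithmetic check that the stated $K$ and the upper bound on $\beta$ make $\frac{4C_\infty-(1-\gamma)}{1-\gamma}|\cS|\sqrt{|\cA|}\,K < \tfrac12$. The only difference is presentational: the paper delegates the bootstrapping to the proofs of Lemmas~3 and~7 of Mei et al.\ and reads off the bound $\bigl(1 - \tfrac{4C_\infty-(1-\gamma)}{1-\gamma}|\cS|\sqrt{|\cA|}K\bigr)^{-1} \le 2$, whereas you re-derive that bootstrapping explicitly via the continuity/least-$\zeta'$ contradiction, which is equivalent and arguably more self-contained.
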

\begin{proof}[Proof of \Cref{lemma: NS2-apg}]
By the proofs of Lemmas 3 and 7 in \citep{mei2021leveraging}, we know that it suffices to replace the upper bound of $\lVert \theta' - \theta \rVert_{2}$ in (234) in \citep{mei2021leveraging} with our updates. Additionally, the upper bound of the norm of our updates is $\sqrt{|\cS||\cA|} K$. Thus, we have the bound as follows:
\begin{equation}
\label{eq: unsimplified}
    \lVert \nabla_\theta V^{\pi_\theta} \rvert_{\theta = \omega_\zeta} \rVert_2 \le \frac{1}{1 - \frac{4 C_{\infty} - (1-\gamma)}{1-\gamma} \sqrt{|\cS|} \sqrt{|\cS||\cA|} K} \lVert \nabla_\theta V^{\pi_\theta} \rvert_{\theta = \omega} \rVert_2.
\end{equation}
We elaborate on the bound of $\frac{1}{1 - \frac{4 C_{\infty} - (1-\gamma)}{1-\gamma} |\cS| \sqrt{|\cA|} K}$,
\begin{align}
    \frac{1}{1 - \frac{4 C_{\infty} - (1-\gamma)}{1-\gamma}  |\cS| \sqrt{|\cA|} K} &= \frac{1}{1 - \frac{4 C_{\infty} - (1-\gamma)}{1-\gamma}  |\cS| \sqrt{|\cA|} (\frac{1}{4\sqrt{|\cA|}|\cS|} \cdot \frac{1-\gamma}{4 C_\infty - (1-\gamma)} + 2\ln \beta)} \nonumber \\
    &= \frac{1}{1 - \frac{1}{4} - \frac{4 C_{\infty} - (1-\gamma)}{1-\gamma}  |\cS| \sqrt{|\cA|} 2 \ln\beta} \nonumber \\
    &\le \frac{1}{1 - \frac{1}{4} - \frac{4 C_{\infty} - (1-\gamma)}{1-\gamma}  |\cS| \sqrt{|\cA|}  \cdot 2 \cdot (\frac{1}{8\sqrt{|\cA|}|\cS|} \cdot \frac{1-\gamma}{4 C_\infty - (1-\gamma)})} \label{eq1: plug in beta} \\
    &= \frac{1}{1 - \frac{1}{2}} = 2, \label{eq2: upper bound of the constant}
\end{align}
where \cref{eq1: plug in beta} holds by plugging in the upper bound of $\beta$.
Hence, we combine \cref{eq: unsimplified,eq2: upper bound of the constant} and obtain the results.
\end{proof}

\begin{lemma}
\label{lemma: convergence to stationarity clipping}
{Consider the setting of \Cref{theorem: adaptive apg convergence rate}.
Then, almost surely,
the limits of both $V^{\pi_{\omega}^{(t)}}(\mu)$
and $V^{\pi_{\theta}^{(t)}}(\mu)$ when $t$ approaches infinity exist,
$\lim_{t\rightarrow \infty} V^{\pi_{\omega}^{(t)}}(\mu) =\lim_{t\rightarrow \infty} V^{\pi_{\theta}^{(t)}}(\mu)$,
and
\begin{equation}
   \lim_{t\rightarrow \infty} \big\lVert \nabla_\theta
   V^{\pi_{\theta}}(s)\rvert_{\theta=\omega^{(t)}} \big\rVert =0,\quad
   \lim_{t\rightarrow \infty} \big\lVert \nabla_\theta V^{\pi_{\theta}}(s)\rvert_{\theta=\theta^{(t)}} \big\rVert =0.
   \label{eq:grad0 clipping}
\end{equation}
}
\end{lemma}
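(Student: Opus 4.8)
The plan is to follow the architecture of the proof of \Cref{lemma: convergence to stationarity}, since only two of its ingredients are sensitive to the change of regime: the monotone-improvement step (which invoked \Cref{lemma:grad update improvement}, valid for any $\eta^{(t)}>0$) and the final passage from value convergence to vanishing gradients (which used an \emph{upper} bound on the step sizes). First I would establish the clipped analogue of \Cref{lemma:grad update improvement}, namely $V^{\pi_{\theta}^{(t+1)}}(s)\ge V^{\pi_{\omega}^{(t)}}(s)$ for all $s\in\cS$ and $t\ge 0$: the proof of \Cref{lemma:grad update improvement} carries over verbatim once one notes that $\big[\eta^{(t+1)}\nabla_{s,a}^{(t)}\big]^{+K}$ has the same sign as $\nabla_{s,a}^{(t)}$ (clipping to a symmetric interval preserves sign), hence the same sign as $A^{\pi_{\omega}^{(t)}}(s,a)$ by \Cref{lemma:softmax_pg}, so the inequality \cref{eq: grad update eq1} still holds. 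Combined with the restart rule \cref{algorithm:eq3} this reproduces \cref{eq: V omega >= Vtheta}, and since $V^{\pi}(\mu)\le 1/(1-\gamma)$ by \Cref{lemma: reward range to value range}, the monotone convergence theorem gives existence of (and equality between) the limits of $V^{\pi_{\omega}^{(t)}}(\mu)$ and $V^{\pi_{\theta}^{(t)}}(\mu)$; telescoping then yields $\sum_{t\ge 0}\big(V^{\pi_{\theta}^{(t+1)}}(\mu)-V^{\pi_{\omega}^{(t)}}(\mu)\big)\le\tfrac{1}{1-\gamma}<\infty$, so in particular $V^{\pi_{\theta}^{(t+1)}}(\mu)-V^{\pi_{\omega}^{(t)}}(\mu)\to 0$.

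Next I would derive a quantitative per-step improvement bound. Writing $\bm{d}^{(t+1)}\coloneqq\theta^{(t+1)}-\omega^{(t)}=\big[\eta^{(t+1)}\nabla_{\theta}V^{\pi_{\theta}}(\mu)|_{\theta=\omega^{(t)}}\big]^{+K}$ and letting $\nabla^{(t)}$ denote the gradient at $\omega^{(t)}$, I would combine the non-uniform smoothness of $\theta\mapsto V^{\pi_{\theta}}(\mu)$ (\Cref{definition: non-uniform smoothness in mei}, \Cref{lemma:NS1}) with \Cref{lemma: NS2-apg} to obtain
\begin{equation*}
V^{\pi_{\theta}^{(t+1)}}(\mu)-V^{\pi_{\omega}^{(t)}}(\mu)\ \ge\ \big\langle \nabla^{(t)},\bm{d}^{(t+1)}\big\rangle-\beta_0\,\big\|\nabla^{(t)}\big\|_2\,\big\|\bm{d}^{(t+1)}\big\|_2^2,
\end{equation*}
where $\beta_0$ is the constant of \Cref{lemma:NS1}. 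The clipping makes $\|\bm{d}^{(t+1)}\|_\infty\le K$ and $\|\bm{d}^{(t+1)}\|_2\le\sqrt{|\cS||\cA|}\,K$, while $\big\langle\nabla^{(t)},\bm{d}^{(t+1)}\big\rangle=\sum_{s,a}|\nabla^{(t)}_{s,a}|\min\{\eta^{(t+1)}|\nabla^{(t)}_{s,a}|,K\}\ge\|\nabla^{(t)}\|_\infty\min\{\eta^{(t+1)}\|\nabla^{(t)}\|_\infty,K\}$; the admissible range of $\beta$ (so that $2\ln\beta$ is small) and the definition of $K$ are calibrated so that the second-order term is strictly dominated by the first, leaving a bound of the form $V^{\pi_{\theta}^{(t+1)}}(\mu)-V^{\pi_{\omega}^{(t)}}(\mu)\ge c_0\,\|\nabla^{(t)}\|_\infty\min\{\eta^{(t+1)}\|\nabla^{(t)}\|_\infty,K\}$ for some $c_0>0$ depending only on $|\cS|,|\cA|,\gamma,C_\infty$.

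Combining this with the summability above gives $\|\nabla^{(t)}\|_\infty\min\{\eta^{(t+1)}\|\nabla^{(t)}\|_\infty,K\}\to 0$; since $\eta^{(t+1)}=\beta^{t+1}(1-\gamma)^3/8\to\infty$ and $K>0$ is fixed, if $\|\nabla^{(t)}\|_\infty$ were bounded below along a subsequence it would eventually satisfy $\eta^{(t+1)}\|\nabla^{(t)}\|_\infty>K$, pinning the left-hand side below by a positive constant — contradiction. Hence $\|\nabla_{\theta}V^{\pi_{\theta}}(s)|_{\theta=\omega^{(t)}}\|\to 0$ via \Cref{lemma:softmax_pg} and \Cref{lemma:lower_bound_of_state_visitation_distribution} (using $\min_s\mu(s)>0$ a.s.). For the gradient at $\theta^{(t)}$ I would then reuse the part of the proof of \Cref{lemma: monotone limits exist} that needs only $\nabla V(\omega^{(t)})\to 0$ and the random $\mu$ to get convergence of $V^{\pi_{\omega}^{(t)}}(s)$ for every $s$; since $V^{\pi_{\theta}^{(t+1)}}(s)\ge V^{\pi_{\omega}^{(t)}}(s)\ge 0$ have a common $\mu$-limit and $\mu(s)>0$, each $V^{\pi_{\theta}^{(t)}}(s)$ converges to the same limit, so the advantage functions along $\theta^{(t)}$ share the limits of those along $\omega^{(t)}$; finally the per-coordinate clipping bound $|\theta^{(t+1)}_{s,a}-\omega^{(t)}_{s,a}|\le K$ gives $\pi_{\theta}^{(t+1)}(a|s)\le e^{2K}\pi_{\omega}^{(t)}(a|s)$, so actions with nonzero limiting advantage have vanishing policy weight while actions with zero limiting advantage have vanishing advantage; either way $\pi_{\theta}^{(t+1)}(a|s)A^{\pi_{\theta}^{(t+1)}}(s,a)\to 0$, which yields $\|\nabla_{\theta}V^{\pi_{\theta}}(s)|_{\theta=\theta^{(t)}}\|\to 0$ and the asserted equality of limits, establishing \cref{eq:grad0 clipping}.

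The main obstacle is the quantitative improvement bound: because $\eta^{(t)}\to\infty$, the ordinary descent lemma with the smoothness constant $8/(1-\gamma)^3$ is vacuous, so one must exploit that clipping keeps $\bm{d}^{(t+1)}$ uniformly bounded in $\ell_\infty$ and that \Cref{lemma:NS1}–\Cref{lemma: NS2-apg} supply a smoothness constant proportional to the current gradient norm. The hard part will be the book-keeping that verifies, for the specific $K$ and the specific range of $\beta$ in \Cref{theorem: adaptive apg convergence rate}, that the resulting second-order term is genuinely lower order uniformly over all coordinates at once (some clipped, some not), so that a positive improvement proportional to $\|\nabla^{(t)}\|_\infty\min\{\eta^{(t+1)}\|\nabla^{(t)}\|_\infty,K\}$ survives.
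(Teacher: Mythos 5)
Your step 1 (monotone improvement survives clipping because clipping preserves the sign of each coordinate, hence monotone convergence of the values and $V^{\pi_{\theta}^{(t+1)}}(\mu)-V^{\pi_{\omega}^{(t)}}(\mu)\to 0$) matches the paper, and your step 3, while more roundabout than the paper's (the paper just writes $\lVert \nabla V(\theta^{(t)})\rVert \le \lVert \nabla V(\theta^{(t)})-\nabla V(\omega^{(t-1)})\rVert+\lVert\nabla V(\omega^{(t-1)})\rVert$ and bounds the first term by $O(\lVert\nabla V(\omega^{(t-1)})\rVert)$ via \Cref{lemma:NS1}, \Cref{lemma: NS2-apg} and $\lVert\theta^{(t)}-\omega^{(t-1)}\rVert\le\sqrt{|\cS||\cA|}K$), is a workable alternative. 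The problem is your step 2. You base the vanishing of $\nabla V(\omega^{(t)})$ on a descent-lemma inequality
\begin{equation*}
V^{\pi_{\theta}^{(t+1)}}(\mu)-V^{\pi_{\omega}^{(t)}}(\mu)\ \ge\ \big\langle \nabla^{(t)},\bm{d}^{(t+1)}\big\rangle-\beta_0\,\lVert\nabla^{(t)}\rVert_2\,\lVert\bm{d}^{(t+1)}\rVert_2^2,
\end{equation*}
and assert that $K$ and the range of $\beta$ are "calibrated" so that the second term is dominated. They are not: those constants were chosen to make \Cref{lemma: NS2-apg} hold (gradient at most doubles along the clipped segment) and to drive the linear rate, not to control this remainder. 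Running your own bookkeeping, the first-order term is at best $\lVert\nabla^{(t)}\rVert_\infty\min\{\eta^{(t+1)}\lVert\nabla^{(t)}\rVert_\infty,K\}$, while the remainder is only controllable by $\beta_0\lVert\nabla^{(t)}\rVert_2\cdot K\sum_{s,a}\min\{\eta^{(t+1)}|\nabla^{(t)}_{s,a}|,K\}$; coordinates with tiny gradients still receive updates of size up to $K$ once $\eta^{(t)}$ is large, so the ratio of the two terms is of order $\beta_0 K(|\cS||\cA|)^{3/2}$, which with the paper's $K$ works out to roughly $|\cS||\cA|/4\ \ge 1$. The domination you need therefore fails, and the improvement lower bound can be negative exactly in the regime you care about.

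The paper avoids this entirely by never invoking a smoothness/descent inequality for this step. It argues by contradiction: if some partial derivative $\partial^{(t_i)}_{s_{t_i},a_{t_i}}>\varepsilon_0$ along a subsequence, it applies \Cref{lemma:perf_diff} to $V^{\pi_{\theta}^{(t_i+1)}}(\mu)-V^{\pi_{\omega}^{(t_i)}}(\mu)$, keeps only the state $s_{t_i}$ (legitimate since every state contributes nonnegatively by the clipped analogue of \Cref{lemma:grad update improvement}), and then manipulates the softmax ratios \emph{exactly}: discarding the positive increments on all other actions and using $\sum_a\pi_{\omega}^{(t_i)}(a|s)A^{\pi_{\omega}^{(t_i)}}(s,a)=0$ leaves a one-step improvement of at least
\begin{equation*}
\mu(s_{t_i})\,e^{-K}\big(\exp(\min\{\eta^{(t_i+1)}\varepsilon_0,K\})-1\big)\,\frac{(1-\gamma)\varepsilon_0}{d_{\mu}^{\pi}(s_{t_i})},
\end{equation*}
via \Cref{lemma:softmax_pg} and \Cref{lemma:lower_bound_of_state_visitation_distribution}. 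This is a multiplicative bound with no second-order error term, it is bounded away from zero since $\eta^{(t)}$ is increasing and $\min_s\mu(s)>0$ almost surely, and it contradicts the convergence of the values. If you want to keep your architecture, you should replace your quantitative improvement bound with this exact computation; otherwise you must either re-derive $K$ to satisfy $\beta_0 K(|\cS||\cA|)^{3/2}<1$ (changing the theorem's constants) or find a sharper bound on $\lVert\bm{d}^{(t+1)}\rVert_2^2$ that you have not supplied.
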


\begin{proof}[Proof of \Cref{lemma: convergence to stationarity clipping}]
{Following the argument in \Cref{lemma: convergence to stationarity}, we have $\lim_{t\rightarrow \infty} V^{\pi_{\omega}^{(t)}}(\mu) =\lim_{t\rightarrow \infty} V^{\pi_{\theta}^{(t)}}(\mu)$.}
We prove $\lim_{t\rightarrow \infty} \big\lVert \nabla_\theta V^{\pi_{\theta}}(s)\rvert_{\theta=\omega^{(t)}} \big\rVert =0$ by contradiction. Suppose that the limit of $\big\lVert \nabla_\theta V^{\pi_{\theta}}(s)\rvert_{\theta=\omega^{(t)}} \big\rVert$ does not exist or is not $0$. Then, we know that there exists a sequence $\{t_i\}_{i=1}^{\infty}$ and $\varepsilon_0 > 0$ such that for every $t_i$, there is a state-action pair $(s_{t_i}, a_{t_i})$ satisfying
\begin{equation}
\label{eq: grad bounded away from 0}
    \left.\frac{\partial V^{\pi_{\theta}}(\mu)}{\partial \theta_{s_{t_i}, a_{t_i}}}\right|_{\theta = \omega^{(t_i)}}> \varepsilon_0.
\end{equation}
{Note that \cref{eq: grad bounded away from 0} is satisfied since the summation of the gradient over all actions at any given state $s$ is zero. We define $\partial_{s,
a}^{(t)} \coloneqq \frac{\partial V^{\pi_{\theta}}(\mu)}{\partial
	\theta_{s, a}}|_{\theta = \omega^{(t)}}$ and $Z^{(t)}_{\phi}(s) \coloneqq \sum_{a\in\cA} \exp(\phi_{s, a}^{(t)})$ for any policy parameter $\phi^{(t)}$. Then, for any $t_i \in \{t_i\}_{i=1}^{\infty}$,
\begin{align}
    V^{\pi_{\theta}^{(t_i + 1)}}(\mu) - V^{\pi_{\omega}^{(t_i)}}(\mu) &= \frac{1}{1-\gamma} \sum_{s} d_{\mu}^{\pi_{\theta}^{(t_i + 1)}}(s) \sum_{a\in\cA} \pi_{\theta}^{(t_i + 1)}(a | s) A^{\pi_{\omega}^{(t_i)}}(s, a) \label{eq1: omega grad norm to 0 PDL} \\
    &\ge \frac{1}{1-\gamma} d_{\mu}^{\pi_{\theta}^{(t_i + 1)}}(s_{t_i}) \sum_{a\in\cA} \pi_{\theta}^{(t_i + 1)}(a | s_{t_i}) A^{\pi_{\omega}^{(t_i)}}(s_{t_i}, a) \label{eq2: omega grad norm to 1 state-wise improvement} \\
    &\ge \mu(s_{t_i}) \sum_{a\in\cA} \pi_{\theta}^{(t_i + 1)}(a | s_{t_i}) A^{\pi_{\omega}^{(t_i)}}(s_{t_i}, a) \label{eq3: omega grad norm to 1 d_mu mu} \\
    &= \mu(s_{t_i}) \frac{Z^{(t_i)}_{\omega}(s_{t_i})}{Z^{(t_i+1)}_{\theta}(s_{t_i})} \sum_{a\in\cA} \frac{\exp(\omega^{(t_i)}_{s_{t_i}, a}) + \text{clip}\{\eta^{(t_i+1)} \partial^{(t_i)}_{s_{t_i}, a}, -K, K\}}{Z^{(t_i)}_{\omega}(s_{t_i})} A^{\pi_{\omega}^{(t_i)}}(s_{t_i}, a) \nonumber \\
    &\ge \mu(s_{t_i}) \frac{Z^{(t_i)}_{\omega}(s_{t_i})}{Z^{(t_i+1)}_{\theta}(s_{t_i})} \left[\Big(\frac{\exp(\omega^{(t_i)}_{s_{t_i}, a_{t_i}}) + \text{clip}\{\eta^{(t_i+1)} \partial^{(t_i)}_{s_{t_i}, a_{t_i}}, -K, K\}}{Z^{(t_i)}_{\omega}(s_{t_i})}\Big) A^{\pi_{\omega}^{(t_i)}}(s_{t_i}, a_{t_i})\right. \nonumber \\
    &\qquad\qquad\qquad\qquad\quad + \left. \sum_{a \neq a_{t_i}} \frac{\exp(\omega^{(t_i)}_{s_{t_i}, a})}{Z^{(t_i)}_{\omega}(s_{t_i})} A^{\pi_{\omega}^{(t_i)}}(s_{t_i}, a)\right] \label{eq4: omega grad norm to 1 update sign} \\
    &= \mu(s_{t_i}) \frac{Z^{(t_i)}_{\omega}(s_{t_i})}{Z^{(t_i+1)}_{\theta}(s_{t_i})} (\exp(\min\{\eta^{(t_i+1)} \partial^{(t_i)}_{s_{t_i}, a_{t_i}}, K\}) - 1) \pi_{\omega}^{(t_i)}(a_{t_i} | s_{t_i}) A^{\pi_{\omega}^{(t_i)}}(s_{t_i}, a_{t_i}) \label{eq5: omega grad norm to 1 sum 0} \\
    &\ge \mu(s_{t_i})\frac{\sum_{a\in\cA} \exp(\omega_{s_{t_i}, a}^{(t_i)})}{\sum_{a\in\cA} \exp(\omega_{s_{t_i}, a}^{(t_i)} + K)} (\exp(\min\{\eta^{(t_i+1)} \partial^{(t_i)}_{s_{t_i}, a_{t_i}}, K\}) - 1) \frac{(1-\gamma)\varepsilon_0}{d_{\mu}^{\pi_{\theta}}(s_{t_i})}  \label{eq6: omega grad norm to 1} \\
    &\ge \mu(s_{t_i}) \exp(-K) (\exp(\min\{\eta^{(t_i+1)} \varepsilon_0, K\}) - 1) \frac{(1-\gamma)\varepsilon_0}{d_{\mu}^{\pi_{\theta}}(s_{t_i})} > 0, \label{eq7: omega grad norm to 1 bounded away from 0}
\end{align}
where \cref{eq1: omega grad norm to 0 PDL} holds by \Cref{lemma:perf_diff}, \cref{eq2: omega grad norm to 1 state-wise improvement} holds by \Cref{lemma:grad update improvement}, \cref{eq3: omega grad norm to 1 d_mu mu} holds by \Cref{lemma:lower_bound_of_state_visitation_distribution}, \cref{eq4: omega grad norm to 1 update sign} is due to the fact that the signs of the updates are the same as the advantage function values with respect to the corresponding actions (see \Cref{lemma:perf_diff,lemma:softmax_pg}), \cref{eq5: omega grad norm to 1 sum 0} is true because of $\partial^{(t_i)}_{s_{t_i}, a_{t_i}} > 0$ and \Cref{lemma:sum_pi_A}, \cref{eq6: omega grad norm to 1} holds due to that the maximum magnitude of the updates is $K$ by clipping, \cref{eq: grad bounded away from 0}, and \Cref{lemma:softmax_pg}. We can observe that \cref{eq7: omega grad norm to 1 bounded away from 0} is a value bounded away from $0$, which leads to a constant improvement at time $t_i$. This contradicts with $\lim_{t\rightarrow \infty} V^{\pi_{\omega}^{(t)}}(\mu) =\lim_{t\rightarrow \infty} V^{\pi_{\theta}^{(t)}}(\mu)$. Hence, we have
\begin{equation}
\label{eq: limit grad omega to 0}
    \lim_{t\rightarrow \infty} \big\lVert \nabla_\theta V^{\pi_{\theta}}(s)\rvert_{\theta=\omega^{(t)}} \big\rVert =0.
\end{equation}}

{Finally, we show that $\lim_{t\rightarrow \infty} \big\lVert \nabla_\theta V^{\pi_{\theta}}(s)\rvert_{\theta=\theta^{(t)}} \big\rVert =0$. By the triangle inequality, we have
\begin{equation}
\label{eq: triangle}
    \big\lVert \nabla_\theta V^{\pi_{\theta}}(s)\rvert_{\theta=\theta^{(t)}} \big\rVert  \le \big\lVert \nabla_\theta V^{\pi_{\theta}}(s)\rvert_{\theta=\theta^{(t)}} - \nabla_\theta V^{\pi_{\theta}}(s)\rvert_{\theta=\omega^{(t-1)}} \big\rVert + \big\lVert \nabla_\theta V^{\pi_{\theta}}(s)\rvert_{\theta=\omega^{(t-1)}} \big\rVert.
\end{equation}
By \Cref{definition: non-uniform smoothness in mei,lemma: smoothness equivalence,lemma:NS1,lemma: NS2-apg}, we have
\begin{equation}
\label{eq: nabla diff}
    \big\lVert \nabla_\theta V^{\pi_{\theta}}(s)\rvert_{\theta=\theta^{(t)}} - \nabla_\theta V^{\pi_{\theta}}(s)\rvert_{\theta=\omega^{(t-1)}} \big\rVert \le \Big[ 3 + \frac{4 \cdot (C_\infty - (1-\gamma))}{1-\gamma} \sqrt{|\cS|}  \Big]\cdot 2 \cdot \big\lVert \nabla_\theta V^{\pi_{\theta}}(s)\rvert_{\theta=\omega^{(t-1)}} \big\rVert \cdot \big\lVert \theta^{(t)} - \omega^{(t-1)}\big\rVert.
\end{equation}
By the clipped update, we know that $\big\lVert \theta^{(t)} - \omega^{(t-1)}\big\rVert$ is always upper bounded by $\sqrt{|\cS||\cA|} K$. Thus, \cref{eq: nabla diff} approaches $0$ as $t\rightarrow\infty$ by \cref{eq: limit grad omega to 0}, leading to that the RHS of \cref{eq: triangle} goes to $0$ as $t\rightarrow\infty$. Hence, we have proven \cref{eq:grad0 clipping}.}
\end{proof}

\begin{theorem}
   [\textbf{Asymptotic Convergence Under Softmax Parameterization}]
   \label{theorem:global_convergence_adaptive_apg}
   Consider the setting of \Cref{theorem: adaptive apg convergence rate},
   the following holds almost surely: $\lim_{t\rightarrow \infty} V^{\pi_{\theta}^{(t)}}(s) = V^{*}(s)$, for all $s\in\cS$.
\end{theorem}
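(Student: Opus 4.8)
The plan is to recover exactly the conclusion of \Cref{theorem:convergeoptimal}, namely that $I_s^{+}=\emptyset$ for every $s\in\cS$, and then to invoke \Cref{lemma:perf_diff} as in that proof. The whole architecture of Appendix~\ref{app:asym_conv} --- the supporting Lemmas~\ref{lemma: monotone limits exist}--\ref{lemma:compare pi(a+) and pi(a-)}, \Cref{cor:policy convergence}, and the counting inequality that closes \Cref{theorem:convergeoptimal} --- was written to depend on the gradient step only through three ingredients: (a) the stationarity conclusion \cref{eq:grad0}; (b) the step sizes being bounded away from zero; and (c) the sign of each coordinate of the increment matching $\sign A^{\pi_\omega^{(t)}}(s,a)$. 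In the time-varying clipped setting all three persist: (a) is \cref{eq:grad0 clipping} from \Cref{lemma: convergence to stationarity clipping}; (b) holds since $\eta^{(t)}=\beta^{t}(1-\gamma)^3/8\ge\eta^{(1)}>0$; and (c) holds because $z\mapsto\mathrm{clip}(z,-K,K)$ with $K>0$ is odd and nondecreasing, so $\bigl[\eta^{(t)}\nabla_\theta V^{\pi_\theta}(\mu)\big\rvert_{\omega^{(t-1)}}\bigr]^{+K}$ has the same coordinatewise sign as the unclipped increment, which by \Cref{lemma:softmax_pg} equals $\sign A^{\pi_\omega^{(t-1)}}(s,a)$; moreover every coordinate of the increment now additionally obeys $\lvert\,\cdot\,\rvert\le K$.

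With these in hand I would first re-derive the bookkeeping lemmas. \Cref{lemma:theta as sum of gradients} becomes $\theta^{(t+1)}_{s,a}=\sum_{j=0}^{t}G(j,t)\,[\eta^{(j+1)}\nabla^{(j)}_{s,a}]^{+K}+\theta^{(0)}_{s,a}$ with the same $G(j,t)$, by the identical induction. One genuine modification is needed for \Cref{lemma:sum conservation}: coordinatewise clipping destroys the exact identity $\sum_a\theta^{(t)}_{s,a}=\sum_a\theta^{(0)}_{s,a}$, and I would replace it by the weaker statement that $\sum_a\theta^{(t)}_{s,a}$ converges. This follows because, along the trajectory, eventually exactly one coordinate of the increment at each state is positive, so the total clipped ``excess'' at state $s$ is bounded by $\eta^{(t)}\lvert\nabla^{(t-1)}_{s,a^*(s)}\rvert$; and $\sum_t\eta^{(t)}\lVert\nabla_\theta V^{\pi_\theta}(\mu)\rvert_{\omega^{(t-1)}}\rVert<\infty$ --- a consequence of the per-step improvement lower bound $\gtrsim \exp\bigl(\min\{c\,\eta^{(t)}\lVert\nabla\rVert,K\}\bigr)-1$ already extracted in the proof of \Cref{lemma: convergence to stationarity clipping}, together with $\sum_t\bigl(V^{\pi_\theta^{(t+1)}}(\mu)-V^{\pi_\omega^{(t)}}(\mu)\bigr)\le V^{*}(\mu)<\infty$. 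With the replacement in place, \Cref{lemma:grad update improvement} (which uses only the sign of the increment) and the clipped analogs of Lemmas~\ref{lemma: monotone limits exist}, \ref{lemma:sum of Is+ and Is- goes to zero}, \ref{lemma:Is+ bounded}, \ref{lemma:Is- bounded}, \ref{lemma:max Is0}, \ref{lemma: ordering unchanged}, \ref{lemma:sum of Bs0 goes to 1}, \ref{lemma:B_s0 bar}, \ref{lemma:compare pi(a+) and pi(a-)} go through essentially verbatim --- the momentum geometric-sum bounds are insensitive to the sizes of the increments, which are now moreover capped at $K$ --- giving the clipped version of \Cref{cor:policy convergence}. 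Plugging these into the final counting argument of \Cref{theorem:convergeoptimal} forces $I_s^{+}=\emptyset$ for all $s$, hence $\lim_{t\to\infty}V^{\pi_\theta^{(t)}}(s)=V^{*}(s)$ for every $s\in\cS$ almost surely, where the almost-sure qualifier is inherited, exactly as before, from the uniformly random $\mu$.

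The main obstacle is \Cref{lemma:Is- negative infinity}. Its original proof moves divergence of $\theta^{(t)}_{s,a'}$ to divergence of $\theta^{(t)}_{s,a}$ for $a\in I_s^{-}$ by comparing the ratio $\lvert\partial^{(t)}_{s,a}\rvert/\lvert\partial^{(t)}_{s,a'}\rvert$ of unclipped gradient coordinates (as in \cref{eq:Is- negative infinity 8} and \cref{eq:Is- negative infinity 12}), and $\mathrm{clip}(\cdot,-K,K)$ does not commute with ratios. I would repair this with a monotonicity substitution: since $\mathrm{clip}(\cdot,-K,K)$ is odd and nondecreasing, whenever $\lvert\partial^{(t)}_{s,a'}\rvert\le\lvert\partial^{(t)}_{s,a}\rvert$ one also has $\bigl\lvert[\eta^{(t)}\partial^{(t)}_{s,a'}]^{+K}\bigr\rvert\le\bigl\lvert[\eta^{(t)}\partial^{(t)}_{s,a}]^{+K}\bigr\rvert$; and for $t\ge\bar T_s$ the hypothesis $\lvert\partial^{(t)}_{s,a'}\rvert\le\lvert\partial^{(t)}_{s,a}\rvert$ is exactly what is established from $\theta^{(t)}_{s,a'}\le\theta^{(t)}_{s,a}$ and the stabilized advantages via \Cref{lemma:softmax_pg}. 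Inserting this bound in place of the divergent-ratio estimate keeps the subsequence/counting argument of case (ii) intact, so that the accumulated negative drift of $\theta^{(t)}_{s,a}$ still diverges, giving $\theta^{(t)}_{s,a}\to-\infty$ and then, via \cref{algorithm:eq2} and $a\in I_s^{-}$, $\omega^{(t)}_{s,a}\to-\infty$. The same substitution patches the one analogous ratio estimate inside the proof of \Cref{lemma:sum of Bs0 goes to 1}(c); everything else is routine.
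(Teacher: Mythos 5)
Your proposal follows the same route as the paper's own proof of \Cref{theorem:global_convergence_adaptive_apg}: substitute \Cref{lemma: convergence to stationarity clipping} for \Cref{lemma: convergence to stationarity} and replay the machinery of \Cref{lemma: monotone limits exist}--\ref{lemma:compare pi(a+) and pi(a-)} together with the counting argument closing \Cref{theorem:convergeoptimal} to force $I_s^{+}=\emptyset$ for all $s$. You are in fact more careful than the paper, whose proof is a two-line appeal to ``similar arguments'' and does not address the two clipping-specific breakages you correctly identify --- the loss of exact sum conservation in \Cref{lemma:sum conservation} and the gradient-ratio estimates in \Cref{lemma:Is- negative infinity} and \Cref{lemma:sum of Bs0 goes to 1}(c) --- and your proposed repairs (a summability bound on the clipping excess, and replacing ratio comparisons by the monotonicity of the clipping operator) are sound.
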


\begin{proof}[Proof of \Cref{theorem:global_convergence_adaptive_apg}]
{We note that the proofs of \Cref{theorem:convergeoptimal} and \Cref{lemma: monotone limits exist}-\ref{lemma:compare pi(a+) and pi(a-)} are all based on the results of \Cref{lemma: convergence to stationarity}. Now we have provided the counterpart of \Cref{lemma: convergence to stationarity} as \Cref{lemma: convergence to stationarity clipping}, the requirement for the step size upper bounded in these lemmas can be removed.
By using arguments similar to those in the proof of \Cref{theorem:convergeoptimal} and \Cref{lemma: monotone limits exist}-\ref{lemma:compare pi(a+) and pi(a-)}, we have
the results. 
}
\end{proof}

\subsection{Convergence Rate of APG With Time-Varying Step Sizes}

\begin{restatable}{lemma}{stationarylemma2}
\label{lemma: adaptive apg enter local concavity}
   Consider a tabular softmax parameterized policy.
	Under \Cref{assump:unique_optimal} and the setting of \Cref{theorem:global_convergence_adaptive_apg}, 
 the following holds almost surely: Given any $M > 0$, there exists a
 finite time T such that for all $t \ge T$, $s\in\cS$, and $a \neq
 a^*(s)$, we have (i) $\theta_{s, a^*(s)}^{(t)} - \theta_{s, a}^{(t)} > M$, (ii) $V^{\pi_{\theta}^{(t)}}(s) > Q^*(s, a_2(s))$, (iii) $\left.\frac{\partial V^{\pi_{\theta}}(\mu)}{\partial\theta_{s, a^*(s)}}\right\rvert_{\theta = \omega^{(t)}} > 0 > \left.\frac{\partial V^{\pi_{\theta}}(\mu)}{\partial\theta_{s, a}}\right\rvert_{\theta = \omega^{(t)}}$, (iv) $\omega_{s, a^*(s)}^{(t)} - \theta_{s, a^*(s)}^{(t)} \ge \omega_{s, a}^{(t)} - \theta_{s, a}^{(t)}$.
\end{restatable}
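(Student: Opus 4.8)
The plan is to follow the proof of \Cref{lemma: enter local concavity} almost line by line, replacing every appeal to \Cref{lemma: convergence to stationarity} by \Cref{lemma: convergence to stationarity clipping} and every appeal to \Cref{theorem:convergeoptimal} by \Cref{theorem:global_convergence_adaptive_apg}. The only genuinely new ingredient is the observation that the element-wise clipping operator preserves signs: for each $(s,a)$ and each $t$ we have $\theta_{s,a}^{(t)}=\omega_{s,a}^{(t-1)}+\text{clip}\!\left(\eta^{(t)}\nabla_{s,a}^{(t-1)},-K,K\right)$ (the restart and momentum updates \cref{algorithm:eq2,algorithm:eq3} being unchanged in the time-varying variant), and since $x\mapsto\text{clip}(x,-K,K)$ is non-decreasing and vanishes only at $x=0$, the clipped step carries the same sign as $\nabla_{s,a}^{(t-1)}$, i.e. the same sign as $A^{\pi_\omega^{(t-1)}}(s,a)$ by \Cref{lemma:softmax_pg}. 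Every step of the constant-step proof that invoked a gradient step will be re-run using only this weaker sign-only property.

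First I would record the time-varying analogues of \Cref{cor:policy convergence} and \Cref{lemma: Q-ordering fixed}: the proofs of \Cref{lemma: monotone limits exist}--\Cref{lemma:compare pi(a+) and pi(a-)} use only the conclusion of \Cref{lemma: convergence to stationarity} (which \Cref{lemma: convergence to stationarity clipping} now supplies with no upper bound on the step sizes), so they go through verbatim, yielding $\lim_t\pi_\theta^{(t)}(a^*(s)|s)=\lim_t\pi_\omega^{(t)}(a^*(s)|s)=1$ and a finite $T_V$ with $V^{\pi_\theta^{(t)}}(s)>Q^*(s,a_2(s))$ for all $t\ge T_V$, almost surely. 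Parts (i) and (ii) then follow exactly as before: (ii) is $T_V$, and (i) is the contradiction argument in which an infinite subsequence with $\theta^{(t)}_{s,a^*(s)}-\theta^{(t)}_{s,a}\le M$ would force $\pi_\theta^{(t)}(a^*(s)|s)\le e^{M}/(e^{M}+1)<1$. For (iii), $\pi_\omega^{(t)}(a^*(s)|s)\to 1$ and \Cref{lemma: Q^* a_2} give $V^{\pi_\omega^{(t)}}(s)>Q^*(s,a_2(s))$ eventually, hence $A^{\pi_\omega^{(t)}}(s,a)<0$ for $a\ne a^*(s)$ and, via \Cref{lemma:sum_pi_A}, $A^{\pi_\omega^{(t)}}(s,a^*(s))>0$; \Cref{lemma:softmax_pg} then delivers the required sign pattern of the partials at $\omega^{(t)}$.

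Part (iv) is where the sign-preservation of clipping does the work. I would establish the clipped analogue of the momentum-dominance claim used in \Cref{lemma: enter local concavity}: if $\omega_{s,a^*(s)}^{(t)}-\theta_{s,a^*(s)}^{(t)}\ge\omega_{s,a}^{(t)}-\theta_{s,a}^{(t)}$ at some $t$ past the time guaranteeing (iii), then, writing $\theta_{s,a}^{(t+1)}=\theta_{s,a}^{(t)}+(\omega_{s,a}^{(t)}-\theta_{s,a}^{(t)})+\text{clip}(\eta^{(t+1)}\nabla_{s,a}^{(t)},-K,K)$ and adding the assumed inequality to $\text{clip}(\eta^{(t+1)}\nabla_{s,a^*(s)}^{(t)},-K,K)\ge 0\ge\text{clip}(\eta^{(t+1)}\nabla_{s,a}^{(t)},-K,K)$ (from (iii)), the ordering $\theta_{s,a^*(s)}^{(t+1)}>\theta_{s,a}^{(t+1)}$ holds and hence, through \cref{algorithm:eq2,algorithm:eq3}, $\omega_{s,a^*(s)}^{(t+1)}-\theta_{s,a^*(s)}^{(t+1)}\ge\omega_{s,a}^{(t+1)}-\theta_{s,a}^{(t+1)}$. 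With this claim in hand the remainder is identical to the original proof: if no finite $T_\omega$ realizing (iv) existed, a strict reverse ordering would persist for all large $t$, which forces the restart indicator to be nonzero throughout (otherwise $\omega^{(t)}=\theta^{(t)}$ kills the gap), and telescoping $\theta_{s,a^*(s)}^{(N)}-\theta_{s,\tilde a}^{(N)}$ through the restart-free momentum identity $\omega^{(t)}-\theta^{(t)}=\tfrac{t-1}{t+2}(\theta^{(t)}-\theta^{(t-1)})$ — which does not involve the step sizes at all — bounds it from above, contradicting $\pi_\theta^{(N)}(a^*(s)|s)\to 1$. Setting $T=\max\{T_M,T_V,T_\omega\}$ finishes the argument. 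The main obstacle is purely bookkeeping: one must verify that the only features of the gradient step actually used in the constant-step proof are its coordinate-wise sign and the fixed momentum weight $\tfrac{t-1}{t+2}$, both of which clipping leaves untouched, so that no new quantitative estimate is required beyond \Cref{lemma: convergence to stationarity clipping}.
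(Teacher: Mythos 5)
Your proposal is correct and follows the same route as the paper, whose own proof of this lemma is a one-line instruction to reapply the argument of Lemma~\ref{lemma: enter local concavity} under the setting of Theorem~\ref{theorem:global_convergence_adaptive_apg}; you have simply carried out that instruction in detail, correctly isolating the one point that needs checking, namely that the clipped step $\mathrm{clip}(\eta^{(t)}\nabla_{s,a}^{(t-1)},-K,K)$ retains the coordinate-wise sign (and hence the ordering between the $a^*(s)$-coordinate and the others) of the unclipped gradient, which is all the original proof of parts (iii) and (iv) ever uses, while the telescoping contradiction for (iv) involves only the momentum weight $\tfrac{t-1}{t+2}$ and is step-size-independent.
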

\begin{proof}[Proof of \Cref{lemma: adaptive apg enter local concavity}]
Applying \Cref{theorem:global_convergence_adaptive_apg} and the logic in its proof to the argument of the proof of \Cref{lemma: enter local concavity}, we can obtain the stated results.
\end{proof}

\adaptiveapg*
\begin{proof}[Proof of \Cref{theorem: adaptive apg convergence rate}]

By \Cref{lemma:perf_diff}, we have
\begin{align*}
    V^{*}(\mu) - V^{\pi_{\theta}^{(t)}}(\mu) &= \frac{1}{1 - \gamma} \sum_{s, a} d_{\mu}^{\pi^*}(s) \pi^*(a|s) A^{\pi_{\theta}^{(t)}}(s, a) \\
    &= \frac{1}{1 - \gamma} \sum_{s, a} d_{\mu}^{\pi^*}(s) (\pi^*(a|s) - \pi_{\theta}^{(t)}(a|s)) A^{\pi_{\theta}^{(t)}}(s, a) \\
    &\le \frac{1}{1 - \gamma} \sum_{s, a} \Bigg | d_{\mu}^{\pi^*}(s) (\pi^*(a|s) - \pi_{\theta}^{(t)}(a|s)) A^{\pi_{\theta}^{(t)}}(s, a) \Bigg | \\
    &\le \frac{2}{(1 - \gamma)^2} \left\lVert {d_{\mu}^{\pi^*}} \right\rVert_{\infty} \sum_{s, a} \Bigg | \pi^*(a|s) - \pi_{\theta}^{(t)}(a|s) \Bigg | \\
    &= \frac{2}{(1 - \gamma)^2} \left\lVert {d_{\mu}^{\pi^*}} \right\rVert_{\infty} \sum_{s} \Bigg( \Bigg | \pi^*(a^*|s) - \pi_{\theta}^{(t)}(a^*|s) \Bigg | + \sum_{a \neq a^*} \Bigg | \pi^*(a|s) - \pi_{\theta}^{(t)}(a|s) \Bigg | \Bigg) \\ 
    &= \frac{2}{(1 - \gamma)^2} \left\lVert {d_{\mu}^{\pi^*}} \right\rVert_{\infty} \sum_{s} 2(1-\pi_\theta^{(t)}(a^*|s)).
\end{align*}

To obtain the desired outcome, it is sufficient to show that there
exist $\bar{c}, 
\tilde{c} > 0$ such that
\begin{equation}
   1 - \pi_{\theta}^{(t)}(a^*(s) | s) \le \tilde{c} e^{-\bar{c}t}
  \label{eq:lineardecay}
   \end{equation}
   for
all $s \in \cS$ and all $t \ge T$.
Given
\begin{align*}
    M \coloneqq \ln \frac{|\cA|-1}{\exp\Big( \frac{1}{4\sqrt{|\cA|}|\cS|} \cdot \frac{1-\gamma}{4 C_\infty - (1-\gamma)} \Big)-1} > 0,
\end{align*}
where $C_\infty \coloneqq \max_{\pi} \lVert \frac{d^{\pi}_\mu}{\mu} \rVert_\infty $.
Selecting $T_M$ from \Cref{lemma: adaptive apg enter local concavity} such that $\theta_{s, a^*(s)}^{(t)} - \theta_{s, a}^{(t)} > M$ for all $t \ge T_M, s \in \cS$, and $a \in \cA$, for contradiction we assume that after time-step $T_M$, for any $\bar{c}, \tilde{c} > 0$ there exists some state $s'$ and an infinite time sequence $\{t_i\}_{i=1}^{\infty}$ such that
\begin{equation}
\label{eq:violate}
1 - \pi_{\theta}^{(t)}(a^*(s') | s') > \tilde{c} e^{-\bar{c}t},\quad \forall t \in \{ t_i\}_{i=1}^{\infty}. 
\end{equation}
Define
\begin{align*}
    C \coloneqq \inf_{s' \in \cS, t \ge T} \Big\{ \frac{(1-\gamma)^2}{8} \cdot d^{\pi_\omega^{(t)}}_{\mu}(s') \cdot \pi_\omega^{(t)}(a^*(s')|s') \cdot (Q^{\pi_{\theta}^{(t)}}(s', a^*(s')) - \max_{a \neq {a^*(s')}} Q^{\pi_{\theta}^{(t)}}(s', a^*(s'))) \Big\},
\end{align*}
then we have $C > 0$ since $d^{\pi_\omega^{(t)}}_{\mu}(s') \ge (1-\gamma)\mu(s')$ by \Cref{lemma:lower_bound_of_state_visitation_distribution}, $\pi_\omega^{(t)}(a^*(s')|s') > \frac{1}{|\cA|}$ since $\theta_{s', a^*(s')} > \theta_{s', a}$ for all $a \neq a^*(s')$ by property (i) of \Cref{lemma: adaptive apg enter local concavity}, and $Q^{\pi_{\theta}^{(t)}}(s', a^*(s')) \ge V^{\pi_\theta^{(t)}}(s') > Q^{*}(s', a^*(s'))) > \max_{a \neq {a^*(s')}} Q^{\pi_{\theta}^{(t)}}(s', a^*(s'))$ by properties (ii) of \Cref{lemma: adaptive apg enter local concavity} and \Cref{lemma: Q^* a_2}.

By choosing $\tilde{c} = \frac{K}{C}$ and $\bar{c} = \beta$ and using the corresponding $s'$ and $\{t_i\}$, we have:
\begin{align}
    A^{\pi_{\theta}^{(t)}}(s', a^*(s'))
    &= Q^{\pi_{\theta}^{(t)}}(s', a^*(s')) - V^{\pi_{\theta}^{(t)}}(s') \nonumber \\
    &= Q^{\pi_{\theta}^{(t)}}(s', a^*(s')) - \sum_{a \in \cA} \pi_{\theta}^{(t)}(a(s')|s') Q^{\pi_{\theta}^{(t)}}(s', a) \nonumber \\
    &= (1-\pi_{\theta}^{(t)}(a^*(s')|s')) Q^{\pi_{\theta}^{(t)}}(s', a^*(s')) - \sum_{a \neq a^*(s')} \pi_{\theta}^{(t)}(a|s') {Q^{\pi_{\theta}^{(t)}}(s', a)} \nonumber \\
    &\ge (1-\pi_{\theta}^{(t)}(a^*(s')|s')) Q^{\pi_{\theta}^{(t)}}(s', a^*(s')) - \sum_{a \neq a^*(s')} \pi_{\theta}^{(t)}(a|s') \cdot \max_{a \neq {a^*}} Q^{\pi_{\theta}^{(t)}}(s', a) \nonumber \\
    &= (1-\pi_{\theta}^{(t)}(a^*(s')|s')) Q^{\pi_{\theta}^{(t)}}(s', a^*(s')) - (1-\pi_{\theta}^{(t)}(a^*|s')) \cdot \max_{a \neq {a^*(s')}} Q^{\pi_{\theta}^{(t)}}(s', a) \nonumber \\
    &= (1-\pi_{\theta}^{(t)}(a^*(s')|s')) \underbrace{(Q^{\pi_{\theta}^{(t)}}(s', a^*(s')) - \max_{a \neq {a^*(s')}} Q^{\pi_{\theta}^{(t)}}(s', a))}_{\text{bounded away from 0}} \nonumber \\
    &> \frac{K}{C\beta^t} \cdot (Q^{\pi_{\theta}^{(t)}}(s', a^*(s')) - \max_{a \neq {a^*(s')}} Q^{\pi_{\theta}^{(t)}}(s', a)). \label{eq:adaptive_apg_2}
\end{align}

By following (\ref{eq:adaptive_apg_2}), for all $t \in \{ t_i\}_{i=1}^{\infty}$, we have
\begin{align}
    &\theta_{s', a^*(s')}^{(t+1)} - \theta_{s', a^*(s')}^{(t)} \nonumber \\
    &= \underbrace{(\theta_{s', a^*(s')}^{(t+1)} - \omega_{s', a^*(s')}^{(t)})}_{\text{gradient}} + \underbrace{(\omega_{s', a^*(s')}^{(t)} - \theta_{s', a^*(s')}^{(t)})}_{\text{momentum}} \nonumber \\
    &= \min \Big\{K, \eta^{(t)} \nabla_{\theta}{V^{\pi_{\theta}}(\mu)} \Big\rvert_{\theta = \omega^{(t)}} \Big\} + (\omega_{s', a^*(s')}^{(t)} - \theta_{s', a^*(s')}^{(t)}) \nonumber \\
    &= \min \Big\{K, \eta^{(t)} \cdot \frac{1}{1-\gamma} \cdot d^{\pi_\omega^{(t)}}_{\mu}(s') \cdot \pi_\omega^{(t)}({a^*}|s') \cdot A^{\pi_\omega^{(t)}}(s',{a^*}) \Big\} + (\omega_{s', a^*(s')}^{(t)} - \theta_{s', a^*(s')}^{(t)}) \nonumber \\
    &= K + (\omega_{s', a^*(s')}^{(t)} - \theta_{s', a^*(s')}^{(t)}) \label{eq:adaptive_apg_2-1} \\
    &\ge K + \max_{a \neq a^*(s')} \Big\{ \omega_{s', a}^{(t)} - \theta_{s', a}^{(t)} \Big\} \label{eq:adaptive_apg_3},
\end{align}
where \cref{eq:adaptive_apg_2-1} holds by 
\begin{align*}
    &\eta^{(t)} \cdot \frac{1}{1-\gamma} \cdot d^{\pi_\omega^{(t)}}_{\mu}(s') \cdot \pi_\omega^{(t)}({a^*}|s') \cdot A^{\pi_\omega^{(t)}}(s',{a^*}) \\
    &> \beta^t \cdot \frac{(1-\gamma)^2}{8} \cdot d^{\pi_\omega^{(t)}}_{\mu}(s') \cdot \pi_\omega^{(t)}(a^*|s') \cdot \frac{K}{C\beta^t} \cdot (Q^{\pi_{\theta}^{(t)}}(s', a^*(s')) - \max_{a \neq {a^*}} Q^{\pi_{\theta}^{(t)}}(s', a^*(s'))) \\
    &= \frac{\frac{(1-\gamma)^2}{8} \cdot d^{\pi_\omega^{(t)}}_{\mu}(s') \cdot \pi_\omega^{(t)}(a^*(s')|s') \cdot (Q^{\pi_{\theta}^{(t)}}(s', a^*(s')) - \max_{a \neq {a^*(s')}} Q^{\pi_{\theta}^{(t)}}(s', a^*(s')))}{C} \cdot K \\
    &\ge K,
\end{align*}
and \cref{eq:adaptive_apg_3} holds by \Cref{lemma: adaptive apg enter local concavity}.

Now we discuss two possible cases:
\begin{itemize}
    \item The infinite time sequence $\{ t_i\}_{i=1}^{\infty}$ contains all $t > T_M$. (i.e., $\{ t_i\}_{i=1}^{\infty} = \{ T_M+1, T_M+2, T_M+3, \cdots \}$): \\
    In this case, by \cref{eq:adaptive_apg_3} and \Cref{lemma: adaptive apg enter local concavity}, we have that
    \begin{align*}
        \theta_{s', a^*(s')}^{(t+1)} - \theta_{s', a^*(s')}^{(t)} 
        &\ge K + \max_{a \neq a^*(s')} \Big\{ \omega_{s', a}^{(t)} - \theta_{s', a}^{(t)} \Big\} \\
        &> K + \max_{a \neq a^*(s')} \Big\{ \underbrace{(\theta_{s', a}^{(t+1)} - \omega_{s', a}^{(t)})}_{\text{gradient} < 0} \Big\} + \max_{a \neq a^*(s')} \Big\{ \underbrace{(\omega_{s', a}^{(t)} - \theta_{s', a}^{(t)})}_{\text{momentum}} \Big\} \\
        &\ge K + \max_{a \neq a^*(s')} \Big\{ \theta_{s', a}^{(t+1)} - \theta_{s', a}^{(t)} \Big\}
    \end{align*}
    This indicates a constant growth of the policy parameter $\theta_{s' ,a^*(s')}$. Consequently, by softmax parameterization, it follows that $1 - \pi_\theta^{(t)}(a^* | s') = O(e^{-\bar{c}t})$ for some $\bar{c} > 0$, thereby completing our contradiction.

    \item There exists at least one $(t_i-1) > T_M$ such that $(t_i-1) \notin \{ t_i\}_{i=1}^{\infty}$ (i.e., $1 - \pi_{\theta}^{(t_i-1)}(a^*(s) | s') \le \frac{K}{C \beta^{t_i-1}}$):\\
    In this case, we set $K = \frac{1}{4\sqrt{|\cA|}|\cS|} \cdot \frac{1-\gamma}{4 C_\infty - (1-\gamma)} + 2\ln \beta$ to ensure that the one step improvement $\pi_{\theta}^{(t_i+1)}(a^*(s) | s') - \pi_{\theta}^{(t_i)}(a^*(s) | s')$ is sufficiently large once we follow the update $\theta_{s', a^*(s')}^{(t_i+1)} = \omega_{s', a^*(s')}^{(t_i)} + K$ shown in \cref{eq:adaptive_apg_2-1}.
    By \Cref{lemma: adaptive apg enter local concavity}, we have
    \begin{align}
        &\pi_{\theta}^{(t+1)}(a^*(s) | s') \nonumber \\
        &= \frac{\exp(\theta_{s', a^*(s')}^{(t+1)})}{\sum_{a \neq a^*(s')} \exp(\theta_{s', a}^{(t+1)})} \nonumber \\
        &= \frac{\exp(\theta_{s', a^*(s')}^{(t)} + (\theta_{s', a^*(s')}^{(t+1)} - \omega_{s', a^*(s')}^{(t)}) + (\omega_{s', a^*(s')}^{(t)} - \theta_{s', a^*(s')}^{(t)}))}{\sum_{a \in \cA} \exp(\theta_{s', a}^{(t)} + (\theta_{s', a}^{(t+1)} - \omega_{s', a}^{(t)}) + (\omega_{s', a}^{(t)} - \theta_{s', a}^{(t)}))} \nonumber \\
        &= \frac{\exp(\theta_{s', a^*(s')}^{(t)})}{\sum_{a \in \cA} \exp(\theta_{s', a}^{(t)} + \underbrace{(\theta_{s', a}^{(t+1)} - \omega_{s', a}^{(t)}) + (\omega_{s', a}^{(t)} - \theta_{s', a}^{(t)}) - (\theta_{s', a^*(s')}^{(t+1)} - \omega_{s', a^*(s')}^{(t)}) - (\omega_{s', a^*(s')}^{(t)} - \theta_{s', a^*(s')}^{(t)})}_{\le 0})} \nonumber \\
        &\ge \frac{\exp(\theta_{s', a^*(s')}^{(t)})}{\sum_{a \neq a^*(s')} \exp(\theta_{s', a}^{(t)})} \nonumber \\
        &= \pi_{\theta}^{(t)}(a^*(s) | s'). \label{eq:adaptive_apg_3-1}
    \end{align}
    Following \cref{eq:adaptive_apg_3-1}, since $\pi_{\theta}^{(t)}(a^*(s) | s')$ is increasing for all $t > T_M$, we have
    \begin{align}
        \sum_{a \neq a^*(s')} \pi_{\theta}^{(t_i)}(a| s') = 1 - \pi_{\theta}^{(t_i)}(a^*(s) | s') < 1 - \pi_{\theta}^{(t_i-1)}(a^*(s) | s') \le \frac{K}{C \beta^{t_i-1}}.
        \label{eq:adaptive_apg_4}
    \end{align}
    Moreover, by \Cref{lemma: adaptive apg enter local concavity}, we have 
    \begin{align}
        \frac{\sum_{a \neq a^*(s')} \exp (\theta_{s', a}^{(t_i)})}{\exp(\frac{1}{4\sqrt{|\cA|}|\cS|} \cdot \frac{1-\gamma}{4 C_\infty - (1-\gamma)}) \exp (\theta_{s', a^*(s')}^{(t_i)})} \le \sum_{a \neq a^*(s')} \pi_{\theta}^{(t_i)}(a| s') \le \frac{K}{C \beta^{t_i-1}}, \label{eq:adaptive_apg_5}
    \end{align}
    where the first inequality in \cref{eq:adaptive_apg_5} holds by the setting of $M$.
    
    Combining \cref{eq:adaptive_apg_3,eq:adaptive_apg_4,eq:adaptive_apg_5}, we get
    \begin{align}
        1 - \pi_{\theta}^{(t_i+1)}(a^*(s) | s')
        &= \sum_{a \neq a^*(s')} \pi_{\theta}^{(t_i+1)}(a| s') \nonumber\\
        &= \frac{\sum_{a \neq a^*(s')} \exp(\theta_{s', a}^{(t_i)} + (\theta_{s', a}^{(t_i+1)} - \theta_{s', a}^{(t_i)}))}{\sum_{a \in \cA} \exp(\theta_{s', a}^{(t_i)} + (\theta_{s', a}^{(t_i+1)} - \theta_{s', a}^{(t_i)}))} \nonumber\\
        &= \frac{\sum_{a \neq a^*(s')} \exp(\theta_{s', a}^{(t_i)} + \overbrace{(\theta_{s', a}^{(t_i+1)} - \omega_{s', a}^{(t_i)})}^{<0} + \overbrace{(\omega_{s', a}^{(t_i)} - \theta_{s', a}^{(t_i)}) - (\omega_{s', a^*(s')}^{(t_i)} - \theta_{s', a^*(s')}^{(t_i)})}^{\le0})}{\sum_{a \in \cA} \exp(\theta_{s', a}^{(t_i)} + (\theta_{s', a}^{(t_i+1)} - \omega_{s', a}^{(t_i)}) + (\omega_{s', a}^{(t_i)} - \theta_{s', a}^{(t_i)}) - (\omega_{s', a^*(s')}^{(t_i)} - \theta_{s', a^*(s')}^{(t_i)}))} \nonumber\\ 
        &\le \frac{\sum_{a \neq a^*(s')} \exp(\theta_{s', a}^{(t_i)})}{\exp(\theta_{s', a^*(s')}^{(t_i)} + (\theta_{s', a^*(s')}^{(t_i+1)} - \omega_{s', a}^{(t_i)}) + (\omega_{s', a^*(s')}^{(t_i)} - \theta_{a^*(s')}^{(t_i)}) - (\omega_{s', a^*(s')}^{(t_i)} - \theta_{s', a^*(s')}^{(t_i)}))} \nonumber\\ 
        &\le \frac{\sum_{a \neq a^*(s')} \exp(\theta_{s', a}^{(t_i)}))}{ \exp(\theta_{s', a^*(s')}^{(t_i)} + (\theta_{s', a^*(s')}^{(t_i+1)} - \omega_{s', a^*(s')}^{(t_i)}))} \nonumber\\ 
        &= \frac{\sum_{a \neq a^*(s')} \exp(\theta_{s', a}^{(t_i)})}{ \exp(\theta_{s', a^*(s')}^{(t_i)})} \cdot \exp(-K) \nonumber\\
        &\le \frac{\exp(\frac{1}{2\sqrt{|\cA|}|\cS|} \cdot \frac{1-\gamma}{4 C_\infty - (1-\gamma)})K}{C \beta^{t_i-1}} \cdot \exp(-K) \nonumber\\
        &= \frac{K}{C \beta^{t_i+1}}, \label{eq:adaptive_apg_6}
    \end{align}
    where \cref{eq:adaptive_apg_6} holds by setting $K = \frac{1}{4\sqrt{|\cA|}|\cS|} \cdot \frac{1-\gamma}{4 C_\infty - (1-\gamma)} + 2\ln \beta$.


    Hence we get that once $1 - \pi_{\theta}^{(t)}(a^*(s) | s') > \frac{K}{C \beta^{t}}$ occurs for any $t \in \{ t_i\}_{i=1}^{\infty}$, we could ensure that $1 - \pi_{\theta}^{(t+1)}(a^*(s) | s') \le \frac{K}{C \beta^{t+1}}$ (i.e., $(t+1) \notin \{ t_i\}_{i=1}^{\infty}$ for all $t \in  \{ t_i\}_{i=1}^{\infty}$ with $t > t_i$),
    In other words, for all $t \in  \{ t_i\}_{i=1}^{\infty}$, we have $t-1 \notin  \{ t_i\}_{i=1}^{\infty}$ and hence $1 - \pi_{\theta}^{(t-1)}(a^*(s) | s') \le \frac{K}{C \beta^{t-1}}$.
    Combining with the monotonicity of $1 -\pi_{\theta}^{(t-1)}(a^*(s) | s')$ in \cref{eq:adaptive_apg_3-1}, we get $1 -\pi_{\theta}^{(t)}(a^*(s) | s') \le 1 -\pi_{\theta}^{(t-1)}(a^*(s) | s') \le \frac{K}{C \beta^{t-1}}$.
    We thus see that $1 - \pi_{\theta}^{(t)}(a^*(s) |
	s') \le \frac{K}{C \beta^{t-1}}$ for all $t > T_M$.
This means that for any such $s'$ that has a companion infinite sequence $\{t_i\}$ that satisfies \cref{eq:violate} with $\tilde c = K/C$ and $\bar c = \beta$, the condition \cref{eq:lineardecay} actually holds for $s'$ with $\tilde c = \beta K/C$ and $\bar c = \beta$, so for this setting of $\tilde c$ and $c$ we have obtained a contradiction and thus completed the proof.
\qedhere
\end{itemize}
\end{proof}

\subsection{Numerical Validation of the Linear Convergence Rate}
\begin{wrapfigure}{r}{0.4\textwidth}
\centering
  \includegraphics[width=0.35\textwidth]{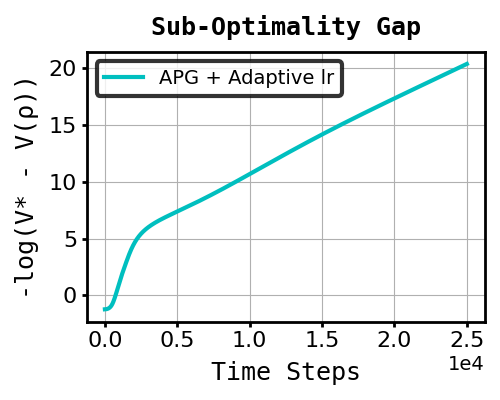}
  \hspace{-5mm}
  \caption{The sub-optimality gap of APG with time-varying
  step sizes under an MDP of five states and five actions with uniform initialization.}
  \label{exp:adaptiveapg}
\end{wrapfigure}

In this subsection, we empirically validate the convergence rate of
APG with an time-varying step sizes scheduling by conducting
experiments on an MDP with five states and five actions. This MDP is identical to the one discussed in \Cref{sec:disc:exp}, and detailed configuration information is provided in \Cref{app:experiment}.
The figure illustrating the logarithmic sub-optimality gap of APG with
an time-varying step sizes scheduling under uniform
initialization is presented in \Cref{exp:adaptiveapg}.
It is evident that after ten iterations, the curve tends to form a
linear pattern.
This behavior aligns with the linear convergence rate demonstrated in \Cref{theorem: adaptive apg convergence rate}.

\endgroup

\newpage
\section{Detailed Explanation of the Motivating Example and the Experimental Configurations}
\label{app:experiment}

\begingroup
\allowdisplaybreaks

\subsection{Motivating Examples of APG (\texorpdfstring{\Cref{sec:alg:behavior}}{})}
\label{subsec: Motivating Examples of APG}

Consider a simple two-action bandit with actions $a^*$ and $a_2$ and a reward function $r(a^*) = 1, r(a_2) = 0$. Accordingly, the objective we aim to optimize is $\mathbb{E}_{a \sim \pi_\theta} [r(a)] = \pi_{\theta}(a^*)$. 
Under softmax parameterization, the Hessian matrix with
respect to the parameters $\theta_{a^*}$ and $\theta_{a_2}$ is:
\begin{align*}
    \mathbf{H}& = \begin{bmatrix}
    \frac{\partial^2 \pi_{\theta}(a^*)}{\partial \theta_{a^*} \partial \theta_{a^*}} & \frac{\partial^2 \pi_{\theta}(a^*)}{\partial \theta_{a^*} \partial \theta_{a_2}}\\
    \frac{\partial^2 \pi_{\theta}(a^*)}{\partial \theta_{a_2} \partial \theta_{a^*}} & \frac{\partial^2 \pi_{\theta}(a^*)}{\partial \theta_{a_2} \partial \theta_{a_2}}
    \end{bmatrix}\\
    &=\begin{bmatrix}
    \pi_\theta(a^*)(1-\pi_\theta(a^*))(1-2\pi_\theta(a^*)) & \pi_\theta(a^*)(1-\pi_\theta(a^*))(2\pi_\theta(a^*)-1)\\
    \pi_\theta(a^*)(1-\pi_\theta(a^*))(2\pi_\theta(a^*)-1) & \pi_\theta(a^*)(1-\pi_\theta(a^*))(1-2\pi_\theta(a^*))
\end{bmatrix}.
\end{align*}
After straightforward calculation, we see that its eigenvalues are
$\lambda_1 = 2\pi_\theta(a^*)(1 - \pi_\theta(a^*))(1 - 2\pi_\theta(a^*)), \lambda_2 = 0$.
Therefore, when $\pi_\theta(a^*) \ge 0.5$, we see that $\lambda_1, \lambda_2 \le
0$ and thus the Hessian is negative semi-definite, which is equivalent
to that the function $\mathbb{E}_{a \sim \pi_\theta} [r(a)]$ is
concave.

%



%
\subsection{Setting of the Experiment for Numerical Validation of the Convergence Rates of APG (\texorpdfstring{\Cref{sec:disc:exp}}{})}
\label{subsec: Experiement for Convergence Rates of APG}

\textbf{(Bandit)}
We conduct a three-action bandit experiment with actions $\cA = [a^*, a_2,
a_3]$ with the corresponding rewards being $r = [r(a^*), r(a_2), r(a_3)] = [1, 0.99, 0]$.
We consider two initializations for the policy parameters. The first
one is the uniform initialization ($\theta^{(0)}=[0,0,0], \pi^{(0)} =
[1/3, 1/3, 1/3]$) and the second one is a hard initialization
such that $\theta^{(0)}=[1,3,5],
\pi^{(0)}=[0.01588, 0.11731, 0.86681]$ and hence the optimal action
has the smallest initial probability.

\textbf{(MDP)}
We conduct an experiment on an MDP with five states and five actions
under the initial state distribution $\rho = [0.3, 0.2, 0.1, 0.15,
0.25]$. The reward function, the initial policy parameters, and the
transition probability can be found in Tables
\ref{exp:5s5a-reward}-\ref{exp:5s5a-tran-5}.

\begin{table}[ht]
\begin{minipage}[t]{0.9\linewidth}\centering
\medskip
\vskip 0.15in
\begin{center}
\begin{small}
\begin{sc}
\begin{tabular}{|c|c|c|c|c|c|}
\hline
\textbf{$r(s,a)$} & \textbf{$a_1$} & \textbf{$a_2$} & \textbf{$a_3$} & \textbf{$a_4$} & \textbf{$a_5$} \\ \hline
\textbf{$s_1$}  & $1.0$          & $0.8$          & $0.6$          & $0.7$          & $0.4$          \\ \hline
\textbf{$s_2$}  & $0.5$          & $0.3$          & $0.1$          & $1.0$          & $0.6$          \\ \hline
\textbf{$s_3$}  & $0.6$          & $0.9$          & $0.8$          & $0.7$          & $1.0$          \\ \hline
\textbf{$s_4$}  & $0.1$          & $0.2$          & $0.6$          & $0.7$          & $0.4$          \\ \hline
\textbf{$s_5$}  & $0.8$          & $0.4$          & $0.6$          & $0.2$          & $0.9$          \\ \hline
\end{tabular}
\end{sc}
\end{small}
\caption{Experimental settings: Reward function.}
\label{exp:5s5a-reward}
\end{center}
\end{minipage}\hfill%
\end{table}

\begin{table}[ht]
\begin{minipage}[t]{0.45\linewidth}\centering
\medskip
\label{exp:5s5a-hard-theta}
\vskip 0.15in
\begin{center}
\begin{small}
\begin{sc}
\begin{tabular}{|c|c|c|c|c|c|}
\hline
\textbf{$\theta_{s,a}^{(0)}$} & \textbf{$a_1$} & \textbf{$a_2$} & \textbf{$a_3$} & \textbf{$a_4$} & \textbf{$a_5$} \\ \hline
\textbf{$s_1$}          & $1$            & $2$            & $3$            & $4$            & $5$            \\ \hline
\textbf{$s_2$}          & $3$            & $4$            & $5$            & $1$            & $2$            \\ \hline
\textbf{$s_3$}          & $5$            & $2$            & $3$            & $4$            & $1$            \\ \hline
\textbf{$s_4$}          & $5$            & $4$            & $2$            & $1$            & $3$            \\ \hline
\textbf{$s_5$}          & $2$            & $4$            & $3$            & $5$            & $1$            \\ \hline
\end{tabular}
\end{sc}
\end{small}
\caption{Experimental settings: Hard initialization.}
\end{center}
\vskip -0.1in
\end{minipage}\hfill
\begin{minipage}[t]{0.45\linewidth}\centering
\medskip
\label{exp:5s5a-uniform-theta}
\vskip 0.15in
\begin{center}
\begin{small}
\begin{sc}
\begin{tabular}{|c|c|c|c|c|c|}
\hline
\textbf{$\theta_{s,a}^{(0)}$} & \textbf{$a_1$} & \textbf{$a_2$} & \textbf{$a_3$} & \textbf{$a_4$} & \textbf{$a_5$} \\ \hline
\textbf{$s_1$}          & $0$            & $0$            & $0$            & $0$            & $0$            \\ \hline
\textbf{$s_2$}          & $0$            & $0$            & $0$            & $0$            & $0$            \\ \hline
\textbf{$s_3$}          & $0$            & $0$            & $0$            & $0$            & $0$            \\ \hline
\textbf{$s_4$}          & $0$            & $0$            & $0$            & $0$            & $0$            \\ \hline
\textbf{$s_5$}          & $0$            & $0$            & $0$            & $0$            & $0$            \\ \hline
\end{tabular}
\end{sc}
\end{small}
\caption{Experimental settings: Uniform initialization.}
\end{center}
\vskip -0.1in
\end{minipage}\hfill
\end{table}

\newpage
\begin{table}[ht]
\begin{minipage}[t]{0.5\linewidth}\centering
\medskip
\label{exp:5s5a-tran-1}
\vskip 0.15in
\begin{center}
\begin{small}
\begin{sc}
\begin{tabular}{|c|c|c|c|c|c|}
\hline
\textbf{$P(s | s_1, a)$} & \textbf{$a_1$} & \textbf{$a_2$} & \textbf{$a_3$} & \textbf{$a_4$} & \textbf{$a_5$} \\ \hline
\textbf{$s_1$}           & $0.1$          & $0.6$          & $0.5$          & $0.4$          & $0.2$          \\ \hline
\textbf{$s_2$}           & $0.5$          & $0.1$          & $0.1$          & $0.3$          & $0.1$          \\ \hline
\textbf{$s_3$}           & $0.1$          & $0.1$          & $0.1$          & $0.1$          & $0.1$          \\ \hline
\textbf{$s_4$}           & $0.2$          & $0.1$          & $0.2$          & $0.1$          & $0.1$          \\ \hline
\textbf{$s_5$}           & $0.1$          & $0.1$          & $0.1$          & $0.1$          & $0.5$          \\ \hline
\end{tabular}
\end{sc}
\end{small}
\caption{Experimental settings: Transition probability $P(\cdot | s_1,
\cdot)$.}
\end{center}
\vskip -0.1in
\end{minipage}\hfill%
\begin{minipage}[t]{0.5\linewidth}\centering
\medskip
\label{exp:5s5a-tran-2}
\vskip 0.15in
\begin{center}
\begin{small}
\begin{sc}
\begin{tabular}{|c|c|c|c|c|c|}
\hline
\textbf{$P(s | s_2, a)$} & \textbf{$a_1$} & \textbf{$a_2$} & \textbf{$a_3$} & \textbf{$a_4$} & \textbf{$a_5$} \\ \hline
\textbf{$s_1$}           & $0.1$          & $0.4$          & $0.1$          & $0.4$          & $0.2$          \\ \hline
\textbf{$s_2$}           & $0.5$          & $0.1$          & $0.4$          & $0.1$          & $0.2$          \\ \hline
\textbf{$s_3$}           & $0.2$          & $0.2$          & $0.3$          & $0.1$          & $0.2$          \\ \hline
\textbf{$s_4$}           & $0.1$          & $0.2$          & $0.1$          & $0.1$          & $0.2$          \\ \hline
\textbf{$s_5$}           & $0.1$          & $0.1$          & $0.1$          & $0.3$          & $0.2$          \\ \hline
\end{tabular}
\end{sc}
\end{small}
\caption{Experimental settings: Transition probability $P(\cdot | s_2,
\cdot)$.}
\end{center}
\vskip -0.1in
\end{minipage}\hfill
\end{table}

\begin{table}[ht]
\begin{minipage}[t]{0.5\linewidth}\centering
\medskip
\label{exp:5s5a-tran-3}
\vskip 0.15in
\begin{center}
\begin{small}
\begin{sc}
\begin{tabular}{|c|c|c|c|c|c|}
\hline
\textbf{$P(s | s_3, a)$} & \textbf{$a_1$} & \textbf{$a_2$} & \textbf{$a_3$} & \textbf{$a_4$} & \textbf{$a_5$} \\ \hline
\textbf{$s_1$}           & $0.6$          & $0.2$          & $0.3$          & $0.1$          & $0.2$          \\ \hline
\textbf{$s_2$}           & $0.1$          & $0.4$          & $0.3$          & $0.4$          & $0.1$          \\ \hline
\textbf{$s_3$}           & $0.1$          & $0.1$          & $0.2$          & $0.3$          & $0.1$          \\ \hline
\textbf{$s_4$}           & $0.1$          & $0.2$          & $0.1$          & $0.1$          & $0.1$          \\ \hline
\textbf{$s_5$}           & $0.1$          & $0.1$          & $0.1$          & $0.1$          & $0.5$          \\ \hline
\end{tabular}
\end{sc}
\end{small}
\caption{Experimental settings: Transition probability $P(\cdot | s_3,
\cdot)$.}
\end{center}
\vskip -0.1in
\end{minipage}\hfill%
\begin{minipage}[t]{0.5\linewidth}\centering
\medskip
\label{exp:5s5a-tran-4}
\vskip 0.15in
\begin{center}
\begin{small}
\begin{sc}
\begin{tabular}{|c|c|c|c|c|c|}
\hline
\textbf{$P(s | s_4, a)$} & \textbf{$a_1$} & \textbf{$a_2$} & \textbf{$a_3$} & \textbf{$a_4$} & \textbf{$a_5$} \\ \hline
\textbf{$s_1$}            & $0.6$          & $0.1$          & $0.2$          & $0.4$          & $0.5$          \\ \hline
\textbf{$s_2$}            & $0.1$          & $0.5$          & $0.1$          & $0.3$          & $0.1$          \\ \hline
\textbf{$s_3$}            & $0.1$          & $0.1$          & $0.1$          & $0.1$          & $0.1$          \\ \hline
\textbf{$s_4$}            & $0.1$          & $0.2$          & $0.1$          & $0.1$          & $0.2$          \\ \hline
\textbf{$s_5$}            & $0.1$          & $0.1$          & $0.5$          & $0.1$          & $0.1$          \\ \hline
\end{tabular}
\end{sc}
\end{small}
\caption{Experimental settings: Transition probability $P(\cdot | s_4,
\cdot)$.}
\end{center}
\vskip -0.1in
\end{minipage}\hfill
\end{table}

\begin{table}[!ht]
\begin{minipage}[t]{1.0\linewidth}\centering
\medskip
\centering
\vskip 0.15in
\begin{center}
\begin{small}
\begin{sc}
\begin{tabular}{|c|c|c|c|c|c|}
\hline
\textbf{$P(s | s_5, a)$} & \textbf{$a_1$} & \textbf{$a_2$} & \textbf{$a_3$} & \textbf{$a_4$} & \textbf{$a_5$} \\ \hline
\textbf{$s_1$}           & $0.2$          & $0.4$          & $0.4$          & $0.1$          & $0.2$          \\ \hline
\textbf{$s_2$}           & $0.2$          & $0.1$          & $0.1$          & $0.4$          & $0.5$          \\ \hline
\textbf{$s_3$}           & $0.2$          & $0.2$          & $0.1$          & $0.2$          & $0.1$          \\ \hline
\textbf{$s_4$}           & $0.2$          & $0.2$          & $0.3$          & $0.1$          & $0.1$          \\ \hline
\textbf{$s_5$}           & $0.2$          & $0.1$          & $0.1$          & $0.2$          & $0.1$          \\ \hline
\end{tabular}
\end{sc}
\end{small}
\caption{Experimental settings: Transition probability $P(\cdot | s_5,
\cdot)$.}
\label{exp:5s5a-tran-5}
\end{center}
\vskip -0.1in
\end{minipage}
\end{table}

\subsection{APG on Atari 2600 Games (\texorpdfstring{\Cref{sec:disc:atari}}{})} 
We empirically evaluate the performance of APG on four Atari 2600 games from the Arcade Learning Environment (ALE) \citep{bellemare2013arcade}.


Our implementation for APG, HBPG, and PG are based on the code base of
Stable Baselines3 and RL Baselines3 Zoo \citep{rl-zoo3,
stable-baselines3}. The hyperparameters, exactly set to their default values, are detailed in \Cref{table:hyper}.
Pseudo code of HBPG is given in \Cref{algorithm:HB}.

In Atari, one major challenge of implementing the exact APG is how to get a good estimate of the true gradient $\nabla_\theta V^{\pi_\theta}(\mu)$ given the large state and action spaces of the Atari games. While it is theoretically possible to get a nearly true gradient by Monte-Carlo estimation, it is not feasible in practice due to the resulting prohibitively large computation time. To address this, we made a few design choices in the practical implementation:
\begin{itemize}
    \item To ensure a sufficiently accurate gradient estimate with a reasonable amount of training time, we use a moderately large batch size of 80 for APG (also for PG and HBPG for a fair comparison) and leverage RMSProp, which is designed to better average the stochastic gradients over successive mini-batches than the vanilla stochastic gradient descent.
    \item To estimate the advantage function (required in the policy gradient), we use the generalized advantage estimator (GAE) \citep{schulman2015high}, which is a popular technique for advantage estimation in deep RL.
\end{itemize}


\begin{table}[!tb]
\vskip 0.15in
\begin{center}
\begin{tabular}{lccc}
\toprule
Hyperparameters & APG (Ours) & HBPG & PG \\
\midrule
Batch Size & 80 & 80 & 80 \\
Entropy coefficient & 0.01 & 0.01 & 0.01 \\
Gamma (discount factor) & 0.99 & 0.99 & 0.99 \\
Lambda for GAE & 1.0 & 1.0 & 1.0 \\
Learning rate & $0.0007$ & $0.0007$ & $0.0007$ \\
Number of environments & 16 & 16 & 16 \\
Time steps & $10^7$ & $10^7$ & $10^7$ \\
Value function coefficient & 0.25 & 0.25 & 0.25 \\
\bottomrule
\end{tabular}
\end{center}
\vskip -0.1in
\caption{Hyperparameters of APG, HBPG and PG in Carnival, Pong,
Riverraid, and Seaquest.}
\label{table:hyper}
\end{table}

\begin{algorithm} [!tb]
	\caption{Heavy-Ball Policy Gradient (HBPG)}
	\label{algorithm:HB}
	\begin{algorithmic}

\STATE \textbf{Input}:
Step size $\eta = \frac{1}{L}$, where $L$ is the Lipschitz constant of
the gradient of the objective function $f$, momentum factor $\beta$.
\STATE \textbf{Initialize}:
$\theta^{(0)}$ and let $\theta^{(-1)} = \theta^{(0)}$.

\FOR{$t = 1$ to $T$} 
    \STATE
    $\theta^{(t)} = \theta^{(t-1)} + \eta \nabla_\theta {V^{\pi_{\theta}}(\mu)} \Big\rvert_{\theta = \theta^{(t-1)}} +
	\beta(\theta^{(t-1)} - \theta^{(t-2)})$
\ENDFOR
	\end{algorithmic}
\end{algorithm}

\endgroup

\clearpage
\begingroup
\allowdisplaybreaks

\section{Convergence Rate of Policy Gradient Under Softmax Parameterization}
\label{app:pg}

The Gradient Descent (GD) method within the realm of optimization exhibits a convergence rate of $O(1/t)$ when applied to convex objectives.
Remarkably, \citep{mei2020global} have demonstrated that the Policy Gradient (PG) method, even when dealing with nonconvex RL objectives, can achieve the same convergence rate, leveraging the non-uniform Polyak–Łojasiewicz (PL) condition under softmax parameterization.
In this paper, we employ a primary proof technique involving the characterization of local $C$-nearly concavity and subsequently establish that the Accelerated Policy Gradient (APG) method attains this convergence regime. Our investigation also reveals that the PG method with softmax parameterization shares this intriguing property.

In this section, we will begin by providing a proof demonstrating the $\tilde{O}(1/t)$ convergence rate for \hyperref[algorithm:PG]{PG} when optimizing the value objectives that are $C$-nearly concave, while employing the softmax parameterized policy. Additionally, we will offer a proof establishing the capability of PG to reach the local $C$-nearly concavity regime within a finite number of time steps, considering the general MDP setting.

\subsection{Convergence Rate Under Nearly Concave Objectives for Policy Gradient}






\begin{theorem}
\label{theorem:GD_convergence_rate}
Let $\left \{ \theta^{(t)} \right \}_{t \ge 1}$ be computed by \hyperref[algorithm:PG]{PG} and that $V^{\pi_\theta}(\mu)$ is $C$-nearly
concave along the gradient update at time $t$ with a constant $C > 1$ for every $t \ge 1$.
Then given any $T \ge 1$, and for any $\theta^{**}$ such that for all $t \le T$ the direction $\theta^{**} - \theta^{(t)}$ satisfies the $C$-nearly concavity and $V^{\pi_{\theta^{**}}}(\mu) \ge V^{\pi_{\theta}^{(t)}}(\mu)$, we have
\begin{align*}
    V^{\pi_{\theta^{**}}}(\mu) - V^{\pi_{\theta}^{(T)}}(\mu) \le \frac{2 L C^2 \left( \left \| \theta^{**} \right \| + \kappa \ln T \right)^2}{T-1}, 
\end{align*}
where $L$ is the Lipschitz constant of the objective and $\kappa \in \mathbb{R}$ is a finite constant.
\end{theorem}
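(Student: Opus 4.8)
\textbf{Proof proposal for \Cref{theorem:GD_convergence_rate}.}

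The plan is to mirror the classical analysis of gradient descent for convex objectives, but with every use of convexity replaced by the $C$-near concavity hypothesis, and with the unbounded nature of the softmax optimum handled through a logarithmically growing surrogate parameter. First I would record the smoothness (ascent) inequality: since $\theta \mapsto V^{\pi_\theta}(\mu)$ is $L$-smooth (\Cref{lemma:mdp_smoothness} with $L = 8/(1-\gamma)^3$), the PG update $\theta^{(t+1)} = \theta^{(t)} + \eta \nabla_\theta V^{\pi_\theta}(\mu)|_{\theta=\theta^{(t)}}$ with $\eta = 1/L$ gives the standard one-step progress bound
\begin{align*}
V^{\pi_\theta^{(t+1)}}(\mu) \ge V^{\pi_\theta^{(t)}}(\mu) + \frac{1}{2L}\big\lVert \nabla_\theta V^{\pi_\theta}(\mu)\big|_{\theta=\theta^{(t)}}\big\rVert^2 .
\end{align*}
Next I would use $C$-near concavity at $\theta^{(t)}$ along the direction $\theta^{**}-\theta^{(t)}$ (which holds by hypothesis) to relate the suboptimality gap $\delta_t := V^{\pi_{\theta^{**}}}(\mu) - V^{\pi_\theta^{(t)}}(\mu)$ to the gradient norm: $\delta_t \le C\langle \nabla_\theta V^{\pi_\theta}(\mu)|_{\theta=\theta^{(t)}}, \theta^{**}-\theta^{(t)}\rangle \le C \lVert \nabla_\theta V^{\pi_\theta}(\mu)|_{\theta=\theta^{(t)}}\rVert\, \lVert \theta^{**}-\theta^{(t)}\rVert$.

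The key bookkeeping step is to control $\lVert \theta^{**}-\theta^{(t)}\rVert$ uniformly in $t$. Unlike the classical proof, where $\lVert \theta^{**}-\theta^{(t)}\rVert$ is monotonically nonincreasing, here I expect the iterates $\theta^{(t)}$ to have coordinates that grow (the optimal-action logits drift to $+\infty$). So I would instead choose $\theta^{**}$ itself to be a surrogate optimal parameter with logarithmically growing optimal-action logits — exactly as in the proof of \Cref{theorem: MDP convergence rate}, where $\theta^{**(t)}_{s,\cdot} = [2\ln t, 0, \dots, 0]$ — giving $\lVert\theta^{**}\rVert = O(\ln T)$, hence the $\kappa \ln T$ term in the statement. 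Then I would argue (using \Cref{lemma:sum conservation}-type conservation for PG, \Cref{lemma:softmax_pg}, and the fact that under PG the logit of the optimal action grows at most logarithmically once the policy has entered the locally near-concave/feasible regime) that $\lVert \theta^{(t)}\rVert \le \kappa' \ln t$ for some finite constant and all $t$, so that $\lVert \theta^{**}-\theta^{(t)}\rVert \le \lVert\theta^{**}\rVert + \kappa' \ln t \le \lVert\theta^{**}\rVert + \kappa \ln T$ for $t \le T$. Combining, for every $t \le T$,
\begin{align*}
\big\lVert \nabla_\theta V^{\pi_\theta}(\mu)\big|_{\theta=\theta^{(t)}}\big\rVert \ge \frac{\delta_t}{C\,(\lVert\theta^{**}\rVert + \kappa \ln T)} .
\end{align*}

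Finally I would plug this into the descent inequality to get $\delta_{t+1} \le \delta_t - \frac{\delta_t^2}{2LC^2(\lVert\theta^{**}\rVert+\kappa\ln T)^2}$, using monotonicity $\delta_{t+1}\le \delta_t$ (guaranteed since the PG objective is monotonically nondecreasing and $V^{\pi_{\theta^{**}}}(\mu)\ge V^{\pi_\theta^{(t)}}(\mu)$ by hypothesis). Then the standard telescoping/reciprocal trick — dividing by $\delta_t\delta_{t+1}$ and summing $\frac{1}{\delta_{t+1}} - \frac{1}{\delta_t} \ge \frac{1}{2LC^2(\lVert\theta^{**}\rVert+\kappa\ln T)^2}$ from $t=1$ to $T-1$ — yields $\frac{1}{\delta_T} \ge \frac{T-1}{2LC^2(\lVert\theta^{**}\rVert+\kappa\ln T)^2}$, i.e. the claimed bound. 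The main obstacle I anticipate is the rigorous justification of $\lVert\theta^{(t)}\rVert = O(\ln t)$: one has to show that the PG iterates do not accumulate large logits on suboptimal actions (which follows from the negativity of the corresponding gradient components once the policy is near-optimal, analogous to \Cref{lemma:Is- bounded}) and that the optimal-action logit grows no faster than logarithmically, which is where the $C$-near concavity and the monotone entry into the feasible update domain $\mathcal{U}$ must be invoked; absorbing all such contributions into a single finite $\kappa$ is the delicate part.
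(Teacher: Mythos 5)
Your proposal follows essentially the same route as the paper's proof: the smoothness ascent inequality, the $C$-near-concavity bound $\delta_t \le C\lVert\nabla V\rVert\,\lVert\theta^{**}-\theta^{(t)}\rVert$, the reciprocal telescoping via monotonicity of PG, and finally the $O(\ln t)$ bound on $\lVert\theta^{(t)}\rVert$ (which the paper establishes by contradiction, showing that once the optimal-action logits exceed $2\ln t$ the gradient components decay like $1/t^2$ and hence the remaining parameter movement is summable). You correctly identify the logarithmic parameter-growth bound as the delicate step and gesture at the right ingredients, so the proposal is sound and matches the paper's argument.
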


\begin{proof}[Proof of \Cref{theorem:GD_convergence_rate}]  
\phantom{}
Initially, given any $T$, by \cref{eq:PG} and \Cref{lemma:mdp_smoothness}, for any $t < T$, we have
\begin{align}
    -V^{\pi_{\theta}^{(t+1)}}(\mu)
    &\le -V^{\pi_{\theta}^{(t)}}(\mu) - \left \langle \nabla_{\theta}{V^{\pi_{\theta}}(\mu)} \Big\rvert_{\theta = \theta^{(t)}}, \theta^{(t+1)} - \theta^{(t)} \right \rangle + \frac{L}{2}\left \| \theta^{(t+1)} - \theta^{(t)} \right \|^2 \nonumber \\ 
    &= -V^{\pi_{\theta}^{(t)}}(\mu) - \eta \left \| \nabla_{\theta}{V^{\pi_{\theta}}(\mu)} \Big\rvert_{\theta = \theta^{(t)}} \right \|^2 + \frac{L\eta^2}{2}\left \| \nabla_{\theta}{V^{\pi_{\theta}}(\mu)} \Big\rvert_{\theta = \theta^{(t)}} \right \|^2 \nonumber \\
    &= -V^{\pi_{\theta}^{(t)}}(\mu) - \frac{1}{2L}\left \| \nabla_{\theta}{V^{\pi_{\theta}}(\mu)} \Big\rvert_{\theta = \theta^{(t)}} \right \|^2 \label{eq:GD_convergence_rate_1}. 
\end{align}
Additionally, by the hypothesis of the nearly concavity of objective along the direction to $\theta^{**}$, we have
\begin{align}
    -V^{\pi_{\theta}^{(t)}}(\mu) \le -V^{\pi_{\theta}^{**}}(\mu) + C \cdot \left \langle \nabla_{\theta}{V^{\pi_{\theta}}(\mu)} \Big\rvert_{\theta = \theta^{(t)}}, \theta^{**}-\theta^{(t)} \Big\rangle \right.. \label{eq:GD_convergence_rate_2}
\end{align}
Define
\begin{align*}
    \delta^{(t)} := V^{\pi_{\theta}^{**}}(\mu)-V^{\pi_{\theta}^{(t)}}(\mu).
\end{align*}
Then \cref{eq:GD_convergence_rate_1} leads to
\begin{align}
    \delta^{(t+1)} - \delta^{(t)} \le - \frac{1}{2L}\left \| \nabla_{\theta}{V^{\pi_{\theta}}(\mu)} \Big\rvert_{\theta = \theta^{(t)}} \right \|^2. \label{eq:GD_convergence_rate_3}
\end{align}
Moreover, \cref{eq:GD_convergence_rate_2} leads to
\begin{align}
    \delta^{(t)} 
    &\le C \cdot \left \langle \nabla_{\theta}{V^{\pi_{\theta}}(\mu)} \Big\rvert_{\theta = \theta^{(t)}}, \theta^{**}-\theta^{(t)} \right \rangle \nonumber \\
    &\le C \left \| \nabla_{\theta}{V^{\pi_{\theta}}(\mu)} \Big\rvert_{\theta = \theta^{(t)}} \right \| \left \| \theta^{**}-\theta^{(t)} \right \| \nonumber \\
    &\le C \left \| \nabla_{\theta}{V^{\pi_{\theta}}(\mu)} \Big\rvert_{\theta = \theta^{(t)}} \right \| \left( \left \| \theta^{**} \right \| + \left \| \theta^{(t)}  \right \|\right) \nonumber \\
    &\le C \left \| \nabla_{\theta}{V^{\pi_{\theta}}(\mu)} \Big\rvert_{\theta = \theta^{(t)}} \right \| \left( \left \| \theta^{**} \right \| + \max_{t' \le t} \left \| \theta^{(t')}  \right \|\right). \label{eq:GD_convergence_rate_4}
\end{align}
By substituting and rearranging \cref{eq:GD_convergence_rate_3} and \cref{eq:GD_convergence_rate_4}, we have
\begin{align}
    \delta^{(t+1)} &\le \delta^{(t)} - \frac{1}{2 L C^2 \left( \left \| \theta^{**} \right \| + \max_{t'\le t} \left \| \theta^{(t')} \right \| \right)^2} \cdot {\delta^{(t)}}^2. \label{eq:GD_convergence_rate_15}
\end{align}
Consequently, by rearranging and dividing $\delta^{(t)}\delta^{(t+1)}$ on each side of \cref{eq:GD_convergence_rate_15}, we have
\begin{align}
    \frac{1}{\delta^{(t+1)}} &\ge \frac{1}{\delta^{(t)}} + \frac{\delta^{(t)}}{\left( 2 L C^2 \left( \left \| \theta^{**} \right \| + \max_{t' \le t} \left \| \theta^{(t')} \right \| \right)^2 \right) \delta^{(t+1)}}. \nonumber
\end{align}
Furthermore, since PG enjoys monotonic improvement, we have
\begin{align}
    \frac{1}{\delta^{(t+1)}} - \frac{1}{\delta^{(t)}} &\ge \frac{\delta^{(t)}}{\delta^{(t+1)}} \frac{1}{2 L C^2 \left( \left \| \theta^{**} \right \| + \max_{t' \le t} \left \| \theta^{(t')} \right \| \right)^2} 
    \ge \frac{1}{2 L C^2 \left( \left \| \theta^{**} \right \| + \max_{t' \le T} \left \| \theta^{(t')} \right \| \right)^2}. \label{eq:GD_convergence_rate_5}
\end{align}
By implementing telescoping on \cref{eq:GD_convergence_rate_5} from $t = 1$ to $T-1$, we have
\begin{align}
    \delta^{(T)} = V^{\pi_{\theta}^{**}}(\mu)-V^{\pi_{\theta}^{(T)}}(\mu) \le \frac{2 L C^2 \left( \left \| \theta^{**} \right \| + \max_{t' \le T} \left \| \theta^{(t')} \right \| \right)^2}{T-1}. \label{eq:GD_convergence_rate_5_1}
\end{align}

Finally, we claim that the norm of $\theta^{(t)}$ grows in a rate of $O(\ln t)$. Without loss of generality, we assume that $\sum_{a \in \cA} \theta^{(t)}_{s,a} = 0$ for all $s \in \cS$ and $t\ge 1$ (the gradient updates sum up to $0$). Now we prove it by contradiction. Assume that the norm of $\theta^{(t)}$ does not grow in a rate of $O(\ln t)$, then for any $M \in \mathbb{R}$, there must exist a finite time $T_1$ such that ${\left \| \theta^{(t)} \right \|} / {\ln t} > M$ for all $t > T_1$. Moreover, by the asymptotic convergence results (Theorem 5.1) of \citep{agarwal2021theory} and \Cref{assump:unique_optimal}, we have that there exist an upper bound $\bar{M}$ and a finite time $T_2$ such that
\begin{align}
    \theta^{(t)}_{s,a} &< \bar{M}, \text{ for all } a \neq a^*(s), s \in \cS, t \in \mathbb{N}, \label{eq:GD_convergence_rate_6} \\
    \theta^{(t)}_{s,a^*(s)} &> \theta^{(t)}_{s,a}, \text{ for all } a \neq a^*(s), s \in \cS, t > T_2. \label{eq:GD_convergence_rate_7}
\end{align}

Hence, by choosing $M = 2|\cA|$, then we have
\begin{align}
    \frac{\left \| \theta^{(t)} \right \|}{\ln t} > 2|\cA|, \text{ for all } t > \max \{ T_1, T_2 \}. \label{eq:GD_convergence_rate_8}
\end{align}
Note that by \cref{eq:GD_convergence_rate_6,eq:GD_convergence_rate_7} and the fact that ${\left \| \theta^{(t)} \right \|} > 2|\cA|\cdot {\ln t}$, we have that $\theta_{s, a^*(s)} > 2\ln t$. Hence, for all $t > \max \{ T_1, T_2 \}, s \in \cS$, we have that
\begin{align*}
    1 - \pi_\theta^{(t)}(a^*(s)|s) &\le \frac{\sum_{a \neq a^*(s)} \exp(\theta_{s, a})}{\exp(\theta_{s, a^*(s)}) + \sum_{a\neq a^*(s)} \exp(\theta_{s, a})} \le \frac{(|\cA| - 1) \exp(\bar{M})}{\exp(2 \ln(t))} = \frac{(|\cA| - 1) \exp(\bar{M})}{t^2},
\end{align*}
where the last inequality holds by the positivity of the exponential function $\exp$.
Therefore, we can obtain the bound of the gradient with respect to $\theta_{s, a^*(s)}$
\begin{align}
    \left | \frac{\partial{V^{\pi_{\theta}^{(t)}}(\mu)}}{\partial \theta_{s,a^*(s)}} \right | 
    &= \left | \frac{1}{1-\gamma} \cdot d^{\pi_\theta^{(t)}}_{\mu}(s) \cdot \pi_\theta^{(t)}(a^*(s)|s) \cdot A^{\pi_\theta^{(t)}}(s,a^*(s)) \right | \nonumber \\
    &\le \left | \frac{1}{1-\gamma} \cdot 1 \cdot 1 \cdot (Q^{\pi_\theta^{(t)}}(s,a^*(s)) - V^{\pi_\theta^{(t)}}(s)) \right | \nonumber \\
    &= \frac{1}{1-\gamma} \cdot \Bigg|  Q^{\pi_\theta^{(t)}}(s,a^*(s)) - \pi_\theta^{(t)}(a^*(s)|s) Q^{\pi_\theta^{(t)}}(s,a^*(s)) \nonumber \\
    & \quad\quad\quad\quad -\sum_{a \neq a^*(s)} \pi_\theta^{(t)}(a|s) Q^{\pi_\theta^{(t)}}(s,a)) \Bigg| \nonumber \\
    &\le \frac{1-\pi_\theta^{(t)}(a^*(s)|s)}{1-\gamma} \Bigg( \Bigg| Q^{\pi_\theta^{(t)}}(s,a^*(s)) \Bigg| + \max_a \Bigg| Q^{\pi_\theta^{(t)}}(s,a) \Bigg| \Bigg) \nonumber \\
    &\le \frac{2(|\cA|-1)\exp (\bar{M})}{t^2  (1-\gamma)^2}, \label{eq:GD_convergence_rate_9} \\
    \left | \frac{\partial{V^{\pi_{\theta}^{(t)}}(\mu)}}{\partial \theta_{s,a}} \right | 
    &= \left | \frac{1}{1-\gamma} \cdot d^{\pi_\theta^{(t)}}_{\mu}(s) \cdot \pi_\theta^{(t)}(a|s) \cdot A^{\pi_\theta^{(t)}}(s,a) \right | \nonumber \\
    &\le \left | \frac{1}{1-\gamma} \cdot 1 \cdot (1-\pi_\theta^{(t)}(a^*(s)|s)) \cdot \frac{1}{1-\gamma} \right | \nonumber \\
    &= \frac{(|\cA|-1)\exp (\bar{M})}{t^2 (1-\gamma)^2}, \text{ for all } a \neq a^*(s). \label{eq:GD_convergence_rate_10}
\end{align}
Furthermore, by the update rule of PG, for all $t > \max \{ T_1, T_2 \}, s \in \cS$, we have
\begin{align*}
    \left| \theta^{(t+k)}_{s,a^*(s)} \right|
    &= \Bigg| \theta^{(t)}_{s,a^*(s)} + \sum_{i=t}^{t+k-1} \eta \frac{\partial{V^{\pi_{\theta}^{(i)}}(\mu)}}{\partial \theta_{s,a^*(s)}} \Bigg| \le \left| \theta^{(t)}_{s,a^*(s)} \right| + \underbrace{\sum_{i=t}^{\infty} \eta \Bigg| \frac{\partial{V^{\pi_{\theta}^{(i)}}(\mu)}}{\partial \theta_{s,a^*(s)}} \Bigg|}_{< \infty, \text{ by \cref{eq:GD_convergence_rate_9}}}, \\
    \left| \theta^{(t+k)}_{s,a} \right|
    &= \Bigg| \theta^{(t)}_{s,a} + \sum_{i=t}^{t+k-1} \eta \frac{\partial{V^{\pi_{\theta}^{(i)}}(\mu)}}{\partial \theta_{s,a}} \Bigg| \le \left | \theta^{(t)}_{s,a} \right | + \underbrace{\sum_{i=t}^{\infty} \eta \Bigg| \frac{\partial{V^{\pi_{\theta}^{(i)}}(\mu)}}{\partial \theta_{s,a}} \Bigg|}_{< \infty, \text{ by \cref{eq:GD_convergence_rate_10}}}.
\end{align*}
which leads to the contradiction that $\left \| \theta^{(t)} \right \| \rightarrow \infty$ in \cref{eq:GD_convergence_rate_8}.

Therefore, for any $T$, we have $\max_{t'\le T} \lVert \theta^{(t')}\rVert \in O(\ln T)$. Hence, follow from \cref{eq:GD_convergence_rate_5_1}, 
\begin{align*}
    V^{\pi_{\theta}^{**}}(\mu)-V^{\pi_{\theta}^{(T)}}(\mu) 
    &\le \frac{2 L C^2 \left( \left \| \theta^{**} \right \| + \max_{t' \le T} \left \| \theta^{(t')} \right \| \right)^2}{T-1} \\
    &\le \frac{2 L C^2 \left( \left \| \theta^{**} \right \| + \kappa \ln T \right)^2}{T-1},
\end{align*}
where $\kappa \in \mathbb{R}$ is a finite constant.

\end{proof}
\begin{remark}
    \normalfont{It is crucial to emphasize that when using PG with softmax parameterization, the parameters cannot exhibit excessive growth.
    More specifically, we can demonstrate that the growth rate of the parameters remains bounded by $O(\ln t)$. This remarkable bounding property allows us to relax the objective structure to a $C$-nearly concave form while still achieving a convergence rate of $\tilde{O}(1/t)$.
    However, it is worth noting that due to the unbounded nature of softmax parameterization, it must also compensate for a logarithmic factor in its convergence rate, which may not be tightly aligned with results in the optimization regime.}
\end{remark}

\subsection{Convergence Rate of Policy Gradient}

\begin{theorem}[\textbf{Asymptotic Convergence for Softmax Parameterization in \citep{agarwal2021theory}}]
\label{theorem:PG_global_conv}
Assume we follow the gradient ascent update rule as specified in \cref{eq:PG} and that $\mu(s) > 0$ for all states $s$. Suppose $\eta^{(t)} \le \frac{(1-\gamma)^3}{8}$, then we have that for all states $s$, $V^{(t)}(s) \rightarrow V^*(s)$ as $t \rightarrow \infty$.
\end{theorem}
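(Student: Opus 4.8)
The plan is to prove \Cref{theorem:PG_global_conv} along the lines of the asymptotic-convergence argument for APG in \Cref{app:asym_conv} (the proof of \Cref{theorem:convergeoptimal}), which collapses to a substantially shorter argument once momentum is removed: there is no lookahead iterate, no restart, the cumulative-effect coefficients $G(j,t)$ all equal $1$, and each per-coordinate sequence $\{\theta_{s,a}^{(t)}\}$ associated with a nonzero limiting advantage becomes genuinely monotone after a finite time. As there, the whole problem reduces to showing that $I_s^+\coloneqq\{a\in\cA:A^{(\infty)}(s,a)>0\}$ is empty for every $s\in\cS$, after which the Performance Difference Lemma (\Cref{lemma:perf_diff}) closes the gap to $V^*$.

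First I would run the standard descent-lemma bookkeeping. By \Cref{lemma:mdp_smoothness}, $\theta\mapsto V^{\pi_\theta}(\mu)$ is $L$-smooth with $L=8/(1-\gamma)^3$, so $\eta^{(t)}\le 1/L$ turns \cref{eq:PG} into an ascent step: writing $g^{(t)}\coloneqq\nabla_\theta V^{\pi_\theta}(\mu)\rvert_{\theta=\theta^{(t)}}$, we get $V^{\pi_{\theta}^{(t+1)}}(\mu)\ge V^{\pi_{\theta}^{(t)}}(\mu)+\tfrac{\eta^{(t)}}{2}\lVert g^{(t)}\rVert^2$. Since $V^{\pi_{\theta}^{(t)}}(\mu)$ is bounded above by $1/(1-\gamma)$ (\Cref{lemma: reward range to value range}) it converges, the quantities $\eta^{(t)}\lVert g^{(t)}\rVert^2$ are summable, and (the step sizes being bounded away from $0$, as in the constant-step-size regime of \citet{agarwal2021theory}) $\lVert g^{(t)}\rVert\to 0$. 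Combining the softmax gradient formula \Cref{lemma:softmax_pg} with $d^{\pi_\theta}_\mu(s)\ge(1-\gamma)\mu(s)>0$ (\Cref{lemma:lower_bound_of_state_visitation_distribution}, using $\mu(s)>0$) then gives $\pi_{\theta}^{(t)}(a|s)\,A^{\pi_{\theta}^{(t)}}(s,a)\to 0$ for every $(s,a)$. Next I would show the value vector itself converges: since $\theta^{(t+1)}-\theta^{(t)}=\eta^{(t)}g^{(t)}\to 0$, the softmax map being Lipschitz and $\pi\mapsto\bm{V}^{\pi}$ continuous on the compact simplex give $\bm{V}^{\pi_{\theta}^{(t+1)}}-\bm{V}^{\pi_{\theta}^{(t)}}\to 0$; moreover $\{\bm{V}^{\pi_{\theta}^{(t)}}\}$ approaches the \emph{finite} set $\mathcal{V}$ of value vectors of deterministic policies (at each $s$ some action has probability $\ge 1/|\cA|$, hence vanishing advantage, and \Cref{lemma:perf_diff} bounds the distance to the corresponding deterministic policy --- this is the estimate ``$d(\bm{V}^{\pi^{(t)}},\cV)\to 0$'' in the proof of \Cref{lemma: monotone limits exist}), so a sequence with vanishing increments clustering on a finite set must converge to a single point of it. Hence $V^{\pi_{\theta}^{(t)}}(s)$, $Q^{\pi_{\theta}^{(t)}}(s,a)$, $A^{\pi_{\theta}^{(t)}}(s,a)$ all converge, defining $V^{(\infty)}$, $Q^{(\infty)}$, $A^{(\infty)}$. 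This is precisely where the PG argument departs from the APG one: the proof of \Cref{theorem:convergeoptimal} must initialize $\mu$ uniformly at random because momentum can prevent the increments from vanishing, whereas here the vanishing-increment argument suffices and only $\mu(s)>0$ is used.

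With the limiting advantages in hand, define $I_s^+$, $I_s^-\coloneqq\{a:A^{(\infty)}(s,a)<0\}$, and $I_s^0$ as in \Cref{app:asym_conv}, and reuse the momentum-free analogues of Lemmas~\ref{lemma:sum of Is+ and Is- goes to zero}--\ref{lemma:compare pi(a+) and pi(a-)}: $\sum_{a\in I_s^+\cup I_s^-}\pi_{\theta}^{(t)}(a|s)\to 0$; after a finite time the gradient for $a\in I_s^+$ is positive and for $a\in I_s^-$ negative, so $\{\theta_{s,a}^{(t)}\}$ is nondecreasing (hence bounded below) for $a\in I_s^+$ and nonincreasing (hence bounded above) for $a\in I_s^-$; $\sum_a\theta_{s,a}^{(t)}$ is conserved (\Cref{lemma:sum conservation}, via \Cref{lemma:sum_pi_A}); and, if $I_s^+\ne\emptyset$ with some $a_+\in I_s^+$, the total weight --- and hence the total parameter --- of the ``dominant'' zero-advantage actions must blow up to $+\infty$, while the identity $\sum_a\pi_{\theta}^{(t)}(a|s)A^{\pi_{\theta}^{(t)}}(s,a)=0$ forces the summed gradient over exactly those actions to be eventually bounded below $0$ (the $I_s^+$ part is at least a positive multiple of $\pi_{\theta}^{(t)}(a_+|s)$, whereas the $I_s^-$ and non-dominant $I_s^0$ parts are small relative to $\pi_{\theta}^{(t)}(a_+|s)$), so telescoping the updates keeps their total parameter bounded above --- a contradiction. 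Thus $I_s^+=\emptyset$ for all $s$, i.e.\ $A^{(\infty)}(s,a)\le 0$ everywhere, and by \Cref{lemma:perf_diff} and the uniform bound $|A^{\pi_{\theta}^{(t)}}|\le 1/(1-\gamma)$ (\Cref{lemma: reward range to value range}), $0\le V^*(s)-V^{\pi_{\theta}^{(t)}}(s)=\tfrac{1}{1-\gamma}\,\mathbb{E}_{s'\sim d^{\pi^*}_s}\mathbb{E}_{a\sim\pi^*(\cdot|s')}\!\bigl[A^{\pi_{\theta}^{(t)}}(s',a)\bigr]\to\tfrac{1}{1-\gamma}\,\mathbb{E}\!\bigl[A^{(\infty)}(s',a)\bigr]\le 0$, so $V^{\pi_{\theta}^{(t)}}(s)\to V^*(s)$ for every $s$.

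The main obstacle is the structural contradiction establishing $I_s^+=\emptyset$: it requires the parameter orderings among actions to stabilize in finite time, a careful identification of which zero-advantage actions absorb the residual probability mass, and the quantitative claim that the summed gradient over that dominant set becomes and stays negative --- the same combinatorial core as in the APG proof, only stripped of the bookkeeping for the momentum coefficients $G(j,t)$ and the restart indicator. A secondary (but cleaner) subtlety is the single-limit-point argument for the value vector in the preceding step, which is the one part of the proof that genuinely differs from --- indeed is simpler than --- its APG counterpart.
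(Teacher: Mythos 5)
The paper does not actually prove this statement: \Cref{theorem:PG_global_conv} is imported verbatim from \citet{agarwal2021theory} (their asymptotic convergence theorem for softmax policy gradient) and is used as a black box in \Cref{app:pg}, so there is no in-paper proof to compare against. Your reconstruction is nonetheless essentially correct, and it is faithful to the known argument: it is the momentum-free specialization of the paper's own proof of \Cref{theorem:convergeoptimal}, which the authors themselves describe as an extension of Lemma~C.11 of \citet{agarwal2021theory} to the momentum case. The combinatorial core (the $I_s^+/I_s^-/I_s^0$ decomposition, stabilization of parameter orderings, and the telescoping contradiction showing the dominant zero-advantage parameters cannot blow up while their summed gradient is eventually negative) is exactly the right skeleton, and with $G(j,t)\equiv 1$ and no restart indicator the bookkeeping collapses as you say.

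Two points of genuine difference are worth recording. First, your single-limit-point argument for $\bm{V}^{\pi_\theta^{(t)}}$ (vanishing increments plus clustering on the finite set $\mathcal{V}$ forces convergence to one element) is cleaner than the route the paper takes for APG, which needs $\mu$ drawn uniformly at random so that distinct value vectors in $\mathcal{V}$ have almost surely distinct values $V^\pi(\mu)$; your version needs only $\mu(s)>0$, matching the hypothesis of the theorem. An even shorter alternative available here is per-state monotonicity: with no momentum, \Cref{lemma:grad update improvement} gives $V^{\pi_\theta^{(t+1)}}(s)\ge V^{\pi_\theta^{(t)}}(s)$ for every $s$, so each $V^{\pi_\theta^{(t)}}(s)$ converges by monotone convergence, which is the route \citet{agarwal2021theory} take. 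Second, a small caveat: as literally stated the theorem only assumes $\eta^{(t)}\le(1-\gamma)^3/8$, which permits $\eta^{(t)}\to 0$ and would break your deduction of $\lVert g^{(t)}\rVert\to 0$ from summability of $\eta^{(t)}\lVert g^{(t)}\rVert^2$; you flag this and resolve it by reading the hypothesis as the constant-step-size regime of the cited source, which is the intended interpretation. With that reading, the proposal is sound.
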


\begin{lemma}
\label{lemma: PG enters local concave}
    Consider a tabular softmax parameterized policy $\pi_{\theta}$. Under \hyperref[algorithm:PG]{PG} with $\eta^{(t)} = \frac{(1 - \gamma)^3}{8}$ and \Cref{assump:unique_optimal}, given any $M > 0$, there exists a finite time $T$ such that for all $t \ge T, s\in\cS$, and $a \neq a^*(s)$, we have (i) $\theta_{s, a^*(s)} - \theta_{s, a} > M$; (ii) $V^{\pi_{\theta}^{(t)}}(s) > Q^*(s, a_2(s))$; (iii) $\left.\frac{\partial V^{\pi_{\theta}}(\mu)}{\partial\theta_{s, a^*(s)}}\right\rvert_{\theta = \theta^{(t)}} > 0 > \left.\frac{\partial V^{\pi_{\theta}}(\mu)}{\partial\theta_{s, a}}\right\rvert_{\theta = \theta^{(t)}}$.
\end{lemma}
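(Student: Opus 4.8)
\textbf{Proof proposal for \Cref{lemma: PG enters local concave}.}

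The plan is to mirror the structure of the proof of \Cref{lemma: enter local concavity} (the APG counterpart), but with the simplification that there is no momentum term, so that $\theta^{(t)}$ plays the role that both $\theta^{(t)}$ and $\omega^{(t)}$ played in the APG analysis. The starting point is \Cref{theorem:PG_global_conv}: under $\eta^{(t)} = (1-\gamma)^3/8 \le (1-\gamma)^3/8$, we have $V^{\pi_\theta^{(t)}}(s) \to V^*(s)$ for all $s$. Combining this with \Cref{assump:unique_optimal} and page 14 of \citep{szepesvari2022algorithms} (exactly as in \Cref{cor:policy convergence}), the policy weight of the optimal action converges, i.e. $\pi_\theta^{(t)}(a^*(s)\mid s) \to 1$ for all $s\in\cS$.

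First I would establish (i). Suppose for contradiction that for some $M>0$ no finite $T$ works; then there is an infinite sequence $\{t_i\}$, a state $s$, and an action $\tilde a \neq a^*(s)$ with $\theta^{(t_i)}_{s,a^*(s)} - \theta^{(t_i)}_{s,\tilde a} \le M$ for all $i$ (using finiteness of $\cS\times\cA$ to pass to such a fixed pair along a subsequence). Then under softmax parameterization $\pi_\theta^{(t_i)}(a^*(s)\mid s) \le e^M/(e^M+1) < 1$, contradicting $\pi_\theta^{(t)}(a^*(s)\mid s)\to 1$. This gives (i) for every $M$. Next, (ii) follows directly from \Cref{lemma: Q-ordering fixed}: since $V^{\pi_\theta^{(t)}}(s)\to V^*(s) > Q^*(s,a_2(s))$ (the strict inequality by \Cref{lemma: Q^* a_2} and \Cref{assump:unique_optimal}), there is a finite $T_V(s)$ beyond which $V^{\pi_\theta^{(t)}}(s) > Q^*(s,a_2(s))$; take the max over the finite state set. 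Then for (iii), fix $s$ and $t$ large enough that (ii) holds: by \Cref{lemma: Q^* a_2} we get $Q^{\pi_\theta^{(t)}}(s,a) \le Q^*(s,a_2(s)) < V^{\pi_\theta^{(t)}}(s)$ for all $a\neq a^*(s)$, hence $A^{\pi_\theta^{(t)}}(s,a) < 0$ for $a \neq a^*(s)$, and then \Cref{lemma:sum_pi_A} together with positivity of the softmax weights forces $A^{\pi_\theta^{(t)}}(s,a^*(s)) > 0$. Applying \Cref{lemma:softmax_pg} (with $d^{\pi_\theta^{(t)}}_\mu(s) > 0$, which holds since $\mu(s)>0$ and \Cref{lemma:lower_bound_of_state_visitation_distribution}) converts these advantage signs into the claimed gradient signs at $\theta^{(t)}$. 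Finally, set $T = \max\{T_M, T_V\}$ over the finite state space to get all three conclusions simultaneously for all $t \ge T$.

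I do not expect a genuinely hard obstacle here, since the absence of momentum removes the most delicate parts of the APG argument (the $\omega^{(t)} - \theta^{(t)}$ ordering analysis in condition (iv), the cumulative momentum bounds, the restart-inactivity argument). The one point requiring a little care is the logical quantifier structure: conclusion (i) must hold for an arbitrary prescribed $M$, and the threshold $T$ must be uniform over all states and all non-optimal actions — this is handled by the finiteness of $\cS\times\cA$ and taking a maximum of finitely many thresholds, but one should state it cleanly. A secondary subtlety is that \Cref{theorem:PG_global_conv} and the asymptotic convergence results it relies on (Theorem 5.1 of \citep{agarwal2021theory}) give convergence of the value functions, and one must invoke \Cref{assump:unique_optimal} to upgrade this to convergence of $\pi_\theta^{(t)}(a^*(s)\mid s)$ to $1$ — this is precisely the step already carried out in \Cref{cor:policy convergence}, so I would cite it rather than reprove it. The whole argument is essentially a deterministic (no measure-zero exceptional set, unlike APG where the random $\mu$ initialization was needed) streamlining of the \Cref{lemma: enter local concavity} proof.
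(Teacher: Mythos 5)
Your proposal is correct and follows essentially the same route as the paper: the paper's proof of this lemma is a terse pointer to the argument of \Cref{lemma: enter local concavity} (asymptotic convergence plus \Cref{assump:unique_optimal} gives $\pi_\theta^{(t)}(a^*(s)\mid s)\to 1$, hence (i) by the same contradiction argument; (ii) from \Cref{theorem:PG_global_conv}; (iii) from (ii) via \Cref{lemma: Q^* a_2}, \Cref{lemma:sum_pi_A}, and \Cref{lemma:softmax_pg}), which is precisely what you carry out in detail. The only cosmetic quibble is your citation of \Cref{lemma: Q-ordering fixed}, which is stated for the APG setting, but you also write out the underlying argument directly for PG, so nothing is missing.
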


\begin{proof}[Proof of \Cref{lemma: PG enters local concave}]
    By \Cref{assump:unique_optimal}, there is a unique optimal action, so \Cref{theorem:PG_global_conv} implies that the policy weight probability of the optimal action approaches 1. For $\theta_{s, a^*(s)} - \theta_{s, a} > M$, since we also have the asymptotic convergence property, we can see that the same strategy works here as the proof shown in \Cref{lemma: enter local concavity}.

    Moreover, by \Cref{lemma: Q^* a_2}, we can show that (ii) can imply the third point. Regarding the second point, it is the direct result of \Cref{theorem:PG_global_conv} because $V^{\pi_{\theta}^{(t)}}$ can always enter a small enough neighborhood such that the values always are greater than $Q^*(s, a_2(s))$. Hence, by \Cref{lemma:softmax_pg,lemma: Q^* a_2}, we obtain the results.
\end{proof}

\begin{theorem}[\textbf{Convergence Rate of PG}]
\label{theorem: PG convergence rate}
    Consider a tabular softmax parameterized policy $\pi_{\theta}$. Under \hyperref[algorithm:PG]{PG} with $\eta^{(t)} = \frac{(1 - \gamma)^3}{8}$, there exists a finite time $T$ such that for all $t \ge T$, we have
    \begin{align*}
        V^{*}(\rho) - V^{{\pi_\theta^{(t)}}}(\rho) \le \frac{1}{1 - \gamma} \left \| \frac{{d^{\pi^*}_{\rho}}}{\mu} \right \|_{\infty} \Bigg( &\frac{16 C^2 (\kappa + 2)^2  [\ln (t-T)]^2}{(1-\gamma)^3(t - T - 1)} + \frac{2 |\cS|}{(1 - \gamma)^2}\left( \frac{|\cA| - 1}{(t-T)^2 + |\cA| - 1} \right)\Bigg),
    \end{align*}
    where $\kappa \in \mathbb{R}$ is a finite constant.
\end{theorem}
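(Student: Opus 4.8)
\textbf{Proof proposal for \Cref{theorem: PG convergence rate}.}

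The plan is to mirror, in the simpler PG setting, the two-stage argument used for APG in \Cref{theorem: MDP convergence rate}: first show that PG enters (and stays in) a locally $C$-nearly concave regime in finite time, then invoke the convergence-rate machinery developed for nearly concave objectives, and finally translate the resulting $V^{\pi}(\mu)$ bound into a $V^{\pi}(\rho)$ bound via \Cref{lemma:performance_difference_in_rho}. Concretely, first I would fix a constant $C$ (say $C = 3/2$, matching the constant used for APG) and invoke \Cref{lemma: PG enters local concave} to produce a finite time $T$ such that for all $t \ge T$ and all $s \in \cS$, $a \neq a^*(s)$: (i) $\theta^{(t)}_{s,a^*(s)} - \theta^{(t)}_{s,a} > M$ for a large enough $M$ (to be chosen as in \cref{eq:Mc}, i.e.\ exceeding both the smoothness-type threshold $M'$ and the direction-dependent threshold coming from \Cref{lemma: theta to pi}); (ii) $V^{\pi^{(t)}_\theta}(s) > Q^*(s, a_2(s))$; and (iii) $\partial V^{\pi_\theta}(\mu)/\partial\theta_{s,a^*(s)}|_{\theta=\theta^{(t)}} > 0 > \partial V^{\pi_\theta}(\mu)/\partial\theta_{s,a}|_{\theta=\theta^{(t)}}$. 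Property (iii) means the PG update direction at every $t \ge T$ lies in the feasible update domain $\cU$ of \Cref{def: feasible update domain}; combined with (i)–(ii) and \Cref{lemma:local nearly concavity}, this gives that $\theta \mapsto V^{\pi_\theta}(\mu)$ is $C$-nearly concave along the PG update direction for every $t \ge T$. (For the direction-dependent threshold in \cref{eq:Mc} I would use \Cref{lemma: di_lower_bdd} exactly as in the APG proof: the normalized PG update direction has zero coordinate sum by \Cref{lemma:sum conservation}-type reasoning, its optimal-action coordinate strictly exceeds the others by (iii), so $\min_{a\neq a^*(s)}\{d_{s,a^*(s)} - d_{s,a}\}$ is bounded below by a positive constant depending only on $|\cA|$ and possibly a ratio $\varepsilon/(1+\varepsilon)$.)

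Next I would restart the clock at $T$, viewing the iterates $\{\theta^{(T+t)}\}_{t\ge 0}$ as PG initialized at $\theta^{(T)}$ on a $C$-nearly-concave-along-the-updates objective, and apply \Cref{theorem:GD_convergence_rate} with a surrogate comparator $\theta^{**} = \theta^{**(t)}$ chosen exactly as in the APG proof: $\theta^{**(t)}_{s,\cdot} = [2\ln t, 0, \dots, 0]$ for every $s$, which has $\|\theta^{**(t)}\|^2 = 4|\cS|\ln^2 t$ and induces a policy with $\pi_{\theta^{**(t)}}(a^*(s)|s) = t^2/(t^2 + |\cA|-1)$. \Cref{theorem:GD_convergence_rate} requires $V^{\pi_{\theta^{**(t)}}}(\mu) \ge V^{\pi^{(T+k)}_\theta}(\mu)$ and that each direction $\theta^{**(t)} - \theta^{(T+k)}$ satisfies the near-concavity; the first holds for $t$ large (since $V^{\pi^{(T+k)}_\theta}(\mu)$ converges to $V^*(\mu)$ by \Cref{theorem:PG_global_conv} while the surrogate value also tends to $V^*(\mu)$ from below in the relevant range — a small bookkeeping step, possibly enlarging $T$), and the second holds because $\theta^{**(t)} - \theta^{(T+k)}$ lies in $\cU$ under the large-$M$ condition, so \Cref{lemma:local nearly concavity} again applies. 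This yields $V^{\pi_{\theta^{**(t)}}}(\mu) - V^{\pi^{(T+t)}_\theta}(\mu) \le 2LC^2(\|\theta^{**(t)}\| + \kappa\ln t)^2/(t-1)$ with $L = 8/(1-\gamma)^3$; plugging $\|\theta^{**(t)}\| = 2\sqrt{|\cS|}\ln t$, bounding $(\,2\sqrt{|\cS|}\ln t + \kappa\ln t\,)^2 \le |\cS|(\kappa+2)^2 \ln^2 t$ (absorbing $\sqrt{|\cS|} \le |\cS|$ and $2 + \kappa/\sqrt{|\cS|} \le \kappa + 2$), and relabeling $t \mapsto t - T$ gives the first term $16 C^2 (\kappa+2)^2 [\ln(t-T)]^2 / ((1-\gamma)^3 (t-T-1))$ inside the parentheses.

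For the second term I would reuse verbatim the bound \cref{eq: theorem convergence rate 1}–\cref{eq: theorem convergence rate 3} from the APG proof, which is purely a property of the surrogate policy $\pi_{\theta^{**(t)}}$ and \Cref{lemma:perf_diff,lemma:sum_pi_A,lemma: reward range to value range}: $V^*(\mu) - V^{\pi_{\theta^{**(t)}}}(\mu) \le \frac{2|\cS|}{(1-\gamma)^2}\cdot\frac{|\cA|-1}{t^2 + |\cA|-1}$, again with $t$ replaced by $t-T$. Finally, writing $V^*(\rho) - V^{\pi^{(t)}_\theta}(\rho) \le \frac{1}{1-\gamma}\|d^{\pi^*}_\rho/\mu\|_\infty (V^*(\mu) - V^{\pi^{(t)}_\theta}(\mu))$ by \Cref{lemma:performance_difference_in_rho}, and splitting $V^*(\mu) - V^{\pi^{(t)}_\theta}(\mu) = (V^{\pi_{\theta^{**(t-T)}}}(\mu) - V^{\pi^{(t)}_\theta}(\mu)) + (V^*(\mu) - V^{\pi_{\theta^{**(t-T)}}}(\mu))$, the two pieces above combine to the claimed inequality. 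I expect the main obstacle to be the careful choice of $M$ and the verification that the restart-shift does not break the hypotheses of \Cref{theorem:GD_convergence_rate} — in particular ensuring simultaneously (a) that the PG update directions, (b) the directions toward the time-varying surrogate $\theta^{**(t)}$, and (c) the monotonicity/comparator condition $V^{\pi_{\theta^{**(t)}}}(\mu) \ge V^{\pi^{(T+k)}_\theta}(\mu)$ all hold from the same finite time onward; this requires threading the direction-dependent threshold of \cref{eq:Mc} through \Cref{lemma: di_lower_bdd} and possibly enlarging $T$ finitely many times, exactly as in the (I)/(II) case split of the APG proof, but is conceptually routine given the lemmas already established.
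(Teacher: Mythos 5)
Your proposal follows essentially the same route as the paper's own proof: invoke \Cref{lemma: PG enters local concave} to enter the locally $C$-nearly concave regime with the PG update directions lying in $\cU$ (where, since property (iii) forces the suboptimal coordinates of the update to be nonpositive, only the first bullet of \Cref{lemma: di_lower_bdd} is actually needed, so the $\varepsilon/(1+\varepsilon)$ case you hedge on does not arise), then apply \Cref{theorem:GD_convergence_rate} with the logarithmically growing surrogate $\theta^{**(t)}$, bound the surrogate's suboptimality exactly as in \cref{eq: theorem convergence rate 1 PG}, and convert to $\rho$ via \Cref{lemma:performance_difference_in_rho}. Apart from immaterial bookkeeping of constants (the $\sqrt{|\cS|}$ factor in $\lVert\theta^{**(t)}\rVert$, which the finite constant $\kappa$ absorbs), this matches the paper's argument step for step.
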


\begin{proof}[Proof of \Cref{theorem: PG convergence rate}]
    Let
    \[
        M' \coloneqq \ln \Big[ \frac{2|\cS||\cA|^2}{(1-\gamma)^2 \min_{s \in S} \mu(s)} \Big].
    \]
    By \Cref{lemma: PG enters local concave}, there exists a finite time $T'$ such that for all $t \ge T'$, $s\in\cS$, and $a\neq a^*(s)$, we have (i) $\theta_{s, a^*(s)} - \theta_{s, a} > M'$, (ii) $V^{\pi_{\theta}^{(t)}}(s) > Q^*(s, a_2(s))$, (iii) $\left.\frac{\partial V^{\pi_{\theta}}(\mu)}{\partial\theta_{s, a^*(s)}}\right\rvert_{\theta = \theta^{(t)}} > 0 > \left.\frac{\partial V^{\pi_{\theta}}(\mu)}{\partial\theta_{s, a}}\right\rvert_{\theta = \theta^{(t)}}$. By (iii), we know that the PG updates \cref{eq:PG} at time $t \ge T'$ lie in the feasible update domain $\cU$. Furthermore, by the first point in \Cref{lemma: di_lower_bdd} and (iii), we obtain that all the PG updates after time $T'$ under state rescaling, i.e., $\lVert \bm{d_{s, \cdot}} \rVert_2^2 = 1$, we have $\min_{i \ge 2} \{ d_{s, a^*(s)} -  d_{s, a_i(s)} \} > |\cA|^{-\frac{3}{2}}$. Thus, we further define
    \[
        M \coloneqq \max\left\{2\ln\Big[2(|\mathcal{A}| - 1)\Big] - 2\ln\Big[|\cA|^{-\frac{3}{2}}\Big], M' \right\},
    \]
    by \Cref{lemma: enter local concavity} again, we know that there is a finite time $T \ge T'$ such that for all $t \ge T'$, $s\in\cS$, and $a\neq a^*(s)$, we have (i) $\theta_{s, a^*(s)} - \theta_{s, a} > M'$, (ii) $V^{\pi_{\theta}^{(t)}}(s) > Q^*(s, a_2(s))$, (iii) $\left.\frac{\partial V^{\pi_{\theta}}(\mu)}{\partial\theta_{s, a^*(s)}}\right\rvert_{\theta = \theta^{(t)}} > 0 > \left.\frac{\partial V^{\pi_{\theta}}(\mu)}{\partial\theta_{s, a}}\right\rvert_{\theta = \theta^{(t)}}$, (iv) the state rescaled updates satisfy $\min_{i \ge 2} \{ d_{s, a^*(s)} -  d_{s, a_i(s)} \} > |\cA|^{-\frac{3}{2}}$. Therefore, by \Cref{lemma:local nearly concavity}, we know that the objective function $\theta \to V^{\pi_{\theta}}(\mu)$ is $\frac{3}{2}$-nearly concave along the PG update directions for all $t \ge T$.

    By \Cref{theorem:GD_convergence_rate},
    \begin{align*}
        V^{\pi^{(t)}_{\theta^{**}}}(\mu) - V^{\pi_{\theta}^{(t)}}(\mu) &\le \frac{2 L C^2 \left( \left \| \theta^{**(t)} \right \| + \kappa \ln (t-T) \right)^2}{(t - T) - 1} \\
        &\le \frac{16 C^2 \left( 2 \ln(t-T) + \kappa \ln (t-T) \right)^2}{(1-\gamma)^3(t - T - 1)} \\
        &= \frac{16 C^2 (\kappa + 2)^2  [\ln (t-T)]^2}{(1-\gamma)^3(t - T - 1)},
    \end{align*}
    where $\theta^{(t)**} \coloneqq [2\ln(t-T), 0, 0, \cdots, 0]$ is a chosen surrogate optimal solution at time $t$.
    Additionally, we have the sub-optimality gap between the original optimal solution and the surrogate optimal solution can be bounded as
    \begin{align}
        V^*(\mu) - V^{\pi^{(t)}_{\theta^{**}}}(\mu) &= \frac{1}{1 - \gamma} \sum_{s} d^{\pi^*}_{\mu}(s) \sum_{a} (\pi^*(a | s) - \pi^{(t)}_{\theta^{**}}(a | s)) \cdot A^{\pi^{(t)}_{\theta^{**}}}(s ,a) \nonumber \\
        &\le \frac{2 |\cS|}{(1 - \gamma)^2} \left( 1 - \frac{\exp(2\ln(t-T))}{\exp(2\ln(t-T)) + \exp(0) \cdot (|\cA| - 1)} \right) \label{eq: theorem convergence rate 1 PG}\\
        &= \frac{2 |\cS|}{(1 - \gamma)^2}\left( \frac{|\cA| - 1}{(t-T)^2 + |\cA| - 1} \right),\nonumber 
    \end{align}
    where \cref{eq: theorem convergence rate 1 PG} consider the upper bound of $A^{\pi^{(t)}_{\theta^{**}}}(s ,a) \le \frac{1}{1 - \gamma}$ and the sum of differences between sub-optimal actions can be at most the difference between the optimal actions, which causes a preconstant 2. By \Cref{lemma:performance_difference_in_rho}, we can change the convergence rate from $V^{\pi}(\mu)$ to $V^{\pi}(\rho)$,
    \begin{align*}
        V^{*}(\rho) - V^{{\pi_\theta^{(t)}}}(\rho) \le \frac{1}{1 - \gamma} \left \| \frac{{d^{\pi^*}_{\rho}}}{\mu} \right \|_{\infty} \Bigg( &\frac{16 C^2 (\kappa + 2)^2  [\ln (t-T)]^2}{(1-\gamma)^3(t - T - 1)} + \frac{2 |\cS|}{(1 - \gamma)^2}\left( \frac{|\cA| - 1}{(t-T)^2 + |\cA| - 1} \right)\Bigg),
    \end{align*}
    and we complete the proof.
\end{proof}

\endgroup
\newpage
\section{Challenges of APG Without Momentum Restart Mechanisms}
\label{app:non monotone APG}

\begingroup
\allowdisplaybreaks

In this section, we provide insights into the challenges encountered when applying APG without momentum restart mechanisms. Furthermore, we've established the convergence rate of $\Tilde{O}(1/t^2)$ under APG without restart mechanisms in the bandit setting.
More specifically, In \Cref{app:subsec:napg_algo}, we introduce the algorithm, NAPG, which is the original APG without restart mechanisms. Subsequently, in \Cref{app:subsec:napg_exp}, we provide a numerical validation in the bandit setting to illustrate the non-monotonic improvement of NAPG. Additionally, in \Cref{app:subsec:napg_hard}, we illustrate the challenges of establishing the limiting value functions in the MDPs setting.

\subsection{APG Without Restart Mechanisms}
\label{app:subsec:napg_algo}
For ease of exposition, this subsection presents the algorithm for Nesterov Accelerated Policy Gradient (NAPG) without restart mechanisms as in \Cref{algorithm:nonrestart_APG}. As depicted in \Cref{algorithm:nonrestart_APG}. It is worth noting that without these restart mechanisms, \Cref{algorithm:nonrestart_APG} aligns precisely with the Nesterov acceleration method outlined in \citep{su2014differential}.

\begin{algorithm} [!ht]
	\caption{Nesterov Accelerated Policy Gradient (NAPG) \textit{Without Restart Mechanisms}}
	\label{algorithm:nonrestart_APG}
	\begin{algorithmic}

\STATE \textbf{Input}:
Step size $\eta^{(t)} > 0$.
\STATE \textbf{Initialize}:
$\theta^{(0)} \in \mathbb{R}^{|\mathcal{S}||\mathcal{A}|}$, $\omega^{(0)} = \theta^{(0)}$.

\FOR{$t = 1$ to $T$} 
    \STATE
    \begin{align*} 
    \theta^{(t)} &\leftarrow \omega^{(t-1)} + \eta^{(t)} \nabla_{\theta}{V^{\pi_{\theta}}(\mu)} \Big\rvert_{\theta = \omega^{(t-1)}} \\
    \omega^{(t)} &\leftarrow \theta^{(t)} + \frac{t-1}{t+2}(\theta^{(t)}-\theta^{(t-1)}) 
    \end{align*}
\ENDFOR
	\end{algorithmic}
\end{algorithm}

\subsection{Numerical Validation of the Non-Monotonic Improvement Under NAPG}
\label{app:subsec:napg_exp}

\begin{wrapfigure}{r}{0.45\textwidth}
\centering
  \includegraphics[width=0.4\textwidth]{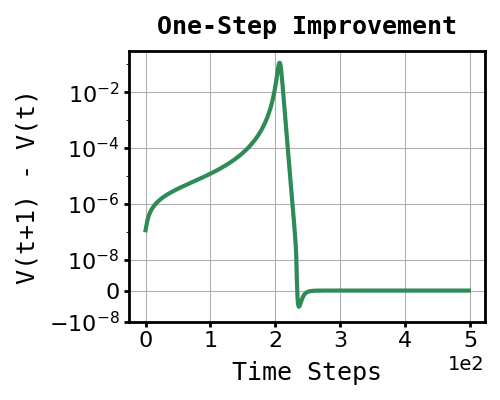}
  \hspace{-5mm}
  \caption{The one-step improvement of APG on a three-action bandit problem.}
  \label{exp:NonMonotone_}
\end{wrapfigure}

In this subsection, we illustrate the difficulties involved in analyzing the convergence of NAPG, compared to the standard policy gradient methods through a numerical experiment.
In contrast to the standard policy gradient (PG) method, which exhibits monotonic improvement, NAPG could experience non-monotonic progress as a result of the momentum term, which could lead to negative performance changes.
To further demonstrate this phenomenon, we conduct a 3-action bandit experiment with a highly sub-optimal initialization, where the weight of the optimal action of the initial policy is extremely small.
More specifically, we conduct a 3-action bandit experiment with actions $\cA = [a^*, a_2, a_3]$, where the corresponding rewards are $r = [r(a^*), r(a_2), r(a_3)] = [1, 0.8, 0]$. We initialize the policy parameters as $\theta^{(0)} = [0, 3, 10]$, which represents a highly sub-optimal initialization. Remarkably, the weight of the optimal action in the initial policy $\pi^{(0)} \approx [0.00005, 0.00091, 0.99904]$ is exceedingly small. As shown in \Cref{exp:NonMonotone_}, the one-step improvement becomes negative around epoch $180$ and provides nearly zero improvement after that point.
Notably, the asymptotic convergence of the standard PG is largely built on the monotonic improvement property, as shown in \citep{agarwal2021theory}. With that said, the absence of monotonic improvement in NAPG poses a fundamental challenge when analyzing the convergence to an optimal policy.


\subsection{Challenges of Establishing the Limiting Value Functions in the MDPs Setting Under NAPG}
\label{app:subsec:napg_hard}
In this subsection, we illustrate the challenges involved in establishing the limiting value functions under NAPG in the MDPs setting.
Recall from \Cref{app:subsec:napg_exp} that NAPG is not guaranteed to achieve monotonic improvement in each iteration due to the momentum. This is one salient difference from the standard PG, which inherently enjoys strict improvement and hence the existence of the limiting value functions (i.e., $\lim_{t\rightarrow \infty}V^{\pi_{\theta^{(t)}}}(s)$) by Monotone Convergence Theorem \citep{agarwal2021theory}. Without monotonicity, it remains unknown if the limiting value functions even exist.

Furthermore, while the gradient step ensures a guaranteed monotonic improvement, which can help counteract the non-monotonicity resulting from the momentum step, it is still a challenging task to quantify this non-monotonic behavior due to the cumulative effect of the momentum. To be more specific, we demonstrate that the influence exerted by the momentum term can lead to a substantial and challenging-to-control effect.

Let $\delta^{(T_0)}:=\theta_{s,a}^{(T_0)}-\theta_{s,a}^{(T_0-1)}$, by the update rule of NAPG, we have
\begin{align}
    \theta^{(T_0+1)}_{s,a} &= \theta^{(T_0)}_{s,a} + \frac{T_0 - 1}{T_0 + 2} \delta^{(T_0)} + \eta^{(T_0+1)}\cdot \frac{\partial V^{\pi_\theta}(\mu)}{\partial \theta_{s,a}}\Big\rvert_{\theta=\omega^{(T_0)}} \label{eq:napg_hard_1} \\
    \theta^{(T_0+2)}_{s,a} &= \theta^{(T_0+1)}_{s,a} + \frac{T_0}{T_0 + 3} \delta^{(T_0 + 1)} + \eta^{(T_0+2)}\cdot \frac{\partial V^{\pi_\theta}(\mu)}{\partial \theta_{s,a}}\Big\rvert_{\theta=\omega^{(T_0 + 1)}} \\
    &= \theta^{(T_0)}_{s,a} + \frac{T_0 - 1}{T_0 + 2} \delta^{(T_0)} + \eta^{(T_0+1)}\cdot \frac{\partial V^{\pi_\theta}(\mu)}{\partial \theta_{s,a}}\Big\rvert_{\theta=\omega^{(T_0)}} \nonumber \\
    &\quad\quad\quad\enspace + \frac{T_0}{T_0 + 3} \Big( \frac{T_0 - 1}{T_0 + 2} \delta^{(T_0)} + \eta^{(T_0+1)}\cdot \frac{\partial V^{\pi_\theta}(\mu)}{\partial \theta_{s,a}}\Big\rvert_{\theta=\omega^{(T_0)}} \Big) \nonumber \\
    &\quad\quad\quad\enspace + \eta^{(T_0+2)}\cdot \frac{\partial V^{\pi_\theta}(\mu)}{\partial \theta_{s,a}}\Big\rvert_{\theta=\omega^{(T_0 + 1)}} \\
    &=  \theta^{(T_0)}_{s,a} + \Big( \frac{T_0 - 1}{T_0 + 2} + \frac{T_0}{T_0 + 3}\frac{T_0 - 1}{T_0 + 2} \Big)\cdot\delta^{(T_0)} \nonumber \\
    &\quad\quad\quad\enspace + \Big( 1 + \frac{T_0}{T_0 + 3} \Big) \cdot  \eta^{(T_0+1)}\cdot \frac{\partial V^{\pi_\theta}(\mu)}{\partial \theta_{s,a}}\Big\rvert_{\theta=\omega^{(T_0)}} \nonumber \\
    &\quad\quad\quad\enspace + \eta^{(T_0+2)}\cdot \frac{\partial V^{\pi_\theta}(\mu)}{\partial \theta_{s,a}}\Big\rvert_{\theta=\omega^{(T_0 + 1)}} \label{eq:napg_hard_2} \\
    \theta^{(T_0+M)}_{s,a}
    &= \theta^{(T_0)}_{s,a} + \Big( \frac{T_0 - 1}{T_0 + 2} + \frac{T_0}{T_0 + 3}\frac{T_0 - 1}{T_0 + 2} + \sum_{\tau=3}^{M} \frac{T_0+1}{T_0+\tau+1}\frac{T_0}{T_0+\tau}\frac{T_0-1}{T_0+\tau-1} \Big)\cdot\delta^{(T_0)} \nonumber \\
    &\quad\quad\quad\enspace + \cdots \label{eq:napg_hard_3} 
\end{align}
where \cref{eq:napg_hard_3} holds by expanding \cref{eq:napg_hard_1}-\cref{eq:napg_hard_2} iteratively.

Note that for large enough $M\in \bbN$,
\begin{align}
    &\sum_{\tau=3}^{M}\frac{(T_0+1)T_0(T_0-1)}{(T_0+\tau+2)(T_0+\tau+1)(T_0+\tau)} \nonumber \\
    &=(T_0+1)T_0(T_0-1)\sum_{\tau=3}^{M}\frac{1}{2}\Big(\frac{1}{(T_0+\tau)(T_0+\tau+1)}-\frac{1}{(T_0+\tau+1)(T_0+\tau+2)} \Big) \nonumber \\
    &=(T_0+1)T_0(T_0-1)\cdot \frac{1}{2}\Big(\frac{1}{(T_0+3)(T_0+4)}-\frac{1}{(T_0+M+1)(T_0+M+2)}\Big) \nonumber \\
    &=\Theta(T_0). \label{eq:napg_hard_4}
\end{align}

Follow from \cref{eq:napg_hard_3} and \cref{eq:napg_hard_4}, we could deduce that the momentum term $\delta^{(T_0)}$ in time $T_0$ results in an effect that is directly proportional to $T_0$. Therefore, it becomes crucial to establish the upper bound of the gradient norm to confine the adverse effects stemming from the momentum term. However, this endeavor involves addressing the intertwined challenge of estimating both the gradient norm and the momentum norm simultaneously.

\endgroup
\newpage
\section{Additional Experiments}
\label{app:add-exp}

\begingroup
\allowdisplaybreaks

\subsection{Stochastic APG (SAPG)}
In this subsection, we conduct an empirical evaluation of the performance of Stochastic Accelerated Policy Gradient (SAPG) on an MDP with 5 states and 5 actions, utilizing the true gradient. In the subsequent experimental results, we set the batch size to $B = 1$ and perform the experiments with 50 different seeds. It's important to highlight that the MDP used is identical to the one discussed in \Cref{sec:disc:exp}, and detailed configuration information is provided in \Cref{app:experiment}.

\subsubsection{Implementation Detail}
For ease of exposition, we provide the pseudo code for Stochastic Accelerated Policy Gradient (SAPG).

\begin{algorithm} [!ht]
\setstretch{1.35}
	\caption{Stochastic Accelerated Policy Gradient (SAPG)}
	\label{algorithm:SAPG}
	\begin{algorithmic}

\STATE \textbf{Input}:
Step size $\eta^{(t)} > 0$, batch size $B \in \mathbb{N}$.
\STATE \textbf{Initialize}:
$\theta^{(0)} \in \mathbb{R}^{|\mathcal{S}||\mathcal{A}|}$, $\omega^{(0)} = \theta^{(0)}$.

\FOR{$t = 1$ to $T$} 
    \STATE
    Sample a batch of $\cS^{(t)} \times \cA^{(t)} \coloneqq \{(s^{(t)}_{1}, a^{(t)}_{1}), (s^{(t)}_{2}, a^{(t)}_{2}), \dots, (s^{(t)}_{B}, a^{(t)}_{B})\}$ where state $s^{(t)}$ is sampled with probability $d^{\pi^{(t)}}_{\mu}(\cdot)$ and action $a^{(t)}$ is sampled with probability $\pi^{(t)}(\cdot|s^{(t)})$. \\
    Calculate stochastic gradient $v^{(t)} = \sum_{s, a \in \cS^{(t)} \times \cA^{(t)}} \frac{\partial V^{\pi_{\theta}}(\mu)}{\partial \theta_{s, a}}\Big\rvert_{\theta = \omega^{(t-1)}}$.
    \begin{align*} 
    \theta^{(t)} &\leftarrow \omega^{(t-1)} + \eta^{(t)} v^{(t)} \\
    \varphi^{(t)} &\leftarrow \theta^{(t)} + \frac{t-1}{t+2}(\theta^{(t)}-\theta^{(t-1)}) \\
    \omega^{(t)} &\leftarrow
    \begin{cases}
        \varphi^{(t)}, & \text{if } V^{\pi_{\varphi}^{(t)}}(\mu) \ge V^{\pi_{\theta}^{(t)}}(\mu), \\
        \theta^{(t)}, & \text{otherwise.}
    \end{cases}
    \end{align*}
\ENDFOR
	\end{algorithmic}
\end{algorithm}

\begin{figure*}[!th]
    \centering
    \hspace{-5mm}
    \subfigure[]{
    \label{exp:SAPG-uniform-value-function}
    \includegraphics[width=0.3\textwidth]{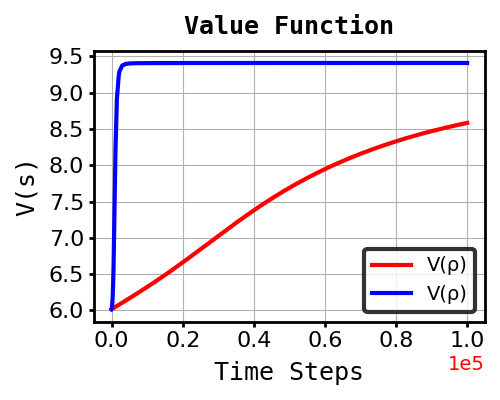}
    }
    \hspace{-0mm}
    \subfigure[]{
    \label{exp:SAPG-hard-value-function}
    \includegraphics[width=0.3\textwidth]{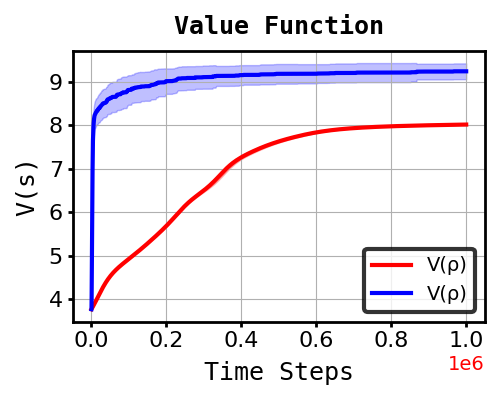}
    }
    \hspace{-0mm}
    \subfigure[]{
    \label{exp:SAPG-uniform-loglog}
    \includegraphics[width=0.3\textwidth]{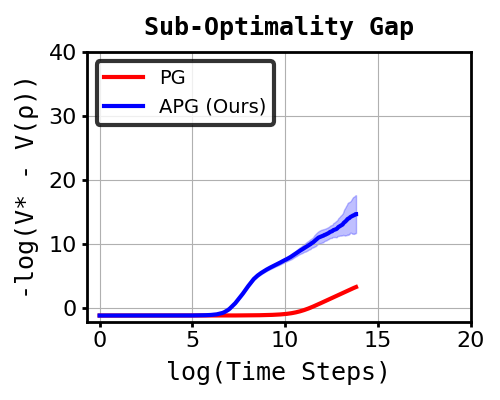}
    }
    \caption{A comparison between the performance of SAPG and SPG under an MDP with 5 states, 5 actions, with the uniform and hard policy initialization: (a)-(b) show the value function under the uniform and the hard initialization, respectively. The optimal objective value $V^{*}(\rho) \approx 9.41$; (c) show the sub-optimality gaps under uniform initialization.}
    \label{exp:SAPG-value}
\end{figure*}

\subsubsection{SAPG under an MDP with 5 states and 5 actions}

In \Cref{exp:SAPG-value}, it is evident that SAPG outperforms Stochastic PG (SPG) with a remarkable speed of convergence. Furthermore, the results presented in \Cref{exp:SAPG-uniform-loglog} suggest that SAPG may exhibit a convergence rate of $O(\frac{1}{t^{\alpha}})$, where $2 > \alpha > 1$. This observation opens up the possibility of extending and exploring this potential in future research. Additionally, it's noteworthy that under hard policy initialization (where the optimal action has the smallest initial probability), SPG tends to get stuck at a local optimum for an extended period, whereas SAPG does not exhibit this issue.



\subsection{Empirical comparison between APG and NPG}

To further highlight the empirical effectiveness of APG compared to NPG, we provide an additional experimental comparison in both BipedalWalker and the Atari games. It is worth noting that our implementation on Atari 2600 games for APG, NPG, and PG are based on the Stable Baselines3 and RL Baselines3 Zoo \citep{rl-zoo3,
stable-baselines3}. The hyperparameters, exactly set to their default values, are detailed in \Cref{app:experiment}.
Additionally, for BipedalWalker, we utilize the codebase of Spinning Up \citep{SpinningUp2018}.

In \Cref{exp:NPG}, we observe that in BipedalWalker and Carnival, although NPG improves fast during the initial training phase, it tends to get stuck thereafter. Additionally, APG achieves acceleration compared to PG and reaches higher performance than both NPG and PG at the $1e7$ timesteps. Furthermore, in Riverraid, we observe that APG demonstrates notable acceleration compared to NPG.

\begin{figure*}[!th]
    \centering
    \hspace{-5mm}
    \subfigure[]{
    \label{exp:NPG-BipedalWalker}
    \includegraphics[width=0.33\textwidth]{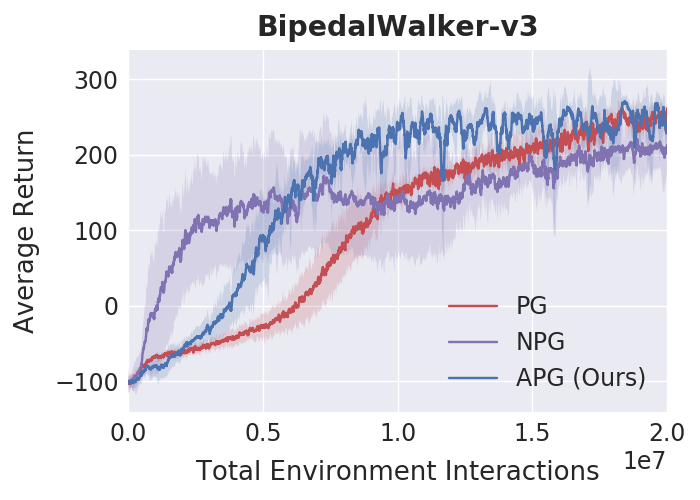}
    }
    \hspace{-0mm}
    \subfigure[]{
    \label{exp:NPG-Carnival}
    \includegraphics[width=0.3\textwidth]{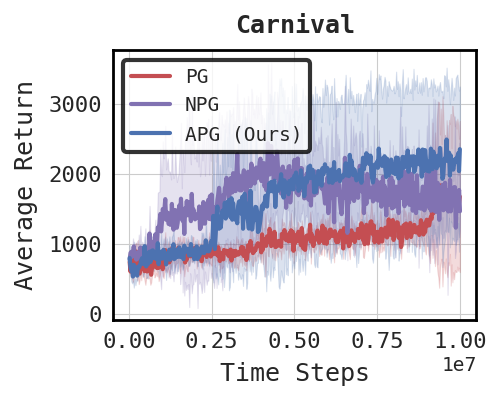}
    }
    \hspace{-0mm}
    \subfigure[]{
    \label{exp:NPG-Riverraid}
    \includegraphics[width=0.3\textwidth]{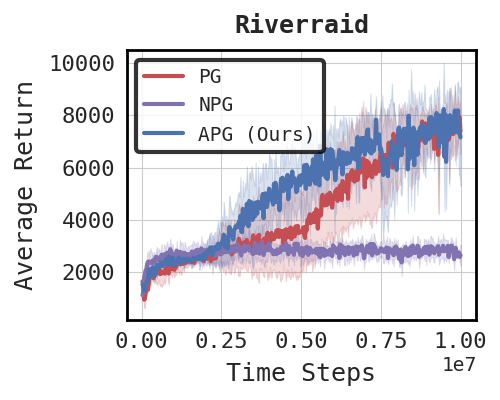}
    }
    \caption{A comparison of the performance of APG and the benchmark algorithms in BipedalWalker and two Atari 2600 games. All the results are averaged over 5 random seeds (with the shaded area showing the range of $\text{mean} \pm \text{std}$).}
    \label{exp:NPG}
\end{figure*}

\endgroup
\color{black}
\newpage
\section{A Detailed Comparison of the Acceleration Methods for RL}
\label{app:add-related}
\begingroup
\allowdisplaybreaks

{In the optimization literature, there are several major categories of acceleration methods, including momentum, regularization, natural gradient, and normalization. Interestingly, these approaches all play an important role in the context of RL.}

\subsection{{Momentum}}

{\textbf{Empirical Aspect}: In practice, momentum is one of the most commonly used approaches for acceleration in policy optimization for RL, mainly due to its simplicity. Specifically, a variety of classic momentum methods (e.g., Nesterov momentum, heavy-ball method, and AdaGrad) have been already implemented in optimization solvers and deep learning frameworks (e.g., PyTorch) as a basic machinery and shown to improve the empirical convergence results than the standard policy gradient methods like A2C and PPO (e.g., see \citep{henderson2018did}).}

{\textbf{Theoretical Aspect}: However, despite the popularity of these approaches in empirical RL, it has remained largely unknown whether momentum could indeed improve the theoretical convergence rates of RL algorithms. To address this fundamental question, we establish the very first convergence rate of Nesterov momentum in RL (termed APG in our paper) and show that APG could improve the rate of PG from $O(1/t)$ to $\tilde{O}(1/t^2)$ (in the constant step-size regime). Given the wide application of momentum, our analytical framework and the insights from our convergence result (e.g., local near-concavity) serve as an important first step towards better understanding the momentum approach in RL.}

{\textbf{Theoretical Advantage}: In addition, we would like to highlight that in the constant step-size regime (which is one of the most widely adopted setting in practice), our proposed APG indeed achieves a superior convergence rate of $\tilde{O}(1/t^2)$, which is better than the $O(1/t)$ rate of vanilla PG and the $O(1/t)$ rate of NPG with constant step sizes.}

\subsection{{Regularization}}

{\textbf{Theoretical Aspect}: In the convex optimization literature, it is known that adding a strongly convex regularizer achieves acceleration by changing the optimization landscape. In the context of RL, despite the non-concave objective, regularization has also been shown to achieve faster convergence, such as the log-barrier regularization with $O(1/\sqrt{t})$ rate \citep{agarwal2021theory} and the entropy regularization (i.e., adding a policy entropy bonus to the reward) with linear convergence rate $O(e^{-ct})$ \citep{mei2020global}, both in the exact gradient setting with constant step sizes.
Additionally, \citet{lan2023policy} presents linear convergence results when employing strongly convex regularizers.}

{Despite the above convergence results, one common attack on regularization approaches is that they essentially change the objective function to that of regularized MDPs and hence do not directly address the original objective in unregularized RL, as pointed out by \citep{mei2021leveraging, xiao2022convergence}.}

\subsection{{Natural Gradients (A Special Case of Policy Mirror Descent)}}

{Natural policy gradient (NPG), another RL acceleration technique, borrows the idea from the natural gradient, which uses the inverse of Fisher information matrix as the preconditioner of the gradient and can be viewed as achieving approximate steepest descent in the distribution space \citep{amari1998natural, kakade2001natural}. Moreover, under direct policy parameterization, NPG is also known as a special case of policy mirror descent (PMD) with KL divergence as the proximal term \citep{shani2020adaptive, xiao2022convergence}. Regarding the convergence rates, NPG has been shown to achieve:
(i) $O(1/t)$ rate with constant step sizes \citep{agarwal2021theory, xiao2022convergence};
(ii) linear convergence either with adaptively increasing step sizes \citep{khodadadian2021linear} or non-adaptive exponentially growing step sizes \citep{xiao2022convergence}.
Notably, under direct policy parameterization and geometrically increasing step sizes, similar linear convergence results can also be established for the more general PMD method \citep{xiao2022convergence}.}

{\textbf{Empirical Issues}: 
Despite the above convergence rates, as also pointed out by \citep{mei2021leveraging}, it is known that NPG in general requires solving a costly optimization problem in each iteration (cf. \citep{kakade2001natural, agarwal2021theory}) even with the help of compatible approximation theorem \citep{kakade2001natural} (unless the policy parametrization adopts some special forms, e.g., log-linear policies \citep{yuan2022linear}). Notably, the TRPO algorithm, a variant of NPG, is known to suffer from this additional computational overhead. Similarly, the PMD in general also requires solving a constrained optimization problem in each policy update \citep{xiao2022convergence}. This computational complexity could be a critical factor for RL in practice.}

\subsection{{Normalization}}

{\textbf{Theoretical Aspect}: In the exact gradient setting, the normalization approach, which exploits the non-uniform smoothness by normalizing the true policy gradient by its L2 gradient norm, has also been shown to achieve acceleration in RL. Specifically, the geometry-aware normalized PG (GNPG) has been shown to exhibit linear convergence rate under softmax policies \citep{mei2021leveraging}. This rate could be largely attributed to the effective increasing step size induced by the normalization technique. }

{\textbf{Empirical Issues}: Despite the linear convergence property, it has been already shown that GNPG could suffer from divergence in the standard on-policy stochastic setting, even in simple 1-state MDPs. This phenomenon is mainly due to the committal behavior induced by the increasing effective step size. This behavior could substantially hinder the use of GNPG in practice.}

\endgroup
\end{document}